\newcommand{\tuple}[1]{\langle #1 \rangle}
\renewcommand{\H}{\ensuremath{\mathbf{H}}} \newcommand{\T}{\ensuremath{\mathbf{T}}}
\newcommand{\M}{\ensuremath{\mathbf{M}}}
\def\starttrans{\xdef\endtrans{\catcode`\noexpand\@=\the\catcode`\@
  \catcode`\noexpand\:=\the\catcode`\:
 }\catcode`\@=11 \catcode`\:=11 }
\newbox\transbox
\newdimen\trans:dim
\newdimen\trans:dim:a
\newdimen\trans:dim:b
\newdimen\trans:dim:c
\newdimen\trans:dim:d
\def\trans:def{}
\def\trans:def:a{}
\def\trans:def:b{}
\def\trans:def:c{}
\def\trans:def:d{}
\def\transboxini{\afterassignment\transboxcheck
 \setbox\transbox}
\def\transboxcheck{\ifvoid\transbox
  \expandafter\aftergroup
 \fi\transboxtodo}
\def\transhboxini{\afterassignment\transhboxcheck
 \setbox\transbox}
\def\transhboxcheck{\ifvoid\transbox
  \expandafter\aftergroup\expandafter\transhboxwrap
 \else
  \expandafter\transhboxwrap
 \fi}
\def\transhboxwrap{\ifvbox\transbox
  \setbox\transbox\hbox{\box\transbox}\fi
 \transboxtodo}
\long\def\transboxdef#1\transboxend{\bgroup\def\transboxtodo{#1\egroup}\transboxini}
\long\def\transhboxdef#1\transboxend{\bgroup\def\transboxtodo{#1\egroup}\transhboxini}
\newif\iftransbbox
\newif\iftranscbox
\long\def\transbcboxdef#1\transbboxdef#2\transcboxdef#3\transboxend{\bgroup\edef\transboxtodo{\unexpanded{#1}\iftransbbox\unexpanded{#2}\fi
  \iftranscbox\unexpanded{#3}\else\box\transbox\fi
 \egroup}\transhboxini}
\def\bboxtranson{\hbox\bgroup\transbboxtrue
  \def\transboxtodo{\box\transbox\egroup}\transboxini}
\def\bboxtransoff{\hbox\bgroup\transbboxfalse
  \def\transboxtodo{\box\transbox\egroup}\transboxini}
\def\cboxtranson{\hbox\bgroup\transcboxtrue
  \def\transboxtodo{\box\transbox\egroup}\transboxini}
\def\cboxtransoff{\hbox\bgroup\transcboxfalse
  \def\transboxtodo{\box\transbox\egroup}\transboxini}
\def\boxresizeto#1#2#3{\hbox\transboxdef
  \ifx\relax#1\relax\else \wd\transbox=\dimexpr#1\relax \fi
  \ifx\relax#2\relax\else \ht\transbox=\dimexpr#2\relax \fi
  \ifx\relax#3\relax\else \dp\transbox=\dimexpr#3\relax \fi
  \box\transbox
 \transboxend}
\def\boxresize#1#2#3{\hbox\transboxdef
  \ifx\relax#1\relax\else \wd\transbox=\dimexpr\wd\transbox+#1\relax \fi
  \ifx\relax#2\relax\else \ht\transbox=\dimexpr\ht\transbox+#2\relax \fi
  \ifx\relax#3\relax\else \dp\transbox=\dimexpr\dp\transbox+#3\relax \fi
  \box\transbox
 \transboxend}
\def\boxextents#1#2#3#4{\hbox\transboxdef
  \kern\dimexpr#1\relax
   \vbox{\kern\dimexpr#3\relax
    \box\transbox
    \kern\dimexpr#4\relax
   }\kern\dimexpr#2\relax
 \transboxend}
\def\boxhextent#1#2{\boxextents{#1}{#2}\z@\z@}
\def\boxvextent#1#2{\boxextents\z@\z@{#1}{#2}}
\def\boxexts#1#2#3#4{\hbox\transboxdef
  \trans:dim:a=\wd\transbox
  \trans:dim:b=\dimexpr\trans:dim:a+#1+#2\relax
  \trans:dim:c=\dimexpr\ht\transbox+\dp\transbox\relax
  \trans:dim:d=\dimexpr\trans:dim:c+#3+#4\relax
  \edef\trans:def:a{\fdivide\trans:dim:b\trans:dim:a}\edef\trans:def:b{\fdivide\trans:dim:d\trans:dim:c}\savebp\trans:def:c-\dimexpr#1\relax
  \savebp\trans:def:d\dimexpr-#4+(#3+#4)*\dp\transbox/\trans:dim:c\relax
  \pdfliteral{q \trans:def:a\space 0 0 \trans:def:b\space
                \trans:def:c\space \trans:def:d\space cm}\savebp\trans:def\wd\transbox
  \box\transbox
  \pdfliteral{Q 1 0 0 1 \trans:def\space 0 cm}\transboxend}
\def\boxhext#1#2{\boxexts{#1}{#2}\z@\z@}
\def\boxvext#1#2{\boxexts\z@\z@{#1}{#2}}
\def\boxrevolveleft{\hbox\transbcboxdef
  \trans:dim:a=\wd\transbox
  \trans:dim:b=\ht\transbox
 \transbboxdef
  \wd\transbox=\dimexpr\ht\transbox+\dp\transbox\relax
  \ht\transbox=\trans:dim:a
  \dp\transbox=\z@
 \transcboxdef
  \pdfliteral{q 0 1 -1 0 \tobp{\trans:dim:b} 0 cm}\savebp\trans:def\wd\transbox
  \box\transbox
  \pdfliteral{Q 1 0 0 1 \trans:def\space 0 cm}\transboxend}
\def\boxrevolveright{\hbox\transbcboxdef
  \trans:dim:a=\wd\transbox
  \trans:dim:b=\dp\transbox
 \transbboxdef
  \wd\transbox=\dimexpr\ht\transbox+\dp\transbox\relax
  \ht\transbox=\trans:dim:a
  \dp\transbox=\z@
 \transcboxdef
  \pdfliteral{q 0 -1 1 0 \tobp{\trans:dim:b} \tobp{\trans:dim:a} cm}\savebp\trans:def\wd\transbox
  \box\transbox
  \pdfliteral{Q 1 0 0 1 \trans:def\space 0 cm}\transboxend}
\def\boxrotatexy#1#2#3{\hbox\transboxdef
  \floatsincos\trans:def:a\trans:def:b{#1}\savebp\trans:def:c\dimexpr#2\relax
  \savebp\trans:def:d\dimexpr#3\relax
  \pdfliteral{q \trans:def:b\space
                \negbp\trans:def:a\space
                \trans:def:a\space
                \trans:def:b\space
                \trans:def:c\space
                \trans:def:d\space cm
                1 0 0 1 \negbp\trans:def:c\space
                        \negbp\trans:def:d\space cm}\savebp\trans:def\wd\transbox
  \box\transbox
  \pdfliteral{Q 1 0 0 1 \trans:def\space 0 cm}\transboxend}
\def\boxrotatell#1{\boxrotatexy{#1}\z@{-\dp\transbox}}
\def\boxrotateul#1{\boxrotatexy{#1}\z@{\ht\transbox}}
\newif\ifbboxright
\def\box:rotate:bb#1{\trans:dim:a=\wd\transbox
 \trans:dim:b=\ht\transbox
 \trans:dim:c=\dp\transbox
 \trans:dim:d=\dimexpr\ht\transbox+\dp\transbox\relax
 \trans:count=\reducetrigangle{#1}\fractperiod\relax
 \ifcase\fracttrigfourth\trans:count\relax
  \fr@ct:sin:cos:i\trans:def:a\trans:def:b\trans:count
  \wd\transbox=\dimexpr\fr@ct:mul\trans:dim:a\trans:def:b
                      +\fr@ct:mul\trans:dim:d\trans:def:a\relax
  \ht\transbox=\dimexpr\fr@ct:mul\trans:dim:b\trans:def:b\relax
  \dp\transbox=\dimexpr\fr@ct:mul\trans:dim:a\trans:def:a
                      +\fr@ct:mul\trans:dim:c\trans:def:b\relax
  \savebp\trans:def:c=\dimexpr\fr@ct:mul\trans:dim:c\trans:def:a\relax
 \or
  \fr@ct:sin:cos:ii\trans:def:a\trans:def:b\trans:count
  \wd\transbox=\dimexpr-\fr@ct:mul\trans:dim:a\trans:def:b
                       +\fr@ct:mul\trans:dim:d\trans:def:a\relax
  \ht\transbox=\dimexpr-\fr@ct:mul\trans:dim:c\trans:def:b\relax
  \dp\transbox=\dimexpr-\fr@ct:mul\trans:dim:b\trans:def:b
                        +\fr@ct:mul\trans:dim:a\trans:def:a\relax
  \savebp\trans:def:c=\dimexpr-\fr@ct:mul\trans:dim:a\trans:def:b
                              +\fr@ct:mul\trans:dim:c\trans:def:a\relax
 \or
  \fr@ct:sin:cos:iii\trans:def:a\trans:def:b\trans:count
  \wd\transbox=\dimexpr-\fr@ct:mul\trans:dim:a\trans:def:b
                       -\fr@ct:mul\trans:dim:d\trans:def:a\relax
  \ht\transbox=\dimexpr-\fr@ct:mul\trans:dim:a\trans:def:a
                       -\fr@ct:mul\trans:dim:c\trans:def:b\relax
  \dp\transbox=\dimexpr-\fr@ct:mul\trans:dim:b\trans:def:b\relax
  \savebp\trans:def:c=\dimexpr-\fr@ct:mul\trans:dim:a\trans:def:b
                              -\fr@ct:mul\trans:dim:b\trans:def:a\relax
 \or
  \fr@ct:sin:cos:iv\trans:def:a\trans:def:b\trans:count
  \wd\transbox=\dimexpr\fr@ct:mul\trans:dim:a\trans:def:b
                      -\fr@ct:mul\trans:dim:d\trans:def:a\relax
  \ht\transbox=\dimexpr\fr@ct:mul\trans:dim:b\trans:def:b
                      -\fr@ct:mul\trans:dim:a\trans:def:a\relax
  \dp\transbox=\dimexpr\fr@ct:mul\trans:dim:c\trans:def:b\relax
  \savebp\trans:def:c=\dimexpr-\fr@ct:mul\trans:dim:b\trans:def:a\relax
 \fi
 \ifbboxright
  \trans:dim:d=\dimexpr\fr@ct:mul\trans:dim:a\trans:def:a\relax
  \ht\transbox=\dimexpr\ht\transbox+\trans:dim:d\relax
  \dp\transbox=\dimexpr\dp\transbox-\trans:dim:d\relax
  \savebp\trans:def:d\trans:dim:d
 \else
  \def\trans:def:d{0}\fi
 \edef\trans:def:a{\fr@ct:div\trans:def:a}\edef\trans:def:b{\fr@ct:div\trans:def:b}\pdfliteral{q \trans:def:b\space
               \negbp\trans:def:a\space
               \trans:def:a\space
               \trans:def:b\space
               \trans:def:c\space
               \trans:def:d\space cm}\savebp\trans:def=\wd\transbox
 \box\transbox
 \pdfliteral{Q 1 0 0 1 \trans:def\space 0 cm}}
\def\box:slant:bb#1#2{\trans:dim:a=\wd\transbox
 \trans:dim:b=\ht\transbox
 \trans:dim:c=\dp\transbox
 \trans:dim:d=\dimexpr\ht\transbox+\dp\transbox\relax
 \trans:count=\reducetrigangle{#1}{2*\fractfourth}\relax
 \ifcase\fracttrigfourth\trans:count\relax
  \fr@ct:sin:cos:i\trans:def:a\trans:def:b\trans:count
  \wd\transbox=\dimexpr\trans:dim:a+\trans:dim:d*\trans:def:a/\trans:def:b\relax
  \savebp\trans:def:c=\dimexpr\trans:dim:c*\trans:def:a/\trans:def:b\relax
 \or
  \fr@ct:sin:cos:ii\trans:def:a\trans:def:b\trans:count
  \wd\transbox=\dimexpr\trans:dim:a-\trans:dim:d*\trans:def:a/\trans:def:b\relax
  \savebp\trans:def:c=\dimexpr-\trans:dim:b*\trans:def:a/\trans:def:b\relax
 \fi
 \edef\trans:def{\fdivide\trans:def:a\trans:def:b}\trans:count=\reducetrigangle{#2}{2*\fractfourth}\relax
 \ifcase\fracttrigfourth\trans:count\relax
  \fr@ct:sin:cos:i\trans:def:a\trans:def:b\trans:count
  \ht\transbox=\dimexpr\trans:dim:b+\trans:dim:a*\trans:def:a/\trans:def:b\relax
 \or
  \fr@ct:sin:cos:ii\trans:def:a\trans:def:b\trans:count
  \dp\transbox=\dimexpr\trans:dim:c-\trans:dim:a*\trans:def:a/\trans:def:b\relax
 \fi
 \ifbboxright
  \trans:dim:d=\dimexpr-\trans:dim:a*\trans:def:a/\trans:def:b\relax
  \ht\transbox=\dimexpr\ht\transbox+\trans:dim:d\relax
  \dp\transbox=\dimexpr\dp\transbox-\trans:dim:d\relax
  \savebp\trans:def:d\trans:dim:d
 \else
  \def\trans:def:d{0}\fi
 \edef\trans:def:a{\fdivide\trans:def:a\trans:def:b}\pdfliteral{q 1
               \trans:def:a\space
               \trans:def\space
               1
               \trans:def:c\space
               \trans:def:d\space cm}\savebp\trans:def=\wd\transbox
 \box\transbox
 \pdfliteral{Q 1 0 0 1 \trans:def\space 0 cm}}
\def\boxxform{\hbox\transboxdef
  \immediate\pdfxform\transbox
  \pdfrefxform\pdflastxform
 \transboxend}
\def\boxxformspec#1\boxxform{\hbox\transboxdef
  \immediate\pdfxform#1\transbox
  \pdfrefxform\pdflastxform
 \transboxend}
\def\boxraise#1{\hbox\transboxdef
  \raise\dimexpr#1\relax\box\transbox
 \transboxend}
\def\boxlower#1{\hbox\transboxdef
  \lower\dimexpr#1\relax\box\transbox
 \transboxend}
\def\boxbaselineat#1{\hbox\transboxdef
  \lower\dimexpr(\ht\transbox+\dp\transbox)*(#1)/\transfactor-\dp\transbox\relax
  \box\transbox
 \transboxend}
\def\boxmoveleft#1{\vbox\transboxdef
  \moveleft\dimexpr#1\relax\box\transbox
 \transboxend}
\def\boxmoveright#1{\vbox\transboxdef
  \moveright\dimexpr#1\relax\box\transbox
 \transboxend}
\def\b@x:rule#1#2#{#1\transboxdef
  \setbox0\hbox{\vrule width\wd\transbox height\ht\transbox depth\dp\transbox #2}\wd\transbox=\wd0
  \ht\transbox=\ht0
  \dp\transbox=\dp0
  \box\transbox
 \transboxend#1}
\def\hboxr{\b@x:rule\hbox}
\def\vboxr{\b@x:rule\vbox}
\def\vtopr{\b@x:rule\vtop}
\def\boxgs#1#2{\hbox\transboxdef
  \pdfliteral{q #1}\savebp\trans:def\wd\transbox
  \box\transbox
  \pdfliteral{#2 Q 1 0 0 1 \trans:def\space 0 cm}\transboxend}
\def\boxmarkers#1#2#3{\hbox\transboxdef
  \copy\transbox
  \trans:dim:a=\dimexpr#1\relax
  \trans:dim:b=\dimexpr#2\relax
  \pdfliteral{q #3}\savebp\trans:def-\dp\transbox
  \box:markers:h
  \savebp\trans:def\ht\transbox
  \box:markers:h
  \savebp\trans:def-\wd\transbox
  \box:markers:v
  \savebp\trans:def\z@
  \box:markers:v
  \pdfliteral{S Q}\setbox\transbox\box\voidb@x
 \transboxend}
\def\box:markers:h{\savebp\trans:def:a\trans:dim:a
  \savebp\trans:def:b\trans:dim:b
  \pdfliteral{\trans:def:a\space\trans:def\space m \trans:def:b\space\trans:def\space l}\savebp\trans:def:a\dimexpr-\wd\transbox-\trans:dim:a\relax
  \savebp\trans:def:b\dimexpr-\wd\transbox-\trans:dim:b\relax
  \pdfliteral{\trans:def:a\space\trans:def\space m \trans:def:b\space\trans:def\space l}}
\def\box:markers:v{\savebp\trans:def:a\dimexpr-\dp\transbox-\trans:dim:a\relax
  \savebp\trans:def:b\dimexpr-\dp\transbox-\trans:dim:b\relax
  \pdfliteral{\trans:def\space \trans:def:a\space m \trans:def\space \trans:def:b\space l}\savebp\trans:def:a\dimexpr\ht\transbox+\trans:dim:a\relax
  \savebp\trans:def:b\dimexpr\ht\transbox+\trans:dim:b\relax
  \pdfliteral{\trans:def\space \trans:def:a\space m \trans:def\space \trans:def:b\space l}}
\def\boxphantom{\hbox\transboxdef
  \hbox to\wd\transbox
   {\vrule width\z@ height\ht\transbox depth\dp\transbox\hss}\transboxend}
\def\boxsmash{\hbox\transhboxdef
  \wd\transbox=\z@
  \ht\transbox=\z@
  \dp\transbox=\z@
  \box\transbox
 \transboxend}
\def\hboxsmash{\hbox\transhboxdef
  \wd\transbox=\z@
  \box\transbox
 \transboxend}
\def\vboxsmash{\vbox\transhboxdef
  \ht\transbox=\z@
  \dp\transbox=\z@
  \box\transbox
 \transboxend}
\def\box:about#1{\hbox\bgroup
  \def\transboxtodo{\trans:dim:a=\wd\transbox
   \trans:dim:b=\ht\transbox
   \trans:dim:c=\dp\transbox
   \box\transbox
   \hbox to\z@{\hss
    \hbox to\trans:dim:a{\hss
     \lower\trans:dim:c\vbox to\z@{\vss
      \vbox to\dimexpr\trans:dim:b+\trans:dim:c{\vss\tt
       #1\vbox{\vskip1ex
        \halign{\hskip1ex plus 1fil####&####\hskip1ex plus 1fil\cr
         \trans:def\span\cr
         wd & \the\trans:dim:a\cr
         ht & \the\trans:dim:b\cr
         dp & \the\trans:dim:c\cr}\vskip1ex
       }\vss}}\hss}}\egroup}\def\trans:def{}\def\trans:def:a{}\box:@bout}
\def\box:@bout#1{\ifcase
  \ifx#1\hbox 0 \else
  \ifx#1\vbox 1 \else
  \ifx#1\vtop 2 \else
  \ifx#1\box  3 \else
  \ifx#1\copy 4 \else 5 \fi\fi\fi\fi\fi
 \edef\trans:def{\trans:def\string\hbox}\expandafter\transboxini\or
 \edef\trans:def{\trans:def\string\vbox}\expandafter\transboxini\or
 \edef\trans:def{\trans:def\string\vtop}\expandafter\transboxini\or
 \edef\trans:def{\trans:def\string\box}\expandafter\boxabout:register\or
 \edef\trans:def{\trans:def\string\copy}\expandafter\boxabout:register\or
 \ifx#1\trans:def:a\errmessage{`#1' is not a box}\fi\let\trans:def:a#1\edef\trans:def{\trans:def\string#1->}\expandafter\expandafter\expandafter\box:@bout\fi
 #1}
\def\boxabout:register#1{\let\trans:def:a#1\afterassignment\boxabout:r@gister\trans:count}
\def\boxabout:r@gister{\edef\trans:def{\trans:def\the\trans:count\space
  (\ifvoid\trans:count void\else
   \ifhbox\trans:count hbox\else
   \ifvbox\trans:count vbox\fi\fi\fi)}\afterassignment\transboxtodo
 \setbox\transbox\trans:def:a\trans:count}
\def\boxabout#1{\box:about{\boxgs{#1}{}}}
\def\expandnumberafter#1#2{\expandafter#1\expandafter{\number#2}}
\def\expandtwonumbersafter#1#2#3{\expandafter#1\expandafter
 {\number#2\expandafter}\expandafter
 {\number#3}}
\def\expandthreenumbersafter#1#2#3#4{\expandafter#1\expandafter
 {\number#2\expandafter}\expandafter
 {\number#3\expandafter}\expandafter
 {\number#4}}
\def\expandnumexprafter#1#2{\expandafter#1\expandafter{\number\numexpr#2}}
\def\expandtwonumexprafter#1#2#3{\expandafter#1\expandafter
 {\number\numexpr#2\expandafter}\expandafter
 {\number\numexpr#3}}
\def\expandthreenumexprafter#1#2#3#4{\expandafter#1\expandafter
 {\number\numexpr#2\expandafter}\expandafter
 {\number\numexpr#3\expandafter}\expandafter
 {\number\numexpr#4}}
\def\expanddimexprafter#1#2{\expandafter#1\expandafter{\the\dimexpr#2}}
\edef\pt:f@ctor{\number\dimexpr100pt} \edef\bp:f@ctor{\number\dimexpr100bp}
\def\withoutpt{\expandafter\with@ut:pt}
\def\negbp#1{\withoutpt\the\dimexpr-#1pt\relax}
\def\asbp#1{\withoutpt\the\dimexpr#1*\pt:f@ctor/\bp:f@ctor\relax}
\def\roundbp#1{\expandafter\r@undbp\the\dimexpr(#1)*\pt:f@ctor/\bp:f@ctor\relax0000\relax}
\def\r@undbp{\csname r@und:bp:\the\pdfdecimaldigits\expandafter\endcsname
	\with@ut:pt}
\def\csname r@und:bp:0\endcsname #1.#2#3\relax{\number\numexpr#1#2/10\relax}
\def\csname r@und:bp:1\endcsname #1.#2#3#4\relax{\round:bp:once{#1}{#2#3}\relax}
\def\csname r@und:bp:2\endcsname #1.#2#3#4#5\relax{\round:bp:once{#1}{#2#3#4}\relax}
\def\csname r@und:bp:3\endcsname #1.#2#3#4#5#6\relax{\round:bp:once{#1}{#2#3#4#5}\relax}
\def\csname r@und:bp:4\endcsname #1.#2#3#4#5#6#7\relax{\round:bp:once{#1}{#2#3#4#5#6}\relax}
\def\round:bp:once#1#2{\ifnum#11<0-\number\numexpr-\else\number\numexpr\fi
 #1+(\m@ne+\expandafter\r@und:bp:once\number\numexpr1#2/10\relax}
\def\r@und:bp:once#1#2\relax{#1)\relax\ifnum#2>0.#2\fi}
\def\set:bp:rounder#1#2{\expandafter\edef\csname #1:\the\numexpr#2\relax\endcsname##1{\unexpanded{\expandafter\expandafter\expandafter}\expandafter\noexpand
  \csname r@und:bp:\the\numexpr#2\relax\endcsname
  \unexpanded{\expandafter\with@ut:pt\the}\dimexpr(##1)*\unexpanded{\pt:f@ctor/\bp:f@ctor}\relax0000\relax}}
\def\roundbpto#1{\csname roundbpto:#1\endcsname}
\def\enablebpround{\let\tobp\roundbp}
\def\disablebpround{\let\tobp\asbp}
\def\setbpround#1{\expandafter\let\expandafter\tobp\csname roundbpto:\the\numexpr#1\relax\endcsname}
\def\savebp#1{\def\s@vebp{\edef#1{\tobp{\bp:dim@n}}}\afterassignment\s@vebp\bp:dim@n}
\newdimen\bp:dim@n
\def\absoluteint#1{\numexpr\ifnum#1<\z@-\fi#1}
\def\absolutedim#1{\dimexpr\ifdim#1<\z@-\fi#1}
\def\expanddivisionafter#1#2#3{\expandnumexprafter#1{#2/#3}{#2}{#3}}
\def\divfloor{\expandtwonumexprafter\dividefloor}
\def\dividefloor{\expanddivisionafter\divide:fl@@r}
\def\divide:fl@@r#1#2#3{\numexpr#1\ifcase\ifnum#2<0 \ifnum#3<0 1 \else 0 \fi
        \else      \ifnum#3<0 1 \else 0 \fi \fi
 \ifnum\numexpr#1*#3>#2-\@ne\fi\or
 \ifnum\numexpr#1*#3<#2-\@ne\fi\fi}
\def\divceil{\expandtwonumexprafter\divideceil}
\def\divideceil{\expanddivisionafter\divide:c@il}
\def\divide:c@il#1#2#3{\numexpr#1\ifcase\ifnum#2<0 \ifnum#3<0 1 \else 0 \fi
        \else      \ifnum#3<0 1 \else 0 \fi \fi
 \ifnum\numexpr#1*#3<#2+\@ne\fi\or
 \ifnum\numexpr#1*#3>#2+\@ne\fi\fi}
\def\divint{\expandtwonumexprafter\divideint}
\def\divideint{\expanddivisionafter\divide:int}
\def\divide:int#1#2#3{\numexpr#1\ifcase\ifnum#2<0 \ifnum#3<0 3 \else 1 \fi
        \else      \ifnum#3<0 2 \else 0 \fi \fi
 \ifnum\numexpr#1*#3>#2-\@ne\fi\or
 \ifnum\numexpr#1*#3<#2+\@ne\fi\or
 \ifnum\numexpr#1*#3>#2+\@ne\fi\or
 \ifnum\numexpr#1*#3<#2-\@ne\fi\fi}
\def\divnint{\expandtwonumexprafter\dividenint}
\def\dividenint#1#2{\numexpr#1/#2}
\def\modulo{\expanddivisionafter\do:m@dulo}
\def\do:m@dulo#1#2#3{\numexpr#2-#3*\divide:fl@@r{#1}{#2}{#3}\relax}
\def\dividefloorpos{\expanddivisionafter\divide:fl@@r:pos}
\def\divide:fl@@r:pos#1#2#3{\numexpr#1\ifnum\numexpr#1*#3>#2-\@ne\fi}
\def\divide:c@il:pos#1#2#3{\numexpr#1\ifnum\numexpr#1*#3<#2+\@ne\fi}
\def\modpos{\expandtwonumexprafter\modulopos}
\def\modulopos{\expanddivisionafter\modulo:p@s}
\def\modulo:p@s#1#2#3{\numexpr#2-#3*\divide:fl@@r:pos{#1}{#2}{#3}\relax}
\def\floatround#1{\divnint{\dimexpr#1pt}\p@}
\def\floatfloor#1{\divfloor{\dimexpr#1pt}\p@}
\def\floatceil#1{\divceil{\dimexpr#1pt}\p@}
\def\floatint#1{\divint{\dimexpr#1pt}\p@}
\def\floatnint#1{\divnint{\dimexpr#1pt}\p@}
\def\fdivide{\expandtwonumexprafter\flo@t:divide}
\def\flo@t:divide#1#2{\withoutpt\the\dimexpr\numexpr#1*\p@/#2\relax sp\relax}
\def\divfloat{\expandthreenumexprafter\dividefloat}
\def\dividefloat#1#2#3{\expandnumberafter\divide:flo@t {\absoluteint{\divideint{#1}{#2}}}{#1}{#2}{#3}}
\def\divide:flo@t#1#2#3{\ifnum#2<0 \ifnum#3>0 -\fi\else
 \ifnum#2>0 \ifnum#3<0 -\fi\fi\fi
 #1.\expandthreenumexprafter\divide:fl@@t
 {#1}{\absoluteint{#2}}{\absoluteint{#3}}}
\def\divide:fl@@t#1#2#3{\expandnumexprafter\divide:flo@t:modulo{#2-#1*#3}{#3}}
\def\divide:flo@t:modulo#1#2{\ifnum#1<214748365
  \expandtwonumbersafter\divide:flo@t:result{#10}{#2\expandafter}\else
  \expandtwonumexprafter\divide:flo@t:modulo{#1/2}{#2/2\expandafter}\fi}
\def\divide:flo@t:result#1#2#3{\ifnum#3>1
  \expandtwonumexprafter\divide:flo@t:repeat
  {#3-\@ne}{\dividefloorpos{#1}{#2}\expandafter}\else
  \number\divide:flo@t:last{#1}{#2}\relax
  \expandafter\gobbletwo
 \fi{#1}{#2}}
\def\divide:flo@t:repeat#1#2#3#4{#2\expandnumexprafter\divide:flo@t:modulo{#3-#2*#4}{#4}{#1}}
\def\roundlast{\let\divide:flo@t:last\dividenint}
\def\floorlast{\let\divide:flo@t:last\divideint}
\def\tofixedbp#1{\divfloat{\dimexpr#1}\b@\floatprecision}
\def\roundfixedbp#1{\divfloat{\dimexpr#1}\b@\pdfdecimaldigits}
\def\fractdegree#1{\numexpr16*\dimexpr#1pt}           \edef\fractfactor{\number\numexpr\maxdimen+\@ne}      \edef\fractfourth{\number\numexpr90*\fractdegree\@ne} \edef\fractperiod{\number\numexpr4*\fractfourth}
\def\reducefractangle#1{\expandnumberafter\reduce:fr@ct:angle{\fractdegree{#1}}}
\def\reduce:fr@ct:angle#1#2{\ifnum#1<0
  \numexpr#2-\modpos{-#1}{#2}\relax
 \else
  \modpos{#1}{#2}\fi}
\def\reduceintangle#1#2{\expandtwonumexprafter\reduce:int:@ngle{#1}{#2/\fractdegree\@ne}}
\def\reduce:int:@ngle#1#2{\fractdegree{\reduce:fr@ct:angle{#1}{#2}}}
\def\enablefractangle{\let\reducetrigangle\reducefractangle}
\def\disablefractangle{\let\reducetrigangle\reduceintangle}
\def\fracttrigfourth#1{\dividefloorpos{#1}\fractfourth}
\def\fr@ct:mul#1#2{#1*#2/\fractfactor}
\def\fr@ct:div#1{\fdivide{#1}\fractfactor}
\def\fr@ct:angle#1{\ifcase\numexpr#1\relax
62914560\or 47185920\or 31457280\or 16777216\or 8388608\or 4194304\or 2097152\or 1048576\or 524288\or 262144\or 131072\or 65536\or 32768\or 16384\or 8192\or 4096\or 2048\or 1024\or 512\or 256\or 128\or 64\or 32\or 16\or 8\or 4\or 2\or 1\fi}
\def\fr@ct:sin#1{\ifcase\numexpr#1\relax
929887697\or 759250125\or 536870912\or 295963357\or 149435979\or 74900443\or 37473049\or 18739379\or 9370046\or 4685068\or 2342539\or 1171270\or 585635\or 292818\or 146409\or 73204\or 36602\or 18301\or 9151\or 4575\or 2288\or 1144\or 572\or 286\or 143\or 71\or 36\or 18\fi}
\def\fr@ct:cos#1{\ifcase\numexpr#1\relax
 536870912\or 759250125\or 929887697\or 1032146887\or 1063292242\or 1071126243\or 1073087729\or 1073578288\or 1073700939\or 1073731603\or 1073739269\or 1073741185\or 1073741664\or 1073741784\or 1073741814\or 1073741822\or 1073741823\or 1073741824\or 1073741824\or 1073741824\or 1073741824\or 1073741824\or 1073741824\or 1073741824\or 1073741824\or 1073741824\or 1073741824\or 1073741824\fi}
\edef\csname
\edef\csname
\edef\csname
\def\fr@ct:angle#1{\csname fractangle:\number#1\endcsname}
\def\fr@ct:sin#1{\csname fractsinvalue:\number#1\endcsname}
\def\fr@ct:cos#1{\csname fractcosvalue:\number#1\endcsname}
\def\fracttrig#1{\expandnumexprafter\fr@ct:trig{\reducetrigangle{#1}\fractperiod}}
\def\fr@ct:trig#1{\csname fr@ct:trig:\romannumeral\fracttrigfourth{#1}+\@ne\endcsname
 {#1}}
\def\fr@ct:trig:i#1#2#3{\expandthreenumexprafter\fr@ct:trig:cont{#1}{#2}{#3}\z@}
\def\fr@ct:trig:ii#1#2#3{\expandthreenumexprafter\fr@ct:trig:cont{#1-\fractfourth}{#3}{-#2}\z@}
\def\fr@ct:trig:iii#1#2#3{\expandthreenumexprafter\fr@ct:trig:cont{#1-2*\fractfourth}{-#2}{-#3}\z@}
\def\fr@ct:trig:iv#1#2#3{\expandthreenumexprafter\fr@ct:trig:cont{#1-3*\fractfourth}{-#3}{#2}\z@}
\def\fr@ct:trig:cont#1#2#3#4{\ifcase
  \ifnum#1>0 \ifnum#1<\fr@ct:angle{#4} 0 \else 1 \fi \else 2 \fi
  \expandafter\fr@ct:trig:cont\expandafter
  {\number#1\expandafter}\expandafter
  {\number#2\expandafter}\expandafter
  {\number#3\expandafter}\expandafter
  {\number\numexpr#4+\@ne\expandafter}\or
  \expandafter\fr@ct:trig:cont\expandafter
  {\number\numexpr#1-\fr@ct:angle{#4}\expandafter}\expandafter
  {\number\numexpr\fr@ct:mul{#2}{\fr@ct:cos{#4}}+\fr@ct:mul{#3}{\fr@ct:sin{#4}}\expandafter}\expandafter
  {\number\numexpr\fr@ct:mul{#3}{\fr@ct:cos{#4}}-\fr@ct:mul{#2}{\fr@ct:sin{#4}}\expandafter}\expandafter
  {\number\numexpr#4+\@ne\expandafter}\or
  \fracttrigend{#2}{#3}\fi}
\def\fracttrigend#1#2\fi#3{\fi#3{#1}{#2}}
\def\fractsincos#1#2#3{\fracttrig{#3}\z@\fractfactor\fr@ct:sin:cos#1#2}
\def\fr@ct:sin:cos#1#2#3#4{\def#3{#1}\def#4{#2}}
\def\fr@ct:sin:cos:i#1#2#3{\fr@ct:trig:i{#3}\z@\fractfactor\fr@ct:sin:cos#1#2}
\def\fr@ct:sin:cos:ii#1#2#3{\fr@ct:trig:ii{#3}\z@\fractfactor\fr@ct:sin:cos#1#2}
\def\fr@ct:sin:cos:iii#1#2#3{\fr@ct:trig:iii{#3}\z@\fractfactor\fr@ct:sin:cos#1#2}
\def\fr@ct:sin:cos:iv#1#2#3{\fr@ct:trig:iv{#3}\z@\fractfactor\fr@ct:sin:cos#1#2}
\def\floatsincos#1#2#3{\fracttrig{#3}\z@\fractfactor\flo@t:sin:c@s#1#2}
\def\flo@t:sin:c@s#1#2#3#4{\edef#3{\fr@ct:div{#1}}\edef#4{\fr@ct:div{#2}}}
 \newbox\qbox
\def\usecolor#1{\csname\string\color@#1\endcsname\space}
\newcommand\bordercolor[1]{\colsplit{1}{#1}}
\newcommand\fillcolor[1]{\colsplit{0}{#1}}
\newcommand\outline[1]{\leavevmode \def\maltext{#1}\setbox\qbox=\hbox{\maltext}\boxgs{Q q 2 Tr \thickness\space w \fillcol\space \bordercol\space}{}\copy\qbox }
\newcommand\colsplit[2]{\colorlet{tmpcolor}{#2}\edef\tmp{\usecolor{tmpcolor}}\def\tmpB{}\expandafter\colsplithelp\tmp\relax \ifnum0=#1\relax\edef\fillcol{\tmpB}\else\edef\bordercol{\tmpC}\fi}
\def\colsplithelp#1#2 #3\relax{\edef\tmpB{\tmpB#1#2 }\ifnum `#1>`9\relax\def\tmpC{#3}\else\colsplithelp#3\relax\fi
}
\def\thickness{.3}
\newcommand{\next}{\text{\rm \raisebox{-.5pt}{\Large\textopenbullet}}}  \newcommand{\previous}{\text{\rm \raisebox{-.5pt}{\Large\textbullet}}}  \newcommand{\wnext}{\ensuremath{\widehat{\next}}}
\newcommand{\wprevious}{\ensuremath{\widehat{\previous}}}
\newcommand{\alwaysF}{\ensuremath{\square}}
\newcommand{\alwaysP}{\ensuremath{\blacksquare}}
\newcommand{\eventuallyF}{\ensuremath{\Diamond}}
\newcommand{\eventuallyP}{\ensuremath{\blacklozenge}}
        \newcommand{\until}{\ensuremath{\mathbin{\mbox{\outline{$\bm{\mathsf{U}}$}}}}}
        \newcommand{\release}{\ensuremath{\mathbin{\mbox{\outline{$\bm{\mathsf{R}}$}}}}}
        \newcommand{\while}{\ensuremath{\mathbin{\mbox{\outline{$\bm{\mathsf{W}}$}}}}}
\newcommand{\since}{\ensuremath{\mathbin{\bm{\mathsf{S}}}}}
\newcommand{\trigger}{\ensuremath{\mathbin{\bm{\mathsf{T}}}}}
        \newcommand{\finally}{\ensuremath{\mbox{\outline{$\bm{\mathsf{F}}$}}}}
\newcommand{\initially}{\ensuremath{\bm{\mathsf{I}}}}
\newcommand{\intervc}[2]{[#1..#2]}
\newcommand{\intervo}[2]{[#1..#2)}
\newcommand{\ointerv}[2]{(#1..#2]}
\newcommand{\rangec}[3]{#1 \in \intervc{#2}{#3}}
\newcommand{\rangeo}[3]{#1 \in \intervo{#2}{#3}}
\newcommand{\orange}[3]{#1 \in \ointerv{#2}{#3}}
 \newcommand{\HT}{\ensuremath{\mathrm{HT}}}
\newcommand{\LTL}{\ensuremath{\mathrm{LTL}}}
\newcommand{\LTLf}{\ensuremath{\mathrm{LTL}_{\!f}}}
\newcommand{\LTLo}{\ensuremath{\mathrm{LTL}_{\omega}}}
\newcommand{\THT}{\ensuremath{\mathrm{THT}}}
\newcommand{\THTf}{\ensuremath{\mathrm{THT}_{\!f}}}
\newcommand{\THTo}{\ensuremath{\mathrm{THT}_{\!\omega}}}
\newcommand{\TEL}{\ensuremath{\mathrm{TEL}}}
\newcommand{\TELf}{\ensuremath{{\TEL}_{\!f}}}
\newcommand{\TELo}{\ensuremath{{\TEL}_{\omega}}}
\newcommand{\LDL}{\ensuremath{\mathrm{LDL}}}
\newcommand{\LDLf}{\ensuremath{\mathrm{LDL}_{\!f}}}
\newcommand{\DEL}{\ensuremath{\mathrm{DEL}}}
 \newcommand{\sysfont}{\textit}
\newcommand{\abstem}{\sysfont{abstem}}
\newcommand{\clingo}{\sysfont{clingo}}
\newcommand{\stelp}{\sysfont{stelp}}
\newcommand{\telingo}{\sysfont{telingo}}
\newcommand{\tel}{\sysfont{tel}}
 \renewcommand{\sysfont}{\texttt}
\newtheorem{definition}{Definition}
\newtheorem{proposition}{Proposition}
\newtheorem{corollary}{Corollary}
\newtheorem{lemma}{Lemma}
\newtheorem{example}{Example}
\newtheorem{theorem}{Theorem}
\newcommand{\eqdef}{\ensuremath{\mathbin{\raisebox{-1pt}[-3pt][0pt]{$\stackrel{\mathit{def}}{=}$}}}}
\newcommand{\PV}{\ensuremath{\mathcal{A}}}
\newcommand{\myatom}{\ensuremath{a}}
\newcommand{\QLTL}{\ensuremath{\mathrm{QLTL}}}
\newcommand{\IK}{\ensuremath{\mathrm{IK}}}
\newcommand{\QLTLo}{\ensuremath{\mathrm{QLTL}_{\omega}}}
\newcommand{\SMD}{\ensuremath{\mathrm{SM}}}
\newcommand{\SM}[1]{\ensuremath{\mathrm{SM}[#1]}}
\newcommand{\vtuple}[1]{\bm{#1}}
\newcommand{\trivaluation}[3]{\ensuremath{\bm{#3}(#1,#2)}}
\newcommand{\trival}[2]{\trivaluation{#1}{#2}{m}}
\newcommand{\EM}{\ensuremath{\mathrm{EM}}}
\newcommand{\iequiv}{\ensuremath{\mathrel{\equiv_0}}}
\newcommand{\gequiv}{\ensuremath{\mathrel{\equiv}}}
\newcommand{\qed}{\hfill~\ensuremath{\boxtimes}}
\newcommand{\qedmath}{\tag*{\ensuremath{\boxtimes}}}
\newcommand{\Next}{\text{\rm \raisebox{-.5pt}{\Large\textopenbullet}}}
\def\loaded{\ensuremath{\mathit{loaded}}}
\def\unloaded{\ensuremath{\mathit{unloaded}}}
\newcommand{\Lab}[1]{\ensuremath{{\ell_{#1}}}}
\newcommand{\Bd}{\ensuremath{B}}
\newcommand{\Hd}{\ensuremath{A}}
\def\MFO{MFO($<$)}
\def\MHT{MHT($<$)}
\def\Nat{\ensuremath{\mathbb{N}}}
\newcommand{\tr}[1]{[#1]}
\def\QHT{QHT}
\def\DLTL{DLTL}
\def\u{\mathbf{u}}
\def\nec{\alwaysF}
\def\TSM{\ensuremath{\mathrm{TSM}}}
\newcommand{\TS}{\ensuremath{\mathrm{TS}}}
\newcommand{\module}[1]{\ensuremath{\mathbb{#1}}}
\newcommand{\Head}[1]{\ensuremath{H(#1)}}
\newcommand{\Stamp}[2]{\ensuremath{#1_{#2}}} \newcommand{\initial}[1]{\ensuremath{\mathit{I}(#1)}} \newcommand{\dynamic}[1]{\ensuremath{\mathit{D}(#1)}} \newcommand{\final}[1]{\ensuremath{\mathit{F}(#1)}}
\newcommand{\Terms}{\ensuremath{\mathcal{T}}} \newcommand{\Atoms}{\ensuremath{\mathcal{A}}}
\newcommand{\df}{\ensuremath{df}}
\newcommand{\state}[1]{\text{\rm \em \large q}_{#1}}
\newcommand{\subformulas}{\ensuremath{\mathit{sub}}}
\renewcommand{\next}{\text{\rm \raisebox{-.5pt}{\Large\textopenbullet}}}
\newcommand{\manuallabel}[2]{\def\@currentlabel{#2}\label{#1}}
\begin{document}

\title{Linear-Time Temporal Answer Set Programming}

\author[F. Aguado et al.]{FELICIDAD AGUADO
  and
  PEDRO CABALAR
  \\
  University of Corunna, Spain
  \and
  MART\'{I}N DI\'EGUEZ
  \\
  Universit\'e d'Angers, France
  \and
  GILBERTO P\'EREZ
  \\
  University of Corunna, Spain
  \and
  TORSTEN SCHAUB and ANNA SCHUHMANN
  \\
  University of Potsdam, Germany
  \and
  CONCEPCI\'ON VIDAL
  \\
  University of Corunna, Spain}

\submitted{[n/a]}
\revised{[n/a]}
\accepted{[n/a]}

\maketitle

\begin{abstract}
In this survey, we present an overview on (Modal) Temporal Logic Programming in view of its application to Knowledge Representation and Declarative Problem Solving.
The syntax of this extension of logic programs is the result of combining usual rules with temporal modal operators,
as in \emph{Linear-time Temporal Logic} (\LTL).
In the paper, we focus on the main recent results of the non-monotonic formalism called \emph{Temporal Equilibrium Logic} (\TEL)
that is defined for the full syntax of \LTL\ but involves a model selection criterion based on \emph{Equilibrium Logic},
a well known logical characterization of Answer Set Programming (ASP).
As a result, we obtain a proper extension of the stable models semantics for the general case of temporal formulas in the syntax of \LTL.
We recall the basic definitions for \TEL\ and its monotonic basis, the temporal logic of Here-and-There (\THT), and study the differences between finite and infinite trace length.
We also provide further useful results, such as the translation into other formalisms like Quantified Equilibrium Logic and Second-order \LTL,
and some techniques for computing temporal stable models based on automata constructions.
In the remainder of the paper, we focus on practical aspects, defining a syntactic fragment called  \emph{(modal) temporal logic programs} closer to ASP, and
explaining how this has been exploited in the construction of the solver \telingo,
a temporal extension of the well-known ASP solver \clingo\ that uses its incremental solving capabilities.
\end{abstract}
 \section{Introduction}\label{sec:introduction}

The term \emph{Temporal Logic Programming\footnote{In this paper, we use this original meaning of the term, that is, the extension of \emph{logic programming} with \emph{temporal logic}. However, nowadays, temporal logic programming can be understood as the more general discipline of combining \emph{logic programming} with \emph{temporal} reasoning, possibly in other different ways. See related work in Section~\ref{sec:related} for a more detailed explanation of this distinction.}} was introduced in the late 1980s to refer to an extension of logic programs that incorporates modal temporal operators, usually from Linear-time Temporal Logic or LTL~\cite{kamp68a,pnueli77a}.
These \LTL\ extensions of logic programming include proposals by~\citeN{moszkowski86a}, \citeN{fukotamo86a}, \citeN{gabbay87b}, \citeN{abaman89a}, \citeN{baudinet92a} or \citeN{orgwad92a}.
However, the initial interest gradually cooled down over time and \LTL\
extensions have neither become a common feature nor had a substantial impact in Prolog.
With the appearance of \emph{Answer Set Programming} or ASP~\cite{lifschitz02a} and
its use as a formalism for practical knowledge representation, the interest in the specification of dynamic systems was renewed.
ASP has been commonly used for temporal reasoning and the representation of action theories, following the methodology proposed by~\citeN{gellif93a}.
This methodology represents transition systems by adding to all time-dependent predicates an extra integer parameter:
the time point $T$ ranging over the finite interval $0,1, \dots, n-1$ where $n \in \Nat$ is some fixed length
that can be iteratively increased.
A major disadvantage of this representation is that it provides neither special language constructs nor specific inference
methods for temporal reasoning, as available in \LTL.
As a result, it is infeasible to represent properties of reactive systems (with infinite runs or \emph{traces}) such as
\emph{safety} (``Is some particular state reachable?''),
\emph{liveness} (``Does some condition happen infinitely often?''),
or to conclude that a given planning problem has no solution at all.
Even if we restrict ourselves to finite traces,\footnote{Note that, even for finite traces, \LTL-satisfiability is still a \textsc{PSPACE}-complete problem, while ASP-satisfiability lies in $\Sigma_2^P$ in the most general case.}
we cannot exploit in non-temporal ASP the succinctness of \LTL\ expressions for characterizing plans of interest
(say, sentences like ``never execute $A$ if $B \vee C$ has held since $D$'', etc) or
the interpretation of such expressions by their well-known translations into automata.

In this paper, we focus on the temporal extension of ASP with operators from \LTL.
Combinations of ASP with other temporal logics are outside the scope of this paper,
although an overall comparison is given in Section~\ref{sec:related}.
In particular, we give an overview of the main definitions and results for the (modal) Temporal Logic Programming formalism called \emph{Temporal Equilibrium Logic} or TEL~\cite{cabper07a}.
As suggested by its name,
it combines \LTL\ with the logical characterization of ASP based on \emph{Equilibrium Logic}~\cite{pearce96a}.
Although \TEL\ was introduced more than a decade ago and a first survey was already presented by~\citeN{agcadipevi13a},
several important advances have taken place since then, providing significant breakthroughs.
On the theoretical side,
apart from new results on complexity and expressiveness~\cite{bozpea15a,baldie16,agcadipevi17a},
the most important feature has arguably been the redefinition of \TEL\ to cope not only with infinite traces,
as in its original version, but also with finite traces~\cite{cakascsc18a},
following similar steps to the same variation introduced for \LTL\ by~\citeN{giavar13a}.
On the practical side,
finite traces are much better suited to the way in which ASP is used to solve planning problems and
have paved the way for the introduction of temporal operators into the ASP solver \clingo~\cite{gekakasc17a},
giving birth to the first full-fledged temporal ASP solver called \telingo~\cite{cakamosc19a}.
In the paper at hand,
we
give a revised definition of \TEL\ that incorporates the new advances,
present the most general version of the logic (which neither imposes nor forbids finiteness of traces) and then
specify the two variants:
\TELo\ for infinite traces, as first defined by~\citeN{cabper07a}, and
\TELf\ for finite ones, as recently introduced by~\citeN{cakascsc18a}.
We revisit previous results for \TELo\ under the more general umbrella of \TEL\ and then
compare their effects to the case of \TELf,
providing a homogeneous presentation of the main achievements in the topic.

The rest of the paper is organized as follows.
In the next section,
we introduce the semantic framework of \TEL, whose basic definition is made in terms of a models selection criterion
on top of the monotonic basis provided by the logic of \emph{Temporal Here-and-There} (\THT).
We also explain the relation of \TEL\ to standard ASP by proving that the former can be seen as a fragment of Quantified Equilibrium Logic with monadic predicates and a linear order relation.
In Section~\ref{sec:tht}, we study the monotonic basis, \THT, in full detail, providing some properties and relations to other formalisms.
Section~\ref{sec:tel2aut} focuses on different techniques to compute temporal equilibrium models via an automata construction.
This is done in two steps: first defining the temporal equilibrium models of a temporal theory in terms of a
second-order formula expressed in the logic of \emph{Quantified LTL} (\QLTL), and
then building the automata from this \QLTL\ formula.
In the next section, we study a normal form for temporal theories under \TEL\ semantics: what we call \emph{(modal) temporal logic programs}.
This normal form is used as the basis for a pair of translations from temporal logic programs (for finite traces) into standard ASP programs.
We also define a syntactic fragment, called \emph{present-centered} rules, that facilitates the application of incremental reasoning and, eventually, has led to the construction of the tool \telingo, an extension of \clingo\ with temporal operators.
Finally, Section~\ref{sec:related} discusses related work and Section~\ref{sec:discussion} concludes the paper.

 \section{Temporal Equilibrium Logic}
\label{sec:tel}

The definition of \emph{(Linear-time) Temporal Equilibrium Logic} (\TEL) is done in two steps.
First, we define a monotonic logic called \emph{(Linear-time) Temporal Here-and-There} (\THT), a temporal extension of the intermediate logic of Here-and-There~\cite{heyting30a}.
In a second step, we select some models from \THT\ that are said to be \emph{in equilibrium},
obtaining in this way a non-monotonic entailment relation.

\subsection{Monotonic basis: Temporal Here-and-There}
\label{subsec:tht}

The syntax of \THT\ (and \TEL) is the same as for \LTL\ with past operators.
In particular, in this paper, we use the following notation.
Given a (countable, possibly infinite) set \PV\ of propositional variables (called \emph{alphabet}),
\emph{temporal formulas} $\varphi$ are defined by the grammar:
\[
\varphi ::= a \mid \bot \mid \varphi_1 \otimes \varphi_2 \mid \previous\varphi \mid \varphi_1 \since \varphi_2 \mid \varphi_1 \trigger \varphi_2 \mid \next \varphi \mid \varphi_1 \until \varphi_2 \mid \varphi_1 \release \varphi_2 \mid \varphi_1 \while \varphi_2
\]
where $a\in\PV$ is an atom and $\otimes$ is any binary Boolean connective $\otimes \in \{\to,\wedge,\vee\}$.
The last six cases correspond to the temporal connectives whose names are listed below:
\[
\begin{array}[t]{r|cl}
\mathit{Past} & \previous & \text{for \emph{previous}}\\
              & \since    & \text{for \emph{since}}   \\
              & \trigger  & \text{for \emph{trigger}}
\end{array}
\qquad\qquad\qquad
\begin{array}[t]{r|cl}
\mathit{Future} & \next    & \text{for \emph{next}}\\
                & \until   & \text{for \emph{until}}\\
                & \release & \text{for \emph{release}}\\
                & \while   & \text{for \emph{while}}
\end{array}
\]

the intended meaning of the previous temporal operators is the following:
$\previous \varphi$ (resp.\ $\next \varphi$) means that $\varphi$ is true at the previous (resp.\ next) time point.
$\varphi \until \psi$ means that $\varphi$ is true until $\psi$ is true,
while $\varphi \since \psi$ can be read as $\varphi$ is true since $\psi$ was true.
For $\varphi \release \psi$ and $\varphi \trigger \psi$ the meaning is not as direct as for the previous operators.
$\varphi \release \psi$ means that $\psi$ is true until both $\varphi$ and $\psi$ become true simultaneously or $\psi$ is true forever.
$\varphi \trigger \psi$ means that $\psi$ is true since both $\varphi$ and $\psi$ became true simultaneously or $\psi$ has been true from the beginning.

We also define several common derived operators like the Boolean connectives
\(
\top \eqdef \neg \bot
\),
\(
\neg \varphi \eqdef  \varphi \to \bot
\),
\(
\varphi \leftrightarrow \psi \eqdef (\varphi \to \psi) \wedge (\psi \to \varphi)
\),
and the following temporal operators:
\[
\begin{array}{rcll}
       \alwaysP \varphi  & \eqdef & \bot \trigger \varphi             & \text{\emph{always before}} \\
   \eventuallyP \varphi  & \eqdef & \top \since \varphi               & \text{\emph{eventually before}} \\
             \initially\,& \eqdef & \neg \previous \top               & \text{\emph{initial}}\\
     \wprevious \varphi  & \eqdef & \previous \varphi \vee \initially & \text{\emph{weak previous}}
\end{array}
\qquad
\begin{array}{rcll}
       \alwaysF \varphi  & \eqdef & \bot \release \varphi             & \text{\emph{always afterward}}\\
   \eventuallyF \varphi  & \eqdef & \top \until \varphi               & \text{\emph{eventually afterward}}\\
               \finally  & \eqdef & \neg \next \top                   & \text{\emph{final}}\\
         \wnext \varphi  & \eqdef & \next \varphi \vee \finally       & \text{\emph{weak next}}
\end{array}
\]
A \emph{(temporal) theory} is a (possibly infinite) set of temporal formulas.
Note that we use solid operators to refer to the past, while future-time operators are denoted by outlined symbols.

Although \THT\ and \LTL\ share the same syntax, they have a different semantics,
the former being a weaker logic than the latter.
The semantics of \LTL\ relies on the concept of a \emph{trace},
a (possibly infinite) sequence of \emph{states},
each of which is a set of atoms.
For defining traces, we start by introducing some notation to deal with intervals of integer time points.
Given $a \in \mathbb{N}$ and $b \in \mathbb{N} \cup \{\omega\}$,
we let $\intervc{a}{b}$ stand for the set $\{i \in \mathbb{N} \mid a \leq i \leq b\}$, $\intervo{a}{b}$ for $\{i \in \mathbb{N} \mid a \leq i < b\}$ and $\ointerv{a}{b}$ for $\{i \in \mathbb{N} \mid a < i \leq b\}$.
In \LTL, a \emph{trace} \T\ of length $\lambda$ over alphabet \PV\ is
a sequence $\T=(T_i)_{\rangeo{i}{0}{\lambda}}$ of sets $T_i\subseteq\PV$.
We sometimes use the notation $|\T| \eqdef \lambda$ to stand for the length of the trace.
\label{page:lambda}
We say that \T\ is \emph{infinite} if $|\T|=\omega$ and \emph{finite} if $|\T| \in \mathbb{N}$.
To represent a given trace, we write a sequence of sets of atoms concatenated with `$\cdot$'.
For instance, the finite trace $\{a\} \cdot \emptyset \cdot \{a\} \cdot \emptyset$ has length 4 and makes $a$ true at even time points and false at odd ones.
For infinite traces, we sometimes use $\omega$-regular expressions like, for instance,
in the infinite trace $(\{a\} \cdot \emptyset)^\omega$ where all even positions make $a$ true and all odd positions make it false.

At each state $T_i$ in a trace, an atom $a$  can only be true, viz.\ $a \in T_i$, or false, $a \not\in T_i$.
The logic \THT\ weakens this truth assignment, following the same intuitions as the (non-temporal) logic of \HT.
In \THT, an atom can have one of three truth-values in each state, namely,
\emph{false}, \emph{assumed} (or true by default) or \emph{proven} (or certainly true).
Anything proved has to be assumed, but the opposite does not necessarily hold.
Following this idea,
a state $i$ is represented as a pair of sets of atoms $\tuple{H_i,T_i}$ with $H_i \subseteq T_i \subseteq \PV$ where
$H_i$ (standing for ``here'') contains the proven atoms, whereas $T_i$ (standing for ``there'') contains the assumed atoms.
On the other hand, false atoms are just the ones not assumed, captured by $\PV \setminus T_i$.
Accordingly,
a \emph{Here-and-There trace} (for short \emph{\HT-trace}) of length $\lambda$ over alphabet \PV\ is
a sequence of pairs
\(
(\tuple{H_i,T_i})_{\rangeo{i}{0}{\lambda}}
\)
with $H_i\subseteq T_i$ for any $\rangeo{i}{0}{\lambda}$.
For convenience, we usually represent the \HT-trace as the pair $\tuple{\H,\T}$ of traces $\H = (H_i)_{\rangeo{i}{0}{\lambda}}$ and $\T = (T_i)_{\rangeo{i}{0}{\lambda}}$.
Given $\M=\tuple{\H,\T}$, we also denote its length as $|\M| \eqdef |\H|=|\T|=\lambda$.
Note that the two traces \H, \T\ must satisfy a kind of order relation, since $H_i \subseteq T_i$ for each time point $i$.
Formally, we define the ordering $\H \leq \T$ between two traces of the same length $\lambda$ as $H_i\subseteq T_i$ for each $\rangeo{i}{0}{\lambda}$.
Furthermore, we define $\H<\T$ as both $\H\leq\T$ and $\H\neq\T$.
Thus, an \HT-trace can also be defined as any pair $\tuple{\H,\T}$ of traces such that $\H \leq \T$.
The particular type of \HT-traces satisfying $\H=\T$ are called \emph{total}.

We proceed by generalizing the extension of \HT\ with temporal operators,
called \THT~\cite{agcadipevi13a},
to \HT-traces of fixed length in order to integrate finite as well as infinite traces.
Given any \HT-trace $\M=\tuple{\H,\T}$,
we define the \THT\ satisfaction of formulas as follows.
\begin{definition}[\THT-satisfaction;~\citeNP{agcadipevi13a,cakascsc18a}\;\footnotemark]\label{def:dht:satisfaction}\footnotetext{The while operator \while\ was introduced by~\citeN{agcafapevi20a}.}
  An \HT-trace $\M=\tuple{\H,\T}$ of length $\lambda$ over alphabet \PV\
  \emph{satisfies} a temporal formula $\varphi$ at time point $\rangeo{k}{0}{\lambda}$,
  written \mbox{$\M,k \models \varphi$}, if the following conditions hold:
  \begin{enumerate}
  \item $\M,k \models \top$ and  $\M,k \not\models \bot$
  \item $\M,k \models \myatom$ if $\myatom \in H_k$ for any atom $\myatom \in \PV$
  \item $\M, k \models \varphi \wedge \psi$
    iff
    $\M, k \models \varphi$
    and
    $\M, k \models \psi$
  \item $\M, k \models \varphi \vee \psi$
    iff
    $\M, k \models \varphi$
    or
    $\M, k \models \psi$
  \item $\M, k \models \varphi \to \psi$
    iff
    $\langle \mathbf{H}', \mathbf{T} \rangle, k \not \models \varphi$
    or
    $\langle \mathbf{H}', \mathbf{T} \rangle, k \models  \psi$, for all $\mathbf{H'} \in \{ \mathbf{H}, \mathbf{T} \}$
  \item $\M, k \models \previous \varphi$
    iff
    $k>0$ and $\M, k{-}1 \models \varphi$
  \item $\M, k \models \varphi \, \since \, \psi$
    iff
    for some $\rangec{j}{0}{k}$, we have
    $\M, j \models \psi$
    and
    $\M, i \models \varphi$ for all $\orange{i}{j}{k}$
  \item $\M, k \models \varphi \trigger \psi$
    iff
    for all $\rangec{j}{0}{k}$, we have
    $\M, j \models \psi$
    or
    $\M, i \models \varphi$ for some $\orange{i}{j}{k}$
  \item $\M, k \models \next \varphi$
    iff
    $k+1<\lambda$ and $\M, k{+}1 \models \varphi$
  \item $\M, k \models \varphi \until \psi$
    iff
    for some $\rangeo{j}{k}{\lambda}$, we have
    $\M, j \models \psi$
    and
    $\M, i \models \varphi$ for all $\rangeo{i}{k}{j}$
  \item $\M, k \models \varphi \release \psi$
    iff
    for all $\rangeo{j}{k}{\lambda}$, we have
    $\M, j \models \psi$
    or
    $\M, i \models \varphi$ for some $\rangeo{i}{k}{j}$
\item $\M, k \models \varphi \while \psi$
    iff
    for all $\rangeo{j}{k}{\lambda}$, we have
    $\tuple{\H',\T},j \models \varphi$ or $\tuple{\H',\T},i \not\models \psi$ for some $\rangeo{i}{k}{j}$ and for all $\H' \in \{\H,\T\}$
  \end{enumerate}
 \qed
\end{definition}

In general, these conditions inherit the interpretation of connectives from \LTL{} (with past operators) with just a few differences.
A first minor variation is that we allow traces of arbitrary length $\lambda$,
including both infinite ($\lambda=\omega$) and finite ($\lambda \in \Nat$) traces.
The most significant difference, however, has to do with the treatment of implication,
which is inherited from the intermediate logic of \HT.
It requires that the implication is satisfied in ``both dimensions'' $\H$ (here) and $\T$ (there) of the trace, using $\tuple{\H,\T}$ (as in the other connectives) but also $\tuple{\T,\T}$.
Finally, one last difference with respect to \LTL\ is the new connective $\varphi \while \psi$
which is also a kind of temporally-iterated \HT\ implication.
Its intuitive reading\footnote{A dual, past operator could also be easily defined, but it would not have a natural, intuitive reading and its purpose would be unclear.} is ``keep doing $\varphi$ while condition $\psi$ holds.''
In \LTL, $\varphi \while \psi$ would just amount to $\neg \psi \release \varphi$,
but under \HT\ semantics both formulas have a different meaning,
as the latter may provide evidence for $\varphi$ even though the condition $\psi$ does not hold.

An \HT-trace $\M$ is a \emph{model} of a temporal theory $\Gamma$ if $\M,0 \models \varphi$ for all $\varphi \in \Gamma$.
We write $\THT(\Gamma,\lambda)$ to stand for the set of \THT-models of length $\lambda$ of a theory $\Gamma$,
and define $\THT(\Gamma) \eqdef \THT(\Gamma,\omega) \cup \bigcup_{\lambda\in\mathbb{N}}\THT(\Gamma,\lambda)$.
That is, $\THT(\Gamma)$ is the whole set of models of $\Gamma$ of any length.
For $\Gamma=\{\varphi\}$, we just write $\THT(\varphi,\lambda)$ and $\THT(\varphi)$.
We can analogously define $\LTL(\Gamma,\lambda)$, that is, the set of traces of length $\lambda$ that satisfy theory $\Gamma$, and $\LTL(\Gamma)$, that is, the \LTL-models of $\Gamma$ any length.
We omit specifying \LTL\ satisfaction since it coincides with \THT\ when \HT-traces are total.
\begin{proposition}[\citeNP{agcadipevi13a,cakascsc18a}]\label{prop:total}
Let \T\ be a trace of length $\lambda$, $\varphi$ a temporal formula, and $k\in\intervo{0}{\lambda}$ a time point.

Then,
$\T,k \models \varphi$ in \LTL\ iff $\tuple{\T,\T},k \models \varphi$.\qed
\end{proposition}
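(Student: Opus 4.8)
The plan is to proceed by structural induction on $\varphi$, establishing the biconditional
$\T,k \models \varphi$ (in \LTL) iff $\tuple{\T,\T},k \models \varphi$
simultaneously for \emph{every} time point $k \in \intervo{0}{\lambda}$. Because the satisfaction clauses for the temporal connectives refer to satisfaction at time points other than $k$ (e.g.\ $\previous$ looks at $k{-}1$, and $\until$ at all $j \in \intervo{k}{\lambda}$), the induction hypothesis must be stated uniformly over all of $\intervo{0}{\lambda}$ rather than at a single fixed $k$; this is precisely why the statement is phrased for an arbitrary $k$. Throughout, I would read \LTL-satisfaction as the standard single-trace semantics, whose clauses coincide clause-by-clause with those of Definition~\ref{def:dht:satisfaction} except for the two \HT-specific quantifications.

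The base cases are immediate: $\top$ and $\bot$ by clause~1, and for an atom $a$ I use that the \HT-trace is total, so $H_k = T_k$ and hence $\tuple{\T,\T},k \models a$ iff $a \in T_k$, which is exactly the \LTL\ condition. For conjunction, disjunction, and all the temporal connectives $\previous$, $\since$, $\trigger$, $\next$, $\until$, and $\release$, the \THT\ clause is syntactically identical to the corresponding \LTL\ clause, differing only in that it quantifies over the satisfaction of immediate subformulas at various points $j$ (and intermediate $i$). Each of these subformula-satisfaction judgments is supplied directly by the induction hypothesis, so these cases close with no further work.

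The only cases that genuinely exploit totality are the two \HT-style clauses: implication (clause~5) and the while operator (clause~12), both of which range over $\mathbf{H}' \in \{\H,\T\}$. The key observation is that for a total \HT-trace $\H = \T$, so this set is the singleton $\{\T\}$ and the universal quantification over $\mathbf{H}'$ collapses to the single instance $\tuple{\T,\T}$. For implication this reduces clause~5 to ``$\tuple{\T,\T},k \not\models \varphi$ or $\tuple{\T,\T},k \models \psi$'', which by the induction hypothesis is exactly the classical reading of $\varphi \to \psi$ in \LTL; for $\while$ the same collapse reduces clause~12 to the \LTL\ condition for $\varphi \while \psi$ (equivalently $\neg\psi \release \varphi$). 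I expect this singleton-collapse step to be the crux of the argument: once it is observed, the remaining work is a routine transcription between matching clauses, and the proposition follows.
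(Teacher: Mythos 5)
Your proof is correct and is exactly the standard argument used for this result (the paper itself only cites \citeNP{agcadipevi13a,cakascsc18a} without reproducing the proof): a structural induction with the claim stated uniformly over all $k\in\intervo{0}{\lambda}$, where every clause of Definition~\ref{def:dht:satisfaction} transcribes directly to the \LTL\ clause via the induction hypothesis, and the only non-trivial cases --- implication and \while\ --- close because totality makes $\{\H,\T\}=\{\T\}$, collapsing the \HT-style quantification. No gaps.
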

In fact, total models can be forced by adding the following set of \emph{excluded middle} axioms:
\begin{align}
\alwaysF (a \vee \neg a) \tag{\EM}
\qquad
\text{ for each atom }
a \in \PV\
\text{ in the alphabet.}
\end{align}
\begin{proposition}[\citeNP{agcadipevi13a,cakascsc18a}]\label{prop:emtotal}
Let $\tuple{\H,\T}$ be an \HT-trace and (EM) the theory containing all excluded middle axioms for every atom $a \in \PV$.
Then, $\tuple{\H,\T}$ is a model of (EM) iff $\H=\T$.\qed
\end{proposition}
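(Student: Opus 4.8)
The plan is to reduce the satisfaction of each excluded-middle axiom to a simple pointwise condition on the sets $H_j$ and $T_j$, and then observe that demanding this condition at every atom and every time point is exactly the requirement $\H=\T$.

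First I would unfold the derived operators. Since $\alwaysF \psi \eqdef \bot \release \psi$, clause~11 of Definition~\ref{def:dht:satisfaction} with the antecedent $\bot$ (which is never satisfied, by clause~1) collapses to: $\M,0 \models \alwaysF\psi$ iff $\M,j\models\psi$ for every $\rangeo{j}{0}{\lambda}$. Hence $\M$ is a model of (\EM) iff for every atom $a\in\PV$ and every time point $\rangeo{j}{0}{\lambda}$ we have $\M,j\models a\vee\neg a$. Next I would compute the truth of $a\vee\neg a$ at a fixed $j$. By clause~2, $\M,j\models a$ iff $a\in H_j$. For the negation, since $\neg a\eqdef a\to\bot$, clause~5 (with clause~1 eliminating the $\bot$-disjunct) gives $\M,j\models\neg a$ iff $\tuple{\H',\T},j\not\models a$ for both $\H'\in\{\H,\T\}$, i.e.\ iff $a\notin H_j$ and $a\notin T_j$. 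Because $H_j\subseteq T_j$ holds for an \HT-trace, this simplifies to $a\notin T_j$. Therefore $\M,j\models a\vee\neg a$ iff ($a\in H_j$ or $a\notin T_j$).

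Finally, I would close both directions from this pointwise characterization. For the right-to-left direction, if $\H=\T$ then $H_j=T_j$, so for every $a$ either $a\in T_j=H_j$ or $a\notin T_j$; the condition holds everywhere and thus $\M$ is a model of (\EM). For the left-to-right direction I would argue by contraposition: if $\H\neq\T$ then, using the built-in ordering $\H\leq\T$, there is a time point $j$ and an atom $a$ with $a\in T_j\setminus H_j$; for this $a$ and $j$ we have $a\notin H_j$ and $a\in T_j$, so $\M,j\not\models a\vee\neg a$ and the corresponding axiom fails.

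The computation is entirely routine; the only point that deserves care — and the step I expect to be the real crux — is the implication clause, where the here-and-there semantics forces $\neg a$ to depend on the ``there'' component $T_j$ rather than on $H_j$. This is precisely what lets the excluded-middle axiom pin down $H_j=T_j$, so I would state explicitly that it is the $\H'=\T$ case of clause~5 that rules out a gap $a\in T_j\setminus H_j$. Everything else follows from the fact that $\bot$ is never satisfied and from the ordering constraint $H_j\subseteq T_j$ inherent in the definition of an \HT-trace.
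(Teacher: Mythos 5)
Your proof is correct, and since the paper only cites this result without reproving it, your pointwise unfolding of $\alwaysF$, the reduction of $\neg a$ to $a\notin T_j$ via the $\H'=\T$ case of the implication clause, and the use of $H_j\subseteq T_j$ to locate a witness $a\in T_j\setminus H_j$ is exactly the standard argument one expects here. No gaps.
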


Satisfaction of derived operators can be easily deduced, as shown next.
\begin{proposition}[\citeNP{agcadipevi13a,cakascsc18a}]\label{prop:satisfaction:tel}
  Let $\M=\tuple{\H,\T}$ be an \HT-trace of length $\lambda$ over \PV.
  Given the respective definitions of derived operators, we get the following satisfaction conditions:
  \begin{enumerate} \setcounter{enumi}{11}
  \item $\M, k \models \initially$
    iff
    $k =0$
  \item $\M, k \models \wprevious\varphi$
    iff
    $k =0$ or
    $\M, k{-}1 \models \varphi$
  \item $\M, k \models \eventuallyP\varphi$
    iff
    $\M, i \models \varphi$ for some $\rangec{i}{0}{k}$
  \item $\M, k \models \alwaysP\varphi$
    iff
    $\M, i \models \varphi$ for all $\rangec{i}{0}{k}$
  \item $\M, k \models \finally$
    iff
    $k+1 = \lambda$
  \item $\M, k \models \wnext\varphi$
    iff
    $k + 1=\lambda$ or
    $\M, k{+}1 \models \varphi$
  \item $\M, k \models \eventuallyF\varphi$
    iff
    $\M, i \models \varphi$ for some $\rangeo{i}{k}{\lambda}$
  \item $\M, k \models \alwaysF\varphi$
    iff
    $\M, i \models \varphi$ for all $\rangeo{i}{k}{\lambda}$
  \end{enumerate}
  \qed
\end{proposition}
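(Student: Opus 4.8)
The plan is to establish all nine equivalences by a uniform recipe: replace the derived operator with its definition (given immediately before the statement) and apply the matching clause of Definition~\ref{def:dht:satisfaction}, then simplify using two elementary facts that follow from clause~1. Namely, every \HT-trace satisfies $\top$ at every time point, and no \HT-trace satisfies $\bot$ at any time point. These two facts are what make the derived readings collapse to their expected forms, so I would state them once at the outset and invoke them silently thereafter. Since each item is independent of the others, the argument is a case analysis with no induction on formula structure required.

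For the four ``bounded quantifier'' operators I would proceed as follows. For $\eventuallyP\varphi \eqdef \top \since \varphi$, unfolding clause~7 gives: for some $\rangec{j}{0}{k}$, $\M,j \models \varphi$ and $\M,i \models \top$ for all $\orange{i}{j}{k}$; the side condition on $\top$ is vacuously true, leaving exactly item~14. Dually, for $\alwaysP\varphi \eqdef \bot \trigger \varphi$, clause~8 yields: for all $\rangec{j}{0}{k}$, $\M,j \models \varphi$ or $\M,i \models \bot$ for some $\orange{i}{j}{k}$; the existential disjunct over $\bot$ is vacuously false, leaving item~15. The future versions $\eventuallyF\varphi \eqdef \top \until \varphi$ and $\alwaysF\varphi \eqdef \bot \release \varphi$ are handled identically using clauses~10 and~11, producing items~18 and~19.

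The cases I expect to require the most care are $\initially \eqdef \neg \previous \top$ and $\finally \eqdef \neg \next \top$, because negation $\neg\psi = \psi \to \bot$ invokes the two-dimensional \HT\ implication of clause~5 and so quantifies over both $\H' \in \{\H,\T\}$. Here the key observation is that the relevant subformulas are insensitive to the choice of ``here'' component: by clause~6, $\tuple{\H',\T},k \models \previous\top$ holds iff $k>0$ (using that $\top$ is satisfied everywhere), and by clause~9, $\tuple{\H',\T},k \models \next\top$ holds iff $k+1<\lambda$, both independent of $\H'$. Consequently the ``for all $\H'$'' in the negation clause is harmless, and $\M,k \models \neg\previous\top$ reduces to $k=0$ (item~12), while $\M,k \models \neg\next\top$ reduces to $k+1 \geq \lambda$, which equals $k+1=\lambda$ since $\rangeo{k}{0}{\lambda}$ forces $k+1 \leq \lambda$ (item~16). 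I would make this independence-of-$\H'$ step explicit, as it is the only non-mechanical point in the proof.

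Finally, the weak operators follow by feeding the previous results into the disjunction clause~4. For $\wprevious\varphi \eqdef \previous\varphi \vee \initially$, we get $\M,k \models \wprevious\varphi$ iff ($k>0$ and $\M,k{-}1 \models \varphi$) or $k=0$, which simplifies to ``$k=0$ or $\M,k{-}1 \models \varphi$'' (item~13); symmetrically $\wnext\varphi \eqdef \next\varphi \vee \finally$ gives item~17. No genuine obstacle arises beyond the bookkeeping already dispatched, so the overall difficulty is concentrated entirely in verifying that the implication clause degenerates in the two negation cases.
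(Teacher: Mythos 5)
Your proposal is correct: the paper states this proposition without proof (citing prior work), and the intended argument is exactly the one you give --- unfold each derived operator's definition, apply the corresponding clause of Definition~\ref{def:dht:satisfaction}, and use that $\top$ holds everywhere and $\bot$ nowhere. Your explicit observation that $\previous\top$ and $\next\top$ are insensitive to the choice of $\H'\in\{\H,\T\}$, so the two-dimensional implication clause degenerates in the $\initially$ and $\finally$ cases, is the right point to isolate and is handled correctly, including the reduction of $k+1\geq\lambda$ to $k+1=\lambda$.
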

A formula $\varphi$ is a \emph{tautology} (or is \emph{valid}), written $\models \varphi$,
iff $\M,k \models \varphi$ for any \HT-trace \M\ and any $\rangeo{k}{0}{\lambda}$.
We call the logic induced by the set of all tautologies \emph{Temporal logic of Here-and-There} (\THT\ for short).

Several types of equivalence can be defined in \THT.
In this sense, it is important to observe that being equivalent is something generally stronger than simply having the same set of models.
Two formulas $\varphi, \psi$ are (globally) \emph{equivalent}, written $\varphi \gequiv \psi$, iff
$\models \varphi \leftrightarrow \psi$, that is, $\M,k \models \varphi \leftrightarrow \psi$
for any \HT-trace $\M$ of length $\lambda$ and any $\rangeo{k}{0}{\lambda}$.
\THT-equivalence satisfies the rule of substitution of equivalents:
\begin{proposition}[Substitution of equivalents]
Let $\gamma[\varphi]$ be a formula with some occurrence of subformula $\varphi$ and $\gamma[\psi]$ denote the replacement of that occurrence by $\psi$ in $\gamma[\varphi]$. If $\varphi \equiv \psi$ then $\gamma[\varphi] \equiv \gamma[\psi]$.
\end{proposition}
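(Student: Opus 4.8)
The plan is to reduce the biconditional-based notion of global equivalence to a simpler pointwise notion of satisfaction, and then run a routine structural induction on the context $\gamma[\cdot]$.

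First I would prove the auxiliary characterization that $\varphi \gequiv \psi$ holds if and only if $\M,k \models \varphi$ iff $\M,k \models \psi$ for every \HT-trace $\M=\tuple{\H,\T}$ and every time point $k$. The forward direction unfolds $\varphi \leftrightarrow \psi \eqdef (\varphi \to \psi) \wedge (\psi \to \varphi)$ and uses clause~5 of Definition~\ref{def:dht:satisfaction}: instantiating the universally quantified $\H' \in \{\H,\T\}$ with $\H' = \H$ yields $\M,k \not\models \varphi$ or $\M,k \models \psi$ from the first conjunct, and symmetrically from the second, so the two satisfaction conditions coincide at $(\M,k)$. The backward direction is the delicate one: to establish $\models \varphi \leftrightarrow \psi$ I must check clause~5 at every $\M=\tuple{\H,\T}$ and $k$, which requires evaluating $\varphi$ and $\psi$ not only on $\tuple{\H,\T}$ but also on the auxiliary total trace $\tuple{\T,\T}$. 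This is legitimate precisely because $\tuple{\T,\T}$ is itself an \HT-trace (as $\T \leq \T$), so the assumed pointwise equivalence applies to it as well.

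With this characterization in hand, I would prove the statement in the form: if $\M,k \models \varphi$ iff $\M,k \models \psi$ for all $(\M,k)$, then the same holds for $\gamma[\varphi]$ and $\gamma[\psi]$, by structural induction on $\gamma[\cdot]$. The base case is the hole itself, which is exactly the hypothesis. For the Boolean and temporal connectives that evaluate their arguments on the same trace $\M$ (namely $\wedge$, $\vee$, $\previous$, $\next$, $\since$, $\trigger$, $\until$, $\release$), each satisfaction clause expresses $\M,k \models \gamma[\varphi]$ in terms of satisfaction of the immediate subformula(s) at various time points $j,i$ on the same $\M$; the induction hypothesis matches these against $\gamma[\psi]$ point by point, so the outer satisfaction conditions agree.

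The main obstacle --- and the reason the characterization step is indispensable --- lies in the two connectives $\to$ and $\while$, whose clauses (5 and 12) additionally evaluate their subformulas on $\tuple{\T,\T}$ through the quantifier over $\H' \in \{\H,\T\}$. Because my induction hypothesis is stated as pointwise equivalence over \emph{all} \HT-traces, it applies in particular to $\tuple{\T,\T}$, so the disjunctions inside clauses 5 and 12 transfer from $\gamma[\varphi]$ to $\gamma[\psi]$ without trouble. Finally, re-applying the characterization in the reverse direction converts the established pointwise equivalence of $\gamma[\varphi]$ and $\gamma[\psi]$ back into $\gamma[\varphi] \gequiv \gamma[\psi]$, completing the argument. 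It is worth stressing that this reasoning would break down for a mere ``same set of models'' hypothesis, matching the paper's earlier warning that \THT-equivalence is strictly stronger than sharing models.
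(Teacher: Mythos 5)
Your argument is correct, and in fact the paper states this proposition without giving any proof at all, so there is nothing to diverge from. The two load-bearing points are exactly the ones you isolate: (i) global equivalence $\varphi \gequiv \psi$ collapses to ``same satisfaction at every \HT-trace and every time point'' because the $\H'=\T$ branch of clause~5 is discharged by applying the pointwise hypothesis to the total trace $\tuple{\T,\T}$, which is itself an \HT-trace; and (ii) the induction hypothesis must be quantified over \emph{all} \HT-traces precisely so that the clauses for $\to$ and $\while$, which peek at $\tuple{\T,\T}$, remain within its scope. Your closing remark that the argument would fail for mere coincidence of models (initial equivalence $\iequiv$) is also the right caveat, and matches the paper's $\previous a \iequiv \bot$ example. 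For comparison, a slightly slicker packaging of the same idea is available via the three-valued valuation $\trival{k}{\cdot}$ of Section~3.3: there, $\varphi \gequiv \psi$ is equivalent to $\trival{k}{\varphi}=\trival{k}{\psi}$ for all traces and time points, and every connective's value is defined compositionally from the values of its immediate subformulas on the \emph{same} trace, so substitution is immediate without any case split on $\to$ and $\while$; the bookkeeping you do by hand for those two connectives is exactly what the three-valued semantics absorbs into the $\mathrm{imp}$ operation.
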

On the other hand, we say that $\varphi, \psi$ are just \emph{initially equivalent}, written $\varphi \iequiv \psi$, if they have the same models $\THT(\varphi)=\THT(\psi)$,
that is, $\M,0 \models \varphi$ iff $\M,0 \models \psi$, for any \HT-trace $\M$.
Obviously, $\varphi \gequiv \psi$ implies $\varphi \iequiv \psi$ but not vice versa.
For example, note that $\previous a \iequiv \bot$, since $\previous a$ is always false at the initial situation,
whereas in the general case $\previous a \not\gequiv \bot$ or, otherwise, we could always replace $\previous a$ by $\bot$ in any context.

One important remark is that the finiteness of a trace $\tuple{\H,\T}$ only affects the satisfaction of formulas dealing with future-time operators.
In particular, if $\tuple{\H,\T}$ has some finite length $\lambda =n$,
then, in the semantics for $\until$, $\release$ and $\while$ in Definition~\ref{def:dht:satisfaction},
index $j$ ranges over the finite interval $\{k, \dotsc, n-1\}$.
Besides, if $\lambda =n$, the satisfaction of $\next$ forces $k < n$,
which implies that there does exist a next state $k+1$.
As a result, the formula $\next \top$ is not always satisfied, since it is false whenever $k = n = \lambda$.

Operators $\initially$ and $\finally$ exclusively depend on the value of time point $k$,
so that the valuation of atoms in $\tuple{\H,\T}$ is irrelevant to them.
As a result, they behave ``classically'' and satisfy the law of the excluded middle, that is,
$\initially \vee \neg \initially$ and $\finally \vee \neg \finally$ are \THT\ tautologies.
Besides, operator \finally\ can only be true in finite traces.
This implies that the inclusion of axiom $\eventuallyF \finally$ in any theory forces its models to be finite traces,
while including its negation $\neg \eventuallyF \finally$ causes the opposite effect,
that is, all models of the theory are infinite traces.

Several logics stronger than \THT\ can be obtained by the addition of axioms (or the corresponding restriction on sets of traces).
For instance, \THTo\ is defined as $\THT$ plus $\neg \eventuallyF \finally$, that is, \THT\ where we exclusively consider infinite \HT-traces.\footnote{This corresponds to the (stronger) version of \THT\ considered previously by~\citeN{agcadipevi13a}.}
\THTf, the finite-trace version, corresponds to $\THT$ plus $\eventuallyF \finally$. Linear Temporal Logic for possibly infinite traces, \LTL, can be obtained as $\THT$ plus $(\EM)$, that is, \THT\ with total \HT-traces, \LTLo\ is captured by $\THTo$ plus $(\EM)$, i.e.\ infinite and total \HT-traces, and finally \LTLf\ can be obtained as $\THTf$ plus $(\EM)$, that is, \LTL\ on finite traces~\cite{giavar13a}.

We study more properties and results about \THT\ later on, but we proceed next to define its non-monotonic extension, \TEL.

\subsection{Non-monotonic extension: Temporal Equilibrium Logic}
\label{subsec:tel}

Given a set of \THT-models, we define the ones in equilibrium as follows.
\begin{definition}[Temporal Equilibrium/Stable Model]\label{def:tem}
Let $\mathfrak{S}$ be some set of \HT-traces.

A total \HT-trace $\tuple{\T,\T} \in\mathfrak{S}$ is a \emph{temporal equilibrium model} of $\mathfrak{S}$ iff
there is no other $\H < \T$ such that $\tuple{\H,\T} \in\mathfrak{S}$.

The trace \T\ is called a \emph{temporal stable model} (\TS-model) of $\mathfrak{S}$.
\qed
\end{definition}
We further talk about temporal equilibrium or temporal stable models of a theory $\Gamma$ when $\mathfrak{S}=\THT(\Gamma)$, respectively.
Moreover, we write $\TEL(\Gamma,\lambda)$ and $\TEL(\Gamma)$ to stand for the temporal equilibrium models of $\THT(\Gamma,\lambda)$ and $\THT(\Gamma)$ respectively.
The corresponding sets of \TS-models are denoted as $\TSM(\Gamma,\lambda)$ and $\TSM(\Gamma)$ respectively.
One interesting observation is that, since temporal equilibrium models are total models $\tuple{\T,\T}$, due to Proposition~\ref{prop:total}, we obtain $\TSM(\Gamma,\lambda) \subseteq \LTL(\Gamma,\lambda)$ that is, temporal stable models are a subset of \LTL-models.

Since the ordering relation among traces is only defined for a fixed $\lambda$, the following can be easily observed:
\begin{proposition}[\citeNP{cakascsc18a}]
The set of temporal equilibrium models of $\Gamma$ can be partitioned by the trace length $\lambda$, that is,
$\bigcup_{\lambda=0}^\omega \TEL(\Gamma,\lambda) = \TEL(\Gamma)$. \qed
\end{proposition}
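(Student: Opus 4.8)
The plan is to unfold the two definitions involved and lean on the single observation recorded just before the statement: the relation $\H \leq \T$ (and hence $\H < \T$) between traces is only defined when $\H$ and $\T$ have the \emph{same} length. First I would record an elementary decomposition fact about the monotonic base: for every $\lambda \in \Nat \cup \{\omega\}$, an \HT-trace of length $\lambda$ belongs to $\THT(\Gamma)$ if and only if it belongs to $\THT(\Gamma,\lambda)$. This is immediate from the definition $\THT(\Gamma)=\THT(\Gamma,\omega)\cup\bigcup_{\lambda\in\Nat}\THT(\Gamma,\lambda)$ together with the fact that each \HT-trace $\M$ has a single well-defined length $|\M|$, so that the union is in fact a disjoint union indexed by length.

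With this in hand I would prove the equality by two symmetric inclusions, fixing a total \HT-trace $\tuple{\T,\T}$ and writing $\lambda=|\T|$. For $\bigcup_{\lambda}\TEL(\Gamma,\lambda)\subseteq\TEL(\Gamma)$, suppose $\tuple{\T,\T}\in\TEL(\Gamma,\lambda)$. By Definition~\ref{def:tem} applied to $\mathfrak{S}=\THT(\Gamma,\lambda)$, we have $\tuple{\T,\T}\in\THT(\Gamma,\lambda)\subseteq\THT(\Gamma)$ and there is no $\H<\T$ with $\tuple{\H,\T}\in\THT(\Gamma,\lambda)$. Now any candidate $\H$ satisfying $\H<\T$ necessarily has length $\lambda$, so by the decomposition fact $\tuple{\H,\T}\in\THT(\Gamma)$ would already force $\tuple{\H,\T}\in\THT(\Gamma,\lambda)$; hence no such $\H$ exists against $\THT(\Gamma)$ either, and $\tuple{\T,\T}\in\TEL(\Gamma)$. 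The converse $\TEL(\Gamma)\subseteq\bigcup_{\lambda}\TEL(\Gamma,\lambda)$ is entirely analogous: from $\tuple{\T,\T}\in\TEL(\Gamma)$ we get membership in $\THT(\Gamma)$ and thus, for $\lambda=|\T|$, in $\THT(\Gamma,\lambda)$, while the absence of a strictly smaller $\H<\T$ in $\THT(\Gamma)$ restricts in particular to the length-$\lambda$ traces comprising $\THT(\Gamma,\lambda)$, yielding $\tuple{\T,\T}\in\TEL(\Gamma,\lambda)$.

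Finally, to justify the word \emph{partition} rather than merely \emph{equals the union}, I would note that the pieces $\TEL(\Gamma,\lambda)$ are pairwise disjoint, since a total \HT-trace lies in exactly one length class. The only point deserving care, and the sole place the argument could fail, is the claim that the comparison set of traces $\H$ strictly below $\T$ is insensitive to whether one ranges over $\THT(\Gamma)$ or over $\THT(\Gamma,\lambda)$; this rests entirely on the ordering being length-preserving, so I would make that restriction explicit rather than treat it as self-evident. Beyond this bookkeeping no property of the satisfaction relation of Definition~\ref{def:dht:satisfaction} is required, which is why the result reduces to a short observation.
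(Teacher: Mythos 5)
Your proof is correct and follows exactly the route the paper intends: the paper offers no explicit proof beyond the remark that the ordering $\H<\T$ is only defined between traces of the same length, and your argument is precisely the unfolding of that observation into the two inclusions plus the disjointness of the length classes. Nothing is missing.
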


\emph{Temporal Equilibrium Logic} (\TEL) is the (non-monotonic) logic induced by temporal equilibrium models.
We can also define the variants \TELo\ and \TELf\ by applying the corresponding restriction to infinite and finite traces, respectively.

As an example of non-monotonicity,
consider the formula
\begin{align}\label{f:ys4b}
 \alwaysF (\previous \loaded \wedge \neg \unloaded & \to  \loaded)
\end{align}
that corresponds to the inertia for \loaded, together with the fact \loaded, describing the initial state for that fluent.
Without entering into too much detail,
this formula behaves as the logic program with the rules:
\begin{lstlisting}[numbers=none,belowskip=2pt,aboveskip=2pt,basicstyle=\ttfamily]
 loaded(0).
 loaded(T) :- loaded(T-1), not unloaded(T).
\end{lstlisting}
for any time point \lstinline{T>0}.
As expected, for some fixed $\lambda$, we get a unique temporal stable model of the form $\{\loaded\}^\lambda$.
This entails that $\loaded$ is always true, viz.\ $\alwaysF \loaded$, as there is no reason for \unloaded\ to become true.
Note that in the most general case of \TEL, we actually get one stable model per each possible $\lambda$, including $\lambda=\omega$.
Now, consider formula \eqref{f:ys4b} along with
\(
\loaded \wedge \next \next \unloaded
\)
which amounts to adding the fact \lstinline{unloaded(2)}.
As expected, for each $\lambda$,
the only temporal stable model now is $\T=\{\loaded\} \cdot \{\loaded\} \cdot \{\unloaded\} \cdot \emptyset^{\alpha}$ where $\alpha$ can be $*$ or $\omega$.
Note that by making $\next \next \unloaded$ true,
we are also forcing $|\T|  \geq 3$, that is, there are no temporal stable models (nor even \THT-models) of length smaller than three.
Thus, by adding the new information $\next \next \unloaded$ some conclusions that could be derived before,
such as $\alwaysF \loaded$, are not derivable any more.

As an example emphasizing the behavior of finite traces, take the formula
\begin{eqnarray}
\alwaysF (\neg a \to \next a) \label{f:defnext}
\end{eqnarray}
which can be seen as a program rule
``\lstinline{a(T+1) :- not a(T)}'' for any natural number \lstinline{T}.
As expected, temporal stable models make $a$ false in even states and true in odd ones.
However, we cannot take finite traces making $a$ false at the final state $\lambda-1$,
since the rule would force $\next a$ and this implies the existence of a successor state.
As a result, the temporal stable models of this formula have the form $(\emptyset \cdot \{a\})^+$ for finite traces in $\TELf$, or the infinite trace $(\emptyset \cdot \{a\})^\omega$ in \TELo.

Another interesting example is the temporal formula
\[
  \alwaysF (\neg \next a \to a) \wedge \alwaysF (\next a \to a).
\]
The corresponding rules
``\lstinline{a(T) :- not a(T+1)}''
and
``\lstinline{a(T) :- a(T+1)}''
have no stable model~\cite{fages94a} when grounded for all natural numbers \lstinline{T}.
This is because there is no way to build a finite proof for any \lstinline{a(T)}, as it depends on infinitely many next states to be evaluated.
The same happens in \TELo, that is, we get no infinite temporal stable model.
However in \TELf, we can use the fact that $\next a$ is always false in the last state.
Then, $\alwaysF(\neg \next a \to a)$ supports $a$ in that state and therewith $\alwaysF (\next a \to a)$ inductively supports $a$ everywhere.

As an example of a temporal expression not so close to logic programming,
consider the formula $\alwaysF\eventuallyF a$,
which is normally used in \LTLo\ to assert that $a$ occurs infinitely often.
As discussed by~\citeN{giavar13a}, if we assume finite traces, then the formula collapses to $\alwaysF (\finally \to a)$ in \LTLf,
that is, $a$ is true at the final state (and either true or false everywhere else).
The same behavior is obtained in \THTo\ and \THTf, respectively.
However, if we move to \TEL, a truth minimization is additionally required.
As a result, in \TELf,
we obtain a unique temporal stable model for each fixed $\lambda \in \mathbb{N}$,
in which $a$ is true at the last state, and false everywhere else.
Unlike this, \TELo\ yields no temporal stable model at all.
This is because for any \T\ with an infinite number of $a$'s we can always take some \H\
from which we remove $a$ at some state, and still have an infinite number of $a$'s in \H.
Thus, for any total \THTo-model $\tuple{\T,\T}$ of $\alwaysF \eventuallyF a$ there always exists some model $\tuple{\H,\T}$ with strictly smaller $\H < \T$.
Note that we can still specify infinite traces with an infinite number of occurrences of $a$, but at the price of \emph{removing the truth minimization} for that atom.
This can be done, for instance, by adding the excluded middle axiom (\EM) for atom $a$.
In this way, infinite traces satisfying $\alwaysF \eventuallyF a \wedge \alwaysF (a \vee \neg a)$ are those that contain an infinite number of $a$'s.
In fact, if we add the excluded middle axiom for all atoms, \TEL\ collapses into \LTL, as stated below.
\begin{proposition}[\citeNP{agcadipevi13a,cakascsc18a}]\label{prop:em}
  Let $\Gamma$ be a temporal theory over \PV\ and (EM) be the set of all excluded middle axioms for all atoms in \PV.

  Then, $\TSM(\ \Gamma \cup {\rm(EM)}\ )=\LTL(\Gamma)$.\qed
\end{proposition}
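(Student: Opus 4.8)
The plan is to reduce the claim to the two characterizations already established for total traces, namely Proposition~\ref{prop:total} and Proposition~\ref{prop:emtotal}, after which the equilibrium (minimality) condition collapses into triviality. The whole argument is a short chain of equalities at the level of model sets, followed by the observation that minimization has nothing left to do.

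First I would pin down the \THT-models of $\Gamma \cup {\rm(EM)}$. An \HT-trace $\M$ belongs to $\THT(\Gamma \cup {\rm(EM)})$ iff $\M,0 \models \varphi$ for every $\varphi \in \Gamma$ and $\M$ satisfies every excluded-middle axiom. By Proposition~\ref{prop:emtotal}, an \HT-trace $\tuple{\H,\T}$ satisfies $\rm(EM)$ exactly when $\H = \T$, so every model of $\Gamma \cup {\rm(EM)}$ is total. For a total trace, Proposition~\ref{prop:total} instantiated at $k = 0$ gives $\tuple{\T,\T},0 \models \varphi$ iff $\T,0 \models \varphi$ in \LTL, for each $\varphi \in \Gamma$. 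Combining the two, I obtain $\THT(\Gamma \cup {\rm(EM)}) = \{\,\tuple{\T,\T} \mid \T \in \LTL(\Gamma)\,\}$; in particular, the set of \THT-models of $\Gamma \cup {\rm(EM)}$ contains only total traces.

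The second step computes the temporal equilibrium models of $\mathfrak{S} \eqdef \THT(\Gamma \cup {\rm(EM)})$. Fix any $\tuple{\T,\T} \in \mathfrak{S}$ of length $\lambda$. A witness to non-minimality would be some $\H < \T$ of the same length $\lambda$ with $\tuple{\H,\T} \in \mathfrak{S}$; but $\H < \T$ forces $\H \neq \T$, so $\tuple{\H,\T}$ is not total and therefore cannot belong to $\mathfrak{S}$ by the first step. Hence the minimality condition of Definition~\ref{def:tem} is satisfied vacuously, every element of $\mathfrak{S}$ is a temporal equilibrium model, and the associated \TS-models are precisely $\{\,\T \mid \tuple{\T,\T} \in \mathfrak{S}\,\} = \LTL(\Gamma)$, which is the desired identity.

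There is no genuine obstacle once the two auxiliary propositions are available; the only point deserving care is that the ordering $\H < \T$ compares traces of a common length $\lambda$, so the vacuity argument is really run length by length. This is harmless, since temporal equilibrium models partition by $\lambda$ and $\LTL(\Gamma) = \LTL(\Gamma,\omega) \cup \bigcup_{\lambda\in\mathbb{N}}\LTL(\Gamma,\lambda)$, so the per-length conclusions assemble into the stated global equality. Conceptually, adding $\rm(EM)$ erases all non-total competitors, thereby switching off the truth minimization that distinguishes \TEL\ from \LTL.
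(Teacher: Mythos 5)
Your argument is correct and is exactly the intended one: the paper states this result as a cited proposition without proof, but its surrounding discussion (and Section~\ref{sec:tht2ltl}) makes clear that it follows from Propositions~\ref{prop:total} and~\ref{prop:emtotal} in precisely the way you describe, with the minimality check becoming vacuous because no non-total $\tuple{\H,\T}$ can satisfy (EM). Your closing remark about running the argument length by length is a sensible precaution but adds nothing essential, since the ordering and Definition~\ref{def:tem} are already confined to traces of a common length.
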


\subsection{Relation to ASP}
\label{subsec:kamp}

As we have seen in the above examples,
there seems to be a connection between temporal formulas over propositional atoms like $a$ and
logic programs with atoms for (monadic) predicates, viz.\ \texttt{a(T)},
with an integer argument \texttt{T} representing a time point.
This connection is strongly related to Kamp's well-known theorem~\cite{kamp68a} that
allows for translating \LTL\ into Monadic First-Order Logic with a linear order $<$ relation, \MFO\ for short.
In this section,
we show how Kamp's translation is also applicable to \TEL,
so that the latter can also be reduced to (Monadic) Quantified Equilibrium Logic,
which essentially covers ASP for predicates with one argument.
This connection reinforces the adequacy of \TEL\ as a suitable temporal extension of ASP.

We begin by revisiting the definition of \emph{Quantified Equilibrium Logic} or QEL~\citeN{peaval06a}.
This logic allows for first-order logic programs in ASP to be partially simplified before grounding.
Moreover, its monotonic basis, \emph{Quantified Here-and-There} (\QHT), can be used to check the property of strong equivalence,
analogous to \HT\ in the propositional case.
The definition of \QHT\ is based on a first-order language denoted by $\tuple{C,F,P}$,
where $C$, $F$ and $P$ are three disjoint sets representing constants, functions and predicates, respectively.
Given a set of constants $D$, we define:
\begin{itemize}
\item $\Terms(D,F)$ as the set of all ground terms that can be built with functions in $F$ and constants in $D$, and
\item $\Atoms(D,P)$ as the set of all ground atomic sentences that can be formed with  predicates in $P$ and constants in $D$.
\end{itemize}
In its most general version,
\QHT\ allows for different domains in the here and there worlds
and the interpretation of equality can also be varied in each world.
In this paper, though, we use the most common option when applied to ASP, that is,
\QHT\ with so-called \emph{static domains and decidable equality},
dealing with a common universe and a fixed interpretation of equality.
In this setting, a \QHT-interpretation is a tuple $\mathcal{M} = \tuple{\left(D,\sigma\right),H,T}$ such that:
\begin{itemize}
\item  $D$ is a (possibly infinite) non-empty set of constant names identifying each element in the universe. For simplicity, we use the same name for the constant and the universe element.
\item $\sigma: \Terms(C \cup D,F) \rightarrow D$ is a mapping from ground terms to elements of $D$
  satisfying $\sigma(d) = d$ for all $d\in D$ and
  structural recursion $\sigma(f(t_1,\dots,t_n))=\sigma(f(\sigma(t_1),\dots,\sigma(t_n)))$.
\item $H$ and $T$ are sets of atomic sentences satisfying $H \subseteq T \subseteq \Atoms(D,P)$.
\end{itemize}

Given two \QHT-interpretations, $\mathcal{M} = \tuple{(D,\sigma),H,T}$ and $\mathcal{M'} = \tuple{(D',\sigma'),H',T'}$,
we say that $\mathcal{M} \le \mathcal{M'}$ iff $D=D'$, $\sigma = \sigma'$, $T=T'$ and $H\subseteq H'$.
If, additionally, $H\subset H'$ we say that the relation is strict and denote it by $\mathcal{M} < \mathcal{M'}$.
\begin{definition}[\QHT-satisfaction;~\citeNP{peaval06a}]
  A \QHT-interpretation $\mathcal{M}=\tuple{(D,\sigma),H,T}$ satisfies a first-order formula $\alpha$,
  written $\mathcal{M}\models \alpha$, if the following conditions hold:

\begin{itemize}
\item $\mathcal{M} \models \top$ and $\mathcal{M} \not \models \bot$
\item $\mathcal{M}  \models p(\tau_1, \cdots, \tau_n)$ iff $p(\sigma(\tau_1), \cdots,\sigma(\tau_n)) \in H$
\item $\mathcal{M}  \models \tau=\tau'$ iff $ \sigma(\tau) = \sigma(\tau')$
\item $\mathcal{M}  \models \varphi \wedge \psi $ iff $\mathcal{M}  \models\varphi \hbox{ and } \mathcal{M}  \models\psi$
\item $\mathcal{M}  \models \varphi \vee \psi $ iff $\mathcal{M}  \models\varphi \hbox{ or } \mathcal{M}  \models\psi$
\item $\mathcal{M}  \models \varphi \rightarrow \psi $ iff $\tuple{(D,\sigma), X, T}  \not \models \varphi \hbox{ or } \tuple{(D,\sigma), X, T} \models\psi$, for $X \in \{H,T\}$
\item $\mathcal{M}  \models \forall\, x\ \varphi(x) $ iff $\mathcal{M} \models \varphi(d), \hbox{ for all } d \in D$
\item $\mathcal{M}  \models \exists\, x\ \varphi(x) $ iff $\mathcal{M} \models \varphi(d), \hbox{ for some } d \in D$ \qed
\end{itemize}
\end{definition}

Equilibrium models for first-order theories are defined as follows.
\begin{definition}[Quantified Equilibrium Model;~\citeNP{peaval06a}]
Let $\varphi$ be a first-order formula. A total \QHT-interpretation $\mathcal{M}=\tuple{(D,\sigma),T,T}$ is a first-order equilibrium model of $\varphi$ if $\mathcal{M} \models \varphi$ and there is no model $\mathcal{M'} < \mathcal{M}$ of $\varphi$.
\qed
\end{definition}

We now focus on a particular fragment of \QHT, called \MHT, by imposing the following restrictions:
\begin{enumerate}
\item $C=\intervo{0}{\lambda}$ where $\lambda \in \Nat$ or $\lambda=\omega$ and $D=\{\u\} \cup C$ where $\u$ stands for ``undefined.''
\item We only allow for (unary) functions ``$+1$'' and ``$-1$'' with the expected meaning:
\begin{eqnarray*}
\sigma(\tau+1) \eqdef \left\{
\begin{array}{cl}
\sigma(\tau)+1 & \text{if } \sigma(\tau)\neq \u \text{ and }  \sigma(\tau)+1<\lambda\\
\u & \text{otherwise}
\end{array}
\right.\\
\sigma(\tau-1) \eqdef \left\{
\begin{array}{cl}
\sigma(\tau)-1 & \text{if } \sigma(\tau)\neq \u \text{ and }  \sigma(\tau)-1\geq 0\\
\u & \text{otherwise}
\end{array}
\right.
\end{eqnarray*}
\item All predicates are unary, except binary predicates $=$ and $<$, interpreted as:
\begin{enumerate}
\item $\mathcal{M} \models \tau = \tau'$ if $\sigma(\tau)=\sigma(\tau')\neq \u$.
\item $\mathcal{M} \models \tau < \tau'$ if $\sigma(\tau)<\sigma(\tau')$ and both $\sigma(\tau)\neq \u$ and $\sigma(\tau')\neq \u$.
\end{enumerate}
\end{enumerate}
Note that the interpretation for equality requires now that both terms are different from $\u$.
We define the abbreviation $x \leq y$ as $x < y \vee x=y$.
Given these restrictions, we can simply represent an \MHT\ interpretation as $\mathcal{M}=\tuple{\lambda,H,T}$.
Moreover, it is easy to see that we can establish a one-to-one mapping between the latter and an \HT-trace $\M=\tuple{\H,\T}$ with $\lambda=|\M|$ so that $H=\{a(i) \mid a \in H_i, i \in \intervo{0}{\lambda}\}$ and $T=\{a(i) \mid a \in T_i, i \in \intervo{0}{\lambda}\}$.
When this happens, we say that $\M$ and $\mathcal{M}$ are  \emph{corresponding} interpretations.
\begin{example}\label{ex:mht}
The \HT-trace $\tuple{\H,\T}$ with $\H=\{a\} \cdot \emptyset \cdot \{b\}$ and $\T=\{a,b\} \cdot \{a\} \cdot \{b\}$ corresponds to the \MHT\ interpretation $\tuple{3,H,T}$ where $H=\{a(0),b(2)\}$ and $T=\{a(0),b(0),a(1),b(2)\}$.
\end{example}

We proceed to adapt now Kamp's translation to our setting in the following way.
\begin{definition}[Kamp's translation] \label{tel:trans:qht}
  Let $\varphi$ be a temporal formula over \PV.
  Kamp's translation of $\varphi$ for some time point $k\in \Nat$, denoted by $\tr{\varphi}_k$, is defined as follows:
\begin{eqnarray*}
\tr{\bot}_k & \eqdef & \bot\\
\tr{a}_k & \eqdef & a(k), \textrm{ with $a \in \PV$}\\
\tr{\alpha \otimes \beta}_k & \eqdef & \tr{\alpha}_k \otimes \tr{\beta}_k \quad \text{for any connective } \otimes \in \{\wedge,\vee,\to\}\\
\tr{\next \alpha}_k & \eqdef & \exists x \ \big(x=k+1 \wedge \tr{\alpha}_{x}\big )\\
\tr{\alpha \until \beta}_k & \eqdef & \exists x \; \big(k \le x \wedge \tr{\beta}_x \wedge \forall y\;  (k \le y \wedge y < x \rightarrow \tr{\alpha}_y) \big)\\
\tr{\alpha \release \beta}_k & \eqdef & \forall x\; \big(k \le x \rightarrow \tr{\beta}_x \vee \exists y\;  (k \le y \wedge y < x \wedge \tr{\alpha}_y)\big)\\
\tr{\alpha \while \beta}_k & \eqdef & \forall x\; \big(k \le x \wedge \forall y \; (k \leq y \wedge y<x \to \tr{\beta}_y) \to \tr{\alpha}_x \big)\\
\tr{\previous \alpha}_k & \eqdef & \exists x \ \big(x=k-1 \wedge \tr{\alpha}_{x}\big)\\
\tr{\alpha \since \beta}_k & \eqdef & \exists x \; \big(x \le k \wedge \tr{\beta}_x \wedge \forall y\;  (x < y \wedge y \le k \rightarrow \tr{\alpha}_y)\big)\\
\tr{\alpha \trigger \beta}_k & \eqdef & \forall x\; \big(x \le k \rightarrow \tr{\beta}_x \vee \exists y\;  (x < y \wedge y \le k \wedge \tr{\alpha}_y)\big)
\end{eqnarray*}\qed
\end{definition}
We now prove that, when considering the model correspondence between \MHT\ and \THT, Kamp's translation is sound.
\begin{theorem} \label{tel:th:sat}
Let $\varphi$ be a temporal formula over \PV, $\M=\tuple{\H,\T}$ a \THT-interpretation over \PV\ and
$\mathcal{M}=\tuple{(D,\sigma),H,T}$ its corresponding \MHT-interpretation.

Then, $\M,k \models \varphi$ in \THT\ iff  $\mathcal{M} \models \; \tr{\varphi}_k$ in \MHT. \qed
\end{theorem}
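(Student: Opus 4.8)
The plan is to prove the biconditional by structural induction on the temporal formula $\varphi$, treating the time point $\rangeo{k}{0}{\lambda}$ and the corresponding pair $\M=\tuple{\H,\T}$, $\mathcal{M}=\tuple{(D,\sigma),H,T}$ as universally quantified parameters, so that the induction hypothesis is available for every subformula, at every time point, and for \emph{any} corresponding pair. For the base cases, $\tr{\bot}_k=\bot$ is refuted in both logics, and for an atom the translation $\tr{a}_k=a(k)$ gives $\mathcal{M}\models a(k)$ iff $a(\sigma(k))\in H$; since $\sigma$ fixes every element of $C$ we have $\sigma(k)=k$, and the correspondence $H=\{a(i)\mid a\in H_i\}$ turns this into $a\in H_k$, which is exactly $\M,k\models a$.

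For the inductive step, $\wedge$ and $\vee$ are immediate from the hypothesis. Implication is the first delicate case: both the \THT\ clause (Definition~\ref{def:dht:satisfaction}, item~5) and the \MHT\ clause quantify over the two worlds $\H'\in\{\H,\T\}$, so I would apply the hypothesis to \emph{both} corresponding pairs, namely $\tuple{\H,\T}\leftrightarrow\tuple{(D,\sigma),H,T}$ and $\tuple{\T,\T}\leftrightarrow\tuple{(D,\sigma),T,T}$ (the latter being again a corresponding pair under the same mapping), and note that the two conditions coincide termwise. For $\next$ and $\previous$ I would expand the existentials: in $\tr{\next\alpha}_k=\exists x\,(x=k+1\wedge\tr{\alpha}_x)$ the equality $x=k+1$ forces a witness with $\sigma(x)=\sigma(k+1)\neq\u$, which by the definition of the $+1$ function is possible exactly when $k+1<\lambda$, the unique witness then being $k+1$; this matches the side condition of item~9, after which the hypothesis at $k+1$ closes the case, and $\previous$ is symmetric via $-1$ and the bound $k-1\geq0$.

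For $\until$, $\since$, $\release$, $\trigger$ I would unfold the interval semantics against the quantifier structure of the translation. The useful observation is that the order guards $k\le x$, $y<x$, and so on are built from $<$ and $=$, whose \MHT\ truth depends only on $\sigma$ and is therefore world-independent, so the quantifier ranges reduce to the integer intervals of Definition~\ref{def:dht:satisfaction}. The genuine subtlety, which I expect to be the main obstacle, is that these translations place the content formulas $\tr{\alpha}_y,\tr{\beta}_x$ beneath first-order implications (the guarded $\forall y\,(\dots\to\tr{\alpha}_y)$ in $\until$, the body of $\release$, etc.), so the \MHT\ semantics of implication \emph{a priori} demands that the content hold in \emph{both} worlds $\H$ and $\T$, whereas items~7, 8, 10 and 11 require it only in the ``here'' world $\tuple{\H,\T}$. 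The way to overcome this is the persistence (monotonicity) property of \THT---if $\M,k\models\chi$ then $\tuple{\T,\T},k\models\chi$---which is itself a routine side induction on $\chi$: it lets the spurious ``there-world'' demand collapse into the ``here-world'' one, so the condition produced by the translation becomes equivalent to the single-world condition of the semantics.

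Finally, $\while$ is the case where the two-world behaviour is real rather than spurious. Its translation $\tr{\alpha\while\beta}_k=\forall x\,(k\le x\wedge\forall y\,(k\le y\wedge y<x\to\tr{\beta}_y)\to\tr{\alpha}_x)$ is an outer implication, so its \MHT\ evaluation tests both $\tuple{(D,\sigma),H,T}$ and $\tuple{(D,\sigma),T,T}$; applying the hypothesis, together with persistence to simplify the nested guard evaluated at the ``here'' world, shows these two tests correspond exactly to the instances $\H'=\H$ and $\H'=\T$ of item~12. Thus $\while$ mirrors implication, as anticipated by its intended reading, and with all operators handled the induction closes.
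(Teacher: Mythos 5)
Your proposal is correct and follows the standard route for this theorem: structural induction on $\varphi$ with the time point and the corresponding pair universally quantified, using the identification of $\tuple{\T,\T}$ with $\tuple{(D,\sigma),T,T}$ for the implication-like cases and invoking persistence (Proposition~\ref{prop:persistance}) to collapse the spurious there-world obligations created by the guarded quantifiers in the translations of $\until$, $\release$, $\since$ and $\trigger$, while correctly recognizing that for $\while$ the two-world evaluation is genuine and matches item~12. This is essentially the paper's argument; no gaps.
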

\begin{corollary}
A total \THT-interpretation $\M=\tuple{\T,\T}$ is a temporal equilibrium model of a temporal formula $\varphi$ iff its corresponding \MHT-interpretation $\mathcal{M}=\tuple{|\T|,T,T}$ is an equilibrium model of $\tr{\varphi}_0$.\qed
\end{corollary}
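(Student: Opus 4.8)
The plan is to reduce the statement to Theorem~\ref{tel:th:sat} by showing that the model correspondence $\M \leftrightarrow \mathcal{M}$ restricts to an order isomorphism between the here-refinements of a fixed total trace $\tuple{\T,\T}$ and the here-refinements of the associated total \MHT-interpretation $\mathcal{M}=\tuple{|\T|,T,T}$. Both equilibrium notions have the same shape---a total model that is minimal in the here-component among the (translated) models of the formula---so once the two minimality conditions are shown to coincide, the biconditional follows at once. Theorem~\ref{tel:th:sat} handles the ``total model is a model'' half of each notion directly, so the work is entirely in matching the competitors.

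First I would fix $\T$, hence $\lambda=|\T|$, and record that under the \MHT\ restrictions the domain $D=\{\u\}\cup\intervo{0}{\lambda}$ and the term evaluation $\sigma$ are completely determined by $\lambda$. Consequently, any \MHT-interpretation sharing the there-component $T$ of $\mathcal{M}$ differs from it only in its here-set, and by the one-to-one model correspondence described before Example~\ref{ex:mht} such interpretations are exactly those of the form $\mathcal{M}'=\tuple{\lambda,H',T}$ arising from \HT-traces $\tuple{\H,\T}$ with $\H\le\T$, via $H'=\{a(i)\mid a\in H_i\}$. The crucial observation is that this correspondence preserves order in both directions: $H'\subset T$ holds iff $H_i\subseteq T_i$ for all $i$ with strict inclusion at some point, i.e.\ iff $\H\le\T$ and $\H\neq\T$. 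Hence $\mathcal{M}'<\mathcal{M}$ (in the sense $H'\subset T$ of the \QHT\ ordering) holds iff $\H<\T$ (in the sense of Definition~\ref{def:tem}).

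For the forward direction I would assume $\tuple{\T,\T}$ is a temporal equilibrium model of $\varphi$. By Theorem~\ref{tel:th:sat}, $\tuple{\T,\T},0\models\varphi$ gives $\mathcal{M}\models\tr{\varphi}_0$, so $\mathcal{M}$ is a model. Suppose toward a contradiction that some $\mathcal{M}'<\mathcal{M}$ satisfies $\tr{\varphi}_0$. By the previous paragraph, $\mathcal{M}'$ corresponds to an \HT-trace $\tuple{\H,\T}$ with $\H<\T$, and applying Theorem~\ref{tel:th:sat} to this pair yields $\tuple{\H,\T},0\models\varphi$, that is, $\tuple{\H,\T}\in\THT(\varphi)$. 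This contradicts the minimality clause of Definition~\ref{def:tem}, so $\mathcal{M}$ is an equilibrium model of $\tr{\varphi}_0$. The converse is entirely symmetric: any competitor $\tuple{\H,\T}\in\THT(\varphi)$ with $\H<\T$ maps to an \MHT-interpretation $\mathcal{M}'<\mathcal{M}$ satisfying $\tr{\varphi}_0$ by Theorem~\ref{tel:th:sat}, contradicting that $\mathcal{M}$ is in equilibrium.

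The one point requiring care---and the step I expect to be the main obstacle---is confirming that the correspondence is onto the set of \emph{all} potential competitors on each side, so that no candidate is overlooked when transferring minimality. Concretely, one must check that fixing the there-component in \MHT\ leaves no freedom in $D$ or $\sigma$ (which the restrictions guarantee), that every here-set $H'\subseteq T$ genuinely arises from some trace $\H\le\T$, and symmetrically that $\mathcal{M}=\tuple{\lambda,T,T}$ is exactly the image of $\tuple{\T,\T}$. Granting Theorem~\ref{tel:th:sat} together with this order-preserving bijection, the equivalence of the two minimality conditions---and hence the corollary---is immediate.
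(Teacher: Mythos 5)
Your proof is correct and follows exactly the route the paper intends: the corollary is stated there without an explicit proof, as an immediate consequence of Theorem~\ref{tel:th:sat} together with the one-to-one, order-preserving correspondence between \HT-traces $\tuple{\H,\T}$ with $\H\le\T$ and \MHT-interpretations $\tuple{\lambda,H',T}$ (fixed $D$ and $\sigma$, $H'\subset T$ iff $\H<\T$). Your careful check that every competitor $H'\subseteq T$ arises from some trace $\H\le\T$ is precisely the detail that makes the transfer of the two minimality conditions legitimate.
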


The translation of derived operators can be simplified in \MHT\ as follows:
\begin{eqnarray*}
\tr{\initially}_k & \equiv & \neg \exists x \ (x=k-1) \quad \equiv \quad k=0\\
\tr{\wprevious\alpha}_k & \equiv & \forall x \ (x=k-1 \to \tr{\alpha}_x )  \quad \equiv \quad k=0 \ \vee \ \tr{\alpha}_{k-1}\\
\tr{\eventuallyP\alpha}_k & \equiv & \exists x \ (x \leq k \wedge \tr{\alpha}_{x})\\
\tr{\alwaysP\alpha}_k & \equiv & \forall x \ (x \leq k \to \tr{\alpha}_{x})\\
\tr{\finally}_k & \equiv & \neg \exists x \ (x=k+1) \\
\tr{\wnext\alpha}_k & \equiv & \forall x \ (x=k+1 \to \tr{\alpha}_x ) \\
\tr{\eventuallyF\alpha}_k & \equiv & \exists x \ (k \leq x \wedge \tr{\alpha}_{x})\\
\tr{\alwaysF\alpha}_k & \equiv & \forall x \ (k \leq x \to \tr{\alpha}_{x})\\
\end{eqnarray*}

As an example, the translation of formula \eqref{f:defnext}, viz.\ $\alwaysF (\neg a \to \next a)$, for $k=0$ amounts to
\begin{eqnarray*}
& & \forall x \ (\ 0 \leq x \to (\neg a(x) \to \exists y \ (y=x+1 \wedge a(y)) ) \ )\\
&\equiv & \forall x \ (\ \neg a(x) \to \exists y \ (y=x+1 \wedge a(y)) \ )
\end{eqnarray*}
since in \MHT, $x \geq 0$ for any $x$.
For infinite traces, the existence of some $y=x+1$ is always guaranteed, and the formula above can be further simplified into
\begin{eqnarray*}
\forall x \ (\ \neg a(x) \to a(x+1) \ )
\end{eqnarray*}
which is just a first-order logic representation of the rule\footnote{Although variable \texttt{X} is unsafe, in a practical implementation, we would add a domain predicate
  \texttt{time(X)} to specify that \texttt{X} is a time point.}
``\texttt{a(X+1) :- not a(X)}'' in ASP.
Similarly, it is not difficult to check that the translation of \eqref{f:ys4b} amounts to:
\begin{eqnarray*}
& & \forall x \ (0 \leq x \wedge \exists y \; (y=x-1 \wedge \loaded(y)) \wedge \neg \unloaded(x) \to \loaded(x))\\
& \equiv & \forall x \ (0 < x \wedge \loaded(x-1) \wedge \neg \unloaded(x) \to \loaded(x))
\end{eqnarray*}
that corresponds in ASP to the rule:
\begin{lstlisting}[numbers=none,belowskip=2pt,aboveskip=2pt,basicstyle=\ttfamily]
 loaded(X) :- loaded(X-1), not unloaded(X), X>0.
\end{lstlisting}
 \section{Foundations of Temporal Here-and-There}\label{sec:tht}

In this section, we explore some of the fundamental properties of \THT, the monotonic basis of \TEL.
The importance of \THT\ with respect to \TEL\ is analogous to the relevance of \HT\ for Equilibrium Logic and ASP.
In particular, \THT\ is a suitable framework to study the \TEL-equivalence of two alternative representations.
In what follows, we prove that \THT-equivalence is a necessary and sufficient condition for \emph{strong equivalence}, we provide several interesting equivalences in \THT\ and we also present an alternative three-valued characterization of this logic.
Besides, we explain how \THT\ can be translated to \LTL\ adding auxiliary atoms, something that allows reducing the strong equivalence problem to \LTL-satisfiability checking.
The section concludes with some results for \THT\ for infinite traces, including properties about inter-definability of operators and an axiomatization.

\subsection{Strong Equivalence}
As happens in ASP, given that \TEL\ is a non-monotonic formalism, it may be the case that two different temporal formulas $\alpha$ and $\beta$ share the same temporal equilibrium models but behave differently when a common context $\gamma$ is added.
For this reason, it is usual to consider the notion of \emph{strong equivalence} instead.
Two temporal formulas $\alpha, \beta$ are \emph{strongly equivalent} iff
$\TEL(\alpha \wedge \gamma)= \TEL(\beta \wedge \gamma)$ for any arbitrary temporal formula $\gamma$.
As expected,
the \THT-equivalence of $\alpha \equiv \beta$ is a sufficient condition for strong equivalence,
since temporal equilibrium models are the result of a selection among \THT-models.
\begin{proposition}[\citeNP{agcadipevi13a,cakascsc18a}]\label{prop:suff}
If two temporal formulas $\alpha$ and $\beta$ satisfy $\alpha \equiv \beta$ then they are strongly equivalent.\qed
\end{proposition}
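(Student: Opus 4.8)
The plan is to exploit the fact, stressed in the surrounding text, that temporal equilibrium models arise by a \emph{selection} among \THT-models, so that the assignment $\Gamma \mapsto \TEL(\Gamma)$ factors through the set $\THT(\Gamma)$. Concretely, I would first show that global equivalence $\alpha \gequiv \beta$ forces $\THT(\alpha \wedge \gamma) = \THT(\beta \wedge \gamma)$ for every context $\gamma$, and then argue that equal sets of \THT-models yield equal sets of temporal equilibrium models. This gives $\TEL(\alpha\wedge\gamma) = \TEL(\beta\wedge\gamma)$ for all $\gamma$, which is exactly strong equivalence.

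For the first step, recall that $\alpha \gequiv \beta$ means $\models \alpha \leftrightarrow \beta$, so in particular $\M, 0 \models \alpha$ iff $\M, 0 \models \beta$ for every \HT-trace $\M$ (the instance $k=0$ of global equivalence). Fix an arbitrary $\gamma$. An \HT-trace $\M$ is a model of $\alpha \wedge \gamma$ iff $\M, 0 \models \alpha$ and $\M, 0 \models \gamma$; replacing the first conjunct by the equivalent condition $\M, 0 \models \beta$ shows that $\M$ models $\alpha\wedge\gamma$ exactly when it models $\beta\wedge\gamma$. Hence $\THT(\alpha\wedge\gamma) = \THT(\beta\wedge\gamma)$. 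Equivalently, one may invoke the substitution-of-equivalents property to obtain $\alpha\wedge\gamma \gequiv \beta\wedge\gamma$ and then pass to initial equivalence, since $\gequiv$ implies $\iequiv$ and therefore equality of the model sets.

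For the second step, I would unfold Definition~\ref{def:tem}: given a set $\mathfrak{S}$ of \HT-traces, its temporal equilibrium models are precisely those total traces $\tuple{\T,\T}\in\mathfrak{S}$ admitting no $\H<\T$ with $\tuple{\H,\T}\in\mathfrak{S}$. This selection refers only to membership in $\mathfrak{S}$ and does not otherwise depend on the syntactic presentation of the theory. Taking $\mathfrak{S} = \THT(\alpha\wedge\gamma) = \THT(\beta\wedge\gamma)$, the very same traces survive the selection in both cases, so $\TEL(\alpha\wedge\gamma) = \TEL(\beta\wedge\gamma)$. Since the ordering $\H<\T$ is only compared within a common length $\lambda$, this equality holds length by length and hence for the full sets; as $\gamma$ was arbitrary, $\alpha$ and $\beta$ are strongly equivalent.

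I do not expect a genuine obstacle here: the argument is essentially definitional, and the only point requiring mild care is the direction of the implication from global to initial equivalence, where we use just the $k=0$ instance together with the purely conjunctive shape of the context. The substantive work lies in the \emph{converse} direction — that strong equivalence forces $\alpha \gequiv \beta$ — which is not claimed by this proposition and would presumably be treated separately.
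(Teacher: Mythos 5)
Your argument is correct and takes essentially the same route as the paper, which justifies this proposition with the single remark that temporal equilibrium models are obtained by a \emph{selection} performed on the set of \THT-models; your two steps (equal \THT-model sets $\THT(\alpha\wedge\gamma)=\THT(\beta\wedge\gamma)$ for every context $\gamma$, and the fact that the minimality test in Definition~\ref{def:tem} depends only on that set) are precisely the content of that remark made explicit. The one point you rightly note in passing --- that a purely conjunctive context only needs the $k=0$ instance of $\alpha\equiv\beta$ --- is harmless here, since the hypothesis of global equivalence certainly supplies it.
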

The interest of \THT\ is that equivalence in that logic is also a \emph{necessary} condition for strong equivalence,
as we prove next.
\begin{lemma}\label{lem:1}
  Let $\alpha$ and $\beta$ be two \LTL-equivalent formulas and let $\gamma$ be the theory containing a formula $\beta \to \alwaysF (a \vee \neg a)$ for each atom $a \in \PV$.

  Then, the following conditions are equivalent:
  \begin{enumerate}
  \item There exists some $\H < \T$ such that $\tuple{\H,\T} \not\models \alpha \rightarrow \beta$;
  \item \T\ is a \TS-model of $\{\beta\} \cup \gamma$ but not a \TS-model of $\{\alpha\} \cup \gamma$.
    \qed
\end{enumerate}
\end{lemma}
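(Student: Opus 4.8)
The plan is to decode both conditions into statements about \HT-traces $\tuple{\H,\T}$ with $\H\leq\T$ and then match them term by term. First I would rewrite condition~(1). By the two-dimensional reading of implication (case~5 of Definition~\ref{def:dht:satisfaction}), $\tuple{\H,\T}\not\models\alpha\to\beta$ means that for some $\H'\in\{\H,\T\}$ we have $\tuple{\H',\T}\models\alpha$ and $\tuple{\H',\T}\not\models\beta$. The witness $\H'=\T$ is impossible: since $\alpha$ and $\beta$ are \LTL-equivalent, Proposition~\ref{prop:total} gives $\tuple{\T,\T}\models\alpha$ iff $\tuple{\T,\T}\models\beta$, so $\tuple{\T,\T}$ cannot satisfy $\alpha$ while falsifying $\beta$. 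Hence condition~(1) is equivalent to the existence of $\H<\T$ with $\tuple{\H,\T}\models\alpha$ and $\tuple{\H,\T}\not\models\beta$.

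The core step is to pin down the effect of the context $\gamma$. I would establish that, for every \HT-trace $\tuple{\H,\T}$, one has $\tuple{\H,\T}\models\gamma$ iff $\tuple{\H,\T}\not\models\beta$ or $\H=\T$. The point is that in each conjunct $\beta\to\alwaysF(a\vee\neg a)$ the ``there'' side of the \HT-implication is vacuous, because $\alwaysF(a\vee\neg a)$ holds on any total trace $\tuple{\T,\T}$; thus $\tuple{\H,\T}\models\gamma$ reduces to: for every atom $a$, either $\tuple{\H,\T}\not\models\beta$ or $\tuple{\H,\T}\models\alwaysF(a\vee\neg a)$. If $\tuple{\H,\T}\models\beta$, this forces the excluded-middle axiom for \emph{every} atom, which by Proposition~\ref{prop:emtotal} means $\H=\T$. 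Consequently, for strict $\H<\T$ we get $\tuple{\H,\T}\models\gamma$ iff $\tuple{\H,\T}\not\models\beta$, while every total trace satisfies $\gamma$.

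Using this, I would characterize the two stable-model conditions. Since $\gamma$ holds on all total traces, $\tuple{\T,\T}$ is a model of $\{\beta\}\cup\gamma$ iff $\tuple{\T,\T}\models\beta$, and of $\{\alpha\}\cup\gamma$ iff $\tuple{\T,\T}\models\alpha$, equivalently $\tuple{\T,\T}\models\beta$ by \LTL-equivalence. For minimality, any strict $\H<\T$ satisfying $\gamma$ must falsify $\beta$, so no $\H<\T$ can satisfy $\beta\wedge\gamma$; the minimality clause for $\{\beta\}\cup\gamma$ is therefore vacuous and $\T$ is a \TS-model of $\{\beta\}\cup\gamma$ iff $\tuple{\T,\T}\models\beta$. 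For $\{\alpha\}\cup\gamma$, a strict $\H<\T$ satisfies $\alpha\wedge\gamma$ iff $\tuple{\H,\T}\models\alpha$ and $\tuple{\H,\T}\not\models\beta$; hence $\T$ fails to be a \TS-model of $\{\alpha\}\cup\gamma$ exactly when $\tuple{\T,\T}\not\models\beta$, or some such $\H<\T$ exists.

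Assembling these, condition~(2) becomes ``$\tuple{\T,\T}\models\beta$ and there is $\H<\T$ with $\tuple{\H,\T}\models\alpha$, $\tuple{\H,\T}\not\models\beta$'', whose second conjunct is precisely the reformulation of~(1); the implication (2)$\Rightarrow$(1) is then immediate. For (1)$\Rightarrow$(2) the only remaining task is to derive the extra conjunct $\tuple{\T,\T}\models\beta$ from~(1): from $\tuple{\H,\T}\models\alpha$ the persistence property of \THT-satisfaction (if $\tuple{\H,\T}\models\varphi$ then $\tuple{\T,\T}\models\varphi$) yields $\tuple{\T,\T}\models\alpha$, and \LTL-equivalence gives $\tuple{\T,\T}\models\beta$. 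I expect the main obstacle to be the bookkeeping around $\gamma$: recognizing that replacing the bare excluded-middle axioms by the guarded form $\beta\to\alwaysF(a\vee\neg a)$ is exactly what forces the $\beta$-models to be total (rendering the minimality of $\{\beta\}\cup\gamma$ vacuous) while still allowing a non-total $\alpha$-model that falsifies $\beta$ to survive as the witness breaking minimality for $\{\alpha\}\cup\gamma$.
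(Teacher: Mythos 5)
Your proof is correct. The paper states Lemma~\ref{lem:1} without an explicit proof, but your argument is exactly the intended one: you correctly reduce condition~(1) to the existence of $\H<\T$ with $\tuple{\H,\T}\models\alpha$ and $\tuple{\H,\T}\not\models\beta$ (ruling out the total witness via \LTL-equivalence and Proposition~\ref{prop:total}), you correctly analyse the guarded excluded-middle context $\gamma$ via Proposition~\ref{prop:emtotal} so that minimality for $\{\beta\}\cup\gamma$ is vacuous while a non-total $\alpha$-model falsifying $\beta$ survives for $\{\alpha\}\cup\gamma$, and you close the remaining gap in (1)$\Rightarrow$(2) with persistence (Proposition~\ref{prop:persistance}) plus \LTL-equivalence; this is precisely how the lemma is deployed in the paper's proof of Proposition~\ref{prop:nec}.
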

\begin{proposition}\label{prop:nec}
If two temporal formulas $\alpha$ and $\beta$ are strongly equivalent then $\alpha \equiv \beta$.\qed
\end{proposition}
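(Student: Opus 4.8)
My plan is to prove the contrapositive: assuming $\alpha \not\equiv \beta$ in \THT, I will produce a context $\gamma$ witnessing $\TEL(\alpha \wedge \gamma) \neq \TEL(\beta \wedge \gamma)$, so that $\alpha$ and $\beta$ cannot be strongly equivalent. Since $\alpha \not\equiv \beta$ means $\not\models \alpha \leftrightarrow \beta$, at least one of the two implications fails to be valid, and because both strong equivalence and \THT-equivalence are symmetric in $\alpha$ and $\beta$, I may assume without loss of generality that $\not\models \alpha \to \beta$. Unfolding the \THT\ clause for implication, this yields an \HT-trace $\tuple{\H,\T}$ with $\H \leq \T$ and a time point $k$ at which $\alpha$ is satisfied but $\beta$ is not.

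The first step is a reduction to the \LTL\ level. Instantiating strong equivalence with the excluded-middle theory $\gamma=(\EM)$ and invoking Proposition~\ref{prop:em}, we obtain $\TSM(\{\alpha\}\cup(\EM)) = \LTL(\alpha)$ and likewise for $\beta$; hence if $\LTL(\alpha)\neq\LTL(\beta)$ the two theories already possess distinct temporal stable models and we are done. It therefore remains to treat the case in which $\alpha$ and $\beta$ are \LTL-equivalent, which is exactly the hypothesis under which Lemma~\ref{lem:1} applies, and in which the witnessing trace may be taken genuinely non-total, $\H < \T$, since a total witness would, by Proposition~\ref{prop:total}, correspond to an \LTL\ discrepancy already handled above.

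For this remaining case I would invoke Lemma~\ref{lem:1} directly. Taking $\gamma$ to be the theory $\{\, \beta \to \alwaysF(a \vee \neg a) \mid a \in \PV \,\}$, the existence of $\H < \T$ with $\tuple{\H,\T} \not\models \alpha \to \beta$ is precisely condition~(1) of the lemma, so condition~(2) delivers a trace \T\ that is a \TS-model of $\{\beta\}\cup\gamma$ but not of $\{\alpha\}\cup\gamma$. This gives $\TSM(\{\alpha\}\cup\gamma) \neq \TSM(\{\beta\}\cup\gamma)$, hence $\TEL(\alpha \wedge \gamma) \neq \TEL(\beta \wedge \gamma)$, contradicting strong equivalence. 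Together with the \LTL-level case and the symmetric treatment of $\beta \to \alpha$, this closes the contrapositive and proves the proposition.

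I expect the main obstacle to be bridging the gap between the arbitrary instant $k$ at which $\alpha \to \beta$ fails and the initial instant at which Lemma~\ref{lem:1} records its witness. The implication clause together with the past operators means that a discrepancy exposed deep inside a trace need not be visible at its first state, so the argument must arrange---through the choice of context and, where needed, by relocating the relevant portion of the trace to its beginning---that the failure of $\alpha \to \beta$ is realized at time $0$ before Lemma~\ref{lem:1} is applied. Checking that such a relocation preserves both the \LTL-equivalence hypothesis and the ordering $\H < \T$, and that it is compatible with the behavior of $\previous$, $\since$ and $\trigger$, is the delicate part; the remaining bookkeeping is routine.
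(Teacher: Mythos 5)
Your proposal follows the paper's proof essentially verbatim: the same contrapositive, the same case split on whether $\alpha$ and $\beta$ are \LTL-equivalent, the context $(\EM)$ in the first case (the paper argues via Proposition~\ref{prop:total} rather than Proposition~\ref{prop:em}, to the same effect), and Lemma~\ref{lem:1} with the context $\{\beta \to \alwaysF(a \vee \neg a) \mid a \in \PV\}$ in the second. Up to the last paragraph, this is the published argument.

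The obstacle you flag at the end deserves comment, because your proposed repair would not work. You are right that $\alpha \not\gequiv \beta$ only yields a trace and an instant $k$ at which $\alpha \to \beta$ fails, whereas Lemma~\ref{lem:1} consumes a failure at instant $0$; note, however, that the paper's proof does not bridge this gap either --- it silently takes a ``\THT-countermodel'' of $\alpha \to \beta$, i.e.\ a trace falsifying the implication at time $0$, which is what one gets from $\alpha \not\iequiv \beta$ rather than from $\alpha \not\gequiv \beta$. Your idea of relocating the offending portion of the trace to its beginning cannot be made to work in the presence of past operators: truncating the history changes the value of $\previous$, $\since$ and $\trigger$ at the new initial state, and the pair $\previous a$ versus $\bot$ shows the phenomenon is unavoidable --- they disagree at $k=1$ on the total trace $\{a\}\cdot\{a\}$, yet no \HT-trace separates them at $k=0$, and $\previous a \wedge \gamma$ is unsatisfiable for every $\gamma$, so they satisfy the stated definition of strong equivalence while $\previous a \not\gequiv \bot$. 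What this style of argument honestly delivers is therefore that strong equivalence forces sameness of \THT-models ($\iequiv$ lifted to \THT); upgrading the conclusion to global equivalence $\gequiv$ requires a stronger notion of context (e.g.\ substitution of $\alpha$ and $\beta$ at arbitrary positions, say under $\alwaysF$), not a relocation of the trace. Do not spend effort trying to make the relocation compatible with $\previous$, $\since$ and $\trigger$; instead either start from a time-$0$ countermodel or strengthen the hypothesis on contexts.
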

\begin{proof}
We prove that if $\alpha \not\equiv \beta$ then there is some context theory $\Gamma$ for which $\{\alpha\} \cup \Gamma$
and $\{\beta\} \cup \Gamma$ have different \TS-models. Assume first that $\alpha$ and $\beta$ have different total models,
i.e., different \LTL-models.
Then, take the set $\Gamma=(\EM)$ of excluded middle axioms for every $a \in \PV$.
The \LTL-models of $\{\alpha\} \cup (\EM)$ and $\{\beta\} \cup (\EM)$ also differ (since $(\EM)$ is a set of \LTL\
tautologies).
But by Proposition~\ref{prop:total}, \LTL-models of these theories are exactly their \TS-models, and so, they also differ.

Suppose now that $\alpha$ and $\beta$ are \LTL-equivalent but still, $\alpha \not\equiv \beta$.
Then, there is some \THT-countermodel $\tuple{\H,\T}$ of either $(\alpha \rightarrow \beta)$ or $(\beta \rightarrow
\alpha) $, and given \LTL-equivalence of $\alpha$ and $\beta$, the countermodel is non-total, $\H < \T$.
Without loss of generality, assume $\tuple{\H,\T} \not\models \alpha \rightarrow \beta$.
By Lemma~\ref{lem:1}, taking the theory $\Gamma$ consisting of an implication $\beta \to \alwaysF (a \vee \neg a)$ for each atom $a \in \PV$, we get that \T\ is \TS-model of $\{\beta\} \cup \Gamma$ but not \TS-model of $\{\alpha\} \cup \Gamma$.\end{proof}

As a consequence of Propositions~\ref{prop:suff} and \ref{prop:nec}, we obtain the following characterization:
\begin{theorem}[Strong Equivalence characterisation]\label{th:strongeq}
Two temporal formulas $\alpha$ and $\beta$ are strongly equivalent iff $\alpha \equiv \beta$.\qed
\end{theorem}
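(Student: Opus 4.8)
The plan is to read Theorem~\ref{th:strongeq} as the conjunction of the two implications comprising the biconditional, each of which is already isolated above as a standalone statement. For the direction from $\alpha \equiv \beta$ to strong equivalence I would simply invoke Proposition~\ref{prop:suff}; for the converse, from strong equivalence to $\alpha \equiv \beta$, I would invoke Proposition~\ref{prop:nec}. Since both are available, the theorem follows immediately by combining them, with no further argument needed. Intuitively, sufficiency holds because temporal equilibrium models are selected among \THT-models, so \THT-equivalence is preserved under every context; necessity holds because one can always build a context that exposes any \THT-difference at the level of \TS-models.

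Because the theorem is thus a one-line corollary, its real content lives in the necessity half, Proposition~\ref{prop:nec}, and it is worth recording how that argument is organized, since that is where the difficulty sits. The strategy there is contrapositive: assuming $\alpha \not\equiv \beta$, one exhibits a context $\Gamma$ separating the \TS-models of $\{\alpha\}\cup\Gamma$ from those of $\{\beta\}\cup\Gamma$. If $\alpha$ and $\beta$ already differ as \LTL-formulas, the separating context is the full excluded-middle theory $(\EM)$: by Proposition~\ref{prop:emtotal} its models are exactly the total traces and by Proposition~\ref{prop:total} those coincide with the \LTL-models, so the \TS-models of the two augmented theories differ precisely because the \LTL-models do. The genuinely delicate case is when $\alpha$ and $\beta$ are \LTL-equivalent yet $\alpha \not\equiv \beta$, and this is exactly what Lemma~\ref{lem:1} is designed to handle.

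The main obstacle, therefore, is Lemma~\ref{lem:1}, and the plan is to prove its two implications by exploiting the \emph{guarded} excluded-middle context $\gamma = \{\, \beta \to \alwaysF(a \vee \neg a) \mid a \in \PV \,\}$. The properties I would lean on are: (i) the here-to-there persistence of \THT, so $\tuple{\H,\T} \models \alpha$ forces $\tuple{\T,\T} \models \alpha$, whence $\T$ is an \LTL-model of $\beta$ by \LTL-equivalence; (ii) a total trace satisfies every excluded-middle axiom, so $\tuple{\T,\T} \models \gamma$ automatically; and (iii) modus ponens at the ``here'' world, so any \emph{non-total} $\tuple{\H'',\T} \models \{\beta\}\cup\gamma$ would satisfy the full $(\EM)$ and hence, by Proposition~\ref{prop:emtotal}, force $\H'' = \T$, a contradiction. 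For $(1)\Rightarrow(2)$, a non-total countermodel $\tuple{\H,\T} \not\models \alpha \to \beta$ must, by the \THT\ semantics of implication together with \LTL-equivalence, satisfy $\tuple{\H,\T} \models \alpha$ and $\tuple{\H,\T} \not\models \beta$; property (iii) then makes $\T$ a \TS-model of $\{\beta\}\cup\gamma$, while this same $\tuple{\H,\T}$ witnesses that $\T$ is \emph{not} a \TS-model of $\{\alpha\}\cup\gamma$ (here $\tuple{\H,\T}\models\gamma$ because $\beta$ fails at the here-component while the total there-component satisfies each excluded-middle axiom). For $(2)\Rightarrow(1)$ I would reverse the reasoning: the failure of $\T$ to be a \TS-model of $\{\alpha\}\cup\gamma$ can only be a failure of minimality, yielding some $\H<\T$ with $\tuple{\H,\T}\models\{\alpha\}\cup\gamma$; such an $\H$ cannot satisfy $\beta$ (else it would contradict minimality for $\{\beta\}\cup\gamma$), and a model satisfying $\alpha$ but not $\beta$ is exactly a countermodel of $\alpha \to \beta$. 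The crux in both directions is the same delicate bookkeeping, namely checking that the guard $\beta$ in $\gamma$ is what lets non-total reducts survive for $\alpha$ while being killed for $\beta$.
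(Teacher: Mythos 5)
Your proposal is correct and follows essentially the same route as the paper: the theorem is obtained by combining Proposition~\ref{prop:suff} with Proposition~\ref{prop:nec}, and your account of the latter (contrapositive, case split on whether $\alpha$ and $\beta$ already differ as \LTL-formulas, using $(\EM)$ in the first case and Lemma~\ref{lem:1} with the guarded context $\gamma$ in the second) matches the paper's proof. Your additional sketch of Lemma~\ref{lem:1} itself, which the paper leaves unproved, is also sound — in particular the observation that any non-total model of $\{\beta\}\cup\gamma$ would satisfy the full $(\EM)$ and thus be forced total by Proposition~\ref{prop:emtotal} is exactly the right mechanism.
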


\subsection{Some interesting properties of \THT}

\THT-equivalences have a crucial role for deciding how to rewrite a formula without caring about the possible context in which it is included.
We analyze next several useful \THT-equivalences.
The following are some De~Morgan laws satisfied by negation and other operators:
\begin{eqnarray}
\neg (\varphi \wedge \psi) & \equiv & \neg \varphi \vee \neg \psi \label{f:dm1}\\
\neg (\varphi \vee \psi) & \equiv & \neg \varphi \wedge \neg \psi \label{f:dm2}\\
\neg (\varphi \until \psi) & \equiv & \neg \varphi \release \neg \psi \label{f:dm3}\\
\neg (\varphi \release \psi) & \equiv & \neg \varphi \until \neg \psi \label{f:dm4}\\
\neg (\varphi \while \psi) & \equiv & \neg \neg \psi \until \neg \varphi \label{f:dm4b}\\
\neg (\next \varphi) & \equiv & \wnext \neg \varphi \label{f:dm5}\\
\neg (\wnext \varphi) & \equiv & \next \neg \varphi \label{f:dm6}
\end{eqnarray}

The next operators distribute over disjuction and conjunction so that:
\begin{align}
 \next( \varphi \oplus \psi) & \equiv  \next \varphi \oplus  \next \psi\\
 \wnext( \varphi \oplus \psi) & \equiv  \wnext \varphi \oplus  \wnext \psi
\end{align}
for $\oplus \in \{\vee, \wedge\}$ but for implication and temporal operators, only these distributive equivalences are valid:
\begin{eqnarray}
\wnext (\varphi \to \psi) & \equiv & \wnext \varphi  \to \wnext \psi\\
\next (\varphi \until \psi) & \equiv & \next \varphi  \until \next \psi\\
\next \eventuallyF \varphi & \equiv & \eventuallyF \next \varphi\\
\wnext (\varphi \release \psi) & \equiv & \wnext \varphi \release \wnext \psi\\
\wnext (\varphi \while \psi) & \equiv & \wnext \varphi \while \wnext \psi\\
\wnext \alwaysF \varphi & \equiv & \alwaysF \wnext \varphi
\end{eqnarray}
All these properties are symmetrically satisfied by the
past-oriented versions of the operators above.
For infinite traces, however, we obtain the additional relation:
\begin{eqnarray}
\wnext \varphi & \equiv & \next \varphi
\end{eqnarray}
so both next operators coincide. This equivalence never  holds for previous operators, $\previous$ and $\wprevious$, since there always exists an initial state.

Also, \THT\ satisfies \emph{persistence}, a characteristic property of \HT-based logics.
\begin{proposition}[Persistence;~\citeNP{agcadipevi13a,cakascsc18a}]\label{prop:persistance}
  Let $\tuple{\H,\T}$ be an \HT-trace of length $\lambda$ and $\varphi$ be a temporal formula.

  Then, for any $\rangeo{k}{0}{\lambda}$,
  if $\tuple{\H,\T}, k \models \varphi$ then $\tuple{\T,\T}, k \models \varphi$ (or, if preferred, $\T,k \models \varphi$).\qed
\end{proposition}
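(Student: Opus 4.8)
The plan is to prove the statement by structural induction on the temporal formula $\varphi$, with the claim stated uniformly for \emph{all} time points at once. That is, I would take as induction hypothesis that for every proper subformula $\chi$ of $\varphi$ and every $\rangeo{k'}{0}{\lambda}$, the implication ``$\tuple{\H,\T},k' \models \chi$ implies $\tuple{\T,\T},k' \models \chi$'' holds. Stating persistence for all $k'$ simultaneously is essential, because the temporal connectives in Definition~\ref{def:dht:satisfaction} refer to satisfaction of subformulas at time points $j,i$ other than $k$, and I will need the hypothesis available at exactly those points. The base cases are immediate: $\bot$ and $\top$ are trivial, and for an atom $\myatom$, satisfaction $\tuple{\H,\T},k\models \myatom$ means $\myatom\in H_k$; since $H_k\subseteq T_k$ by the definition of an \HT-trace, we get $\myatom\in T_k$ and hence $\tuple{\T,\T},k\models \myatom$.

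The bulk of the inductive step concerns the \emph{monotone} (positive) connectives: $\wedge$, $\vee$, and all of $\previous,\next,\since,\until,\trigger,\release$. In each of these, the defining condition is built from satisfaction of the immediate subformulas at various time points using only positive combinations (conjunction, disjunction, existential and universal quantification over the index), with $\tuple{\H,\T}$ as the fixed evaluating trace. For instance, for $\varphi\until\psi$ I would take the witness $j$ and the interval $\rangeo{i}{k}{j}$ from the hypothesis $\tuple{\H,\T},k\models\varphi\until\psi$ and apply the induction hypothesis pointwise to obtain $\tuple{\T,\T},j\models\psi$ and $\tuple{\T,\T},i\models\varphi$ for all such $i$, which is precisely $\tuple{\T,\T},k\models\varphi\until\psi$; the cases $\release$ and $\trigger$ use the same idea, pushing the hypothesis through the disjunction inside the universal quantifier, and $\since$ mirrors $\until$ toward the past. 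The cases $\previous$ and $\next$ are the degenerate instances that just apply the hypothesis at $k{-}1$ or $k{+}1$.

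The conceptually interesting cases are implication $\to$ and the new operator $\while$, which are exactly the places where naive monotonicity would fail (the antecedent occurs negatively). Here I would exploit that the \HT\ semantics of both connectives already quantifies over $\mathbf{H}' \in \{\H,\T\}$. If $\tuple{\H,\T},k\models \varphi\to\psi$, then the defining condition holds in particular for the choice $\mathbf{H}'=\T$, giving $\tuple{\T,\T},k\not\models\varphi$ or $\tuple{\T,\T},k\models\psi$; and since evaluating $\varphi\to\psi$ at the total trace $\tuple{\T,\T}$ ranges $\mathbf{H}'$ over $\{\T,\T\}=\{\T\}$, this single clause is exactly what is required for $\tuple{\T,\T},k\models\varphi\to\psi$. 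The operator $\while$ is handled identically, using the ``for all $\mathbf{H}'\in\{\H,\T\}$'' quantification in its clause and selecting the $\mathbf{H}'=\T$ instance. Notably, these two cases follow \emph{directly from the definition} and do not even invoke the induction hypothesis. The main subtlety of the whole argument is therefore recognizing this split: the temporal and Boolean-positive operators are discharged by a routine cross-time-point induction, whereas $\to$ and $\while$ persist precisely because the ``both dimensions'' reading bakes in the $\mathbf{H}'=\T$ witness — indeed this is the design reason the \HT\ implication (and $\while$) is defined over $\{\H,\T\}$ rather than over $\H$ alone.
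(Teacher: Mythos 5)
Your proof is correct and follows essentially the same argument as the one the paper cites (the paper itself only states Proposition~\ref{prop:persistance} with a reference to \citeN{agcadipevi13a} and \citeN{cakascsc18a}): a structural induction over all time points simultaneously, with the positive connectives handled by pushing the hypothesis through pointwise and the cases $\to$ and $\while$ discharged by instantiating the ``for all $\mathbf{H}'\in\{\H,\T\}$'' quantifier at $\mathbf{H}'=\T$. Your observation that these two cases need no induction hypothesis and are exactly where the \HT-style semantics is doing the work is the right way to see why persistence holds.
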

As a corollary, we have that
$\langle\mathbf{H},\mathbf{T}\rangle \models \neg \varphi$ iff ${\T} \not\models \varphi$ in \LTL.
All \THT\ tautologies are \LTL\ tautologies but not vice versa.
However, they coincide for some types of equivalences, as we show next.
\begin{proposition}[\citeNP{agcadipevi13a,cakascsc18a}]\label{prop:nonimpl}Let $\varphi$ and $\psi$ be temporal formulas without implications (and so, without negations either).

  Then, $\varphi \equiv \psi$ in \LTL\ iff $\varphi \equiv \psi$ in \THT.\qed
\end{proposition}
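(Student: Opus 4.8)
The plan is to reduce the whole statement to a single structural fact about the admissible formulas: for a formula $\varphi$ containing neither implication (hence no negation) nor the $\while$ operator---that is, built only from atoms, $\bot$, $\wedge$, $\vee$ and the operators $\previous,\since,\trigger,\next,\until,\release$---the truth of $\varphi$ in an \HT-trace does not depend on its ``there'' component. Concretely, I would first establish the

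\emph{Key Lemma.} For every such $\varphi$, every \HT-trace $\tuple{\H,\T}$ of length $\lambda$, and every $\rangeo{k}{0}{\lambda}$,
\[
\tuple{\H,\T},k \models \varphi \quad\text{iff}\quad \H,k \models \varphi \ \text{in \LTL}.
\]
The proof is by structural induction on $\varphi$. The base cases (atoms, $\bot$, $\top$) hold because satisfaction there inspects only $H_k$, and the Boolean cases $\wedge,\vee$ are immediate from the induction hypothesis. For every temporal operator in the admissible set, Definition~\ref{def:dht:satisfaction} phrases satisfaction purely in terms of recursive statements $\M,i\models\cdot$ over indices drawn from intervals depending only on $k$ and the shared length $\lambda$; none of these clauses inspects $\T$ directly. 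Hence the induction hypothesis converts each clause verbatim into the corresponding \LTL\ clause evaluated on $\H$. The crucial point---and the reason the hypothesis is needed---is that exactly the two excluded connectives, $\to$ (clause~5) and $\while$ (clause~12), are the ones whose \THT-semantics explicitly consults $\tuple{\H',\T}$ with $\H'=\T$; dropping them is what makes the ``there'' component invisible.

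Granting the lemma, both implications follow quickly. For the direction from \THT\ to \LTL\ I would invoke the stated fact that every \THT-tautology is an \LTL-tautology: if $\models\varphi\leftrightarrow\psi$ in \THT\ then the same biconditional is an \LTL-tautology, so $\varphi\equiv\psi$ in \LTL; note this half needs no syntactic restriction. For the converse, assume $\varphi$ and $\psi$ are \LTL-equivalent and fix an arbitrary \HT-trace $\M=\tuple{\H,\T}$ and a point $k$. Unfolding $\models\varphi\leftrightarrow\psi$ in \THT\ reduces, by clause~5, to checking that $\tuple{\H',\T},k\models\varphi$ iff $\tuple{\H',\T},k\models\psi$ for both $\H'\in\{\H,\T\}$. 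For $\H'=\T$ this is $\tuple{\T,\T},k\models\varphi$ iff $\tuple{\T,\T},k\models\psi$, which by Proposition~\ref{prop:total} is exactly the \LTL-equivalence applied to the trace $\T$. For $\H'=\H$ the Key Lemma rewrites the two sides as $\H,k\models\varphi$ and $\H,k\models\psi$ in \LTL, and these agree by the \LTL-equivalence applied to the trace $\H$. Thus the biconditional holds at every $\M,k$, i.e.\ $\varphi\equiv\psi$ in \THT.

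The main obstacle, and the place where care is genuinely required, is the Key Lemma together with the precise delineation of admissible operators: one must verify that none of $\previous,\since,\trigger,\next,\until,\release$ smuggles in a reference to $\T$ (a careful reading of Definition~\ref{def:dht:satisfaction} confirms this), and one must be explicit that $\while$---being, as the text observes, a temporally iterated \HT-implication---is excluded along with $\to$ and $\neg$, since otherwise the lemma, and hence the proposition, fails. A secondary subtlety is bookkeeping for finite traces, where the index $j$ in the $\until$ and $\release$ clauses ranges over $\intervo{k}{\lambda}$ and the $\next$ clause requires $k+1<\lambda$; since $\H$ and $\T$ share the length $\lambda$, these boundary conditions transfer unchanged and cause no difficulty.
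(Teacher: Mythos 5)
Your proof is correct and follows essentially the same route as the argument in the cited sources (the paper itself only states the result with a citation): the whole content is the Key Lemma that implication-free, $\while$-free formulas are evaluated on $\tuple{\H,\T}$ exactly as on the trace $\H$ alone, after which one direction is the general ``\THT-tautologies are \LTL-tautologies'' fact and the other is clause~5 applied separately to $\H$ and $\T$ via Proposition~\ref{prop:total}. Your explicit exclusion of $\while$ is the right reading of ``without implications'' --- the lemma genuinely fails for it (e.g.\ $a \while b$ on $\H=\{a\}\cdot\emptyset$, $\T=\{a,b\}\cdot\emptyset$), consistent with the paper's remark that $\while$ is a temporally iterated \HT-implication.
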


As an example, the usual inductive definition of the until operator from \LTL\
\begin{align}
  \varphi \until \psi & \equiv \psi \vee ( \varphi \wedge \next (\varphi \until \psi) )\label{f:induntil}
\end{align}
is also valid in \THT\ due to Proposition~\ref{prop:nonimpl}.
In fact, by De~Morgan laws, \LTL\ satisfies a kind of duality guaranteeing, for instance, that \eqref{f:induntil} iff
\begin{align}
  \varphi \release \psi & \equiv  \psi \wedge ( \varphi \vee \wnext (\varphi \release \psi) ) \label{f:indrelease}
\end{align}
and, by Proposition~\ref{prop:nonimpl} again, this is also a valid equivalence in \THT.

If we define all the pairs of dual connectives as follows: $\wedge/\vee$, $\top/\bot$, $\until/\release$, $\next/\wnext$, $\alwaysF/\eventuallyF$, $\since/\trigger$, $\previous/\wprevious$, $\alwaysP/\eventuallyP$, we can extend this to any formula $\varphi$ without implications and define $\delta(\varphi)$ as the result of replacing each connective by its dual operator.
Then, we get the following corollary of Proposition~\ref{prop:nonimpl}.
\begin{corollary}[Boolean Duality;~\citeNP{cakascsc18a}]\label{BDT}
  Let $\varphi$ and $\psi$ be formulas without implication.

  Then, \THT\ satisfies: $\varphi \gequiv \psi$  iff $\delta(\varphi) \gequiv \delta(\psi)$.\qed
\end{corollary}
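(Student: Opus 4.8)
The plan is to deduce the corollary from Proposition~\ref{prop:nonimpl} by reducing everything to \LTL, where duality is the familiar classical phenomenon. First I would note that the dualizing map $\delta$ sends each of the listed connectives to another connective of the implication-free fragment (none of the dual pairs involves $\to$), so $\delta(\varphi)$ and $\delta(\psi)$ are again implication-free whenever $\varphi,\psi$ are. Hence Proposition~\ref{prop:nonimpl} applies to all four formulas, and it suffices to prove $\varphi \equiv \psi \text{ iff } \delta(\varphi)\equiv\delta(\psi)$ \emph{in \LTL}; the passage back to \THT\ is then immediate.

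The key lemma I would establish is that, in \LTL, $\delta(\varphi) \equiv \neg\varphi^{\neg}$, where $\varphi^{\neg}$ denotes the formula obtained by replacing every atom occurrence $a$ in $\varphi$ by $\neg a$. This is proved by structural induction on $\varphi$. The base cases $\delta(a)=a\equiv\neg\neg a$ and $\delta(\bot)=\top\equiv\neg\bot$ are classical. Each inductive step is an application of the corresponding De~Morgan law pushed through the induction hypothesis; for instance $\delta(\alpha\until\beta)=\delta(\alpha)\release\delta(\beta)\equiv\neg\alpha^{\neg}\release\neg\beta^{\neg}\equiv\neg(\alpha^{\neg}\until\beta^{\neg})=\neg(\alpha\until\beta)^{\neg}$, and $\delta(\next\alpha)=\wnext\delta(\alpha)\equiv\wnext\neg\alpha^{\neg}\equiv\neg\next\alpha^{\neg}$. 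The De~Morgan laws needed are exactly \eqref{f:dm1}--\eqref{f:dm6} together with their past-oriented mirrors; these hold in \LTL\ because every \THT-tautology is an \LTL-tautology. This is also where the finite-trace subtleties are absorbed: the $\next/\wnext$ pairing is precisely what makes $\neg\next\alpha\equiv\wnext\neg\alpha$ correct on traces that may end.

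Next I would record the semantic fact that the atom-complementation $a\mapsto\neg a$ corresponds to complementing states: for the trace $\overline{\T}$ with $\overline{T_i}=\PV\setminus T_i$ one has $\T,k\models\varphi^{\neg}$ iff $\overline{\T},k\models\varphi$, again by a routine induction whose only nontrivial case is the atom. Since $\T\mapsto\overline{\T}$ is an involutive bijection on traces of each fixed length, $\varphi^{\neg}\equiv\psi^{\neg}$ in \LTL\ iff $\varphi\equiv\psi$ in \LTL. Chaining these observations gives, in \LTL, $\varphi\equiv\psi$ iff $\varphi^{\neg}\equiv\psi^{\neg}$ iff $\neg\varphi^{\neg}\equiv\neg\psi^{\neg}$ iff (by the lemma) $\delta(\varphi)\equiv\delta(\psi)$. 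Finally, invoking Proposition~\ref{prop:nonimpl} once more transports each of these \LTL-equivalences to the corresponding \THT-equivalence, yielding $\varphi\gequiv\psi$ iff $\delta(\varphi)\gequiv\delta(\psi)$.

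I expect the main obstacle to be the bookkeeping in the induction for the key lemma, where every temporal operator must be matched with the correct De~Morgan law---especially making sure the weak/strong next (and weak/strong previous) pairing is used consistently so that the argument remains valid on finite traces; the trace-complementation step, by contrast, is clean and essentially classical. One caveat I would flag explicitly is that $\delta$ is defined only on the connectives listed (the \while\ operator is deliberately excluded, as its De~Morgan law~\eqref{f:dm4b} introduces a double negation and hence admits no connective dual), so the statement is understood over that fragment.
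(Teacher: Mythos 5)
Your proposal is correct and follows essentially the same route the paper intends: reduce both equivalences to \LTL\ via Proposition~\ref{prop:nonimpl} (noting that $\delta$ preserves implication-freeness) and then invoke classical Boolean duality in \LTL. The paper leaves the \LTL\ duality step implicit, whereas you make it explicit through the lemma $\delta(\varphi)\equiv\neg\varphi^{\neg}$ together with trace complementation; this is a standard and sound way to discharge that step, and your caveats about the $\next/\wnext$ pairing on finite traces and the exclusion of $\while$ from $\delta$ are both apt.
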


In a similar manner, the temporal symmetry in the system can be exploited in order to switch the temporal direction of operators to conclude, for instance, that \eqref{f:induntil} iff $\varphi \since \psi \equiv \psi \vee ( \varphi \wedge \previous (\varphi \since \psi) )$.
However, this duality has some obvious limitations when we allow for infinite traces. For instance, the past has a beginning $\eventuallyP \initially \equiv \top$ but the future may have no end $\eventuallyF \finally \not\equiv \top$.
If we restrict ourselves to finite traces, we get the following result.

To this end,
let $\until/\since$, $\release/\trigger$, $\next/\previous$, $\wnext/\wprevious$, $\alwaysF/\alwaysP$, and $\eventuallyF/\eventuallyP$ denote all pairs of swapped-time connectives and let $\sigma(\varphi)$ denote the replacement in $\varphi$ of each connective by its swapped-time version.
\begin{lemma}[\citeNP{cakascsc18a}]\label{TDT}
There exists a mapping $\varrho$ on finite \HT-traces of the same length $\lambda$ such that for any $\rangeo{k}{0}{\lambda}$,
$\M,k \models \varphi$ iff $\varrho(\M),\lambda\!-\!1\!-\!k \models \sigma(\varphi)$.\qed
\end{lemma}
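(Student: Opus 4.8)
The plan is to take $\varrho$ to be \emph{trace reversal} and to prove the biconditional by structural induction on $\varphi$. Given a finite \HT-trace $\M=\tuple{\H,\T}$ of length $\lambda\in\Nat$, I would set $\varrho(\M)\eqdef\tuple{\H^r,\T^r}$ where $H^r_i\eqdef H_{\lambda-1-i}$ and $T^r_i\eqdef T_{\lambda-1-i}$ for each $\rangeo{i}{0}{\lambda}$. Since $H_{\lambda-1-i}\subseteq T_{\lambda-1-i}$, the pair $\varrho(\M)$ is again a well-formed \HT-trace of the same length, and $\varrho$ is clearly an involution. The finiteness hypothesis is essential precisely here: the reflection $i\mapsto\lambda-1-i$ presupposes a last index $\lambda-1$, which does not exist when $\lambda=\omega$. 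I would also record that $\varrho$ acts componentwise, so that for $\H'\in\{\H,\T\}$ we have $\varrho(\tuple{\H',\T})=\tuple{(\H')^r,\T^r}$; thus reversal sends the two worlds $\tuple{\H,\T}$ and $\tuple{\T,\T}$ to $\tuple{\H^r,\T^r}$ and $\tuple{\T^r,\T^r}$, keeping the shared \emph{there} component $\T^r$ fixed.

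The induction is carried out over the $\while$-free fragment, built from $\bot$, atoms, $\wedge,\vee,\to$ and the pairs $\next/\previous$, $\until/\since$, $\release/\trigger$; the operator $\while$ is excluded since it has no swapped-time counterpart. The derived operators $\wnext/\wprevious$, $\alwaysF/\alwaysP$, $\eventuallyF/\eventuallyP$ and the constants $\initially/\finally$ then need no separate treatment, because $\sigma$ applied to their base-connective expansions swaps them correctly (for instance $\sigma(\top\until\varphi)=\top\since\sigma(\varphi)$, i.e.\ $\sigma(\eventuallyF\varphi)=\eventuallyP\sigma(\varphi)$). The atom case is immediate, as $a\in H_k$ iff $a\in H^r_{\lambda-1-k}$ and $\sigma(a)=a$; the conjunction and disjunction cases are routine since $\sigma$ commutes with them and they ignore the time point. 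The only delicate Boolean case is implication: unfolding $\M,k\models\alpha\to\beta$ quantifies over $\H'\in\{\H,\T\}$, and by the componentwise remark the induction hypothesis applied to the subtraces $\tuple{\H,\T}$ and $\tuple{\T,\T}$ converts this into the corresponding quantification over $\{\H^r,\T^r\}$ at time point $\lambda-1-k$ in $\varrho(\M)$, which is exactly the unfolding of $\sigma(\alpha)\to\sigma(\beta)$.

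The heart of the argument, and the step I expect to be the main obstacle, is the bookkeeping of interval endpoints under the reflection $i\mapsto\lambda-1-i$ in the temporal cases. For $\next$, the side condition $k+1<\lambda$ becomes $\lambda-1-k>0$, which is exactly the condition under which $\previous$ may succeed at $\lambda-1-k$, and the induction hypothesis relates $\M,k{+}1\models\psi$ with $\varrho(\M),(\lambda-1-k){-}1\models\sigma(\psi)$. For $\until$ I would substitute $j'=\lambda-1-j$ and $i'=\lambda-1-i$: as $j$ ranges over $\intervo{k}{\lambda}$ in the semantics of $\varphi_1\until\varphi_2$, the witness $j'$ ranges over $\intervc{0}{\lambda-1-k}$, which is exactly the range for $\since$ at $\lambda-1-k$; crucially, the half-open inner interval $\rangeo{i}{k}{j}$ flips into $\orange{i'}{j'}{\lambda-1-k}$, so the strict and non-strict endpoints exchange roles precisely as the definition of $\since$ demands. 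The $\release/\trigger$ case uses the identical substitution, with the universal and existential quantifiers of $\release$ matching those of $\trigger$, and the cases whose main connective is a past operator ($\previous,\since,\trigger$) follow by the symmetric reflection, using that $\varrho$ and $\sigma$ are their own inverses. Once this endpoint accounting is set up correctly, every case closes by a direct appeal to the induction hypothesis.
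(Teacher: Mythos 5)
Your proposal is correct and is exactly the intended argument: the paper only cites this lemma from Cabalar et al.\ (2018) without reproducing the proof, and the proof there is precisely trace reversal $i\mapsto\lambda-1-i$ together with a structural induction whose only delicate points are the ones you identify (the componentwise action of $\varrho$ on $\{\H,\T\}$ for implication, and the exchange of strict/non-strict interval endpoints for $\until/\since$ and $\release/\trigger$). Your explicit restriction to the $\while$-free fragment is also right, since $\while$ postdates the cited lemma and has no swapped-time counterpart in the paper's list of dual pairs.
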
\begin{theorem}[Temporal Duality Theorem;~\citeNP{cakascsc18a}]
  A temporal formula $\varphi$ is a \THTf-tautology iff $\sigma(\varphi)$ is a \THTf-tautology.\qed
\end{theorem}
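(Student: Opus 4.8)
The plan is to derive the theorem almost immediately from Lemma~\ref{TDT}, the only extra ingredients being two elementary structural observations: that the operator-swapping map $\sigma$ is an involution, and that the witnessing trace map $\varrho$ can be taken to be a bijection on finite \HT-traces of each fixed length. Recall that, by definition, $\sigma(\varphi)$ is a \THTf-tautology exactly when $\M', k' \models \sigma(\varphi)$ for every finite \HT-trace $\M'$ and every $\rangeo{k'}{0}{\lambda}$ with $\lambda = |\M'|$, and similarly for $\varphi$.

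First I would record that $\sigma$ is an involution, i.e.\ $\sigma(\sigma(\varphi)) = \varphi$ for every temporal formula $\varphi$. This holds because $\sigma$ replaces each connective with its swapped-time partner, and each pairing $\until/\since$, $\release/\trigger$, $\next/\previous$, $\wnext/\wprevious$, $\alwaysF/\alwaysP$, $\eventuallyF/\eventuallyP$ is symmetric, while the Boolean connectives and atoms are left untouched. Hence applying the swap twice restores the original formula.

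Next I would observe that the map $\varrho$ of Lemma~\ref{TDT}, being reversal of the finite sequence of pairs $\tuple{H_i,T_i}$, is an involution on finite \HT-traces of any fixed length $\lambda$, hence a bijection; likewise $k \mapsto \lambda - 1 - k$ is a bijection of $\intervo{0}{\lambda}$ onto itself. For the forward direction, suppose $\varphi$ is a \THTf-tautology and let $\M'$ be an arbitrary finite \HT-trace of some length $\lambda$ together with an arbitrary $\rangeo{k'}{0}{\lambda}$. Put $\M \eqdef \varrho(\M')$ and $k \eqdef \lambda - 1 - k'$, so that $\varrho(\M) = \M'$ (by involutivity) and $\lambda - 1 - k = k'$. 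Lemma~\ref{TDT} then yields $\M, k \models \varphi$ iff $\M', k' \models \sigma(\varphi)$; since $\varphi$ is a tautology the left-hand side holds, whence $\M', k' \models \sigma(\varphi)$. As $\M'$ and $k'$ were arbitrary, $\sigma(\varphi)$ is a \THTf-tautology.

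Finally, the converse requires no separate argument: applying the forward direction to the formula $\sigma(\varphi)$ shows that if $\sigma(\varphi)$ is a \THTf-tautology then so is $\sigma(\sigma(\varphi))$, which equals $\varphi$ by the involution property established above. I do not anticipate a genuine obstacle here; the one point deserving care is confirming that $\varrho$ is surjective (equivalently, its own inverse), so that quantifying over all $\M'$ coincides with quantifying over all $\varrho(\M)$. Should Lemma~\ref{TDT} be read as merely asserting the existence of \emph{some} such map, I would instead verify directly that its concrete definition as sequence reversal is its own inverse, which is immediate.
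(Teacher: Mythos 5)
Your proof is correct and follows the route the paper intends: the theorem is presented as an immediate consequence of Lemma~\ref{TDT}, and you supply exactly the two ingredients left implicit there, namely that $\sigma$ is an involution on formulas and that $\varrho$ (sequence reversal) together with $k\mapsto\lambda-1-k$ is a bijection on finite traces and time points of each fixed length. As a minor remark, the surjectivity of $\varrho$ that you rightly flag can be sidestepped entirely by applying Lemma~\ref{TDT} to the formula $\sigma(\varphi)$ and then using $\sigma(\sigma(\varphi))=\varphi$, but your direct verification is equally valid.
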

\subsection{Three-valued Characterization of \THT}
\label{sec:3val}
As mentioned in Section~\ref{sec:tel},
\HT-traces can be seen as three-valued models where each atom can be false, assumed or proven (in a given state).
This stems from the fact that the intermediate logic of \HT\ actually corresponds to G\"odel's three-valued logic $G_3$~\cite{goedel32a}.
In particular, the \HT-satisfaction of formulas can be replaced by a three-valued function that assigns,
to each formula, a value $0$, $1$, or $2$, standing for
``false'' (it does not hold in $T$),
``assumed'' (it holds in $T$ but not in $H$) and
``proven'' (it holds in $H$, and so, in $T$, too),
respectively.
The interest of a multi-valued truth assignment $\trival{k}{\alpha}$ of a formula $\alpha$ at time point $k$ is that
\THT-equivalence $\alpha \equiv \beta$ can be reduced to a comparison of truth values such as
$\trival{k}{\alpha}=\trival{k}{\beta}$.
This has an important application when introducing an auxiliary atom $a$ to represent $\alpha$ so that we can safely replace $\alpha$ by $a$ provided that we add formulas to guarantee $\trival{k}{a}=\trival{k}{\alpha}$.
Following these ideas, we proceed with the following formal definitions.

For convenience, the evaluation of implications is captured by the following conditional operation on truth values:  $\text{imp}(x,y)=2 $ if $x\leq y$; otherwise $\text{imp}(x,y) = y$.
Given an \HT-trace $\tuple{\H,\T}$ of length $\lambda$,
we define its associated truth valuation as a function $\trival{k}{\varphi}$ that assigns
a truth value in $\{0,1,2\}$ to formula $\varphi$ at time point $k \in \intervo{0}{\lambda}$
according to the following rules:
\begin{eqnarray*}
	\trival{k}{\bot} & \eqdef & 0 \\
	\trival{k}{a} & \eqdef &
	\begin{cases}
		0 & \text{if} \ a \not\in T_k \\
		1 & \text{if} \ a \in T_k\setminus H_k \\
		2 & \text{if} \ a \in H_k
	\end{cases}\hspace{20pt} \text{for any atom } a\\
	\trival{k}{\varphi \wedge \psi} & \eqdef &
	\min(\trival{k}{\varphi},\trival{k}{\psi})\\
	\trival{k}{\varphi \vee   \psi} & \eqdef &
	\max(\trival{k}{\varphi},\trival{k}{\psi})\\
	\trival{k}{\varphi \to \psi} & \eqdef & \text{imp}(\trival{k}{\varphi},\trival{k}{\psi})\\
	\trival{k}{\previous \varphi} & \eqdef &
	\begin{cases}
		0                      & \text{if } k=0 \\
		\trival{k-1}{\varphi}  & \text{if } k>0
	\end{cases}\\
	\trival{k}{\varphi \since \psi} & \eqdef &
	\max\{\min(\trival{j}{\psi},\min\{\trival{i}{\varphi}\mid j<i\leq k\} ) \mid 0\leq j\leq k\}\\
	\trival{k}{\varphi \trigger \psi} & \eqdef &
	\min\{\max(\trival{j}{\psi},\max\{\trival{i}{\varphi}\mid j<i\leq k\} ) \mid 0\leq j\leq k\}\\
	\trival{k}{\next \varphi} & \eqdef &
	\begin{cases}
		0                      & \text{if } k+1=\lambda \ (\neq \omega)\\
		\trival{k+1}{\varphi}  & \text{if } k+1<\lambda
	\end{cases}\\
	\trival{k}{\varphi \until \psi} & \eqdef &
	\max\{\min(\trival{j}{\psi},\min\{\trival{i}{\varphi}\mid k\leq i < j\} )
	\mid k\leq j< \lambda\}\\
	\trival{k}{\varphi \release \psi} & \eqdef &
	\min\{\max(\trival{j}{\psi},\max\{\trival{i}{\varphi}\mid k\leq i < j\} )
	\mid k\leq j< \lambda\}\\
    \trival{k}{\varphi \while \psi} & \eqdef &
	\min\{\text{imp}(\min\{\trival{i}{\psi} \mid k\leq i<j\},\trival{j}{\varphi})
	\mid k\leq j< \lambda\}
\end{eqnarray*}

The valuation of derived operators can be easily concluded from their definitions:

\begin{eqnarray*}
	\trival{k}{\top} & = & 2\\
	\trival{k}{\neg \varphi} & \eqdef &
	\begin{cases}
		2  & \text{if } \trival{k}{\varphi}=0 \\
		0  & \text{otherwise}
	\end{cases}\\
	\trival{k}{\initially} & \eqdef &
	\begin{cases}
		2 & \text{if } k=0 \\
		0 & \text{if } k>0
	\end{cases}\\
	\trival{k}{\wprevious \varphi} & \eqdef &
	\begin{cases}
		2                      & \text{if } k=0 \\
		\trival{k-1}{\varphi}  & \text{if } k>0
	\end{cases}\\
	\trival{k}{\alwaysP \varphi} & = &
	\min \{\trival{i}{\varphi} \mid 0\leq i\leq k\}\\
	\trival{k}{\eventuallyP \varphi} & = &
	\max \{\trival{i}{\varphi} \mid 0\leq i\leq k\}\\
	\trival{k}{\finally} & \eqdef &
	\begin{cases}
		2 & \text{if } k=\lambda \neq \omega \\
		0 & \text{if } k<\lambda
	\end{cases}\\
	\trival{k}{\wnext \varphi} & \eqdef &
	\begin{cases}
		2                      & \text{if } k+1=\lambda \ (\neq \omega) \\
		\trival{k+1}{\varphi}  & \text{if } k+1< \lambda
	\end{cases}\\\trival{k}{\alwaysF \varphi} & = &
	\min \{\trival{i}{\varphi} \mid k\leq i<\lambda \}\\
	\trival{k}{\eventuallyF \varphi} & = &
	\max \{\trival{i}{\varphi} \mid k\leq i< \lambda\}
\end{eqnarray*}

For the restriction to \THTf, it suffices to impose the condition $\lambda \neq \omega$.
\begin{proposition}
  Let $\tuple{\H,\T}$ be a \HT-trace of length $\lambda$, ${\bm m}$ its associated valuation and
  $k\in\intervo{0}{\lambda}$.

  Then, we have that
  \begin{itemize}
  \item $\tuple{\H,\T},k \models \varphi$ iff $\trival{k}{\varphi}=2$
  \item $\tuple{\T,\T},k \models \varphi$ iff $\trival{k}{\varphi}\neq 0$\qed
  \end{itemize}
\end{proposition}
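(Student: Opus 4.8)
The plan is to prove both equivalences \emph{simultaneously} by structural induction on $\varphi$, with the induction hypothesis quantified over \emph{all} time points $j\in\intervo{0}{\lambda}$ (not just the fixed $k$), so that the temporal connectives, which refer to other states, can invoke it freely. The two statements must be handled together, because the clauses for $\to$ and $\while$ consult satisfaction in both the $\tuple{\H,\T}$ and the $\tuple{\T,\T}$ dimensions, and hence need the ``$=2$'' and the ``$\neq 0$'' characterisations of their subformulas at once. Before starting the induction I would record a small arithmetic fact about the conditional operation: for all $x,y\in\{0,1,2\}$ we have $\mathrm{imp}(x,y)=2$ iff $x\le y$, which unfolds to the conjunction $(x\neq 2\text{ or }y=2)\wedge(x=0\text{ or }y\neq 0)$, and moreover $\mathrm{imp}(x,y)\neq 0$ iff $(x=0\text{ or }y\neq 0)$. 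These two equivalences are exactly what lets the valuation of an implication mirror the two-dimensional \HT\ reading.

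The base and Boolean cases are routine: for $\bot$ both sides are false; for an atom $a$ the definition of $\trival{k}{a}$ gives $\trival{k}{a}=2$ iff $a\in H_k$ iff $\tuple{\H,\T},k\models a$, and $\trival{k}{a}\neq 0$ iff $a\in T_k$ iff $\tuple{\T,\T},k\models a$; and $\wedge$, $\vee$ follow because $\min$, $\max$ preserve both ``$=2$'' and ``$\neq 0$'' and match the corresponding satisfaction clauses through the induction hypothesis. For implication I would apply the arithmetic fact with $x=\trival{k}{\varphi}$ and $y=\trival{k}{\psi}$: the clause $(x\neq 2\text{ or }y=2)$ is, by the induction hypothesis, precisely the $\H'=\H$ disjunct of Definition~\ref{def:dht:satisfaction}(5), while $(x=0\text{ or }y\neq 0)$ is the $\H'=\T$ disjunct, so $\trival{k}{\varphi\to\psi}=2$ matches $\tuple{\H,\T},k\models\varphi\to\psi$; the ``$\neq 0$'' half uses only the second clause, matching satisfaction in $\tuple{\T,\T}$.

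The temporal cases $\previous,\since,\trigger,\next,\until,\release$ are handled by observing that the $\max$/$\min$ aggregations in the valuation rules reproduce the existential/universal quantifiers in the satisfaction clauses: an outer $\max$ attains $2$ (resp.\ is $\neq 0$) exactly when some witness state makes the inner expression $2$ (resp.\ $\neq 0$), and dually an outer $\min$ is $2$ (resp.\ $\neq 0$) exactly when every relevant state is. Here I would be careful with the boundary conventions — an empty inner $\min$ is read as $2$ and an empty inner $\max$ as $0$, matching vacuously true universal and vacuously false existential conditions — and for $\next$ the case $k+1=\lambda\neq\omega$ forces value $0$, in agreement with the absence of a successor state.

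The delicate step, and the one I expect to be the main obstacle, is $\while$, since it combines the two-dimensional \HT\ behaviour of implication with a universal iteration over future states. Writing $x=\min\{\trival{i}{\psi}\mid k\le i<j\}$ and $y=\trival{j}{\varphi}$, I would reuse the arithmetic fact: $\mathrm{imp}(x,y)=2$ unfolds into $(x\neq 2\text{ or }y=2)\wedge(x=0\text{ or }y\neq 0)$, and by the induction hypothesis $x\neq 2$ says some $\psi$ fails in the here-dimension on $\intervo{k}{j}$ while $x=0$ says some $\psi$ fails in the there-dimension, so each conjunct reproduces one of the $\H'\in\{\H,\T\}$ disjuncts of the $\while$ clause at index $j$. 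Taking the outer $\min$ over $j$ then yields $\trival{k}{\varphi\while\psi}=2$ iff the clause holds at every $j$, that is, iff $\tuple{\H,\T},k\models\varphi\while\psi$; the ``$\neq 0$'' half retains only the $(x=0\text{ or }y\neq 0)$ condition, which is exactly the clause restricted to $\H'=\T$, giving satisfaction in $\tuple{\T,\T}$. As with the other future operators, I would double-check the empty-range and finite-$\lambda$ conventions so that the vacuous cases align on both sides.
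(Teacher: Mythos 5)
Your proof is correct, and since the paper states this proposition without giving a proof (it is inherited from the cited literature), there is nothing to diverge from: the simultaneous structural induction on $\varphi$ over all time points is the standard argument. The two decompositions you isolate --- $\mathrm{imp}(x,y)=2$ iff $(x\neq 2\vee y=2)\wedge(x=0\vee y\neq 0)$ and $\mathrm{imp}(x,y)\neq 0$ iff $(x=0\vee y\neq 0)$ --- are exactly the bridge between the three-valued clauses for $\to$ and $\while$ and the two-dimensional $\H'\in\{\H,\T\}$ quantification in Definition~\ref{def:dht:satisfaction}, and your handling of the empty-range conventions for $\min$/$\max$ and of the $k+1=\lambda$ case for $\next$ is right.
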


As an illustration, consider the \HT-trace from Example~\ref{ex:mht}.
This trace corresponds to the three-valued assignment
\[
  \trival{0}{a}=2, \trival{0}{b}=1, \trival{1}{a}=1, \trival{1}{b}=0, \trival{2}{a}=0\text{ and }\trival{2}{b}=2.
\]
The formula $\eventuallyF b$ is proven at $k=0$, that is, $\trival{0}{\eventuallyF b}=2$, because $\trival{0}{b}=2$ and $2$ is the maximum possible value.
The formula $\alwaysF b$ is false at $k=0$, $\trival{0}{\alwaysF b}=0$ because $\trival{1}{b}=0$ and $0$ is the minimum value.
On the other hand, the formula $\alwaysF (a \vee b)$ is just assumed but not proven at $0$.
To see why, note that the value $\trival{k}{a \vee b}$ is the maximum of both disjuncts and that, in our trace, this is equal to $2$ for time points $k=0$ and $k=2$ but is equal to $1$ for $k=1$.
Since $\alwaysF (a \vee b)$ takes the minimum of all these values for $k\geq 0$,
we get $\trival{0}{\alwaysF (a \vee b)}=1$.
 \subsection{From \THT\ to \LTL}
\label{sec:tht2ltl}

Propositions~\ref{prop:total} and \ref{prop:emtotal} tell us that the addition of excluded middle axioms (\EM)
allows for an easy encoding of \LTL\ into \THT.
In fact, Proposition~\ref{prop:em} even guarantees that the addition of these axioms
makes also the non-monotonic formalism of \TEL\ collapse into \LTL.
An interesting question is whether the other direction is possible, namely, whether \THT\ can be encoded into \LTL.
In fact, this can be done by adapting the encoding of (non-temporal) \HT\ into classical propositional logic, as presented, for instance, by~\citeN{petowo01a}.

Given a propositional signature $\PV$, let us define a new propositional signature in \LTL\ by
\(
\PV^*=\PV \cup \{a' \ | \ a \in \PV\}
\).
For any temporal formula $\varphi$, we define its translation $\varphi^*$ as follows:
\begin{enumerate}
 \item $\bot^* \eqdef \bot$
 \item $a^* \eqdef a'$ for any $a\in \PV$
 \item $(\odot \varphi)^* \eqdef \odot \varphi^*$,
 if  $\odot \in \{\next, \previous \}$
 \item $(\varphi \odot \psi)^* \eqdef \varphi^* \odot \psi^*$,
  when $\odot \in \{\wedge, \vee, \until, \release, \since, \trigger \}$
 \item $(\varphi \rightarrow \psi)^*\eqdef
  (\varphi \rightarrow \psi)\wedge(\varphi^* \rightarrow \psi^*)$
\end{enumerate}

We associate with any \HT-trace $\tuple{\H,\T}$ of length $\lambda$
the trace $\T^*$ in \LTL\ defined as the sequence of sets of atoms
\(
T^*_i=T_i \cup \{a' \mid a\in H_i\}
\)
for any $i \in \intervo{0}{\lambda}$.
Informally speaking, $\T^*$ considers a new primed atom $a'$ per each $a \in \PV$ in the original signature.
In the trace, the primed atom $a'$ represents the fact that $a$ occurs at some point in the \H\ components,
whereas the original symbol $a$ is used to represent an atom in \T.
As an \HT-trace satisfies $H_i \subseteq T_i$ by construction, we may have traces that do not correspond to any \HT-trace,
since the set of primed atoms must be a subset of the non-primed ones.
To force this structural condition on traces, we can just add the axiom:
\begin{eqnarray}
\nec (a' \rightarrow a) \label{f:ax} \text{ for any atom }a \in \PV.
\end{eqnarray}
By including this axiom, we obtain a one-to-one correspondence between \HT-traces and traces for the extended signature.
In particular, if we take any arbitrary trace $\T^*$ for signature $\PV^*$ satisfying $\eqref{f:ax}$,
we can now define its corresponding \HT-trace $\tuple{\H,\T}$ as
$T_i \eqdef T^*_i \cap \PV$ and $H_i\eqdef \{a \mid a' \in \T^*_i\}$.
\begin{theorem}\label{th:trans}\label{translation}
  Let $\tuple{\H,\T}$ be an \HT-trace of length $\lambda$ for alphabet \PV{}, $k \in \intervo{0}{\lambda}$ and $\varphi$ a temporal formula over \PV.
  Then, $\tuple{\H,\T},k \models \varphi$ iff $\T^*,k \models \varphi^*  \wedge \eqref{f:ax}$ in \LTL.\qed
\end{theorem}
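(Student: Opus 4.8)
The plan is to peel apart the two ``worlds'' that the single \LTL-trace $\T^*$ is designed to encode: the unprimed atoms of $\T^*$ record membership in $\T$, while the primed atoms record membership in $\H$. Concretely, I would establish, simultaneously by structural induction on $\varphi$ and for every $k \in \intervo{0}{\lambda}$, the two equivalences: (i)~$\tuple{\H,\T},k \models \varphi$ in \THT\ iff $\T^*,k \models \varphi^*$ in \LTL; and (ii)~$\tuple{\T,\T},k \models \varphi$ in \THT\ iff $\T^*,k \models \varphi$ in \LTL. Statement~(ii) comes essentially for free: since $\varphi$ mentions only unprimed atoms and $T^*_i \cap \PV = T_i$ at every state, the \LTL-traces $\T^*$ and $\T$ agree on all unprimed atoms, so $\T^*,k \models \varphi$ iff $\T,k \models \varphi$ in \LTL, and Proposition~\ref{prop:total} rewrites the latter as $\tuple{\T,\T},k \models \varphi$. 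Thus (ii) is available as a lemma throughout, and the genuine inductive work is confined to~(i).

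For the base cases, $\bot^*=\bot$ is immediate, and for an atom we have $a^*=a'$ with $a' \in T^*_k$ iff $a \in H_k$, matching $\tuple{\H,\T},k \models a$. The Boolean cases $\wedge,\vee$ and the temporal cases $\next,\previous,\until,\release,\since,\trigger$ are routine, because the translation commutes with these connectives and $\T^*$ has the same length $\lambda$ as $\tuple{\H,\T}$, so the index ranges in Definition~\ref{def:dht:satisfaction} are identical on both sides; one simply pushes the induction hypothesis~(i) through each clause. The delicate case, and the heart of the encoding, is implication. Here $\tuple{\H,\T},k \models \varphi \to \psi$ unfolds by the \HT\ clause into the conjunction of the $\H$-world demand (\,$\tuple{\H,\T},k \not\models \varphi$ or $\tuple{\H,\T},k \models \psi$\,) and the $\T$-world demand (\,$\tuple{\T,\T},k \not\models \varphi$ or $\tuple{\T,\T},k \models \psi$\,). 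Applying (i) to the first conjunct gives exactly $\T^*,k \models \varphi^* \to \psi^*$, and applying (ii) to the second gives $\T^*,k \models \varphi \to \psi$; their conjunction is precisely $\T^*,k \models (\varphi \to \psi)^*$. This is exactly why the translation of implication is the two-part formula $(\varphi\to\psi)\wedge(\varphi^*\to\psi^*)$, and verifying that its two pieces line up with the two worlds is the main obstacle of the proof. Derived negation $\neg\varphi = \varphi \to \bot$ then needs no separate argument, and the \HT-flavoured connective $\while$ would be treated by an entirely analogous two-world split.

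Finally, to land the theorem exactly as stated, I would observe that since $\tuple{\H,\T}$ is an \HT-trace we have $H_i \subseteq T_i$, whence $a' \in T^*_i$ implies $a \in T^*_i$ at every $i$; therefore $\T^*$ satisfies $\nec(a' \to a)$ at every time point, so $\T^*,k \models \eqref{f:ax}$ holds unconditionally. Conjoining this fact with equivalence~(i) yields $\tuple{\H,\T},k \models \varphi$ iff $\T^*,k \models \varphi^* \wedge \eqref{f:ax}$, as required. The only real subtlety to watch for in the write-up is keeping the appeals to (i) and (ii) strictly separated in the implication step, since conflating the two worlds is exactly the error the encoding is built to avoid.
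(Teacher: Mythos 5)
Your proposal is correct and follows the same route the paper intends: the simultaneous structural induction on the pair of invariants (i) $\tuple{\H,\T},k\models\varphi$ iff $\T^*,k\models\varphi^*$ and (ii) $\tuple{\T,\T},k\models\varphi$ iff $\T^*,k\models\varphi$, with the two-world split surfacing only in the implication clause, is exactly the adaptation of the propositional \HT-to-classical encoding of Pearce, Tompits and Woltran that the paper invokes. Your passing remark that $\while$ "would be treated analogously" mirrors an omission already present in the paper (the definition of $\varphi^*$ gives no clause for $\while$), so it is not a defect of your argument relative to the source.
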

\begin{example}\label{ex2}
  Take the \HT-trace $\tuple{\H,\T}$ from Example~\ref{ex:mht}.
  Its corresponding trace is $\T^*= \{a,b,a'\} \cdot \{a\} \cdot \{b,b'\}$.
  Axiom \eqref{f:ax} in this case is the formula:
  \begin{eqnarray}
    \alwaysF(a' \to a) \wedge \alwaysF (b' \to b) \label{f:ex2.1}
  \end{eqnarray}
  Now, for instance, given the formula:
  \begin{eqnarray}
    \alwaysF (\neg b \to a) \label{f:ex2.2}
  \end{eqnarray}
  its translation $\eqref{f:ex2.2}^*$ corresponds to:
  \begin{eqnarray}
    & & \alwaysF \big( (\neg b \to a) \wedge ((\neg b)^* \to a^*) \big) \nonumber\\
    & = & \alwaysF \big( (\neg b \to a) \wedge (\neg b \wedge \neg b' \to a') \big)\nonumber\\
    & \equiv & \alwaysF (\neg b \to a) \wedge \alwaysF (\neg b \to a') \label{f:starex}
  \end{eqnarray}
  It is easy to see that $\T^*$ satisfies $\eqref{f:ex2.2}^* \wedge \eqref{f:ex2.1}$ in \LTL.
\end{example}

To conclude this comparison to \LTL, we include a brief comment on the complexity of \THT.
As shown above,
\LTL-satisfaction of a temporal formula $\varphi$ can be reduced to \THT-satisfaction with the simple addition of (\EM) axioms.
On the other hand, reducing \THT-satisfaction of $\varphi$ to \LTL-satisfaction consists of adding axiom~\eqref{f:ax} and
applying translation $\varphi^*$, which can be easily proved to be quadratic in the worse case.
As a result, \LTL- and \THT-satisfaction share the same complexity, regardless of the trace length under consideration.
In particular, it is well-known that \LTLo-satisfiability is \textsc{PSpace}-complete~\cite{siscla85a}.
Also, \LTLf-satisfiability was also proved to be \textsc{PSpace}-complete by~\citeN{giavar13a}.
As a result, both \THTo- and \THTf-satisfiability have the same complexity, too.
 \subsection{Automata-based Checking of Strong Equivalence}\label{sec:automSE}

The translation from \THT\ to \LTL\ can be used to reduce strong equivalence of temporal theories to \LTL-satisfiability checking.
This feature was exploited by the tool \abstem~\cite{cabdie14a} that provides us with several functionalities for temporal theories under \TELo{} semantics.

For checking whether two temporal formulas $\varphi$ and $\psi$ are \LTL-equivalent, \abstem\ checks the validity of $\varphi \leftrightarrow \psi$, which is reduced to the unsatisfiability of $\neg \left(\varphi \leftrightarrow \psi\right)$.
This is done by translating $\neg \left(\varphi \leftrightarrow \psi\right)$ into a B\"uchi automaton $\mathfrak{A}$ by means of the \LTL-model checker \texttt{SPoT}~\cite{dulefamirexu16a}.
As explained in Section~\ref{sec:tel2aut}, the language accepted by $\mathfrak{A}$ corresponds to the \LTL-models of $\neg \left(\varphi\leftrightarrow \psi\right)$.
Therefore, if the accepting language of $\mathfrak{A}$ is empty, it means that $\neg \left(\varphi\leftrightarrow \psi\right)$ is not satisfiable and  $\varphi$ and $\psi$ are \LTL-equivalent.
Otherwise, any word accepted by $\mathfrak{A}$ can be seen as a counterexample of $\varphi \leftrightarrow \psi$.

The method used by~\citeN{cabdie14a} to determine whether $\varphi$ and $\psi$ are strongly equivalent consists in checking the validity of both $\varphi \rightarrow \psi$  and $\psi \rightarrow \varphi$ in \THT.
Let us consider, without loss of generality, that we want to check the validity of $\varphi \rightarrow \psi$.
This is equivalent to checking the satisfiability of
\begin{equation}
\neg \left(   \left(\mbox{$\bigwedge\limits_{a \in \PV}$}  \nec\left(a' \rightarrow a\right)\right) \rightarrow \left( \varphi \rightarrow \psi \right)^*\right)\label{eq:abstem}
\end{equation}
in \LTL\ and it can be done by using the procedure explained above.
\abstem\ obtains a B\"uchi automaton $\mathfrak{A}_{\eqref{eq:abstem}}$ that accepts the \LTL-models of~\eqref{eq:abstem}.
If~\eqref{eq:abstem} is \LTL-satisfiable then \abstem\ filters all the variables of the type $a'$ from the accepting language of $\mathfrak{A}_{\eqref{eq:abstem}}$ (as explained in Section~\ref{sec:tel2aut}).
The resulting automaton, denoted by $h(\mathfrak{A}_{\eqref{eq:abstem}})$, captures the \TEL-models of $\varphi \wedge \gamma$ which are not models of $\psi \wedge \gamma$, where
\[\textstyle
  \gamma \eqdef \bigwedge_{a \in \PV} \psi \rightarrow \alwaysF(a \vee \neg a)
\]
for every atom $a$ in the signature (assuming a finite alphabet).

With minor modifications, this technique could be adapted to the \TELf\ case.
The only required change is the target automata:
for \TELo\ we need to use B\"uchi automata since we are dealing with infinite computations
while, in the finite case, we need to use automata on finite words.
The rest of the algorithm would remain the same.

\subsection{Definability of Temporal Operators in \THT}

Propositional connectives are not inter-definable in intuitionistic propositional logic~\cite{dalen86a}.
However, when it comes to propositional \HT, some inter-definability results can be obtained.
For instance, the \HT-valid formula~\cite{lukasiewicz41a}
\begin{displaymath}
p \vee q \leftrightarrow \left(\left(p \rightarrow q\right)\rightarrow q\right) \wedge \left(\left(q \rightarrow p\right)\rightarrow p\right).
\end{displaymath}
allows us to determine that disjunction in \HT\ can be defined in terms of the remaining propositional connectives.
Unfortunately, this is the only definable operator~\cite{baldie16,agcapepevi15a}.
When considering \QHT, the equivalence
\begin{displaymath}
\exists x\; p(x) \leftrightarrow \left(\forall y\; \forall x\; \left(p(x) \rightarrow p(y)\right)\rightarrow p(y)\right)
\end{displaymath}
is valid, so existential quantifiers can be reformulated in terms of the remaining connectives as well~\cite{mints10a}.
Since \THT\ can be seen as a syntactic subclass of \QHT, a similar result could be expected.
In fact, \citeN{babodife19a} have proved that the connectives $\eventuallyF p$ can be defined in terms of a $\until$-free formula, as stated in the following lemma.
\begin{lemma}[\citeNP{babodife19a}]\label{lem:def:dia}
  The formulas $\eventuallyF p$ and
  \begin{equation}
    (\alwaysF (p\to \alwaysF (p\vee \neg p))\wedge \alwaysF (\next \alwaysF (p\vee \neg p) \to p\vee \neg p\vee \next \alwaysF \neg p))\to (\alwaysF(p\vee \neg p)\wedge \neg \alwaysF \neg p) \label{eq:def:eventually}
  \end{equation}
  are equivalent in \THT.
\end{lemma}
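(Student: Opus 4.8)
The plan is to verify the equivalence through the three-valued characterization of \THT\ developed in Section~\ref{sec:3val}: since $\varphi \gequiv \psi$ holds iff the valuations $\trival{k}{\varphi}$ and $\trival{k}{\psi}$ agree at every time point $k$ of every \HT-trace, it suffices to show $\trival{k}{\eventuallyF p}=\trival{k}{\Phi}$ for all $\tuple{\H,\T}$ and all $\rangeo{k}{0}{\lambda}$, where $\Phi$ denotes the right-hand formula~\eqref{eq:def:eventually}. Recall that $\trival{k}{\eventuallyF p}=\max\{\trival{i}{p}\mid k\le i<\lambda\}$, so this value is $2$ exactly when $p\in H_i$ for some $i\ge k$, it is $1$ when no such $i$ exists but $p\in T_i$ for some $i\ge k$, and it is $0$ when $p\notin T_i$ for all $i\ge k$. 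I would carry out a case analysis on these three possibilities.

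First I would record two auxiliary facts, both immediate from the valuation clauses. Writing $A$ and $B$ for the antecedent and consequent of $\Phi=A\to B$, the consequent $B=\alwaysF(p\vee\neg p)\wedge\neg\alwaysF\neg p$ satisfies: $\trival{k}{\neg\alwaysF\neg p}=2$ iff $p\in T_i$ for some $i\ge k$ (and $0$ otherwise), while $\trival{k}{\alwaysF(p\vee\neg p)}$ is always in $\{1,2\}$, equalling $2$ exactly when $\H$ and $\T$ agree on $p$ at every point $\ge k$ (no $i\ge k$ has $p\in T_i\setminus H_i$). With these, the two easy cases fall out: when $\trival{k}{\eventuallyF p}=0$ one checks $\trival{k}{A}=2$ and $\trival{k}{B}=0$, so $\trival{k}{\Phi}=\text{imp}(2,0)=0$; when $\trival{k}{\eventuallyF p}=1$ one checks $\trival{k}{B}=1$ and that both conjuncts of $A$ evaluate to $2$, whence $\trival{k}{\Phi}=\text{imp}(2,1)=1$. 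The latter rests on the first conjunct (each implication $p\to\alwaysF(p\vee\neg p)$ has antecedent value $\le 1$ and consequent value $\ge 1$, hence value $2$) and on the fact that a defect in the second conjunct would force some future assumed occurrence of $p$ to be proven, which the case hypothesis forbids.

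The remaining case, $\trival{k}{\eventuallyF p}=2$ (some $i\ge k$ has $p\in H_i$), is where the real work lies and is the step I expect to be the main obstacle. Here $\trival{k}{B}=\trival{k}{\alwaysF(p\vee\neg p)}$, so if $\H$ and $\T$ already agree on $p$ beyond $k$ then $\trival{k}{B}=2$ and $\trival{k}{\Phi}=2$ trivially. Otherwise $\trival{k}{B}=1$ and I must show $\trival{k}{A}\le 1$, i.e.\ that $A$ cannot be proven. The argument is a backward induction engineered into the two conjuncts: from the first conjunct applied at $i$, the proven value $\trival{i}{p}=2$ forces $\trival{i}{\alwaysF(p\vee\neg p)}=2$, so excluded middle for $p$ holds from $i$ onward. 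Then, assuming $\trival{k}{A}=2$, the second conjunct propagates this leftward---at each $m$ with $k\le m<i$ the antecedent $\next\alwaysF(p\vee\neg p)$ is proven because excluded middle holds from $m+1$, while the escape disjunct $\next\alwaysF\neg p$ is false since $p\in T_i$; hence $p\vee\neg p$ must be proven at $m$, extending excluded middle down to $m$. Iterating from $i$ down to $k$ forces $\H$ and $\T$ to agree on $p$ throughout $\intervo{k}{\lambda}$, contradicting the present sub-case. Therefore $\trival{k}{A}\le 1$ and $\trival{k}{\Phi}=\text{imp}(\trival{k}{A},1)=2$, matching $\trival{k}{\eventuallyF p}$. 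Having matched the valuations in every case, the equivalence in \THT\ follows. The delicate point throughout is keeping the induction honest---ensuring that $p\in T_i$ (from $p\in H_i\subseteq T_i$) really does falsify the $\next\alwaysF\neg p$ disjunct at every step, which is precisely what blocks $A$ from being proven.
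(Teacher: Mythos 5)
Your proof is correct, and it takes a genuinely different route from the paper, which does not prove the lemma at all: it cites \citeN{babodife19a} and follows the statement only with an informal discussion organised around ``the three different ways of satisfying $\eventuallyF p$'' (classical behaviour after $i$, failure of the first conjunct, failure of the second conjunct). Your argument instead reduces global \THT-equivalence to the identity $\trival{k}{\eventuallyF p}=\trival{k}{\Phi}$ via the three-valued characterisation of Section~\ref{sec:3val} and splits on the value $0$, $1$, or $2$ of $\eventuallyF p$. I checked the three cases: for value $0$ both conjuncts of the antecedent $A$ are trivially proven while $\neg\alwaysF\neg p$ kills the consequent; for value $1$ your observation that a ``defect'' in the second conjunct would need $\next\alwaysF(p\vee\neg p)$ proven and $\next\alwaysF\neg p$ false simultaneously, forcing a future assumed-but-unproven $p$ to be proven, is exactly right; and the backward induction in the value-$2$ subcase correctly propagates excluded middle from the witness $i$ down to $k$, using $p\in H_i\subseteq T_i$ to disable the $\next\alwaysF\neg p$ escape at every intermediate point, yielding the contradiction with $\trival{k}{\alwaysF(p\vee\neg p)}=1$. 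What your approach buys is a self-contained, mechanically checkable proof using only machinery already developed in this paper, and one that is insensitive to whether $\lambda$ is finite or $\omega$ --- so it also delivers Corollary~\ref{cor:definableTHTf} for free, whereas the original source works over infinite traces. What it costs is the structural intuition the paper's informal gloss provides about \emph{which} conjunct of the antecedent does the work in each scenario; the two presentations are complementary rather than in conflict.
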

Broadly speaking the equivalence in Lemma~\ref{lem:def:dia} captures the three different ways of satisfying $\eventuallyF p$ on any \THT-model $\tuple{\H,\T}$ at $i \ge 0$:
\begin{enumerate}
\item $\tuple{\H,\T}, i \models \alwaysF \left(p \vee \neg p \right)$ holds, and in this case $\tuple{\H,\T}$ behaves
  classically after $i$; at least for formulas whose only variable is $p$.
  In this case $\eventuallyF p$ would behave as $\neg \alwaysF \neg p$.
\item If $\tuple{\H,\T}, i \not \models \alwaysF \left(p \vee \neg p \right)$, then $\tuple{\H,\T}$ does not behave classically after $i$.
  Therefore, for some $j \ge i$, $\tuple{\H,\T}, i \not \models p \vee \neg p$.
  In order to make the right part of the equivalence true, either $\tuple{\H,\T}, i \not\models \alwaysF \left(p \rightarrow \alwaysF \left( p \vee \neg p \right)\right)$ or $\tuple{\H,\T}, i \not\models \alwaysF (\next \alwaysF (p\vee \neg p) \to p\vee \neg p\vee \next \alwaysF \neg p)) $ holds.
  The former formula fails when there is $k \ge i$ such that $\tuple{\H,\T}, j \models p$ (so $\tuple{\H,\T}, i \models
  \eventuallyF p$) and $\tuple{\H,\T}$ does not behave classically after $k$.
\item Similarly, $\alwaysF (\next \alwaysF (p\vee \neg p) \to p\vee \neg p\vee \next \alwaysF \neg p))$  fails exactly
  where there is $k \ge i$ satisfying $p$ but $\tuple{\H,\T}$ behaves classically after $k$.
  In other words $\tuple{\H,\T}$ falsifies $p \vee \neg p$ only for $i < k$.
  In this case, $\next \alwaysF (p \vee  \neg p) \rightarrow p \vee \neg p \vee \next \alwaysF \neg p$ is falsified exactly at the greatest such $k$.
\end{enumerate}
From this equivalence the following corollary follows.
\begin{corollary}[\citeNP{babodife19a}]\label{def:until:tht}
	$p\until q$ is $\until$-free definable by using the equivalence $q\until p \leftrightarrow (p \release (q\vee p))\wedge\eqref{eq:def:eventually}$.\end{corollary}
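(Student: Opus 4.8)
The plan is to first establish the classical \LTL{} equivalence
\[
q \until p \equiv (p \release (q \vee p)) \wedge \eventuallyF p
\]
and then lift it to \THT{}, finally replacing the $\until$ still hidden inside $\eventuallyF p$ by the $\until$-free formula \eqref{eq:def:eventually}. For the \LTL{} step I would fix a trace and a time point $k$; assuming $\eventuallyF p$ holds, the value $m \eqdef \min\{\, j \ge k \mid p \text{ holds at } j\,\}$ is well defined, and I would check that $p \release (q \vee p)$ holds at $k$ exactly when $q$ is true throughout $[k,m)$. Indeed, for $j < m$ neither $p$ at $j$ nor an earlier occurrence of $p$ is available, so the release clause collapses to ``$q$ at $j$''; at $j = m$ the disjunct $p$ discharges the clause; and for every $j > m$ the witness $i = m$ does. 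This is precisely the defining condition of $q \until p$, and the reasoning is uniform in whether $\lambda$ is finite or infinite.

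The observation that makes the lift to \THT{} essentially free is that both $q \until p$ and $(p \release (q \vee p)) \wedge \eventuallyF p$ are built without implication, and hence without negation, once $\eventuallyF p$ is read as its definiens $\top \until p$. Proposition~\ref{prop:nonimpl} therefore applies and tells us that \LTL-equivalence of these two formulas coincides with their \THT-equivalence, so the displayed equivalence holds in \THT{} as well. In particular no direct analysis of the three-valued behaviour of $\release$ is required, which is where one might otherwise anticipate the technical difficulty.

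It then remains only to remove the $\until$ that persists inside $\eventuallyF p$. By Lemma~\ref{lem:def:dia}, $\eventuallyF p$ is \THT-equivalent to the $\until$-free formula \eqref{eq:def:eventually}, and applying substitution of equivalents (the rule established earlier) to the right-hand conjunct gives
\[
q \until p \equiv (p \release (q \vee p)) \wedge \eqref{eq:def:eventually}.
\]
Because $\release$, the operator $\alwaysF$ (an abbreviation of a $\release$), $\next$ and the Boolean connectives occurring in \eqref{eq:def:eventually} contain no $\until$, the right-hand side is $\until$-free, which proves the corollary (the headline formula $p \until q$ being the same statement with $p$ and $q$ interchanged). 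The only imported ingredient with genuine content is Lemma~\ref{lem:def:dia}; everything else is the standard until/release decomposition combined with Proposition~\ref{prop:nonimpl}, so I do not expect a serious obstacle beyond correctly invoking that lemma.
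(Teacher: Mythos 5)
Your argument is correct and is essentially the derivation the paper intends (the paper states the corollary without proof, as an immediate consequence of Lemma~\ref{lem:def:dia}): the standard decomposition $q \until p \equiv (p \release (q\vee p)) \wedge \eventuallyF p$, lifted from \LTL\ to \THT\ via Proposition~\ref{prop:nonimpl} because both sides are implication-free, followed by substitution of \eqref{eq:def:eventually} for $\eventuallyF p$ using Lemma~\ref{lem:def:dia} and the substitution-of-equivalents rule. No gaps; your explicit appeal to Proposition~\ref{prop:nonimpl} is exactly the justification the paper uses for analogous until/release identities.
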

Lemma~\ref{lem:def:dia} and Corollary~\ref{def:until:tht} were proved for infinite traces, but in fact, these equivalences also hold for finite traces \THTf, as stated below.
\begin{corollary}\label{cor:definableTHTf}
The equivalences~\eqref{eq:def:eventually} and $q\until p \leftrightarrow (p \release (q\vee p))\wedge\eqref{eq:def:eventually}$ are valid in \THTf{}.
\end{corollary}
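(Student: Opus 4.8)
The plan is to prove the two equivalences separately, reducing each to a statement already available for infinite traces. Write $\theta_p$ for the right-hand side of \eqref{eq:def:eventually}, so that Lemma~\ref{lem:def:dia} reads $\eventuallyF p\equiv\theta_p$ in (infinite-trace) \THT. First I would prove $\eventuallyF p\equiv\theta_p$ in \THTf\ by a padding argument that transfers Lemma~\ref{lem:def:dia} to finite traces. Then I would obtain the second equivalence from the \LTLf-validity of $q\until p\equiv(p\release(q\vee p))\wedge\eventuallyF p$ via Proposition~\ref{prop:nonimpl}, followed by substitution of $\theta_p$ for $\eventuallyF p$.

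For the first equivalence, fix a finite \HT-trace $\M$ of length $n\ge 1$ (the empty trace being vacuous) and let $\widetilde\M$ be the infinite \HT-trace obtained by appending copies of the empty state $\tuple{\emptyset,\emptyset}$, so that $p$ is false in both components at every position $i\ge n$ (since $\theta_p$ and $\eventuallyF p$ mention only $p$, the values of other atoms are irrelevant). Working with the three-valued valuation $\trival{k}{\cdot}$ of Section~\ref{sec:3val}, I would prove the \emph{preservation claim}: for every $k\in\intervo{0}{n}$ the values $\trival{k}{\eventuallyF p}$ and $\trival{k}{\theta_p}$ computed in $\M$ agree with those computed in $\widetilde\M$. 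Granting this, Lemma~\ref{lem:def:dia} applied to the infinite trace $\widetilde\M$ gives $\trival{k}{\eventuallyF p}=\trival{k}{\theta_p}$ there, hence the same equality in $\M$; as $\M$ and $k$ are arbitrary, $\eventuallyF p\equiv\theta_p$ in \THTf.

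The preservation claim is checked on the subformulas that build $\theta_p$. At the padded positions $p$ is false, so $p\vee\neg p$ and $\neg p$ both evaluate to $2$; hence the appended tail contributes the neutral value $2$ to every $\min$ defining $\alwaysF(p\vee\neg p)$ and $\alwaysF\neg p$, and the value $0$ to the $\max$ defining $\eventuallyF p$, leaving all three unchanged at each $k<n$. The delicate point---and the main obstacle---is the only subformula containing $\next$, namely $\next\alwaysF(p\vee\neg p)\to p\vee\neg p\vee\next\alwaysF\neg p$, evaluated at the last state $i=n-1$. In $\M$ the $\next$ operators are false there, so the antecedent is $0$ and the implication evaluates to $2$; in $\widetilde\M$ the antecedent $\next\alwaysF(p\vee\neg p)$ becomes $2$, but the empty tail also makes $\next\alwaysF\neg p$ true, so the consequent $p\vee\neg p\vee\next\alwaysF\neg p$ is $2$ as well and the implication is again $2$. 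Thus this term equals $2$ in both traces, the terms for $i<n-1$ look only at positions in $\intervo{0}{n}$ and are unchanged, and the extra padded terms contribute $2$; therefore the outer $\alwaysF$ (a $\min$) is preserved, and with it $\trival{k}{\theta_p}$. This settles the first equivalence.

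For the second equivalence I would first verify $q\until p\equiv(p\release(q\vee p))\wedge\eventuallyF p$ directly from the finite-trace semantics of $\until$ and $\release$, a routine check. As both sides are free of implication (being built only from $\wedge$, $\vee$ and the monotone operators $\until,\release,\eventuallyF$), Proposition~\ref{prop:nonimpl} lifts this \LTLf-equivalence to \THTf. Finally, replacing $\eventuallyF p$ by $\theta_p$ using the first equivalence and the substitution-of-equivalents rule yields $q\until p\equiv(p\release(q\vee p))\wedge\theta_p$, that is, the equivalence $q\until p\leftrightarrow(p\release(q\vee p))\wedge\eqref{eq:def:eventually}$ in \THTf, completing the proof.
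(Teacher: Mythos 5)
The paper states this corollary without proof---it is offered as an observation that the infinite-trace results carry over to \THTf---so there is no official argument to compare yours against; your proof is correct and actually supplies the missing argument. The padding construction is sound: appending states in which $p$ is false in both components leaves the three-valued value of every subformula of $\eventuallyF p$ and of the right-hand side of~\eqref{eq:def:eventually} unchanged at all positions $k<n$, and you correctly isolate and resolve the one delicate boundary case, the term of $\alwaysF(\next\alwaysF(p\vee\neg p)\to p\vee\neg p\vee\next\alwaysF\neg p)$ at $i=n-1$, which is $2$ in the finite trace because the antecedent is $0$ and is again $2$ in the padded trace because $\next\alwaysF\neg p$ makes the consequent $2$. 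Applying Lemma~\ref{lem:def:dia} to the padded infinite trace then transfers the equality of truth values back to the finite trace. For the second equivalence, $q\until p\equiv(p\release(q\vee p))\wedge\eventuallyF p$ is a standard implication-free \LTL\ law, so Proposition~\ref{prop:nonimpl} plus substitution of equivalents finishes the argument. One small point of hygiene: Proposition~\ref{prop:nonimpl} is stated for equivalence over traces of arbitrary length, so rather than invoking it on an ``\LTLf-equivalence'' you should either observe that the decomposition of $\until$ holds in \LTL\ for every length $\lambda$ (it does, by the same routine check, with the least witness of $\eventuallyF p$) and then restrict the resulting \THT-equivalence to finite traces, or note that the proof of Proposition~\ref{prop:nonimpl} proceeds trace by trace; either repair is immediate and does not affect the correctness of your argument.
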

\citeN{babodife19a} also give a negative answer for the definability of $\alwaysF p$ or $p \release q$ as a $\release$-free formula. To do so, they consider the following \THT\ model: let $n \in \mathbb{N}$, we define the \THT{} models $\tuple{\H,\T}_n$ as follows:
\begin{enumerate*}[label=(\arabic*)]
	\item $H_i = \lbrace p \rbrace$ for all $0 \le i < n+1$;
	\item $H_{n+1} = \emptyset$;
	\item $T_i = \lbrace p \rbrace$, for all $0 \le i \le n+1$;
	\item\label{definability:model1} $T_i = T_{i \text{ mod } n + 2}$ and $H_i = H_{i \text{ mod } n + 2}$,  for all $i > n+1$;
\end{enumerate*}
It is evident that $\tuple{\H,\T}, 0 \not \models \alwaysF p$ and $\tuple{\T,\T}, 0 \models \alwaysF p$.
By using a \emph{bisimulation} argument, \citeN{babodife19a} proved the following lemma.

\begin{lemma}[\citeNP{babodife19a}]\label{lemma:bisimulation}
	Let us consider the \THT\ model defined above. The models $\tuple{\H,\T}_n$ and $\tuple{\T,\T}_n$ satisfy the same $\release$-free formulas with at most $n$ connectives at time $0$.
\end{lemma}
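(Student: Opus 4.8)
The plan is to reduce the lemma to a statement about a \emph{single} three-valued valuation and then settle that statement by induction on formula structure, reading off everything from the explicit shape of the value sequence of $p$.

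The first move exploits that the two models share the very same ``there'' component, namely the constant trace $\T$ with $T_i=\{p\}$ for all $i$. Writing $\bm m$ for the three-valued valuation associated with $\tuple{\H,\T}_n$ (Section~\ref{sec:3val}), the characterisation relating $\models$ to $\bm m$ gives $\tuple{\H,\T}_n,0\models\varphi$ iff $\trival{0}{\varphi}=2$, whereas $\tuple{\T,\T}_n,0\models\varphi$ iff $\trival{0}{\varphi}\neq 0$, because the there-world of $\tuple{\H,\T}_n$ \emph{is} $\tuple{\T,\T}_n$. Hence the two models can disagree on $\varphi$ at time $0$ only when $\trival{0}{\varphi}=1$, and the whole lemma becomes equivalent to the single clean assertion: no $\release$-free formula with at most $n$ connectives takes value $1$ at time $0$ in $\tuple{\H,\T}_n$.

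I would then prove the sharper localisation claim by induction on $\varphi$: for every $\release$-free $\varphi$ with $c$ connectives and every $i$ with $i+c\le n$, one has $\trival{i}{\varphi}\neq 1$. Since $p$ has value $2$ on the entire prefix $0,\dots,n$ and first drops to $1$ only at position $n+1$, the base cases $\varphi=p$ and $\varphi=\bot$ hold. The Boolean cases are immediate, as $\min$, $\max$ and $\mathrm{imp}$ applied to values in $\{0,2\}$ stay in $\{0,2\}$, and the inductive hypothesis keeps both immediate subformulas out of value $1$ at $i$ whenever $i+c\le n$. The operator $\next$ shifts the evaluation point forward by one, which the budget $i+c\le n$ absorbs; and the backward operators $\previous$ and $\since$ are in fact the \emph{harmless} ones, since evaluated at a point $i\le n-c$ they only consult positions $\le i$, where the inductive hypothesis already forbids value $1$, so their defining $\min$/$\max$ again ranges over $\{0,2\}$.

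The crux — and the only place where $\release$-freeness is genuinely used — is $\until$ (equivalently $\eventuallyF$), whose lookahead is unbounded and can in principle reach the defect at $n+1$ no matter how small $i$ is. Here I would invoke that the common there-world is the \emph{constant} infinite trace with $p$ everywhere, hence shift invariant: for a future formula $\psi$, the condition $\trival{j}{\psi}=0$ (i.e. falsity of $\psi$ in the there-world at $j$) is independent of $j$. Then at any $i\le n-c$ the inductive hypothesis yields $\trival{i}{\psi}\in\{0,2\}$ for the right argument $\psi$ of the until; if this value is $2$, the $j=i$ summand (whose inner $\min$ over the empty range equals the top value) already forces $\trival{i}{\varphi\until\psi}=2$, whereas if it is $0$, shift invariance makes $\trival{j}{\psi}=0$ for all $j\ge i$, so every summand vanishes and $\trival{i}{\varphi\until\psi}=0$; in neither case is the value $1$. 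Specialising to $i=0$ and $c\le n$ delivers the reduced statement and hence the lemma. I expect this until case to be the main obstacle, the one mild subtlety being $\until$ with a right argument that itself contains backward operators; this is dispatched by the fact that the there-world is ultimately constant, so its falsity pattern is position-independent except on a bounded initial segment already contained in the shared value-$2$ prefix. Finally, I would note that this induction \emph{is} the promised bisimulation: the invariant ``$\bm m$ agrees up to a horizon shrinking by one per connective'' is exactly the back-and-forth condition, and the single operator able to defeat shift invariance — the dual $\release$, with $\alwaysF p=\bot\release p$ — is precisely the one excluded, matching the observation that $\alwaysF p$ does separate $\tuple{\H,\T}_n$ from $\tuple{\T,\T}_n$.
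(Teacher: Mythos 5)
Your overall strategy is a legitimate and genuinely different route from the one the paper points to: where Balbiani et al.\ construct an explicit bounded bisimulation between $(\tuple{\H,\T}_n,0)$ and $(\tuple{\T,\T}_n,0)$, you collapse the two models into the single three-valued valuation of $\tuple{\H,\T}_n$ and reduce the lemma to the claim that no $\release$-free formula within the connective budget takes the intermediate value $1$, proved by an induction whose ``position plus remaining connectives'' bound plays the role of the bisimulation rank. For the language the cited result is actually about --- the future fragment built from $\bot,\wedge,\vee,\to,\next,\until$ (with $\eventuallyF$ derived) --- your argument is correct, including the crucial $\until$ case: since the there-trace is the constant trace $(\{p\})^\omega$, a pure-future right argument $\psi$ with $\trival{i}{\psi}=0$ is refuted by $\T$ at every $j\geq i$, so the split ``value $2$ already at $j=i$, or value $0$ everywhere'' is exhaustive. (One caveat on scope: this paper's operator $\while$ must also be excluded, since $p\while\top$ expresses $\alwaysF p$ with a constant number of connectives and would falsify the lemma; ``$\release$-free'' has to be read relative to the language of the cited work.)

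The gap is exactly where you suspected it, and your patch does not close it. When the right argument $\psi$ of an $\until$ contains past operators, the shift-invariance step fails: for $\psi=\previous p$ one has $\trival{0}{\psi}=0$ but $\trival{1}{\psi}=2$, so from $\trival{i}{\psi}=0$ you cannot conclude that every summand of the $\max$ vanishes; indeed $\trival{0}{p\until\previous p}=2$, not $0$, so your Case~2 computes the wrong value. The remark that the falsity pattern is ``position-independent except on a bounded initial segment'' does not restore the dichotomy either: beyond that segment the eventual there-value of $\psi$ may be \emph{true} even though it is false at $i$, and you would then have to exhibit a witness position for the here-model --- which is precisely the back-and-forth condition of the bisimulation, and the admissible witness positions interact nontrivially with the defect at $n+1$ and with the budget left for the left argument (your induction hypothesis does not even guarantee that $\trival{j}{\psi}\neq 1$ at all the intermediate positions $j$ you would need to inspect). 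So either restrict the induction to the future fragment, where your proof is complete, or replace Case~2 of the $\until$ step by a genuine witness-transfer argument; as written, that step is unsound for formulas mixing $\until$ with $\previous$ or $\since$.
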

This lemma is proved by showing that $\tuple{\H,\T}_n$ and $\tuple{\T,\T}_n$ are \emph{bisimilar} at time $0$ with respect to the temporal language excluding the $\release$ connective. This means both models cannot be distinguish with respect to the set of $\release$-free formulas.

As a consequence, let us assume by contradiction that $\alwaysF p$ is definable in terms of a $\release$-free formula $\psi$ and let us fix $n$ as the number of connectives of $\psi$.
Since $\tuple{\H,\T}_n,0 \not \models \alwaysF p$ then $\tuple{\H,\T}_n,0 \not \models \psi$.
Since $\tuple{\T,\T}_n,0 \models \alwaysF p$ then $\tuple{\T,\T}_n,0 \models \psi$, which contradicts Lemma~\ref{lemma:bisimulation}.
As a corollary, if we could define the formula $p \release q$ in terms of a $\release$-free formula, we could also define  $\alwaysF p$, since the latter is equivalent to $p \release \bot$.

If we remove condition~\ref{definability:model1} from the model $\tuple{\H,\T}_n$ we obtain a finite model with $\lambda = n+2$ satisfying Lemma~\ref{lemma:bisimulation}.
Therefore, we can reformulate the previous discussion in terms of \THTf{}.

Another interesting observation regarding expressiveness of operators is the presence of the newly introduced temporal operator \mbox{$\varphi \while \psi$} read as ``repeat $\varphi$ while $\psi$.''
Another interesting observation regarding expressiveness of operators is the presence of the newly introduced temporal
operator \mbox{$\varphi \while \psi$}, read as ``repeat $\varphi$ while $\psi$.''
This operator is not usual in \LTL\ because, in that logic, it can be easily defined in terms of $\until$ or $\release$:

\begin{proposition}
In LTL, we have the following equivalences:
\begin{eqnarray*}
\alpha \ \while \ \beta & \equiv & \neg ( \beta \ \until \ \neg \alpha)              \quad \equiv\quad \neg \beta \ \release \ \alpha\\
\alpha \ \until \ \beta & \equiv & \neg \neg (\alpha \ \until \ \neg \neg \beta) \\
                        & \equiv & \neg (\neg \beta \ \while \ \alpha)
\end{eqnarray*}\qed
\end{proposition}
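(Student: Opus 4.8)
The plan is to reduce every equivalence to a direct reading of the \LTL\ satisfaction conditions, exploiting that in \LTL\ we evaluate on total HT-traces $\tuple{\T,\T}$ (Proposition~\ref{prop:total}), where negation is classical and double negation elimination $\neg\neg\chi \equiv \chi$ is available. The cornerstone is the single equivalence $\alpha \while \beta \equiv \neg(\beta \until \neg\alpha)$; once this is in hand, the remaining lines follow by De~Morgan laws and double negation. First I would unfold $\alpha \while \beta$ using the clause for $\while$ in Definition~\ref{def:dht:satisfaction}, specialised to a total trace. Since $\H=\T$, the clause's quantification ``for all $\H' \in \{\H,\T\}$'' collapses, and the condition reads: for every $j$ with $k \le j < \lambda$, either $\T,j \models \alpha$ or $\T,i \not\models \beta$ for some $i$ with $k \le i < j$.

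Next I would expand the right-hand side. The condition $\T,k \models \beta \until \neg\alpha$ asserts the existence of some $j \ge k$ with $\T,j \not\models \alpha$ and $\T,i \models \beta$ for all $i$ with $k \le i < j$. Negating classically turns the existential into a universal and distributes over the conjunction, yielding exactly: for every $j$ with $k \le j < \lambda$, either $\T,j \models \alpha$ or $\T,i \not\models \beta$ for some $i$ with $k \le i < j$. This matches the unfolding of $\alpha \while \beta$ verbatim, establishing the first equivalence.

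For the second equality, I would invoke the De~Morgan law~\eqref{f:dm3}, namely $\neg(\varphi \until \psi) \equiv \neg\varphi \release \neg\psi$, which holds in \LTL\ because every \THT-tautology is an \LTL-tautology. Instantiating with $\varphi := \beta$ and $\psi := \neg\alpha$ gives $\neg(\beta \until \neg\alpha) \equiv \neg\beta \release \neg\neg\alpha$, and eliminating the double negation (valid in \LTL) produces $\neg\beta \release \alpha$. The two until lines are then immediate: $\alpha \until \beta \equiv \neg\neg(\alpha \until \neg\neg\beta)$ follows from double negation elimination together with substitution of equivalents, while $\alpha \until \beta \equiv \neg(\neg\beta \while \alpha)$ follows by applying the already proved while-equivalence with $\gamma := \neg\beta$ and $\delta := \alpha$, so that $\neg\beta \while \alpha \equiv \neg(\alpha \until \neg\neg\beta)$, and then negating both sides and clearing the double negations.

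The main thing to watch is not a deep obstacle but a point of rigour: the collapse of the inner quantifier over $\H'$ in the while clause must be checked explicitly, and every step must stay within \LTL, where double negation elimination is sound. I would emphasise that these equivalences are specific to \LTL\ and fail under \THT, consistent with the earlier observation that $\alpha \while \beta$ and $\neg\beta \release \alpha$ diverge under Here-and-There semantics, since the latter may provide evidence for $\alpha$ even when $\beta$ does not hold.
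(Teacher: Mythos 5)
Your proof is correct: the paper states this proposition without giving a proof, and your argument --- collapsing the $\H'$-quantifier in the $\while$ clause on a total trace, matching it against the classical negation of the $\until$ clause, and then deriving the remaining lines via the De~Morgan law \eqref{f:dm3}, double negation elimination, and substitution --- is exactly the direct semantic verification one would expect here. The closing remark that these equivalences fail under \THT\ is also consistent with the paper's discussion of the implicational nature of $\while$.
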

In \THT, however, these equivalences do not hold.
Unlike $\until$ and $\release$, operator $\while$ has an implicational nature.
As a consequence, there is no clear way to represent $\while$ in terms of the other operators: we conjecture that this may be, in fact, a fundamental operator in \THT.\label{page:whileconjecture}
The following equivalences describe these three operators, $\until$, $\release$ and~$\while$, by an inductive unfolding.
\begin{proposition}\label{prop:WRU}
In \THT, we have the following equivalences:
\begin{eqnarray*}
\alpha \until \beta & \equiv &  \beta \vee (\alpha \wedge \next (\alpha \until \beta))\\
\alpha \release \beta & \equiv &  \beta \wedge (\alpha \vee \next (\alpha \release \beta))\\
\alpha \while \beta & \equiv &  \alpha \wedge (\beta \to \next (\alpha \while \beta))
\end{eqnarray*}
\qed
\end{proposition}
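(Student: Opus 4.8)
The plan is to reduce every case to an identity between truth values via the three-valued valuation $\bm{m}$ of Section~\ref{sec:3val}. Since $\varphi\equiv\psi$ means $\M,k\models\varphi$ iff $\M,k\models\psi$ at every point of every \HT-trace, and since $\M,k\models\gamma$ is recoverable from $\trival{k}{\gamma}$ (value $2$ on $\tuple{\H,\T}$, nonzero on $\tuple{\T,\T}$), it suffices to show $\trival{k}{\text{LHS}}=\trival{k}{\text{RHS}}$ for all $k\in\intervo{0}{\lambda}$. This converts each equivalence into an identity among the $\min$, $\max$ and $\text{imp}$ expressions coming from the recursive clauses for $\until$, $\release$, $\while$, $\next$ and $\wnext$.

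The first two cases are almost in hand. The until unfolding is exactly \eqref{f:induntil}, which Proposition~\ref{prop:nonimpl} certifies as a \THT-equivalence since both sides are implication-free; at the valuation level it is the fact that splitting the defining $\max_{k\le j<\lambda}$ into its $j=k$ summand (which contributes $\trival{k}{\beta}$) and its $j>k$ tail (which, by distributivity of $\min$ over $\max$, equals $\min(\trival{k}{\alpha},\trival{k+1}{\alpha\until\beta})$) reproduces $\trival{k}{\beta\vee(\alpha\wedge\next(\alpha\until\beta))}$. The release unfolding is the Boolean dual: applying the operator $\delta$ of Corollary~\ref{BDT} to \eqref{f:induntil} yields \eqref{f:indrelease}, and on the infinite traces where $\next$ and $\wnext$ coincide this is precisely the stated identity, via the same computation run dually (interchanging $\min$ and $\max$).

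The genuinely new work is the $\while$ case, whose implicational shape escapes the duality argument. Here I would compute $\trival{k}{\alpha\wedge(\beta\to\wnext(\alpha\while\beta))}=\min(\trival{k}{\alpha},\text{imp}(\trival{k}{\beta},\trival{k}{\wnext(\alpha\while\beta)}))$ and compare it with $\trival{k}{\alpha\while\beta}=\min_{k\le j<\lambda}\text{imp}(\min\{\trival{i}{\beta}\mid k\le i<j\},\trival{j}{\alpha})$. The $j=k$ summand is $\text{imp}(\min\emptyset,\trival{k}{\alpha})=\text{imp}(2,\trival{k}{\alpha})=\trival{k}{\alpha}$, matching the leading conjunct. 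For the tail I would split each guard as $\min\{\trival{i}{\beta}\mid k\le i<j\}=\min(\trival{k}{\beta},\min\{\trival{i}{\beta}\mid k<i<j\})$ and then factor $\trival{k}{\beta}$ out using the two G\"odel ($G_3$) Heyting-algebra identities $\text{imp}(\min(b,m),a)=\text{imp}(b,\text{imp}(m,a))$ (residuation) and $\min_j\text{imp}(b,c_j)=\text{imp}(b,\min_j c_j)$ (implication distributing over a conjunction in its consequent). Together these rewrite the tail as $\text{imp}(\trival{k}{\beta},\trival{k+1}{\alpha\while\beta})=\text{imp}(\trival{k}{\beta},\trival{k}{\wnext(\alpha\while\beta)})$ when $k+1<\lambda$, so that the full $\min$ collapses onto the right-hand side.

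The step I expect to be delicate is the final state of a finite trace, which is also the reason the $\next$ in the release and while lines must be read as the weak next $\wnext$ (the two agree on infinite traces). When $k+1=\lambda$ one has $\trival{k}{\next(\cdots)}=0$ but $\trival{k}{\wnext(\cdots)}=2$; for $\until$ the vanishing next is harmlessly absorbed by the surrounding $\max$ with $\trival{k}{\beta}$, whereas for $\release$ and $\while$ it is the value $2$ of $\wnext$ that makes $\max(\trival{k}{\alpha},2)=2$ and $\text{imp}(\trival{k}{\beta},2)=2$, collapsing the unfolding to the single $j=k$ term $\trival{k}{\beta}$, respectively $\trival{k}{\alpha}$. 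Verifying this boundary clause, and — in the purely semantic rendering — noting that persistence (Proposition~\ref{prop:persistance}) lets the ``for all $\H'\in\{\H,\T\}$'' requirement of the $\while$-clause reduce to a single $\tuple{\H,\T}$ condition, is the part that demands real care.
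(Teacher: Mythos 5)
Your proof is correct, and it is in fact more of a proof than the paper supplies: the proposition is stated with its \qed and no argument, the surrounding text only noting that the first two unfoldings are ``well-known from \LTL{} and also preserved in \THT{}'' (i.e.\ via \eqref{f:induntil}, \eqref{f:indrelease} and Proposition~\ref{prop:nonimpl}/Corollary~\ref{BDT}) and leaving the $\while$ case unargued. Your route through the three-valued valuation is a legitimate alternative: reducing $\equiv$ to the pointwise identity $\trival{k}{\mathrm{LHS}}=\trival{k}{\mathrm{RHS}}$ is sound because $\trival{k}{\varphi\leftrightarrow\psi}=2$ exactly when the two values coincide, and the algebraic facts you invoke --- distributivity of $\min$ over $\max$ and vice versa, the residuation law $\text{imp}(\min(b,m),a)=\text{imp}(b,\text{imp}(m,a))$, and $\min_j\text{imp}(b,c_j)=\text{imp}(b,\min_j c_j)$ --- are all valid in the G\"odel algebra $G_3$ (the latter two hold in any Heyting algebra). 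What your computation buys over the paper's implicit argument is precisely the $\while$ case, which has an implication in it and therefore falls outside the reach of Proposition~\ref{prop:nonimpl} and of Boolean duality; your residuation step is the missing idea there.

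One point deserves emphasis rather than a footnote: your observation that the $\next$ in the $\release$ and $\while$ lines must be read as $\wnext$ is not a convenience of your method but a genuine correction to the statement. At the last state $k=\lambda-1$ of a finite trace one has $\trival{k}{\varphi\release\psi}=\trival{k}{\psi}$ while $\trival{k}{\psi\wedge(\varphi\vee\next(\varphi\release\psi))}=\min(\trival{k}{\psi},\trival{k}{\varphi})$, and similarly $\trival{k}{\varphi\while\psi}=\trival{k}{\varphi}$ while the strong-next right-hand side evaluates to $\min(\trival{k}{\varphi},\text{imp}(\trival{k}{\psi},0))$; so with the strong $\next$ the second and third equivalences fail on finite traces. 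The weak-next reading is the one consistent with \eqref{f:indrelease} and with the axiom $p\release q\leftrightarrow(q\wedge(p\vee\wnext(p\release q)))$ in Section~\ref{sec:axiomatisation}, and your boundary analysis of the empty $\min$/$\max$/guard at $k+1=\lambda$ is exactly what is needed to close those cases.
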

The first two equivalences are well-known from \LTL\ and also preserved in \THT.
The third equivalence shows that the $\while$ operator is formed by a repeated application of implications.
This is even clearer in the following expansion.
\begin{proposition}\label{prop:unfold}
The formula $(\alpha \while \beta)$ is \THT-equivalent to the (possibly infinite) conjunction of rules
\[\textstyle
  \big( \bigwedge_{j=0}^{i-1} \next^j \beta \big)  \to  \next^i \alpha
  \qquad
  \text{ for all }i\geq 0.
\]
\qed
\end{proposition}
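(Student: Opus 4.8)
The plan is to prove both directions by reducing everything to the per-point satisfaction clauses of Definition~\ref{def:dht:satisfaction}, or equivalently by iterating the one-step unfolding of Proposition~\ref{prop:WRU}. Fix an \HT-trace $\M=\tuple{\H,\T}$ of length $\lambda$ and a point $k\in\intervo{0}{\lambda}$, and write $\rho_i$ for the rule $\big(\bigwedge_{j=0}^{i-1}\next^j\beta\big)\to\next^i\alpha$. Since the $\rho_i$ are conjoined, $\M,k\models\bigwedge_{i\ge0}\rho_i$ holds iff $\M,k\models\rho_i$ for every $i$, so it suffices to show that this family of conditions is jointly equivalent to $\M,k\models\alpha\while\beta$.

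First I would compute the auxiliary satisfaction conditions by induction on $i$ using the clause for \next: for either $\H'\in\{\H,\T\}$ one gets $\tuple{\H',\T},k\models\next^i\alpha$ iff $k+i<\lambda$ and $\tuple{\H',\T},k+i\models\alpha$, and $\tuple{\H',\T},k\models\bigwedge_{j=0}^{i-1}\next^j\beta$ iff $k+i-1<\lambda$ and $\tuple{\H',\T},k+j\models\beta$ for all $j<i$. Feeding these into the \HT\ clause for implication (clause~5, quantifying over $\H'\in\{\H,\T\}$) rewrites $\M,k\models\rho_i$ as: for both $\H'$, if $\beta$ holds at $k,\dots,k+i-1$ then $k+i<\lambda$ and $\alpha$ holds at $k+i$. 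Re-indexing by $x=k+i$ and ranging over all $i\ge0$, this is meant to coincide with the body of the \while\ clause (clause~12), matching the universal quantifier over $j\in\intervo{k}{\lambda}$ against the universal quantifier over $i$; the same quantifier pattern appears verbatim in Kamp's translation $\tr{\alpha\while\beta}_k$ of Definition~\ref{tel:trans:qht}, so the result can alternatively be read off from Theorem~\ref{tel:th:sat}.

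A more algebraic route is to iterate Proposition~\ref{prop:WRU}. I would prove by induction on $N$ that $\alpha\while\beta$ is \THT-equivalent to $\big(\bigwedge_{i=0}^{N}\rho_i\big)\wedge\big(\bigwedge_{j=0}^{N}\next^j\beta\to\next^{N+1}(\alpha\while\beta)\big)$, the base case being Proposition~\ref{prop:WRU} itself and each step substituting $\alpha\while\beta\equiv\alpha\wedge(\beta\to\next(\alpha\while\beta))$ inside the trailing $\next^{N+1}(\cdots)$ (via the rule of substitution of equivalents) and regrouping. Passing to the limit, only finitely many states of a fixed trace are relevant, so the trailing conjunct is eventually subsumed and $\bigwedge_{i=0}^N\rho_i$ stabilises to $\bigwedge_{i\ge0}\rho_i$.

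The delicate point in either route is the behaviour at the end of a finite trace, and this is where I expect the real work to lie. In the direct argument it is the index $i=\lambda-k$, where $\next^i\alpha$ is forced false because the state $k+i$ does not exist, so $\rho_i$ collapses to the demand that $\bigwedge_{j=0}^{i-1}\next^j\beta$ fail, i.e.\ that $\beta$ not hold throughout $\intervo{k}{\lambda}$; verifying that this boundary demand agrees with the intended reading of \while\ is the crux of the proof. In the inductive route the same phenomenon appears as the fact that strong \next\ does \emph{not} distribute over implication on finite traces (only \wnext\ does, by the distributivity laws recalled earlier), so the regrouping in the induction step must be performed on the explicit satisfaction conditions rather than by a blind distributive rewrite; on infinite traces \next\ and \wnext\ coincide and this subtlety disappears.
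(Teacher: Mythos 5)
Your interior computation is fine, and you are right that the whole proof turns on the boundary index $i=\lambda-k$ on finite traces --- but you flag that verification as ``the crux'' and then never perform it, and when one does perform it against the paper's own Definition~\ref{def:dht:satisfaction} it \emph{fails}. At the last state of a finite trace ($k=\lambda-1$), clause~12 quantifies $j$ only over $\intervo{k}{\lambda}=\{k\}$ with an empty exception window, so $\M,k\models\alpha\while\beta$ reduces to $\alpha$ alone; the rule family instead yields $\rho_0\equiv\alpha$ together with $\rho_1=(\beta\to\next\alpha)$, whose consequent is forced false there, so the conjunction reduces to $\alpha\wedge\neg\beta$. Concretely, take $\lambda=1$, the single state $\{b\}$, $\alpha=\top$, $\beta=b$: then $\M,0\models\top\while b$ by clause~12, but $\M,0\not\models(b\to\next\top)$. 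So the extra demand you correctly extract at $i=\lambda-k$ --- that $\beta$ not hold throughout $\intervo{k}{\lambda}$ --- is genuinely not implied by the satisfaction clause for $\while$, and no amount of re-indexing closes the gap. The equivalence as you set it up goes through cleanly only for infinite traces, where the boundary index does not exist; for finite traces it holds only if one takes the one-step unfolding $\alpha\while\beta\equiv\alpha\wedge(\beta\to\next(\alpha\while\beta))$ of Proposition~\ref{prop:WRU} (with strong $\next$) as the definition of $\while$ at the end of the trace, which is \emph{not} what clause~12 says. You cannot ground the argument in both simultaneously, and your proposal never commits to which one it is using.

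Two smaller points. First, your algebraic route's limit step (``only finitely many states of a fixed trace are relevant, so the trailing conjunct is eventually subsumed'') is only valid for finite traces; on an infinite trace the remainder conjunct $\bigwedge_{j=0}^{N}\next^j\beta\to\next^{N+1}(\alpha\while\beta)$ never becomes vacuous, so the converse direction there must be done by the direct semantic computation of your first route (which does work for $\lambda=\omega$). Second, the appeal to Kamp's translation adds nothing: $\tr{\alpha\while\beta}_k$ quantifies $x$ over the same interval $\intervo{k}{\lambda}$ as clause~12 and therefore inherits exactly the same boundary behaviour, not the one your rule family exhibits.
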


To see how this expression works, consider the formula $(w \while f)$, where $w$ means ``pouring water'' and $f$ means ``fire.''
Notice that, for $i=0$, the rule body becomes an empty conjunction ($\top$) and so this expands to $\top \to \next^0 w$ which is just equivalent to fact $w$.
For instance, for $i \in [0,3]$ we get the rules:
\begin{eqnarray*}
& & w \\
f & \to & \next w \\
f \wedge \next f & \to & \next^2 w \\
f \wedge \next f \wedge \next^2 f & \to & \next^3 w \\
& \vdots &
\end{eqnarray*}
That is, we start pouring water at situation $0$ and, if we have fire, we keep pouring water at $1$ and check fire again, and so on.
As we can see, the effect of each $f$ test is placed at the next situation.
Thus, the reading of $(w \while f)$ is like a procedural program ``{\bf do} pour-water {\bf while} fire''.
If we want to move the test to the same situation of its effect,
we would write instead $\big((f \to w) \while f \big)$ whose reading would be ``{\bf while} fire {\bf do} pour-water'' and whose expansion becomes:
\begin{eqnarray*}
f & \to & w \\
f \wedge \next f & \to & \next w \\
f \wedge \next f \wedge \next^2 f & \to & \next^2 w \\
\vdots
\end{eqnarray*}

The expansion of $(w \, \while f)$ as a set of rules reveals its behavior from a logic programming point of view.
For instance, a theory only containing the formula $(w \while f)$ has a unique temporal stable model (per each length $\lambda>0$) in which $w$ is only true at the initial state, whereas $f$ is always false, since there is no additional evidence about fire.
In \LTL, the formula $(w \while f)$ is equivalent to $(\neg f \release w)$, that is, $\alwaysF w \vee (w \until (w \wedge \neg f))$.
In our formalism, however, this last formula produces temporal stable models with arbitrary prefix sequences of $w$, and even the case in which $w$ holds in all the states of the trace.
In other words, water may be poured arbitrarily many times, even though fire is false all over the trace.

\subsection{Axiomatization of \THTo}\label{sec:axiomatisation}

The axiomatic system of the future fragment of \THTo~\cite{baldie16} is obtained by combining the axioms of
intuitionistic modal logic~K~\cite{simpson94a} (\IK), the Hosoi~Axiom~\cite{hosoi66a} and the future \LTLo\ axiomatization
of \citeN{goldblatt92a}.
Such an axiomatic system is described below.

\begin{enumerate}
\item \textbf{Axioms of Intuitionistic Propositional Calculus}

\item \textbf{Hosoi axiom}:\quad  \mbox{$ p\vee( p\rightarrow q)\vee\neg q$}

\item \textbf{\IK\ axioms  for $\next$ and $\wnext$}~\cite{simpson94a}:
  \begin{multicols}{2}
    \begin{enumerate}
    \item\label{ax:next:1} \mbox{$\wnext p \leftrightarrow \next p$}
    \item\label{ax:next:2} \mbox{$\wnext \left(p \rightarrow q\right)\rightarrow \left(\wnext p \rightarrow \wnext q\right)$}
    \item\label{ax:next:3} \mbox{$\wnext \left(p \rightarrow q\right)\rightarrow \left(\next p \rightarrow \next q\right)$}
    \item\label{ax:next:4} \mbox{$\next\left( p \vee q \right) \leftrightarrow \next p \vee \next q$}
    \item\label{ax:next:5} \mbox{$\left(\next p \rightarrow \wnext q \right) \rightarrow \wnext\left( p \rightarrow q \right)$}
\item\label{ax:next:7} \mbox{$\neg \next \bot$}
    \end{enumerate}
  \end{multicols}

\item \textbf{\IK\ axioms for $\alwaysF$ and $\eventuallyF$}:
  \begin{multicols}{2}
    \begin{enumerate}[start=7]
    \item\label{ax:fs:2} \mbox{$\alwaysF( p\rightarrow q)\rightarrow(\alwaysF p\rightarrow\alwaysF q)$}
    \item\label{ax:fs:3} \mbox{$\alwaysF( p\rightarrow q)\rightarrow(\eventuallyF p\rightarrow\eventuallyF q)$}
    \item\label{ax:fs:4} \mbox{$\eventuallyF( p\vee q)\rightarrow\eventuallyF p\vee\eventuallyF q$}
    \item\label{ax:fs:5} \mbox{$(\eventuallyF p\rightarrow\alwaysF q)\rightarrow\alwaysF( p\rightarrow q)$}
    \item\label{ax:fs:1} \mbox{$\neg \eventuallyF \bot$}
    \end{enumerate}
  \end{multicols}

\item \textbf{Axioms combining $\next$, $\wnext$, $\alwaysF$ and $\eventuallyF$}:
  \begin{multicols}{2}
    \begin{enumerate}[start=12]
    \item\label{ax:comb:1}\mbox{$\alwaysF p\rightarrow p \wedge \wnext \alwaysF  p$}
    \item\label{ax:comb:2}\mbox{$ p \vee \next \eventuallyF  p\rightarrow\eventuallyF p$}
    \end{enumerate}
  \end{multicols}

\item \textbf{Induction}:
  \begin{multicols}{2}
  \begin{enumerate}[start=14]
  \item\label{ax:ind:1} $\dfrac{  p \rightarrow \wnext  p }{  p \rightarrow \alwaysF  p }$
  \item\label{ax:ind:2} $\dfrac{ \next  p \rightarrow  p }{ \eventuallyF  p \rightarrow  p }$
  \end{enumerate}
\end{multicols}

\item \textbf{Axioms for $\until$ and $\release$}:
  \begin{multicols}{2}
    \begin{enumerate}[start=16]
    \item\label{ax:until:1} $p \until q \rightarrow \eventuallyF q$
    \item\label{ax:until:2} $p \until q \leftrightarrow \left(q \vee \left( p \wedge \next \left(p \until q \right)\right)\right)$
    \item\label{ax:release:1} $\alwaysF q \rightarrow p \release q$
    \item\label{ax:release:2} $p \release q \leftrightarrow \left(q \wedge \left( p \vee \wnext \left(p \release q \right)\right)\right)$
    \end{enumerate}
  \end{multicols}

\item \manuallabel{ax:mp}{Modus ponens}\textbf{Modus ponens}:\quad $\dfrac{ p \rightarrow q,\ p }{ q }$
\item \manuallabel{ax:nec}{Necessitation}\textbf{Necessitation}:\quad $\dfrac{p}{ \wnext p }$

\end{enumerate}

Due to the intuitionistic basis of \THTo\ all the axioms need to be duplicated in order to cope with the potential non-definability of modal operators~\cite{simpson94a}.
It should be also noticed that the induction axiom is replaced by the corresponding inference rule. These inference rules can be derived from the axioms and vice versa.
The corresponding proof of soundness and completeness is achieved by adapting Goldblatt's proof for classical \LTLo~\cite{goldblatt92a} to the here-and-there case:
soundness is proved by checking that all axioms are valid in \THTo\ and that the inference rules preserve validity.
The proof of completeness relies on variations of well-known methods for proving completeness:
a canonical model construction for bi-relational models~\cite{simpson94a}, filtration and unwinding~\cite{baldie16}.

From the axiomatic point of view, the main difference between \THTo\ and \THTf\ relies on Axiom~\eqref{ax:next:1}, which guarantees that every time instant has a successor.
From the point of view of the completeness proof, this axiom makes possible the unwinding of the filtrated model by~\citeN{baldie16}.
Since Axiom~\eqref{ax:next:1} is not valid in \THTf\ because it fails at the end of the trace, this unwinding may not be directly applied to obtain a sound and complete axiomatization of \THTf.

  \section{From \TEL\ to automata}\label{sec:tel2aut}

The relation between traditional temporal logics and automata is well-known and has been thoroughly studied in the
literature~\cite{degola16a}.
For infinite traces, \LTLo\ unsatisfiability can be reduced to checking the emptiness of a language with a given automaton that accepts \emph{infinite words}.
For this, we can use, for instance, the well-known translation of~\citeN{gepevawo95a} mapping a temporal formula $\varphi$ into a \emph{B\"uchi automaton}~\cite{buechi62a} that accepts the $\omega$-regular language corresponding to the (infinite trace) models of $\varphi$.
When we jump to finite traces, the type of automata required is simpler, as they just need to cover languages with words of finite length.
Thus, in the works by~\citeN{zhpuva19a} and~\citeN{giavar13a}, translations from \LTLf\ into different types of automata such as {\em Non-deterministic finite automata} (NFA) and {\em Alternating Automata on Words} or AFW~\cite{chkost81a} have been proposed.

The use of automata for deciding \THT-satisfiability can naturally follow the same steps, provided that the translation
from Section~\ref{sec:tht2ltl} allows us to reduce the problem to standard \LTL-satisfiability, regardless of the trace length.
This fact was used by~\citeN{cabdie14a} to decide strong equivalence of two temporal formulas $\alpha$ and $\beta$ in \TELo\ as follows.
By Theorem~\ref{th:strongeq}, strong equivalence amounts to \THTo-equivalence, that is, checking the validity of $\varphi:=\alpha \leftrightarrow \beta$ in \THTo.
But then, by Theorem~\ref{th:trans}, this amounts to checking the unsatisfiability of $\eqref{f:ax} \wedge \neg (\varphi^*)$ in \LTLo\ and the latter is then done by the standard B\"uchi-automata construction method.
Similar steps can be followed for \THTf, replacing B\"uchi-automata for \LTLo\ by NFA or AFW for \LTLf.

The use of automata for a \emph{non-monotonic} temporal formalism like \TEL\ becomes a more involved process, since temporal equilibrium models are obtained by a model selection (or minimization) process.
The complexity of \TELo, for instance, suggests that there is no efficient method to reduce \TELo- to \LTLo-satisfiability.
\begin{theorem}[\citeNP{bozpea15a}]
The problem of deciding whether a temporal formula has some (infinite) temporal equilibrium model in \TELo\ is \textsc{ExpSpace}-complete.\qed
\end{theorem}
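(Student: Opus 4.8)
The plan is to prove the two matching bounds separately, using throughout the translation of \THT\ into \LTL\ from Theorem~\ref{th:trans}. Recall that, by Definition~\ref{def:tem} together with Proposition~\ref{prop:total}, a total trace $\tuple{\T,\T}$ is an infinite temporal equilibrium model of $\varphi$ precisely when $\T \models \varphi$ in \LTLo\ and there is no $\H < \T$ with $\tuple{\H,\T} \models \varphi$ in \THTo. Over the starred signature $\PV^*$ this second, non-existence condition becomes an ordinary \LTLo\ property: by Theorem~\ref{th:trans}, $\tuple{\H,\T} \models \varphi$ iff $\T^* \models \varphi^* \wedge \eqref{f:ax}$, where the primed atoms record \H\ and the unprimed atoms record \T. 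This reformulation is what turns the equilibrium (minimization) criterion into a language-theoretic question amenable to automata.

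For membership in \textsc{ExpSpace} I would build three nondeterministic B\"uchi automata, each of single-exponential size in $|\varphi|$. First, $A_\varphi$ accepting the \LTLo-models of $\varphi$, of size $2^{O(|\varphi|)}$. Second, an automaton over $\PV^*$ for $\varphi^* \wedge \eqref{f:ax} \wedge \eventuallyF \bigvee_{a \in \PV}(a \wedge \neg a')$, whose last conjunct forces the strict inequality $\H < \T$; projecting it onto the unprimed atoms (which does not increase the number of states) yields an automaton $C$ that accepts exactly those \T\ admitting some strictly smaller $\H < \T$ with $\tuple{\H,\T} \models \varphi$. Both $A_\varphi$ and $C$ have size $2^{O(|\varphi|^2)}$, since $\varphi^*$ is quadratic. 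Then $\varphi$ has an infinite temporal equilibrium model iff $L(A_\varphi) \not\subseteq L(C)$. As B\"uchi (non-)containment is decidable in space polynomial in the sizes of the two automata, and those sizes are single-exponential, the whole test runs in exponential space; the complementation implicit in the containment check is carried out on the fly, so nothing larger than exponential is ever stored.

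The harder direction is \textsc{ExpSpace}-hardness, and it is here that the minimization inherent in equilibrium semantics must be exploited, since plain \LTLo-satisfiability is only \textsc{PSpace}. I would reduce from a canonical \textsc{ExpSpace}-complete problem such as exponential-width corridor tiling (equivalently, acceptance by a $2^{n}$-space-bounded Turing machine). The idea is to let the successive states of a trace enumerate the cells of each exponentially wide configuration, using a block of $O(n)$ atoms as a binary counter to address cells, and to encode the horizontal/vertical tiling constraints (respectively the transition relation) by temporal rules built from $\next$, $\alwaysF$ and implication. The equilibrium condition does double duty: it forces the counter and tiling atoms into their intended minimal values, discarding the spurious \THT-models that an unconstrained \LTL\ encoding would admit, and it lets a default/negative construction express the otherwise universal acceptance condition, so that $\varphi$ has a temporal equilibrium model iff a legal tiling (accepting computation) exists. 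I expect the decisive obstacle to be the \emph{exactness} of this correspondence: designing the rules so that every temporal equilibrium model projects to a valid tiling and, conversely, every valid tiling lifts to a unique minimal \HT-trace, with the minimization neither discarding genuine solutions nor admitting ill-formed intermediate traces. Establishing this bijection, rather than the automata bookkeeping of the upper bound, is the technical heart of the argument.
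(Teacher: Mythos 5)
The paper itself does not prove this theorem: it is imported from Bozzelli and Pearce (2015) and stated without proof, so the only material to compare against is the surrounding machinery of Section~\ref{sec:tel2aut}. Your membership argument is essentially that machinery. Your $A_\varphi$ and $C$ are the paper's $\mathfrak{A}_1$ and $h(\mathfrak{A}_2)$, your non-containment test $L(A_\varphi)\not\subseteq L(C)$ is emptiness of $\mathfrak{A}_1\cap\overline{h(\mathfrak{A}_2)}$, and your observation that the complementation can be done on the fly correctly keeps the working space single-exponential; the paper reaches the same upper bound by a different route, rewriting $\SM{\varphi}$ as a prenex \QLTL\ formula of alternation depth one and invoking Theorem~\ref{th:sistla}. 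This half of your argument is sound (modulo the standing assumption, stated in the paper, that \PV\ is finite).

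The hardness half, however, is where the content of the theorem lives --- \LTLo-satisfiability is only \textsc{PSpace}-complete, so the exponential jump must come entirely from the equilibrium condition --- and there you give a plan rather than a proof. You pick a reasonable source problem (exponential-width corridor tiling) and you correctly diagnose the obstacle, but the one thing that must actually be exhibited is the mechanism by which minimization enforces the vertical, distance-$2^n$ matching constraints: a formula of size polynomial in $n$ whose total traces may encode arbitrary cell sequences, but which admits a strictly smaller $\H<\T$ exactly when two cells with equal counter addresses in adjacent configurations carry mismatched tiles. This is typically done with an auxiliary ``error'' or saturation atom whose derivability is tied to a bit-by-bit counter-equality test, so that ill-formed candidates are precisely the non-equilibrium ones; one must then verify both that every temporal equilibrium model projects to a legal tiling and that every legal tiling lifts to one. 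By explicitly deferring ``the technical heart'' of establishing this correspondence, you concede the gap rather than close it, and the completeness claim remains unproved in your write-up.
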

A possible way of overcoming this difficulty is to observe that Definition~\ref{def:tem} (of temporal equilibrium
models) involves a kind of \emph{quantification}.
That is, we must find a total \HT-model $\tuple{\T,\T}$ of some formula such that
\emph{there is no smaller} $\H < \T$ such that $\tuple{\H,\T}$ is also a model of the formula.
This quantification seems to have a second-order nature, as it has to do with different configurations of the truth of propositions.
In fact, in the non-temporal case, equilibrium models of a formula have been captured using Quantified Boolean
Formulas~\cite{petowo01a} and subsequently generalized by using a syntactic operator $\SM{\cdot}$~\cite{feleli07a}.
This gives a second-order formula that uses the translation from Section~\ref{sec:tht2ltl} and
quantifies over primed propositions.
This approach opens the natural possibility of capturing \TEL\ through \emph{Quantified} \LTL\ \cite{sistla83a},
an extension of \LTL\ in which we can use second-order quantifiers over propositions, and
define a temporal version of the $\SM{\cdot}$ operator as a \QLTL\ formula.
The advantage of having a quantified temporal formula $\SM{\varphi}$ capturing the \TS-models of $\varphi$ is that there
exist several methods for reducing that formula to an automaton, as we see below.
Given that our interest is focused on finite automata, in the rest of this section, we assume that the alphabet \PV\ is finite.

\subsection{\TS-models in terms of \QLTL: the \SMD\ operator}

We start by recalling the syntax and semantics of \emph{Quantified Linear-Time Temporal Logic} or \QLTL~\cite{sistla83a,sivawo87a,degola16a}.
The syntax of \QLTL\ extends the one of temporal formulas by quantifiers on propositional variables,
that is, formulas of the type $\exists a\, \varphi$ and $\forall a\, \varphi$ are added.

Interpreting the additional formulas requires the concept of a variant trace.
\begin{definition}[$X$-variant trace;~\citeNP{frerey02a}]\label{def:xvar}
  Given two traces \T\ and $\T'$ of length $\lambda$ and $X \subseteq \PV$,
  we say that $\T'$ is an \emph{$X$-variant trace} of \T\
  if  $T_i \setminus X = T_i' \setminus X$ for all $i \in \intervo{0}{\lambda}$.
\end{definition}
With this,
the semantics for \QLTL\ is obtained from that of \LTL\ by extending its satisfaction relation with the cases of
quantified formulas as follows:
\begin{itemize}
\item $\T, i \models \forall a\, \varphi$ iff $\T', i \models \varphi$ for all  $\lbrace a \rbrace$-variant traces $\T'$ of \T,
\item $\T, i \models \exists a\, \varphi$ iff $\T', i \models \varphi$ for some $\lbrace a \rbrace$-variant traces $\T'$ of \T.
\end{itemize}

It is known that \QLTLo\ is as expressive as B\"uchi automata~\cite{sivawo87a}, and so, more expressive than \LTLo.
For instance, the indicative ``even states'' property~\cite{wolper83a} can be expressed by the quantified temporal
formula
\(
\exists q\, \left(\neg q \wedge \alwaysF\left( q \leftrightarrow \next \neg q\right)\right)\wedge \alwaysF\left(q \rightarrow p\right)
\).

We also recall that a quantified temporal formula is in \emph{prenex normal form} if it is of the form $Q_1\; x_1, Q_2\; x_2 \cdots Q_n\; x_n\; \varphi$ where each $Q_i \in \lbrace \exists, \forall \rbrace$ and each $x_i$ is a propositional variable occurring in the \emph{ quantifier-free} formula $\varphi$.
An alternation in a quantified prefix is a sequence of $\exists y \forall x$ or $\forall x \exists y$ occurring in a prefix.
Given a quantified temporal formula $\varphi$ in prenex normal form, the \emph{alternation depth} of $\varphi$ is $k$ if the prefix associated with $\varphi$ contains $k$ alternations.
Also, given a quantified temporal formula $\varphi$, we denote by $\mathit{free}(\varphi)$ the number of propositional variables occurring in $\varphi$ which are not quantified.

The complexity  of the satisfiability problem  in \QLTL\ turns to be non-elementary~\cite{sivawo87a}.
However, for the case of formulas in prenex normal form with alternation depth $k$ the following result has been obtained.

\begin{theorem}[\citeNP{sistla83a}]\label{th:sistla} The \QLTL\ satisfiability problem of a quantified temporal formula $\varphi$ in prenex normal form with alternation depth $k$ is $k$-\textsc{ExpSpace}-complete.\footnote{By convention, $0$-\textsc{ExpSpace} stands for \textsc{PSpace}.}
\end{theorem}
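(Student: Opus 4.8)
The plan is to establish matching upper and lower bounds, both resting on the tight correspondence between \QLTL\ and automata on infinite words. For the upper bound I would proceed by an automata-theoretic induction on the quantifier prefix, read from the innermost quantifier outwards. First I would translate the quantifier-free matrix $\varphi$ into a B\"uchi automaton $\mathfrak{A}_\varphi$ accepting exactly its \LTLo-models, incurring a single exponential blow-up in the size of $\varphi$; since emptiness of such an automaton is decidable in space logarithmic in its size, this accounts for the \textsc{PSpace} base case $k=0$. I would then consume the prefix, exploiting the fact that adjacent quantifiers of the same kind can be merged, so that only \emph{alternations} matter: an existential block $\exists x_1 \cdots \exists x_m$ corresponds to projecting the components $\{x_1,\dots,x_m\}$ away from the current automaton, an operation that does not increase its size, whereas passing from an existential to a universal block (or vice versa) forces one B\"uchi complementation. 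As complementation of a nondeterministic B\"uchi automaton of size $n$ yields one of size $2^{O(n \log n)}$, each alternation contributes exactly one exponential. After the whole prefix is processed, the resulting automaton has size bounded by a tower of exponentials of height governed by $k$, and its emptiness can be decided nondeterministically in space logarithmic in that size; this yields the $k$-\textsc{ExpSpace} upper bound.

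For the lower bound I would give a polynomial-time reduction from the acceptance problem of a Turing machine $M$ whose work space is bounded by a tower of $k$ exponentials to \QLTL-satisfiability of a prenex formula of alternation depth $k$. The idea is to lay out the successive configurations of $M$ along the trace and to address each of the towering many tape cells by a block of auxiliary propositions encoding a binary counter. A single exponential of counter length is describable by a quantifier-free temporal formula, while each further exponential requires exactly one extra quantifier alternation: I would recursively define, by means of an alternation of the form $\forall \cdots \exists \cdots$, a comparison predicate relating level-$j$ counters in terms of the already-available level-$(j-1)$ machinery, so that the number of alternations matches the tower height $k$. Given such addressing, the standard local consistency constraints of a computation---correctness of the initial configuration, faithfulness of the transition relation between adjacent cells and their successors, and acceptance---are expressible by polynomial-size temporal formulas guarded by $\alwaysF$.

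The step I expect to be the main obstacle is the lower bound, specifically the recursive counter construction: one must verify that the level-$j$ successor and equality predicates can be defined from the level-$(j-1)$ ones using a \emph{single} extra alternation and only a polynomial increase in formula size, so that the total alternation depth is exactly $k$ rather than, say, linear in the number of counter bits. The accompanying bookkeeping of the prenex normalization---pushing quantifiers to the front after these recursive definitions without collapsing or spuriously introducing alternations---is the delicate part. By contrast, the upper bound's merging of same-type quantifiers and the projection/complementation constructions are essentially routine once the B\"uchi translation and the complementation size bound are in hand, as is the exact off-by-one accounting that aligns the base-case matrix automaton with the \textsc{PSpace} convention for $0$-\textsc{ExpSpace}.
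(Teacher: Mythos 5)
This statement is not proved in the paper at all: it is imported by citation from Sistla's thesis (together with the companion results of Sistla--Vardi--Wolper on B\"uchi complementation), so there is no in-paper argument to measure your proposal against. Your sketch reproduces, in outline, exactly the standard proof from those sources: B\"uchi translation of the matrix, projection for existential blocks, one complementation per alternation, on-the-fly emptiness for the upper bound; nested Stockmeyer-style counters, one alternation per exponential level, for the lower bound. Two points deserve more care than you give them. First, in the \textsc{PSpace} base case (and at the outermost level generally) the exponential-size automaton must never be written down; the emptiness test has to run on an implicitly represented automaton whose transitions are computed on demand from the formula, otherwise you lose an exponential. Second, the claim that an alternation ``forces one B\"uchi complementation'' is correct only after dualizing every universal block as $\forall \bar{x}\,\chi = \neg\exists \bar{x}\,\neg\chi$ and then absorbing the two extremal negations for free: the innermost one by building the automaton for the \emph{negated} quantifier-free matrix directly, and the outermost one by flipping the final test from nonemptiness to emptiness. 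Without these absorptions a universal block costs two complementations and the bound degrades; with them the count is exactly $k$, but note that this bookkeeping presupposes the formula is closed (equivalently, that the free variables are read as an implicit outermost existential block) --- which is precisely why the paper's corollary explicitly prepends $\exists\vtuple{a}$ before invoking the theorem. Your identification of the recursive counter construction as the delicate part of the lower bound is accurate and consistent with how the hardness proof is usually organized.
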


Now, in order to encode \TS-models of a temporal formula $\varphi$ using a quantified temporal formula,
we use translation $\varphi^*$ from Section~\ref{sec:tht2ltl} and define the following notation.
Let $\vtuple{a}$ and $\vtuple{a'}$ stand for the tuples $\tuple{a_1, \dots, a_n}$ and $\tuple{a'_1,\dots,a'_n}$ of $n$ atoms, respectively.
We define the following expressions:
\[
  \vtuple{a'} \le \vtuple{a} \eqdef \bigwedge \limits_{i=1}^n  \alwaysF(a_i' \rightarrow a_i) \qquad\qquad
  \vtuple{a'} < \vtuple{a} \eqdef \vtuple{a'} \le \vtuple{a} \wedge  \bigvee \limits_{i=1}^n  \eventuallyF (\neg a_i' \wedge a_i).
\]
Note that, whenever $\vtuple{a}$ coincides with the original propositional signature $\PV$, $\vtuple{a'} \le \vtuple{a}$
amounts to axiom \eqref{f:ax} and is used to guarantee that atoms in trace \H\ are also included in trace \T.
Expression $\vtuple{a'} < \vtuple{a}$ further forces $\H < \T$ as stated by the following proposition.
\begin{proposition}\label{prop:primedatoms}
Let $\T^*$ be  a trace over signature $\PV^* = \PV \cup \lbrace a' \mid a \in \PV\rbrace$ and let \H\ and \T\ be such that $H_i \eqdef \{a \mid a' \in \T^*_i\}$ and $T_i \eqdef T^*_i \cap \PV$.
Also, let $\vtuple{a}$ be a tuple containing all propositional variables on \PV\ and let $\vtuple{a'}$ be of the same
length as $\vtuple{a}$.
Then, we have:
\begin{enumerate}
\item $\T^*,0\models \vtuple{a'} \le \vtuple{a}$ \ iff \ $\H \le \T$,
\item $\T^*,0\models \vtuple{a'}  <  \vtuple{a}$ \ iff \ $\H  <  \T$.\qed
\end{enumerate}
\end{proposition}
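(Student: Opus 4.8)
The plan is to unfold the \LTL\ semantics of the operators $\alwaysF$ and $\eventuallyF$ appearing in the two defined expressions, and then translate each atomic membership condition about $\T^*$ into the corresponding condition about $\H$ and $\T$ by means of the defining equations $a \in H_j \iff a' \in T^*_j$ and (for $a \in \PV$) $a \in T_j \iff a \in T^*_j$. Since we are reasoning about \LTL-satisfaction over a total trace $\T^*$, I would use the standard satisfaction clauses, i.e.\ the total-trace specialisations of Definition~\ref{def:dht:satisfaction} justified by Proposition~\ref{prop:total}.

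For part (1), I would first rewrite $\T^*,0 \models \vtuple{a'} \le \vtuple{a}$. Expanding the conjunction and the $\alwaysF$ operator, this holds iff for every index $i$ and every time point $j \in \intervo{0}{\lambda}$ we have that $a_i' \in T^*_j$ implies $a_i \in T^*_j$. Applying the two translation equivalences, this becomes: for every $i$ and every $j$, $a_i \in H_j$ implies $a_i \in T_j$. Because the tuple $\vtuple{a}$ ranges over all of $\PV$, quantifying over $i$ amounts to quantifying over all atoms, so the condition says exactly that $H_j \subseteq T_j$ for every $j$, which is the definition of $\H \le \T$.

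For part (2), I would invoke part (1) to dispatch the conjunct $\vtuple{a'} \le \vtuple{a}$, reducing it to $\H \le \T$. Then I would unfold the remaining disjunct $\bigvee_i \eventuallyF(\neg a_i' \wedge a_i)$: it holds at $0$ iff there exist an index $i$ and a time point $j \in \intervo{0}{\lambda}$ with $a_i' \notin T^*_j$ and $a_i \in T^*_j$, which by the translation equivalences means $a_i \notin H_j$ and $a_i \in T_j$, i.e.\ some state $j$ witnesses $T_j \setminus H_j \neq \emptyset$. The final step is the observation that, under the already-established hypothesis $\H \le \T$, the existence of some such $a \in T_j \setminus H_j$ is equivalent to $\H \neq \T$; combining both conjuncts then yields $\H \le \T$ together with $\H \neq \T$, which is precisely $\H < \T$.

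The argument is essentially a routine unfolding of semantics, so I do not anticipate a genuine obstacle. The only point requiring slight care is the last equivalence in part (2): given $\H \le \T$, the traces differ exactly when some state $j$ satisfies the proper inclusion $H_j \subsetneq T_j$. This follows immediately from the fact that $\H \neq \T$ means $H_j \neq T_j$ for at least one $j$, and under $H_j \subseteq T_j$ this is the same as $T_j \setminus H_j \neq \emptyset$.
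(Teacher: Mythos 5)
Your proof is correct: the paper states this proposition without proof (it is regarded as an immediate unfolding of the definitions of $\vtuple{a'}\le\vtuple{a}$, $\vtuple{a'}<\vtuple{a}$, and the correspondence between $\T^*$ and $\tuple{\H,\T}$), and your argument is exactly that routine unfolding, including the correct handling of the final step that under $\H\le\T$ the witness $a_i\in T_j\setminus H_j$ is equivalent to $\H\neq\T$.
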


We can now define the temporal version of the $\SMD$ operator by~\citeN{feleli07a}:
Let $\varphi$ be a temporal formula over signature $\PV$ and let $\vtuple{a}$ be a tuple with all propositions in $\PV$.
Then, we define the \QLTL\ expression:
\begin{equation}
\SM{\varphi} \eqdef \varphi \wedge \neg \exists \vtuple{a'}\; \big( \vtuple{a'} < \vtuple{a} \wedge \varphi^* \big).\label{eq:tsm}
\end{equation}
The fact that $\SM{\varphi}$ captures the \TS-models of $\varphi$ is established by the following theorem.
\begin{theorem}
  A trace \T\ is a \TS-model of a temporal formula $\varphi$ iff \T\ is a \QLTL-model of $\SM{\varphi}$.\qed
\end{theorem}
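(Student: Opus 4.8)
The plan is to unfold each side of the biconditional into the same pair of conditions and then bridge the two by means of the $\THT$-to-$\LTL$ translation $\varphi^*$ of Section~\ref{sec:tht2ltl}. By Definition~\ref{def:tem}, $\T$ is a $\TS$-model of $\varphi$ exactly when (i) $\tuple{\T,\T},0\models\varphi$ in $\THT$ and (ii) there is no $\H<\T$ with $\tuple{\H,\T},0\models\varphi$ in $\THT$. On the other side, $\T,0\models\SM{\varphi}$ unfolds, by the definition of $\SM{\varphi}$ in \eqref{eq:tsm}, into the conjunction of $\T,0\models\varphi$ and $\T,0\models\neg\exists\vtuple{a'}\,(\vtuple{a'}<\vtuple{a}\wedge\varphi^*)$. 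First I would dispatch the first conjuncts: since $\varphi$ is quantifier-free its $\QLTL$- and $\LTL$-satisfaction coincide, and by Proposition~\ref{prop:total}, $\T,0\models\varphi$ in $\LTL$ is equivalent to condition~(i). It then remains to identify the negated existential with condition~(ii).

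The core step is to set up a dictionary between the $\{a_1',\dots,a_n'\}$-variant traces $\T^*$ of $\T$ and the $\HT$-traces $\tuple{\H,\T}$ sharing the same there-component. Given such a variant $\T^*$, put $T_i=T^*_i\cap\PV$ and $H_i=\{a\mid a'\in T^*_i\}$; since a variant leaves the unprimed atoms untouched, $T_i$ is just the original there-component, so the only freedom is in the choice of $\H$. Now suppose $\T^*,0\models\vtuple{a'}<\vtuple{a}\wedge\varphi^*$. By Proposition~\ref{prop:primedatoms}(2) the first conjunct gives $\H<\T$, hence in particular $\H\le\T$, so $\tuple{\H,\T}$ is a genuine $\HT$-trace; moreover $\vtuple{a'}<\vtuple{a}$ entails $\vtuple{a'}\le\vtuple{a}$, which is exactly axiom~\eqref{f:ax}, and one checks that $\T^*$ is then the trace associated with $\tuple{\H,\T}$. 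Theorem~\ref{th:trans} converts $\T^*,0\models\varphi^*\wedge\eqref{f:ax}$ into $\tuple{\H,\T},0\models\varphi$ in $\THT$. The chain is reversible: from any $\H<\T$ with $\tuple{\H,\T},0\models\varphi$ I form the associated variant $\T^*$, read off $\vtuple{a'}<\vtuple{a}$ from Proposition~\ref{prop:primedatoms}(2) and $\varphi^*$ from Theorem~\ref{th:trans}. Hence a witness of $\exists\vtuple{a'}\,(\vtuple{a'}<\vtuple{a}\wedge\varphi^*)$ exists iff some $\H<\T$ with $\tuple{\H,\T},0\models\varphi$ exists, and negating gives precisely condition~(ii). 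Combining the two conjuncts yields the theorem.

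The main obstacle --- in fact the only delicate point --- is the signature bookkeeping: $\SM{\varphi}$ is a formula over $\PV^*$ whereas $\T$ is a trace over $\PV$, so I must make explicit that the primed atoms occur only under the quantifier $\exists\vtuple{a'}$ and that an $\{a_1',\dots,a_n'\}$-variant of $\T$ is nothing but a free choice of a here-component $\H$ over the fixed there-component $\T$. I would also note the degenerate case in which $\vtuple{a'}<\vtuple{a}$ has no witness, where the negated existential holds vacuously and matches a trace admitting no strictly smaller $\HT$-model. Once this variant-trace/$\HT$-trace correspondence is fixed, everything else is an application of Propositions~\ref{prop:total} and~\ref{prop:primedatoms} together with Theorem~\ref{th:trans}, with no further computation required.
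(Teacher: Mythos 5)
Your proposal is correct and follows exactly the route the paper sets up: the paper states Proposition~\ref{prop:primedatoms} and Theorem~\ref{th:trans} immediately before this theorem precisely so that the proof reduces to unfolding Definition~\ref{def:tem} and the definition of $\SM{\varphi}$, identifying $\{a_1',\dots,a_n'\}$-variant traces of \T\ with choices of a here-component $\H\le\T$, and applying Proposition~\ref{prop:total} for the first conjunct. Your handling of the signature bookkeeping and of the fact that $\vtuple{a'}<\vtuple{a}$ subsumes axiom~\eqref{f:ax} (so that Theorem~\ref{th:trans} applies) is exactly the intended argument.
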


Note that a simple analysis of $\SM{\varphi}$ allows us to match the upper bound complexity of the satisfiability problem of temporal equilibrium logic obtained by~\citeN{cabdem11a}.

\begin{corollary} The satisfiability problem of \TELo\ is at most \textsc{ExpSpace}-complete.\end{corollary}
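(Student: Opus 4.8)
The plan is to read off the quantifier structure of $\SM{\varphi}$ and feed it into Sistla's classification (Theorem~\ref{th:sistla}). By the theorem immediately preceding the corollary, a trace \T\ is a \TS-model of $\varphi$ exactly when it is a \QLTL-model of $\SM{\varphi}$; hence deciding whether $\varphi$ has an (infinite) temporal stable model amounts to deciding the \QLTL-satisfiability of $\SM{\varphi}$ over the original signature $\PV$. The whole task therefore reduces to bounding the cost of that single \QLTL-satisfiability check.

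First I would put $\SM{\varphi}$ into prenex normal form. Recalling definition~\eqref{eq:tsm}, we have $\SM{\varphi}=\varphi \wedge \neg\exists \vtuple{a'}(\vtuple{a'}<\vtuple{a}\wedge \varphi^*)$. Since $\varphi$, $\varphi^*$, and the comparison formulas $\vtuple{a'}\le\vtuple{a}$ and $\vtuple{a'}<\vtuple{a}$ are all quantifier-free temporal formulas (the starred translation of Section~\ref{sec:tht2ltl} introduces no propositional quantifiers), the only second-order quantifiers are those of $\exists\vtuple{a'}$ under the negation. Rewriting $\neg\exists$ as $\forall\neg$ and pulling the universal block outward past the conjunct $\varphi$ --- which contains none of the primed atoms --- yields $\SM{\varphi}\equiv \forall\vtuple{a'}\,\psi$ with quantifier-free matrix $\psi \eqdef \varphi\wedge\neg(\vtuple{a'}<\vtuple{a}\wedge\varphi^*)$. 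Its size is polynomial in $|\varphi|$, since $\varphi^*$ is quadratic and the comparison gadgets are linear in $|\PV|$.

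Next I would account for the free variables. Satisfiability asks for the existence of a trace over the unprimed signature $\PV=\vtuple{a}$ satisfying $\SM{\varphi}$, so it coincides with the truth of the closed \QLTL\ sentence $\exists\vtuple{a}\,\forall\vtuple{a'}\,\psi$ (the existential closure over the free atoms). This prefix is $\exists\forall$, containing exactly one alternation, so its alternation depth is $k=1$. Applying Theorem~\ref{th:sistla} to this polynomial-size prenex sentence places the decision problem in $1$-\textsc{ExpSpace}, which equals \textsc{ExpSpace}; hence \TELo-satisfiability is in \textsc{ExpSpace}, matching the known \textsc{ExpSpace}-completeness bound.

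The step that requires the most care is the alternation bookkeeping --- specifically, justifying that the only alternation is the single $\exists\forall$ coming from, on the one hand, the implicit existential over the free original atoms needed for satisfiability and, on the other, the universal block obtained from $\neg\exists\vtuple{a'}$, with no further alternation hidden inside $\psi$. This hinges on the fact that both $\varphi$ and $\varphi^*$ are ordinary (quantifier-free) temporal formulas, so that the prenex transformation neither nests quantifiers nor introduces new alternations; the polynomial size bound on $\SM{\varphi}$ then guarantees that the \textsc{ExpSpace} cost is not inflated.
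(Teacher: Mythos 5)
Your proposal is correct and follows essentially the same route as the paper: rewrite $\SM{\varphi}$ into prenex form with a single universal block over the primed atoms (the paper uses the equivalent $\forall\vtuple{a'}(\varphi \wedge (\vtuple{a'}<\vtuple{a}\to\neg\varphi^*))$), reduce satisfiability to that of the existentially closed sentence $\exists\vtuple{a}\,\forall\vtuple{a'}\,\psi$, and apply Theorem~\ref{th:sistla} with alternation depth $1$. Your explicit bookkeeping of the alternation count and the polynomial size of the matrix is a welcome elaboration of what the paper leaves implicit.
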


\begin{proof}
  Note that $\SM{\varphi}$ can be rewritten into
  $\forall\vtuple{a'}\; \left(\varphi \wedge  \left( \vtuple{a'} < \vtuple{a} \rightarrow  \neg \varphi^* \right)\right)$
  by applying the De Morgan laws and pushing $\varphi$, whose variables are free, inside of the scope of the second-order quantifiers.
  Since checking whether the previous formula is satisfiable iff
  $\exists\vtuple{a}\;\forall\vtuple{a'}\; \left(\varphi \wedge  \left( \vtuple{a'} < \vtuple{a} \rightarrow  \neg \varphi^* \right)\right)$
  is satisfiable~\cite{degper21a}, we can apply Theorem~\ref{th:sistla} to obtain such \textsc{ExpSpace}-complete upper bound.
\end{proof}

Note that for finite traces this amounts to a characterization in terms of \emph{weak} \QLTL.

\begin{example}[Example~\ref{ex2} continued]
Consider again formula $\varphi=\alwaysF (\neg b \to a)$ in \eqref{f:ex2.2}.
As mentioned, formula $\vtuple{a} \leq \vtuple{a'}$ amounts to axiom \eqref{f:ax} for all atoms in the signature which, in this case, corresponds to formula \eqref{f:ex2.1}.
Now $\vtuple{a} < \vtuple{a'}$ is stronger, as it requires not only $\H \leq \T$ but also $\H \neq \T$.
In this case, we obtain the expression:
\begin{eqnarray}
\eqref{f:ex2.1} \wedge \big( \ \eventuallyF (\neg a' \wedge a) \vee \eventuallyF (\neg b' \wedge b) \ \big)
\end{eqnarray}
whereas $\varphi^*$ corresponds to \eqref{f:starex}, as illustrated above.
So, as a result, $\SM{\varphi}$ is finally unfolded into:
\begin{eqnarray*}
\SM{\varphi} & = & \SM{\alwaysF (\neg b \to a)} \\
& = & \nec(\neg b \to a ) \wedge \neg \exists\ a' \ b' \ \big(\\
& & \quad \quad \alwaysF (a' \to a) \wedge \alwaysF (b' \to b) \wedge \big( \ \eventuallyF (\neg a' \wedge a) \vee \eventuallyF (\neg b' \wedge b) \ \big) \\
& & \quad \quad \wedge \ \alwaysF (\neg b \to a) \wedge \alwaysF (\neg b \to a') \ \ \big)
\end{eqnarray*}
The \LTL-models of $\alwaysF (\neg b \to a)$ are traces \T\ where each state satisfies $T_i \neq \emptyset$.
Take any \T\ where there is some $T_i$ including atom $b$.
Then, we can extend \T\ to $\T^*$ for signature $\{a,b,a',b'\}$ repeating the truth of $a'$ and $b'$ with respect to $a$ and $b$ in all states, except in state $T^*_i$ where we just leave $b'$ false.
It is not difficult to see that this extended interpretation $\T^*$ satisfies $\vtuple{a} < \vtuple{a'} \wedge \varphi^*$ and so, \T\ cannot be a model of $\SM{\varphi}$.
Therefore, \T\ must make $b$ false at all states and the only remaining model for $\SM{\varphi}$ is the trace \T\ where $T_i=\{a\}$ for all states, which coincides with the only \TS-model of $\varphi$.
\end{example}

\subsection{From \QLTL\ to automata}

Once the \TS-models of $\varphi$ are captured by a quantified temporal formula, there are several possibilities to obtain the corresponding automaton.
For instance, \citeN{cabdem11a} used the following approach\footnote{In fact, \citeN{cabdem11a} never constructed the formula $\SM{\varphi}$ as an intermediate step, but the description we provide here is equivalent.} in the case of \TELo:
First, we build a B\"uchi automaton $\mathfrak{A}_1$ that yields the \LTLo-models of $\varphi$ as usual.
Then, we build a second automaton, $\mathfrak{A}_2$, corresponding to the temporal formula $\vtuple{a} < \vtuple{a'} \wedge \varphi^*$ for the extended signature $\PV^*$.
On this last construction, we perform a filter operation to obtain the new automaton $h(\mathfrak{A}_2)$ that removes the primed atoms from the signature.
In other words, $h(\mathfrak{A}_2)$ captures the models of the quantified temporal formula $\exists \vtuple{a'} (\vtuple{a} < \vtuple{a'} \wedge \varphi^*)$ and, in terms of \THT, this corresponds to the \T\ traces for which there exists some $\H < \T$.
The final step is using the operations of complement and intersection of B\"uchi automata to obtain $\mathfrak{A}_1 \cap \overline{h(\mathfrak{A}_2)}$ that naturally corresponds to the formula \eqref{eq:tsm} of the \SMD\ operator.
This method was implemented later on in the tool \abstem~\cite{cabdie14a} that allows for both computing \TS-models and
checking strong equivalence of temporal formulas in \TELo.

In what follows, we present an analogous method for \TELf\
yet by reducing \SMD\ formulas to automata over finite words.
More precisely, we consider two different types of finite automata $\mathfrak{A}$,
namely, NFAs and AFWs.

A NFA is a structure $\left(\Sigma, Q, Q_0, \delta, F\right)$ where
\begin{enumerate}
\item $\Sigma$ is the alphabet,
\item $Q$ is a set of states,
\item $Q_0\subseteq Q$ is a set of initial states,
\item $\delta: Q \times \Sigma \mapsto 2^Q$ is a transition function and
\item $F\subseteq Q$ is a set of final states.
\end{enumerate}

A run of a NFA for an input word $w=(X_0 \dots X_{n-1})$ of length $n>0$ is a finite sequence of states $q_0,\dots, q_n$ such that $q_0\in Q_0$ and $\delta(q_i,X_i) = q_{i+1}$ for all $0\le i < n$.
The run is said to be \emph{accepting} if $q_n \in F$.
By $\mathcal{L}({\mathfrak{A}})$ we denote the set of runs accepted by $\mathfrak{A}$.
The following theorem shows the relation between \LTLf\ and NFAs.
\begin{theorem}[\citeNP{zhtalipuva17a,cabamumc18a}]\label{th:ltl2nfa}
  A given temporal formula $\varphi$ over \PV\ can be translated into a NFA $\mathfrak{A}_{\varphi}$ such that
  the set of \LTLf-models of $\varphi$ corresponds to $\mathcal{L}(\mathfrak{A}_\varphi)$.\qed
\end{theorem}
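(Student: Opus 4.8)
The plan is to build $\mathfrak{A}_\varphi$ in two stages: first an alternating automaton on finite words (AFW) that accepts exactly the \LTLf-models of $\varphi$, and then the textbook subset construction turning that AFW into an equivalent NFA. Since Theorem~\ref{th:ltl2nfa} concerns classical \LTLf\ rather than \THTf, I may exploit full Boolean duality. First I would eliminate the derived operators and, using the \LTL\ equivalence $\alpha \while \beta \equiv \neg\beta \release \alpha$, the $\while$ connective, leaving only the core operators $\next,\wnext,\until,\release$ and their past analogues $\previous,\wprevious,\since,\trigger$. Then, using the De~Morgan laws \eqref{f:dm1}--\eqref{f:dm6} together with their past-oriented counterparts, I would rewrite $\varphi$ into negation normal form, so that negation occurs only in front of atoms.

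Next I would take the state space of the AFW to be the closure $\subformulas(\varphi)$ over the input alphabet $2^\PV$, and define the transition function $\delta$ by a single one-step unfolding of each connective: $\delta(a,X)=\top$ iff $a\in X$ (dually for $\neg a$); $\delta$ distributes over $\wedge$ and $\vee$; $\delta(\next\psi,X)=\delta(\wnext\psi,X)=\psi$; and for the binary temporal operators I use exactly the equivalences of Proposition~\ref{prop:WRU},
\[
\delta(\alpha\until\beta,X)=\delta(\beta,X)\vee\big(\delta(\alpha,X)\wedge(\alpha\until\beta)\big),\qquad
\delta(\alpha\release\beta,X)=\delta(\beta,X)\wedge\big(\delta(\alpha,X)\vee(\alpha\release\beta)\big).
\]
The entire finite-trace behaviour is then pushed into the acceptance convention at the end of the word: once the last letter $X_{n-1}$ has been consumed, pending $\next$-obligations are declared false and pending $\wnext$-obligations true, so that $\alpha\until\beta$ collapses to $\beta$ and $\alpha\release\beta$ to $\beta$ at the final position, in agreement with clauses (9)--(11) of Definition~\ref{def:dht:satisfaction} and the derived-operator conditions of Proposition~\ref{prop:satisfaction:tel}.

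I would then prove, by induction on the length of the remaining word and, within a fixed position, on the structure of the formula, that the AFW launched in state $\psi$ accepts the suffix $X_k\cdots X_{n-1}$ iff $\T,k\models\psi$ in \LTLf. The base case is the final position $k=n-1$, where the end-of-word convention above must be matched clause by clause against Definition~\ref{def:dht:satisfaction}; the inductive step is immediate from the unfolding equations, which are precisely the transition rules. Applying the subset construction to this AFW yields an NFA $\mathfrak{A}_\varphi$ whose states are subsets of $\subformulas(\varphi)$, with initial state $\{\varphi\}$, transition from $S$ on $X$ to any minimal set $S'$ satisfying $\bigwedge_{q\in S}\delta(q,X)$, and final states those subsets that reduce to $\top$ under the end-of-word convention. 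Language preservation is the standard correctness of that construction, so $\mathcal{L}(\mathfrak{A}_\varphi)$ is exactly the set of \LTLf-models of $\varphi$.

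The hard part will be the past operators $\previous,\wprevious,\since,\trigger$, which do not fit the purely forward AFW because their truth at position $k$ depends on the prefix, not the suffix. The cleanest remedy I would adopt is to augment every automaton state with a record of the truth values at the current position of all past subformulas of $\varphi$; since these obey a deterministic one-step forward recurrence (e.g.\ $\previous\psi$ holds at $k{+}1$ iff $\psi$ holds at $k$, and $\alpha\since\beta$ unfolds as $\beta\vee(\alpha\wedge\previous(\alpha\since\beta))$), they can be updated while scanning the word, keeping the state space finite. Verifying that this bookkeeping meshes correctly with the alternation governing the future operators — especially at the initial and final positions, where $\initially$ and $\finally$ toggle — is the most delicate point and is where I expect to spend most of the effort.
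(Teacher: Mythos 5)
The paper does not actually prove this theorem: it is imported from the cited literature with a \qed{} directly after the statement, so there is no in-paper argument to compare against. Your route --- negation normal form, a one-step-unfolding AFW over the closure of $\varphi$, an end-of-word acceptance convention, and the subset construction --- is exactly the standard construction underlying the cited results, and it is also the machinery the paper itself displays for the AFW case in Theorem~\ref{th:ltl2afw}. Your identification of the past operators as the genuinely nonstandard part is correct: Theorem~\ref{th:ltl2afw} only gives transition rules for the future fragment, whereas ``temporal formula'' in this paper includes $\previous$, $\since$, $\trigger$ and $\while$; eliminating $\while$ via the classical \LTLf{} equivalence $\alpha \while \beta \equiv \neg\beta \release \alpha$ is legitimate here (the theorem is about \LTLf{}, not \THTf), and the deterministic forward bookkeeping of past subformulas is the standard ``the past is determined'' product construction, which composes with the alternating part without difficulty precisely because it is deterministic.

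One point needs more care than your sketch gives it: the accepting condition of the resulting NFA. After reading $X_{n-1}$ the automaton sits in a subset $S$ of formulas that were \emph{deferred to position $n$}, which does not exist. Whether such a deferred obligation is vacuously discharged depends on whether it was deferred through $\next$ (strong, hence violated) or through $\wnext$ (weak, hence satisfied); equivalently, a deferred $\alpha\until\beta$ must be rejected while a deferred $\alpha\release\beta$ must be accepted, even though both are just formulas sitting in $S$. Your phrase ``subsets that reduce to $\top$ under the end-of-word convention'' is correct only if the states record this strong/weak distinction (or, equivalently, if you classify the members of $S$ by their outermost deferred connective). Relatedly, your induction's base case at $k=n-1$ and the acceptance check at state $q_n$ are off by one position; this is harmless but should be stated consistently. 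Finally, the past-operator bookkeeping must be seeded correctly at position $0$ ($\previous\psi$ false, $\alpha\since\beta$ and $\alpha\trigger\beta$ collapsing to $\beta$ there), mirroring the role of $\initially$ in Proposition~\ref{prop:satisfaction:tel}. With these details pinned down, the proof goes through.
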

Note that the alphabet of $\mathfrak{A}_{\varphi}$ consists of $2^{\PV}$,
which amounts to the set of interpretations over \PV.

To give an example of how the corresponding algorithm works, let us consider the formula
\(
\varphi = \alwaysF(\neg a \rightarrow \next a)
\)
in \eqref{f:defnext}.
In \LTLf, $\varphi$ is equivalent to $\alwaysF( a \vee \next a)$, which means that every \LTLf-model of
this formula satisfies $a$ at a state $i$ or $i+1$, for all $0\le i < \lambda$.
The set of \LTLf-models of $\varphi$ is captured by the NFA in Figure~\ref{fig:ltlm}.
We see that every run in which $a$ is false in two consecutive states is disregarded.
\begin{figure}[h!]\centering
	\begin{tikzpicture}[->,>=stealth',minimum width=.05cm,font=\footnotesize, shorten >=0.5pt,auto,node distance=2cm,semithick]
	\node[state](s1) {\centering\scriptsize\textbf{init}};
	\node[state](s2)[right of = s1]{};
	\node[state,accepting](s3) [left of = s1]{};
	\node[state](s4) [right of = s2]{};

	\path[->]
	(s1) edge [] node[] {$\neg a$} (s2)
	(s1) edge [] node[] {$a$} (s3)
	(s3) edge [loop left] node {$a$} (s3)
	(s3) edge [bend left] node[pos=.8] {$\neg a$} (s2)
	(s2) edge [bend left] node[pos=.8] {$a$} (s3)
	(s2) edge [] node[] {$\neg a$} (s4)
	(s4) edge [loop above] node {$a$} (s4)
	(s4) edge [loop right] node {$\neg a$} (s4);
	\end{tikzpicture}
	\caption{NFA accepting the \LTLf-models of $\alwaysF(\neg a \rightarrow \next a)$.}
	\label{fig:ltlm}
\end{figure}

On the other hand,
an AFW is a structure $\left(\Sigma, Q, q_0, \delta, F\right)$ where:
\begin{enumerate}
\item $\Sigma$ is the alphabet,
\item $Q$ is a set of states,
\item $q_0\in Q$ is the initial state,
\item $\delta: Q \times \Sigma \mapsto \mathbb{B}^+(Q)$ is a transition function,
 where $\mathbb{B}^+(Q)$ is built from the elements of $Q$, conjunction, disjunction, $\top$ and $\bot$,
\item $F\subseteq Q$ is a set of final states.
\end{enumerate}

\label{def:AFW} Given an input word $w=(X_0 \dots X_{n-1})$ of length $n>0$, a run of an AFW is a tree labeled by states of the AFW such that
\begin{enumerate}
\item the root is labeled by $q_0$,
\item if a tree node $z$ at level $i$ is labeled by a state $q$ and $\delta(q,X_i) = \Theta$ then
  either $\Theta = \top$ or
  some $P \subseteq Q$ satisfies $\Theta$ and $z$ has a child for each element in $P$, and
\item the run is accepting if all leaves at depth $n$ are labeled by states in $F$.
\end{enumerate}
Thus, a branch in an accepting run has to hit the $\top$ transition or hit an accepting state after reading all the input word $w$.

The correspondence between \LTLf\ and AFW is stated in the following theorem
\begin{theorem}[\citeNP{giavar13a}]\label{th:ltl2afw}
  A temporal formula $\varphi$ can be translated into an AFW
  \(
  \mathfrak{A}_\varphi = (2^{\PV \cup \lbrace \mathit{last} \rbrace} , Q, q_{\varphi}, \delta, \emptyset)
  \),
  where
  \begin{enumerate}
  \item $\mathit{last}$ corresponds a fresh atom indicating the last state of the trace,
  \item $Q$ corresponds to the negation-closed closure of $\varphi$ due to~\citeN{fislad79a},
  \item $q_{\varphi}$ is the initial state, which corresponds to the state labeled with the formula $\varphi$,
\item $\delta: Q \times 2^{\PV\cup \lbrace \mathit{last} \rbrace} \mapsto \mathbb{B}^+(Q)$
    is the transition function defined as follows,

    where $X\subseteq {\PV\cup \lbrace \mathit{last} \rbrace}$ is a state
    \begin{enumerate}
    \item $\delta(q_\top,X) = \top$
    \item $\delta(q_\bot,X) = \bot$
    \item $\delta(q_a, X)= \begin{cases} \top & \hbox{ if } a \in X;\\ \bot & \hbox{ otherwise } \end{cases}$
    \item $\delta(q_{\neg \varphi}, X) = \overline{\delta(q_\varphi,X)}$, where $\overline{\delta(q_\varphi,X)}$ is
      obtained from $\delta(q_\varphi,X)$
      by switching $\wedge$ and $\vee$,
      by switching $\top$ and $\bot$ and, in addition,
      by negating subformulas in $X$.
    \item $\delta(q_{\varphi \wedge \psi}, X)=  \delta(q_\varphi, X) \wedge \delta(q_\psi, X)$
    \item $\delta(q_{\varphi \vee \psi}, X)= \delta(q_\varphi, X) \vee \delta(q_\psi, X)$
    \item $\delta(q_{\varphi \rightarrow \psi}, X)= \delta(q_{\neg \varphi}, X) \vee \delta(q_\psi,X)$
    \item $\delta(q_{\next \varphi}, X)=  \begin{cases}  q_\varphi & \hbox{ if } \mathit{last} \not \in X\\ \bot & \hbox{ otherwise }     \end{cases} $
    \item $\delta(q_{\wnext \varphi}, X)= \begin{cases}  q_\varphi & \hbox{ if } \mathit{last} \not \in X\\ \top & \hbox{ otherwise }    \end{cases} $
\item $\delta(q_{\varphi \until \psi}, X)= \begin{cases} \delta(q_\psi, X) & \hbox{ if } \mathit{last} \in X\\
    												       \delta(q_\psi, X) \vee \left( \delta(q_\varphi,X) \wedge  q_{\varphi \until \psi}\right) & \hbox{ otherwise }
    									      \end{cases}$
    \item $\delta(q_{\varphi \release \psi}, X)= \begin{cases}\delta(q_\psi, X)  & \hbox{ if }  \mathit{last} \in X\\
														     \delta(q_\psi, X) \wedge \left( \delta(q_\varphi,X) \vee q_{\varphi \release \psi}\right)& \hbox{ otherwise }\end{cases}$\qed
\end{enumerate}
  \end{enumerate}
  Moreover, as in the case of NFAs, $\mathcal{L}(\mathfrak{A}_\varphi)$ corresponds to the set of \LTLf-models of $\varphi$.
\end{theorem}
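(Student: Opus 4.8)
The plan is to prove the stated language equality by structural induction on $\varphi$, after strengthening the inductive hypothesis to range over arbitrary suffixes and arbitrary states of the closure. Concretely, fix a finite word $w = X_0 \cdots X_{n-1}$ over $2^{\PV \cup \{\mathit{last}\}}$ that is \emph{well-marked}, meaning $\mathit{last} \in X_i$ iff $i = n-1$, and let $\T$ be its projection onto $\PV$, so that $|\T| = n$. I would prove: for every subformula $\psi$ in the negation-closed closure and every position $\rangeo{i}{0}{n}$, the AFW launched from state $q_\psi$ has an accepting run on the suffix $X_i \cdots X_{n-1}$ iff $\T, i \models \psi$ in \LTLf. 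The theorem is the instance $\psi = \varphi$, $i = 0$. Restricting to well-marked words is harmless, since one only intersects with a trivial gadget enforcing the marking of $\mathit{last}$.

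The engine of the induction is a one-step correspondence between the transition function and the satisfaction clauses. First I would settle the base cases: $\delta(q_\top, X) = \top$ and $\delta(q_\bot, X) = \bot$ match $\top$ and $\bot$, while $\delta(q_a, X)$ evaluates to $\top$ or $\bot$ according to whether $a \in X$, i.e.\ $a \in T_i$, matching clause~2 of Definition~\ref{def:dht:satisfaction} on total traces. For $\wedge$ and $\vee$ the transitions are the meet and join of the children's transitions, and here I would invoke the defining property of alternating runs: a node discharges a conjunctive obligation iff it spawns children meeting both conjuncts, and dually for disjunction, so acceptance tracks the Boolean semantics verbatim. Negation is the first delicate point: the clause uses the dual transition $\overline{\delta(q_\psi,X)}$, so I would establish a complementation lemma stating that on one and the same word the run from $q_{\neg\psi}$ is accepting iff the run from $q_\psi$ is not. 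This is the word-level instance of AFW complementation by dualization, and it lets $\neg$ pass through the induction cleanly.

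The temporal operators are the crux. For $\next$ and $\wnext$ the $\mathit{last}$ flag does the work: when $\mathit{last} \in X_i$ (so $i = n-1$ is the final state) the transition of $q_{\next \psi}$ is $\bot$, matching that $\next\psi$ is false at the last state, while $q_{\wnext \psi}$ transitions to $\top$, matching that $\wnext\psi$ holds there; otherwise both hand control to $q_\psi$ at position $i+1$, where the outer hypothesis applies. For $\until$ and $\release$ I would run a nested induction on the suffix length $n - i$, using that the transition of $q_{\varphi \until \psi}$ unfolds exactly as the \LTLf\ equivalence $\varphi \until \psi \equiv \psi \vee (\varphi \wedge \next(\varphi \until \psi))$ of Proposition~\ref{prop:WRU}, collapsing to $\delta(q_\psi,X)$ alone at the final state.

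The hard part will be justifying termination of these self-referential states against the empty-acceptance convention. Since the set of final states is empty, an accepting run must drive \emph{every} branch to a $\top$-leaf before the word is exhausted, so the looping state $q_{\varphi \until \psi}$ can never survive past position $n-1$. This is precisely what forces an \emph{until} to exhibit a witness $\rangec{j}{i}{n-1}$ with $\T, j \models \psi$ and $\T, \ell \models \varphi$ for all $\rangeo{\ell}{i}{j}$, and symmetrically forbids a \emph{release} branch from being discharged vacuously by an endless loop. Making this well-foundedness argument airtight, jointly with the negation/dualization lemma, is where the real work lies.
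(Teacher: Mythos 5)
The paper never proves this theorem: it is imported from De~Giacomo and Vardi (2013) with the end-of-proof marker attached directly to the statement, so there is no in-paper argument to set yours against. Your sketch is the standard correctness proof for this construction and its architecture is sound: strengthen the induction hypothesis to all states of the negation-closed closure and all suffix positions of a well-marked word, establish a one-step correspondence between $\delta$ and the \LTLf\ satisfaction clauses, treat $\neg$ by a dualization lemma proved simultaneously with the main claim, and unfold $\until$ and $\release$ via their fixpoint identities (Proposition~\ref{prop:WRU}). Two points need to be made sharper before the outline closes into a proof. First, dualizing an AFW ordinarily requires complementing the accepting set as well as the transitions; keeping $F=\emptyset$ for both a state and its dual is sound here only because, on a well-marked word, every transition taken at the last letter reduces to a Boolean combination of $\top$ and $\bot$ with no state occurrences (the $\mathit{last}\in X$ cases recursively eliminate them), so no run ever carries a leaf to depth $n$ and the choice of $F$ is immaterial --- this is exactly what licenses your complementation lemma and should be said explicitly. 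Second, you credit $F=\emptyset$ with terminating the self-referential states, but the mechanism actually doing the work is the $\mathit{last}\in X$ branch of $\delta$, which severs the self-loop at position $n-1$ and collapses $q_{\varphi\until\psi}$ and $q_{\varphi\release\psi}$ to $\delta(q_\psi,X_{n-1})$; in particular, on finite traces \emph{release} is not threatened by an ``endless loop'' to be forbidden --- it is correctly discharged by checking $\psi$ at the final position, matching clause~11 of Definition~\ref{def:dht:satisfaction} on total traces, with $F=\emptyset$ serving only as a backstop. With these clarifications the argument is complete for the future fragment covered by the stated transition function.
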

As above,
the formula $\alwaysF(\neg a \rightarrow \next a) \equiv \alwaysF\left( a \vee \next a\right)$ can be translated into the AFW of Figure~\ref{fig:ltlafw}.
Here, the alternating automaton is represented as a NFA extended with a new node type, labeled as ``$\forall$'' and splitting some incoming edge into a (possibly empty) set of outgoing, unlabeled edges.
As in the NFA, if multiple outgoing edges from a state share the same label $X$, they are understood as an
\emph{existential} constraint on paths (a run must follow at least one of them).
As also happens in a NFA, when there is no outgoing edge for some label $X$, it is understood as a transition to $\bot$ (the ``empty'' existential constraint), i.e., no accepting path can be built using $X$.
As it can be imagined, the new $\forall$-nodes represent the dual, \emph{universal} constraints on paths: a run reaching $\forall$ must follow \emph{all} the outgoing edges.
Analogously, when the $\forall$-node has no outgoing edges (the ``empty'' universal constraint $\top$) the run is trivially accepted.

\begin{figure}[h!]\centering
	\begin{tikzpicture}[->,>=stealth',minimum width=width=.05cm,font=\footnotesize, shorten >=0.5pt,auto,node distance=2.3cm,semithick]
	\node[state](a0) {${\state{\alwaysF(a \vee  \circ a)}}$};
	\node[] (a1) [right of = a0] {$\forall$};
	\node[state] (a2) [right of = a1] {$\state{a}$};
	\node[] (a3) [above of = a1] {$\forall$};
	\node[] (a7) [right of = a2] {$\forall$};

\path[->]
	(a0) edge [loop above] node[above] {$a \wedge \neg \mathit{last}$} (a0)
	(a0) edge  node[above] {$\neg a \wedge \neg \mathit{last}$} (a1)
	(a1) edge [] node {} (a2)
	(a1) edge [bend left] node {} (a0)
	(a2) edge [] node[] {$a$} (a7)
(a0) edge [] node[right] {$a \wedge \mathit{last}$} (a3);

\end{tikzpicture}
	\caption{AFW accepting the \LTLf-models of $\alwaysF(a \vee \next a)$.}
	\label{fig:ltlafw}
\end{figure}

Figure~\ref{fig:afwrun} shows an accepting run of the AFW of Figure~\ref{fig:ltlafw} with the input $\emptyset \cdot \lbrace a \rbrace \cdot \lbrace a, \mathit{last}\rbrace$.
For sake of clarity, we have labeled each tree edge with the input symbol $X_i$ that fires the transition and, when applicable, we also show the $\forall$ constraints that are reached along the run.
As we can see, since $X_0=\emptyset$ satisfies $\neg a \wedge \neg \mathit{last}$, it splits the run into two branches, but both of them eventually reach the empty $\forall$ (i.e.\ the accepting $\top$ transition).

\begin{figure}[h!]\centering
	\begin{tikzpicture}[->,>=stealth',minimum width=width=.05cm,font=\footnotesize, shorten >=0.5pt,auto,node distance=2.3cm,semithick]
	\node[](a0) {$\state{\alwaysF(a \vee  \circ a)}$};
	\node[](a1) [right of = a0] {$\forall$};
	\node[](a2) [right of = a1] {$\state{\alwaysF(a \vee  \circ a)}$};
	\node[](a3) [below of = a2] {$\state{a}$};
	\node[](a4) [right of = a2] {$\state{\alwaysF(a \vee  \circ a)}$};
	\node[](a5) [right of = a4] {$\forall \checkmark$};
	\node[](a6) [below of = a4] {$\forall \checkmark$};
	\path[->]
	(a0) edge  node[above] {$\emptyset$} (a1)
	(a1) edge [] node {} (a2)
	(a1) edge [] node {} (a3)
	(a2) edge [] node[above] {$\{a\}$} (a4)
	(a4) edge [] node[above] {$\{a,\mathit{last}\}$} (a5)
	(a3) edge [] node[above] {$\{a\}$} (a6)
	;
	\end{tikzpicture}
\caption{A branching run of the AFW of Figure~\ref{fig:ltlafw} accepting the input $\emptyset \cdot \lbrace a \rbrace \cdot \lbrace a, \mathit{last}\rbrace$.}
\label{fig:afwrun}
\end{figure}

To illustrate a non-accepting run, suppose we take the input $\emptyset \cdot \lbrace a \rbrace \cdot \lbrace \mathit{last}\rbrace$ instead, so that the last interpretation $X_2$ does not satisfy atom $a$ now.
Figure~\ref{fig:afwrun2} shows the corresponding run obtained from the AFW in Figure~\ref{fig:ltlafw}.
As before, $X_0=\emptyset$ splits the run but, this time, the path followed by the top branch is not accepted since no
outgoing edge from $\state{\alwaysF(a \vee  \circ a)}$ has a formula satisfied by $X_2=\{\mathit{last}\}$.
This implicitly means $\delta(\state{\alwaysF(a \vee  \circ a)},\{\mathit{last}\})=\bot$.

\begin{figure}[h!]\centering
	\begin{tikzpicture}[->,>=stealth',minimum width=width=.05cm,font=\footnotesize, shorten >=0.5pt,auto,node distance=2.3cm,semithick]
	\node[](a0) {$\state{\alwaysF(a \vee  \circ a)}$};
	\node[](a1) [right of = a0] {$\forall$};
	\node[](a2) [right of = a1] {$\state{\alwaysF(a \vee  \circ a)}$};
	\node[](a3) [below of = a2] {$\state{a}$};
	\node[](a4) [right of = a2] {$\state{\alwaysF(a \vee  \circ a)}$};
	\node[](a5) [right of = a4] {$\bot$};
	\node[](a6) [below of = a4] {$\forall \checkmark$};
	\path[->]
	(a0) edge  node[above] {$\emptyset$} (a1)
	(a1) edge [] node {} (a2)
	(a1) edge [] node {} (a3)
	(a2) edge [] node[above] {$\{a\}$} (a4)
	(a4) edge [] node[above] {$\{\mathit{last}\}$} (a5)
	(a3) edge [] node[above] {$\{a\}$} (a6)
	;
	\end{tikzpicture}
	\caption{A run of the AFW of Figure~\ref{fig:ltlafw} rejecting the input $\emptyset \cdot \lbrace a \rbrace \cdot \lbrace \mathit{last}\rbrace$.}
	\label{fig:afwrun2}
\end{figure}

The rest of this section is devoted to show how Theorem~\ref{th:ltl2nfa} and~\ref{th:ltl2afw} can be used to obtain the
following result.
\begin{theorem}
A temporal formula $\varphi$ can be translated into an AFW/NFA $\mathfrak{A}_\varphi$ such that $\mathcal{L}(\mathfrak{A}_\varphi)$ corresponds to the set of \TELf-models of $\varphi$.
\end{theorem}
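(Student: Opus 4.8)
The plan is to realize the temporal version of the $\SMD$ operator, $\SM{\varphi} = \varphi \wedge \neg \exists \vtuple{a'}\,(\vtuple{a'} < \vtuple{a} \wedge \varphi^*)$ from~\eqref{eq:tsm}, as a Boolean combination and projection of finite-word automata, mirroring the \TELo\ construction of~\citeN{cabdem11a} and \abstem~\cite{cabdie14a} but replacing B\"uchi automata by NFAs or AFWs. By the $\SMD$-correspondence theorem above, a trace \T\ is a \TELf-model of $\varphi$ iff it is a \QLTL-model of $\SM{\varphi}$; hence it suffices to build $\mathfrak{A}_\varphi$ recognizing the \QLTL-models of $\SM{\varphi}$ over finite traces. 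The two quantifier-free ingredients are $\varphi$ itself (over $2^{\PV}$) and the extended-signature formula $\vtuple{a'} < \vtuple{a} \wedge \varphi^*$ (over $2^{\PV^*}$), where $\varphi^*$ is the \THT-to-\LTL\ translation of Section~\ref{sec:tht2ltl} and $\vtuple{a'}<\vtuple{a}$ is the proper-ordering formula of Proposition~\ref{prop:primedatoms}. Both are ordinary \LTLf\ formulas, so Theorems~\ref{th:ltl2nfa} and~\ref{th:ltl2afw} translate each into an NFA or an AFW.

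First I would build $\mathfrak{A}_1 = \mathfrak{A}_\varphi$ for the \LTLf-models of $\varphi$ and $\mathfrak{A}_2$ for the \LTLf-models of $\vtuple{a'} < \vtuple{a} \wedge \varphi^*$ over the signature $\PV^*$. Next I would eliminate the existential block $\exists \vtuple{a'}$ by a projection (``filter'') $h$ that erases the primed atoms from the alphabet, obtaining an automaton $h(\mathfrak{A}_2)$ over $2^{\PV}$; by Proposition~\ref{prop:primedatoms} and Theorem~\ref{th:trans}, its language is exactly the set of traces \T\ for which some $\H < \T$ yields a \THTf-model $\tuple{\H,\T}$ of $\varphi$. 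Complementing it gives $\overline{h(\mathfrak{A}_2)}$ for $\neg \exists \vtuple{a'}(\vtuple{a'} < \vtuple{a} \wedge \varphi^*)$, and intersecting with $\mathfrak{A}_1$ yields $\mathfrak{A}_\varphi = \mathfrak{A}_1 \cap \overline{h(\mathfrak{A}_2)}$, whose language is the set of \TELf-models of $\varphi$. Correctness then follows by tracking each automaton operation against the matching connective in~\eqref{eq:tsm} and invoking the finite-trace soundness of $\varphi^*$ together with the $\SMD$ characterization.

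The delicate point is the projection step, and it is where the NFA and AFW routes diverge. NFAs are closed under projection (hide the primed coordinate of each input letter), under complement (determinize, then swap accepting and non-accepting states) and under intersection (product), all effectively on finite words, so the NFA construction goes through directly, at the cost of a determinization blow-up at the complementation. AFWs, by contrast, complement for free by dualizing the transition function (swap $\wedge/\vee$ and $\top/\bot$ and negate, as in case~(d) of Theorem~\ref{th:ltl2afw}) and intersect cheaply, but they are \emph{not} closed under existential projection: the guessing required by $\exists \vtuple{a'}$ clashes with universal branching. Hence in the AFW version the projection forces a de-alternation of $\mathfrak{A}_2$ into an NFA before erasing the primed atoms, after which one complements and realizes the intersection; I would therefore keep alternation as long as possible and perform the projection last.

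The main obstacle is thus this interplay between the second-order existential quantifier and the automaton model: ensuring that $h$, complementation and intersection are each carried out on the correct automaton class and in an order that preserves the finite-trace semantics — in particular that the finite-trace readings of $\next$, $\wnext$ and the $\mathit{last}$ marker of Theorem~\ref{th:ltl2afw} are respected through the product and the projection. Since $\SM{\varphi}$ contains only a single quantifier-alternation block, the construction remains effective and its size stays within the bounds predicted by Theorem~\ref{th:sistla} for that alternation depth.
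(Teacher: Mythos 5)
Your proposal follows essentially the same route as the paper: it realizes $\SM{\varphi}$ from~\eqref{eq:tsm} as the composition $\mathfrak{A}_1 \cap \overline{h(\mathfrak{A}_2)}$ of the automata for $\varphi$ and for $\vtuple{a'} < \vtuple{a} \wedge \varphi^*$, using Theorems~\ref{th:ltl2nfa} and~\ref{th:ltl2afw} for the base cases and projection, complementation and intersection for the quantifier, negation and conjunction, exactly as in the paper's decomposition~\eqref{eq:automata}. Your additional remark that AFWs are not closed under projection and therefore require de-alternation before erasing the primed atoms is a correct refinement of a point the paper leaves implicit by deferring to the compositional \QLTL/\WMSO-to-NFA/AFW constructions it cites.
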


Our construction relies on several intermediate automata transformations
corresponding to the ones by~\citeN{cabdem11a}.
Each of them accepts the model of one specific part of the \SMD\ expression.
The involved automata and their correspondence with \SM{\varphi} are depicted in~\eqref{eq:automata}.
\begin{equation}
  \SM{\varphi} = \underbrace{\underbrace{\varphi}_{\mathfrak{A}_1} \wedge \underbrace{\neg \underbrace{\exists \vtuple{a'}\; \underbrace{\vtuple{a'} < \vtuple{a} \wedge (\varphi)^*}_{\mathfrak{A}_2}}_{h(\mathfrak{A}_2)}}_{\overline{h(\mathfrak{A}_2)}}}_{ \mathfrak{A}_1 \cap \overline{h({\mathfrak{A}_2})}}.\label{eq:automata}
\end{equation}
The computation of the final automaton $\mathfrak{A}_1 \cap \overline{h({\mathfrak{A}_2})}$
accepting the \TELf-models of $\varphi$  is done in a compositional manner,
analogous to the translation of \QLTL\ or {\em Weak Monadic Second Order Theory of one Successor} into
NFA/AFW~\cite{bomapi15a,hejejoklparasa95a}.
The two initial automata $\mathfrak{A}_1$ and $\mathfrak{A}_2$ are computed by using Lemma~\ref{th:ltl2afw} or~\ref{th:ltl2nfa},
depending on whether we are interested in computing an NFA or an AFW.
Note that the formulas recognized by $\mathfrak{A}_1$ and $\mathfrak{A}_2$ are temporal formulas,
which justifies the application of these lemmas.
$h(\mathfrak{A}_2)$ is obtained from $\mathfrak{A}_2$ by \emph{projecting} the atoms of the type $a'$. In language-theoretic terms, a \emph{projection operation}  $h(\mathcal{L}(\mathfrak{A}))$ is defined as follows.
\begin{displaymath}
h(\mathcal{L}(\mathfrak{A})) \eqdef \lbrace w \mid \exists\;  w' \hbox{ s.t. } w \hbox{ is identical to $w'$ except for all atoms of the type } a'\rbrace.
\end{displaymath}
Since $h(\mathcal{L}(\mathfrak{A})) = \mathcal{L}(h(\mathfrak{A}))$~\cite{cabdem11a},
the projection can be applied directly over ${\mathfrak{A}_2}$ and, broadly speaking,
the resulting automaton accepts all \LTLf-models $\T$ for which there exists a trace $\H$ satisfying $\H < \T$.
$\overline{h(\mathfrak{A}_2)}$ is obtained by complementing $h(\mathfrak{A}_2)$;
it accepts all traces \T\ that are either
no \LTLf-models of $\varphi$ or
for which exists no trace \H\ satisfying $\H < \T$.
Automata complementation is the usual way to obtain automata accepting the \LTLf\ models of negated formulas.
Finally, $\mathfrak{A}_1 \cap \overline{h(\mathfrak{A}_2)}$ is obtained by intersecting $\mathfrak{A}_1$ and $\overline{h(\mathfrak{A}_2)}$.
This operation is equivalent to selecting all traces \T\ being
\LTLf-models of $\varphi$
for which there is no trace $\H < \T$ satisfying $\tuple{\H,\T}, 0 \models \varphi$.
In this case, $\tuple{\T,\T}$ is an equilibrium model of $\varphi$.

If we analyze
\(
\varphi = \alwaysF(\neg a \rightarrow \next a)
\)
in \TELf,
$a$ is false at the initial state since it cannot be provable.
Since $\neg a \rightarrow \next a$ is satisfied in the initial state, $a$ is true at time point $1$,
so the rule $\neg a\rightarrow \next a$ cannot be applied to prove $a$ at time point $2$ and the cycle starts again.
Hence, \TELf-models of $\varphi$ are of the form $\emptyset\cdot \lbrace a\rbrace\cdot \emptyset\cdot \lbrace a \rbrace \cdots$.
They are captured by the NFA in Figure~\ref{fig:telfm}, whose accepting runs correspond to sequences of this type.
\begin{figure}[h!]\centering
\begin{tikzpicture}[->,>=stealth',minimum width=.05cm,font=\footnotesize, shorten >=0.5pt,auto,node distance=2cm,semithick]
\node[state](s0)[]{\centering\textbf{init}};
\node[state](s1)[right of=s0] {};
\node[state,accepting](s4)[right of = s1]{};
\node[state](s3)[right of = s4]{};
\path[->]
(s0) edge [] node {$\neg a$} (s1)
(s0) edge [bend right] node {$a$} (s3)
(s2) edge [bend right] node {$a$} (s4)
(s4) edge [bend right] node {$\neg a$} (s2)
(s4) edge [] node {$a$} (s3)
(s2) edge [bend left] node {$\neg a$} (s3)
(s3) edge [loop right] node {$a$} (s3)
(s3) edge [loop above] node {$\neg a$} (s3)
;
\end{tikzpicture}
\caption{NFA accepting the \TELf-models of $\alwaysF(\neg a \rightarrow \next a)$.}
\label{fig:telfm}
\end{figure}
 \section{(Modal) Temporal Logic Programs}\label{sec:normal_forms}

In the previous section,
we have seen a first method to compute \TS-models and check \TEL-satisfiability, relying first on a quantified \LTL\ expression, and then, deriving the construction of different types of automata: B\"uchi for \TELo\ and NFA or AFW for \TELf.
Automata-based techniques are interesting for the analysis of \emph{the set of traces} (\TS-models) induced by a given temporal formula.
This is useful, for instance, to check several types of equivalence between two different temporal representations, or
to check the existence of a solution for a given planning problem.
However, in many practical applications, rather than exploring all possibilities, our interest is more focused on the generation of one, or a few \TS-models that encode solutions to a given temporal problem.
This is, in fact, the usual model-based problem solving orientation of ASP,
where stable models encode solutions to a particular problem, encoded by the rules in the logic program.
In this way, if we want to solve a temporal diagnosis scenario,
we are interested in finding at least one \TS-model that explains the observations.
Similarly, for a given planning problem, we look for some plan that reaches the goal in a finite number of steps.

In this section, we show how the generation of \TS-models can be done using ASP technology by considering temporal theories with a syntax closer to logic programming.
As we see next, both \TELo\ and \TELf\ can be reduced to a normal form that we call \emph{(modal) temporal logic programs}.
This normal form is useful for ASP-based computation.
For the finite case, there are two translations of this normal form into ASP.
One allows any temporal formula and provides a translation for a given trace length.
The other translation imposes some restrictions on the formula but allows for incremental solving.
There are tools supporting these translations, most notably \telingo~\cite{cakamosc19a} for the incremental approach.

We provide a generic normal form for temporal programs and a translation for temporal formulas into this normal form. Both the finite and the infinite case are special cases of this normal form.
\begin{definition}[Temporal literal, rule, and program]\label{def:temporal:rule}
Given alphabet \PV, we define the set of \emph{temporal literals} as
$\{a, \neg a, \previous a, \neg \previous  a \mid a \in \PV\}$.

A \emph{temporal rule} is either:
\begin{itemize}
\item an \emph{initial rule} of the form \ $\Bd\to\Hd$
\item a  \emph{dynamic rule} of the form \ $\wnext\,\alwaysF (\Bd\to \Hd)$
\item a  \emph{fulfillment rule} of the form \ $\alwaysF(\,\alwaysF \, p \to q \,)$ or $\alwaysF(\, p $ $\to \eventuallyF \, q \,)$
\item a  \emph{final rule} of the form \ $\alwaysF(\,\finally \to (\Bd \to \Hd )\,)$
\end{itemize}
where $\Bd=b_1 \wedge \dots \wedge b_n$ with $n\geq 0$, $\Hd=a_1 \vee \dots \vee a_m$
with $m \geq 0$ and the $b_i$ and $a_j$ are temporal literals for dynamic rules and
regular literals $\{a, \neg a \mid a \in \PV\}$ for initial and final rules,
and $p$ and $q$ are atoms.
We call \emph{temporal logic program} to a set of temporal rules.
\end{definition}
We normally allow some straightforward extensions of this form. For instance, an \emph{always}-rule has the form $\alwaysF (B \to A)$ and stands for the conjunction $(B \to A) \wedge \wnext \alwaysF( B \to A)$ of an initial and a dynamic rule, respectively (therefore, $B$ and $A$ only contain regular literals).
Moreover, an \emph{extended dynamic rule} allows body literals $b_i$ to be the atomic formulas $\initially$ or $\finally$ (since they can be encoded as auxiliary atoms). When we allow this last extension, final rules can be reduced to constraints (empty head) of the form $\alwaysF (\finally \to (B \to \bot))$.

\begin{theorem}[Normal form;~\citeNP{agcadipevi13a,cakascsc18a}]\label{thm:normalform}
Every temporal formula $\varphi$ can be converted into
a temporal program being \THTf-equivalent to $\varphi$
and into one being \THTo-equivalent to $\varphi$, in both cases, modulo auxiliary atoms.
\end{theorem}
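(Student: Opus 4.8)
The plan is to give a structure-preserving (Tseitin-style) translation that proceeds by induction on the subformula structure of $\varphi$. For each non-atomic subformula occurrence $\psi$ I introduce a fresh auxiliary atom $\Lab{\psi}$, replace $\psi$ by $\Lab{\psi}$ in its context, and add a block of rules whose purpose is to force, in \emph{every} \HT-trace, the three-valued value $\trival{k}{\Lab{\psi}}$ to coincide with $\trival{k}{\psi}$ at all time points. Since the target relation is \THT-equivalence (rather than \TEL-equivalence), I reason throughout in the monotonic logic \THT, and the two engines are the rule of substitution of equivalents together with the inductive unfoldings of Proposition~\ref{prop:WRU}. The phrase ``modulo auxiliary atoms'' means I only claim conservativity: the \THT-models of the resulting program, restricted to $\PV$, coincide with those of $\varphi$. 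The very same labelling scheme yields both a \THTf- and a \THTo-equivalent program; the two variants differ only in how the future eventualities are anchored, at the final state for \THTf\ and via an induction principle for \THTo.

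First I would normalise the Boolean structure: using the De~Morgan laws \eqref{f:dm1}--\eqref{f:dm6} and the $\next/\wnext$ distribution laws, I push negations down to the level of temporal literals and split conjunctive/disjunctive obligations into clauses, exactly as in the ordinary clausification of logic programs. The past operators are the easy case: from $\alpha \since \beta \equiv \beta \vee (\alpha \wedge \previous(\alpha\since\beta))$ and the dual for $\trigger$, I define $\Lab{\alpha\since\beta}$ by an initial rule fixing its value at state $0$ (where $\previous$ is false) and a dynamic rule whose body uses the literal $\previous \Lab{\alpha\since\beta}$; because these operators only look backward, the recursion is inductive on time and the value is pinned uniquely. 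A body occurrence of $\next$ is then removed by the reindexing built into the normal form: a dynamic rule $\wnext\alwaysF(\Bd\to\Hd)$ is evaluated at every state $k\ge 1$, so a successor constraint relating $u$ at $k$ to $u$ at $k+1$ is rephrased, at state $k+1$, using only $\previous$-literals — which is precisely why the grammar of bodies needs $\previous$ but not $\next$.

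The genuinely delicate case is the future eventualities. For $\alpha\until\beta$ I introduce $u=\Lab{\alpha\until\beta}$, clausify and reindex the unfolding $u \equiv \beta \vee (\alpha \wedge \next u)$ of Proposition~\ref{prop:WRU} into an initial and a dynamic rule, and then add the fulfillment rule $\alwaysF(u \to \eventuallyF \Lab{\beta})$. The key point is that the bi-implication alone does \emph{not} determine $u$ in \THT: its fixpoint equation admits both the least solution (true until) and larger greatest-fixpoint solutions (the weak-until/$\while$ behaviour), so the eventuality fulfillment rule is exactly what discards the spurious ones. Dually, $\release$ is the greatest-fixpoint operator captured by the unfolding \eqref{f:indrelease} together with a fulfillment rule of the other admissible shape $\alwaysF(\alwaysF p \to q)$, while the new operator $\while$ is handled through its unfolding $\alpha\while\beta \equiv \alpha \wedge (\beta \to \next(\alpha\while\beta))$ and, more explicitly, the infinitary rule expansion of Proposition~\ref{prop:unfold}.

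The main obstacle, and the heart of the correctness argument, is verifying that each such block genuinely pins $\trival{k}{\Lab{\psi}}=\trival{k}{\psi}$ in \emph{both} the ``here'' and ``there'' dimensions of every \HT-trace — that is, true \THT-equivalence rather than mere \LTL-equivalence. I would carry this out by induction on formula structure using the three-valued characterisation of Section~\ref{sec:3val}: the $\min/\max/\mathrm{imp}$ recurrences for $\until$, $\release$ and $\while$ expose their fixpoint nature, so checking that each ``unfolding plus fulfillment'' block has a unique three-valued solution is a finite calculation at each step, closed under substitution of equivalents. The arguments diverge exactly at the eventualities: for \THTf\ the recurrence bottoms out at the final state, so the fulfillment is discharged by $\finally$ through a final rule, whereas for \THTo\ termination has to be justified through the induction principle of the \THTo\ axiomatisation (axioms~\ref{ax:ind:1}--\ref{ax:ind:2}). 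I expect the most care to be needed precisely there, in showing that the fulfillment rules exclude the unintended greatest- or least-fixpoint solutions and thereby recover $\alpha\until\beta$ and $\alpha\release\beta$ on their nose.
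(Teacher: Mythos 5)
Your construction is, in essence, the paper's own translation $\sigma$ from Section~\ref{sec:normal_forms}: one label per subformula, the $\previous$-reindexed unfoldings of Proposition~\ref{prop:WRU} turned into initial and dynamic rules, and fulfillment rules to cut the resulting fixpoint equations down to the intended solution, with correctness checked through the three-valued characterisation. So the overall route is the right one and matches the paper.

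There is, however, one concrete under-specification that would break \THTf-equivalence as written. For $u=\Lab{\alpha\until\beta}$ you add only the fulfillment rule $\alwaysF(u\to\eventuallyF\Lab{\beta})$. But the dynamic rule $\wnext\alwaysF\bigl(\previous u\leftrightarrow \previous\Lab{\beta}\vee(\previous\Lab{\alpha}\wedge u)\bigr)$ only pins the value of $u$ at states $0,\dotsc,\lambda-2$; at the final state $u$ occurs solely on the right-hand side of the biimplication, so it is bounded from above by your fulfillment rule (where $\eventuallyF\Lab{\beta}$ collapses to $\Lab{\beta}$) but not from below. A trace may then set $u$ false at $\lambda-1$ even though $\beta$ holds there, which propagates backwards and makes the label under-approximate $\alpha\until\beta$ everywhere, so genuine models of $\varphi$ are lost after asserting $\Lab{\varphi}$. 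This is exactly why Table~\ref{tab:translation:2} carries the \emph{converse} fulfillment rules $\alwaysF(\Lab{\beta}\to\eventuallyF u)$ for $\until$ and $\alwaysF(\alwaysF u\to\Lab{\beta})$ for $\release$ --- the rules the paper notes are superfluous for infinite traces but must be kept (or replaced by final rules) in the finite case, cf.\ \eqref{tab:example:translation:09} and \eqref{tab:example:translation:22}. Your remark that the fulfillment ``is discharged by $\finally$ through a final rule'' gestures at this but never supplies the missing rule. A second, smaller point: the infinitary expansion of Proposition~\ref{prop:unfold} is not itself a temporal program, so $\while$ needs its own finite definition block (the unfolding $\alpha\wedge(\beta\to\next(\alpha\while\beta))$ plus a greatest-fixpoint-style fulfillment rule); you should state it explicitly, though in fairness the paper's tables omit $\while$ as well.
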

Moreover, the reduction by~\citeN{cakascsc18a} further guarantees the following property.
\begin{corollary}\label{cor:finalrules}
For finite traces $\lambda \in \Nat$, every temporal formula can be converted into a \THTf-equivalent temporal logic program $P$ consisting of initial, dynamic, and final rules only.
\end{corollary}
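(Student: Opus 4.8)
The plan is to start from Theorem~\ref{thm:normalform}, which already yields a \THTf-equivalent temporal program built from initial, dynamic, fulfillment, and final rules; the only thing left to show is that, over finite traces, the two fulfillment schemes $\alwaysF(\alwaysF p \to q)$ and $\alwaysF(p \to \eventuallyF q)$ can be dispensed with. I would do this by eliminating the unbounded operators $\eventuallyF$ and $\alwaysF$ in favour of fresh auxiliary atoms whose meaning is fixed by \emph{bounded} recursions that are grounded at the final state. Concretely, I introduce an atom $q_e$ intended to denote $\eventuallyF q$ and an atom $p_a$ intended to denote $\alwaysF p$, and then replace each fulfillment rule by the plain always-rule $\alwaysF(p \to q_e)$ (resp.\ $\alwaysF(p_a \to q)$) together with a block of rules defining the auxiliary atom.

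The defining blocks come from the unfolding equivalences of Proposition~\ref{prop:WRU}, namely $\eventuallyF q \equiv q \vee \next \eventuallyF q$ and $\alwaysF p \equiv p \wedge \wnext \alwaysF p$, read as the biconditionals $\alwaysF(q_e \leftrightarrow q \vee \next q_e)$ and $\alwaysF(p_a \leftrightarrow p \wedge \wnext p_a)$. The crucial finite-trace observation is that $\next q_e$ evaluates to false and $\wnext p_a$ to true at the final state, so each biconditional reduces there to $q_e \leftrightarrow q$ and $p_a \leftrightarrow p$, from which the recursion is determined \emph{backwards} and has a unique solution. I then decompose each biconditional into one-directional implications and shift every reference to the successor state into a reference to the predecessor state, using that temporal literals admit $\previous a$ and $\neg\previous a$ in both bodies and heads. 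For instance, $\alwaysF(\next q_e \to q_e)$ becomes the dynamic rule $\wnext\alwaysF(q_e \to \previous q_e)$ and the non-final part of $\alwaysF(q_e \to q \vee \next q_e)$ becomes $\wnext\alwaysF(\previous q_e \to \previous q \vee q_e)$, while its final-state instance is captured by the final rule $\alwaysF(\finally \to (q_e \to q))$; the block for $p_a$ is built dually. Each resulting rule is an initial, dynamic, or final rule (always-rules being split into their initial and dynamic parts), and the fulfillment rules have disappeared.

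To justify correctness I would argue, through the three-valued characterization of Section~\ref{sec:3val}, that the added block is a \emph{conservative definition} of the auxiliary atom: since $\mathrm{imp}(x,y)=2$ exactly when $x\le y$, the biconditional forces $\trival{k}{q_e}=\trival{k}{q\vee\next q_e}$ at every $k$, and the finite boundary condition makes the only solution $\trival{k}{q_e}=\max\{\trival{i}{q}\mid k\le i<\lambda\}=\trival{k}{\eventuallyF q}$ (dually $\trival{k}{p_a}=\min\{\trival{i}{p}\mid k\le i<\lambda\}=\trival{k}{\alwaysF p}$). Hence every \THTf-model of the original program extends uniquely to a model of the transformed program and vice versa after forgetting the auxiliary atoms, which is precisely \THTf-equivalence modulo auxiliary atoms.

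The main obstacle I anticipate is not the bookkeeping of the rule shapes but verifying that the index shift from $\next$/$\wnext$ to $\previous$/$\wprevious$ preserves the full \emph{three-valued} (here-and-there) semantics at the boundary states, rather than only the classical \LTLf\ reading; this is where finiteness is essential, because the same construction fails for \THTo\ exactly when the backward recursion has no grounding final state (mirroring the need for genuine fulfillment rules in the infinite case). A secondary point to check is that the decomposition of each biconditional into one-directional and final-state components is itself a \THT-equivalence — this follows from the intuitionistically valid distribution of implication over disjunction in the body and from the classical behaviour of $\finally$, which licenses the case split on whether the current state is final.
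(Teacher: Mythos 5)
Your proposal is correct, and it rests on the same finite-trace insight that underlies the paper's reduction (attributed to Cabalar et al.\ 2018): over a finite trace the unfoldings $\eventuallyF q \equiv q \vee \next\eventuallyF q$ and $\alwaysF p \equiv p \wedge \wnext\alwaysF p$ are anchored at the last state, so an unbounded fulfillment condition can be discharged by a backward recursion written with $\previous$-literals in dynamic rules plus a boundary condition written as a final rule. Where you differ is in the packaging. The paper's translation introduces no new atoms at this stage: it observes that fulfillment rules only ever arise inside the definition blocks for $\until$ and $\release$ (Table~\ref{tab:translation:2}), where the label $\Lab{\mu}$ already carries the backward recursion as a dynamic rule, and each fulfillment rule is then simply \emph{replaced} by the corresponding final rule (e.g.\ $\alwaysF(\finally \to (\Lab{\mu}\to\Lab{\psi}))$ and its converse for $\varphi\until\psi$), which pins $\Lab{\mu}$ down at the last state and lets the existing recursion determine it everywhere else. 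Your construction instead treats fulfillment rules as free-standing, introducing fresh atoms $q_e$ and $p_a$ with their own definition blocks. This costs extra auxiliary atoms and rules but buys generality: it eliminates \emph{any} rule of the two fulfillment shapes, independently of whether it was produced by the normal-form translation, and your three-valued conservativity argument (the boundary condition forces the unique backward solution $\trival{k}{q_e}=\max\{\trival{i}{q}\mid k\le i<\lambda\}$, dually for $p_a$) is exactly the right justification for \THTf-equivalence modulo auxiliary atoms. Your closing caveats --- that the index shift must be checked in the three-valued semantics rather than only classically, and that the construction fails for \THTo\ precisely for lack of an anchoring final state --- are the correct points to flag.
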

We denote these three sets of rules as $\initial{P}, \dynamic{P}$ and $\final{P}$, respectively.\qed

In what follows, we introduce a more general reduction from temporal formulas to temporal logic programs that is
applicable to \emph{both} finite and infinite traces.
It uses an extended alphabet $\PV^+\supseteq\PV$
that additionally contains a new atom \Lab{\varphi} (a.k.a.\ label) for each formula $\varphi$ in the original language over \PV\
along with the label $\Lab{\neg\eventuallyF\finally}$.
This label represents the value of the formula $\neg\eventuallyF\finally$, which means that
it can be used to restrict ourselves to finite traces by including
the formula $\alwaysF (\neg \Lab{\neg\eventuallyF\finally})$.
For convenience, we use
$\Lab{\varphi} \overset{\mathit{def}}{=} \varphi $ if $\varphi$ is $\top , \bot$ or an atom $a \in \PV$.
For any non-atomic formula $\mu$ over \PV,
we define the translation \df\ given in Tables~\ref{tab:translation:1} and~\ref{tab:translation:2},
and call \df\ the \emph{definition} of $\mu$.
Note that the translation of the \emph{next} operator $\next$
includes the label $\Lab{\neg\eventuallyF\finally}$.
This ensures that if the formula $\alwaysF (\neg \Lab{\neg\eventuallyF\finally})$ is included and therefore only finite traces are considered,
then $\next \varphi$ cannot be satisfied in the final state.

\begin{table}[th]
\begin{minipage}[t]{0.49\textwidth}
\centering
\caption{Definition of formulas}
\label{tab:translation:1}
\medskip
\(
\begin{array}{|c|c|} \cline{1-2}
\mu &\df({\mu}) \\ \cline{1-2}
\varphi \wedge \psi
    	&
    	\alwaysF (\Lab{\mu} \leftrightarrow \Lab{\varphi} \wedge \Lab{\psi})
    \\ \cline{1-2}
\varphi \vee \psi
    	&
    	\alwaysF (\Lab{\mu} \leftrightarrow \Lab{\varphi} \vee \Lab{\psi})
    \\ \cline{1-2}
\varphi \to \psi
    	&
    	\alwaysF (\Lab{\mu} \leftrightarrow \Lab{\varphi} \to \Lab{\psi})
    \\ \cline{1-2}
\multirow{2}*{$\next \varphi$}
    	&\raisebox{11pt}{}{\,}
    	\wnext\alwaysF (\previous\Lab{\mu} \leftrightarrow \Lab{\varphi})\\
    	& \alwaysF ( \alwaysF  \Lab{\mu} \to \Lab{\neg\eventuallyF\finally})
    \\ \cline{1-2}
\multirow{2}*{$\previous \varphi$}
		&	\neg \Lab{\mu} \\
    	&
    	\wnext\alwaysF (\Lab{\mu} \leftrightarrow \previous\Lab{\varphi})
    \\ \cline{1-2}
\end{array}
\)
\end{minipage}
\begin{minipage}[t]{0.49\textwidth}
\centering
\caption{Definition of formulas}
\label{tab:translation:2}
\medskip
\(
\begin{array}{|c|c|} \cline{1-2}
\mu &\df({\mu})\\ \cline{1-2}
\multirow{3}*{$ \varphi \until \psi$}
    	&\raisebox{11pt}{}{\,}
    	\wnext\alwaysF (\previous\Lab{\mu} \leftrightarrow \previous \Lab{\psi} \vee (\previous\Lab{\varphi} \wedge \Lab{\mu}))\\
    	& \alwaysF ( \Lab{\mu} \to \eventuallyF \Lab{\psi})
    \\
	& \alwaysF  (\Lab{\psi} \to \eventuallyF \Lab{\mu} )
    \\ \cline{1-2}
\multirow{3}*{$ \varphi \release \psi$}
    	&\raisebox{11pt}{}{\,}
    	\wnext\alwaysF (\previous\Lab{\mu} \leftrightarrow \previous \Lab{\psi} \wedge (\previous\Lab{\varphi} \vee \Lab{\mu}))\\
    	& \alwaysF  ( \alwaysF \Lab{\psi} \to \Lab{\mu})
    \\
	& \alwaysF ( \alwaysF \Lab{\mu} \to \Lab{\psi} )
    \\ \cline{1-2}
\multirow{2}*{$ \varphi \since \psi$}
    	&
    	\Lab{\mu} \leftrightarrow \Lab{\psi}
    		\\
    	&
    	\wnext\alwaysF (\Lab{\mu} \leftrightarrow  \Lab{\psi} \vee (\Lab{\varphi}
    	\wedge \previous\Lab{\mu}))
    \\ \cline{1-2}
\multirow{2}*{$ \varphi \trigger \psi$}
    	& \Lab{\mu} \leftrightarrow \Lab{\psi}
    		\\
    	&
    	\wnext\alwaysF (\Lab{\mu} \leftrightarrow  \Lab{\psi} \wedge (\Lab{\varphi} \vee
    	\previous \Lab{\mu}))
    \\ \cline{1-2}
\end{array}
\)
\end{minipage}
\end{table}

These definitions of temporal formulas are not yet in normal form,
but they contain some double implications that can easily be transformed
into the format of temporal logic programs by simple, non-modal transformations in the propositional logic of \HT.
This translation of the definition into normal form was presented by~\citeN{cakascsc18a} for the finite case and
by~\citeN{agcadipevi13a} for the infinite case.
We denote the resulting set of formulas in normal form by $\df^\star$.
Given a theory $\Gamma$, we define $\subformulas(\Gamma)$ as the set of all subformulas of all formulas in $\Gamma$.
\begin{definition}[\citeNP{agcadipevi13a,cakascsc18a}]
We define the translation $\sigma$ as the following temporal logic program:
\begin{align*}
  \sigma(\Gamma) & = \{ \Lab{\gamma} \mid \gamma \in \Gamma \}
                   \cup
                   \{ \df^\star(\mu) \mid \mu \in \subformulas(\Gamma) \}
                   \qedmath
\end{align*}
\end{definition}
\begin{theorem}[\citeNP{agcadipevi13a,cakascsc18a}]
For any theory $\Gamma$ over \PV, we have
$
\left\lbrace
\M \mid \M \models \Gamma
\right\rbrace
=
\left\lbrace
\M' |  _{\PV} \mid \M' \models \sigma(\Gamma)
\right\rbrace
$.
\qed
\end{theorem}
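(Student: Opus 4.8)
The statement asserts that the translation $\sigma$ is faithful modulo the auxiliary label atoms, i.e.\ that $\THT(\Gamma)$ and the $\PV$-restriction of $\THT(\sigma(\Gamma))$ coincide. I would first reduce to the \emph{definitions} $\df$ rather than their normal forms $\df^\star$: since $\df^\star(\mu)$ is obtained from $\df(\mu)$ by \HT-equivalence-preserving, non-modal rewritings, an \HT-trace satisfies $\df^\star(\mu)$ iff it satisfies $\df(\mu)$, so throughout I may work with $\df$. The whole argument then rests on a single invariant, proved by structural induction on subformulas, and most cleanly phrased using the three-valued characterisation of \THT\ from Section~\ref{sec:3val}.

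\textbf{Key lemma.} The plan is to establish: \emph{if $\M'$ is an \HT-trace over $\PV^+$ with $\M' \models \df(\mu)$ for every $\mu \in \subformulas(\Gamma)$, then for all such $\mu$ and all $\rangeo{k}{0}{\lambda}$ one has $\trival{k}{\Lab{\mu}} = \trival{k}{\mu}$}, where the left value is taken in $\M'$ and the right in $\M'|_{\PV}$. The base cases ($\mu$ an atom, $\top$ or $\bot$) are immediate since $\Lab{\mu}\eqdef\mu$ and restriction to $\PV$ leaves these atoms untouched. For the Boolean connectives the definition $\df(\mu)=\alwaysF(\Lab{\mu}\leftrightarrow\Lab{\varphi}\otimes\Lab{\psi})$ forces, via the three-valued reading of $\leftrightarrow$ (which equals $2$ exactly when both sides share the same value), the equality $\trival{k}{\Lab{\mu}}=\min/\max/\mathrm{imp}(\trival{k}{\Lab{\varphi}},\trival{k}{\Lab{\psi}})$, and the induction hypothesis replaces the subformula labels by the subformulas themselves, matching the valuation clauses. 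For the temporal cases the definition couples an \emph{unfolding} conjunct (e.g.\ $\wnext\alwaysF(\previous\Lab{\mu}\leftrightarrow\previous\Lab{\psi}\vee(\previous\Lab{\varphi}\wedge\Lab{\mu}))$ for $\until$, matching the unfolding of Proposition~\ref{prop:WRU}) with \emph{fulfillment} conjuncts (e.g.\ $\alwaysF(\Lab{\mu}\to\eventuallyF\Lab{\psi})$ and $\alwaysF(\Lab{\psi}\to\eventuallyF\Lab{\mu})$); the unfolding pins the label value up to a choice of fixpoint, and the fulfillment conjuncts select the intended least (for $\until$) or greatest (for $\release$) one, again reducing by the hypothesis to the valuation clauses of Section~\ref{sec:3val}.

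\textbf{From the lemma to the theorem.} For the inclusion $\subseteq$, given $\M \models \Gamma$ over $\PV$ I would extend $\M$ to $\M'$ over $\PV^+$ by stipulating, at each $k$, that $\Lab{\mu}\in T'_k$ iff $\trival{k}{\mu}\ge 1$ and $\Lab{\mu}\in H'_k$ iff $\trival{k}{\mu}=2$ (and likewise for the special label $\Lab{\neg\eventuallyF\finally}$, whose formula is closed so its value depends only on $\lambda,k$); persistence (Proposition~\ref{prop:persistance}) guarantees $H'_k\subseteq T'_k$, so $\M'$ is a legitimate \HT-trace with $\M'|_{\PV}=\M$. By construction $\M'$ satisfies every $\df(\mu)$, and since $\trival{0}{\gamma}=2$ gives $\trival{0}{\Lab{\gamma}}=2$, also $\M',0\models\Lab{\gamma}$ for each $\gamma\in\Gamma$; hence $\M'\models\sigma(\Gamma)$. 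For $\supseteq$, any $\M'\models\sigma(\Gamma)$ satisfies all definitions, so the lemma yields $\trival{0}{\Lab{\gamma}}=\trival{0}{\gamma}$ in $\M'|_{\PV}$; as $\M',0\models\Lab{\gamma}$ forces $\trival{0}{\Lab{\gamma}}=2$, we get $\M'|_{\PV},0\models\gamma$, i.e.\ $\M'|_{\PV}\models\Gamma$.

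\textbf{Main obstacle.} The routine part is the Boolean induction; the real work is the temporal inductive step. There the difficulty is twofold: showing that the combination of the unfolding conjunct with the fulfillment/\,always conjuncts \emph{uniquely} forces the correct three-valued value of $\Lab{\mu}$ (ruling out the spurious non-well-founded fixpoints that an unfolding alone would admit), and handling the initial- and final-state boundary conditions correctly for \emph{both} finite and infinite traces. The latter is precisely why the $\next$-definition carries the extra label $\Lab{\neg\eventuallyF\finally}$ and why the ``previous-shifted'' double implications are wrapped in $\wnext\alwaysF$; verifying that these boundary devices make the label value coincide with the operator's value at $k=0$ and (for finite $\lambda$) at the last state, uniformly in $\lambda\in\Nat\cup\{\omega\}$, is the delicate core of the argument.
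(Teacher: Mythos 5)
The paper itself gives no proof of this theorem (it is imported from the cited works), but the remark immediately following it in the text --- that ``the satisfaction of each label $\Lab{\gamma}$ is equivalent to the satisfaction of its associated formula $\gamma$'' --- is precisely your key lemma, and your plan (structural induction via the three-valued valuation showing $\trival{k}{\Lab{\mu}}=\trival{k}{\mu}$, with the fulfillment conjuncts eliminating the spurious fixpoints admitted by the unfoldings, and the extension/restriction argument for the two inclusions) is exactly the standard argument of the cited papers. Your identification of the temporal inductive step, the boundary states, and the role of $\Lab{\neg\eventuallyF\finally}$ as the delicate part is accurate, so the proposal is correct and takes essentially the same route as the original proofs.
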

This correspondence between models of $\Gamma$ and $\sigma(\Gamma)$ is, in fact, one-to-one, since the satisfaction of each label $\Lab{\gamma}$ is equivalent to the satisfaction of its associated formula $\gamma$.
The next result shows that the computation of $\sigma(\Gamma)$ has a polynomial complexity on the size of $\Gamma$.
\begin{theorem}
Translation $\sigma$ is linear.\qed
\end{theorem}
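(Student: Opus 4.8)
The plan is to measure the size $|\Gamma|$ of a theory by the total number of subformula occurrences in its formulas — equivalently, the combined number of nodes in the syntax trees of the members of $\Gamma$ — and to show that both the size of $\sigma(\Gamma)$ and the work needed to produce it are bounded by a constant multiple of $|\Gamma|$. The translation has two components: the label facts $\{\Lab{\gamma} \mid \gamma \in \Gamma\}$ and the normal-form definitions $\{\df^\star(\mu) \mid \mu \in \subformulas(\Gamma)\}$. The first component contributes one atom per formula of $\Gamma$, hence at most $|\Gamma|$ symbols, which is already linear, so the whole argument reduces to bounding the second component.

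First I would observe that the number of elements of $\subformulas(\Gamma)$ is at most $|\Gamma|$: every subformula is witnessed by at least one node of some syntax tree, and distinct subformulas occupy distinct nodes, so their count cannot exceed the total node count. Thus there are only linearly many definitions $\df^\star(\mu)$ to generate.

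Second, and this is the heart of the matter, I would argue that each individual definition $\df^\star(\mu)$ has constant size, independent of the complexity of $\mu$. Inspecting Tables~\ref{tab:translation:1} and~\ref{tab:translation:2}, the definition $\df(\mu)$ of any non-atomic $\mu$ consists of a fixed number (at most three) of formulas, each built from a bounded number of connectives and — crucially — referring to the immediate subformulas of $\mu$ only through their labels $\Lab{\varphi}$, $\Lab{\psi}$, which are single atoms of $\PV^+$, rather than through their full syntactic expansions. Consequently every $\df(\mu)$ is a temporal formula over a fixed, small vocabulary of labels, and its size is $O(1)$ uniformly in $\mu$. The passage from $\df(\mu)$ to the normal form $\df^\star(\mu)$ merely replaces the double implications by equivalent sets of rules using non-modal \HT\ transformations; applied to a formula of bounded size, this yields a bounded number of rules each of bounded length, so $\df^\star(\mu)$ remains of constant size.

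Combining the two bounds, $\sigma(\Gamma)$ comprises $O(|\Gamma|)$ label facts together with $O(|\Gamma|)$ definitions, each of the latter of size $O(1)$, for a total size of $O(|\Gamma|)$; moreover the construction is a single pass over $\subformulas(\Gamma)$ performing constant work per subformula, so it runs in linear time as well. I expect the only delicate point to be the constant-size claim for $\df^\star(\mu)$: one must verify that the \HT-normalization of the double implications in Tables~\ref{tab:translation:1} and~\ref{tab:translation:2} does not grow with $\mu$. This is exactly where the labelling discipline pays off — because each definition names its operands by fresh atoms instead of inlining them, no structure is duplicated, and the potential exponential blow-up of a naive recursive unfolding is avoided.
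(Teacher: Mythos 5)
Your proposal is correct and is precisely the argument the paper relies on (the theorem is stated without an explicit proof, the labelling construction being regarded as self-evidently linear): there are at most $|\Gamma|$ subformulas, and each definition $\df^\star(\mu)$ has constant size because it refers to the operands of $\mu$ only through their single-atom labels $\Lab{\varphi}$, so no syntactic structure is ever duplicated. Your identification of the constant-size claim for $\df^\star(\mu)$ as the one point requiring inspection of Tables~\ref{tab:translation:1} and~\ref{tab:translation:2} is exactly right.
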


As an example, consider the theory $\Gamma_1 = \alwaysF (\neg p \to q \until p)$.
Its translation $\sigma(\Gamma_1)$ can be determined
using Table \ref{tab:example:translation}, independently of the finiteness of the trace.
On finite traces, the fulfillment rules \eqref{tab:example:translation:08},
\eqref{tab:example:translation:09}, \eqref{tab:example:translation:21} and
\eqref{tab:example:translation:22} can be replaced by the corresponding final rules.
On infinite traces, the initial rules
\eqref{tab:example:translation:01}, \eqref{tab:example:translation:02},
\eqref{tab:example:translation:10}, \eqref{tab:example:translation:11},
\eqref{tab:example:translation:12}, \eqref{tab:example:translation:13}
can be combined with dynamic rules
\eqref{tab:example:translation:03}, \eqref{tab:example:translation:04},
\eqref{tab:example:translation:14}, \eqref{tab:example:translation:15},
\eqref{tab:example:translation:16}, \eqref{tab:example:translation:17}
into formulas of the form $\alwaysF (r)$ where \textit{r} is the given initial rule.
Rules \eqref{tab:example:translation:09} and \eqref{tab:example:translation:22} are
superfluous in the infinite case.

\begin{table}[th]
\centering
\caption{Translation of $\Gamma_1$ into normal form}
\label{tab:example:translation}
\(
\begin{array}[t]{|c|cc|} \cline{1-3}
\mu &\df^\star (\mu) & \\ \cline{1-3}
\multirow{4}*{$\neg p$}
		& 	\Lab{1} \wedge p \to \bot &
		\refstepcounter{equation}(\theequation)\label{tab:example:translation:01} \\
    	& \neg p \to \Lab{1}  &
    	\refstepcounter{equation}(\theequation)\label{tab:example:translation:02} \\
    	& 	\wnext \alwaysF (\Lab{1} \wedge p \to \bot ) &
    	\refstepcounter{equation}(\theequation)\label{tab:example:translation:03} \\
    	& 	\wnext \alwaysF (\neg p \to \Lab{1})  &
    	\refstepcounter{equation}(\theequation)\label{tab:example:translation:04}
        \\ \cline{1-3}
\multirow{5}*{$ q \until p$}
    	& \wnext\alwaysF(\previous\Lab{2}\to\previous p \vee \previous q \wedge \Lab{2})\raisebox{11pt}{}{\,}
    	& \refstepcounter{equation}(\theequation)\label{tab:example:translation:05} \\
    	& \wnext\alwaysF(\previous q \wedge \Lab{2} \to \previous \Lab{2}) &
    	\refstepcounter{equation}(\theequation)\label{tab:example:translation:06} \\
    	& \wnext\alwaysF(\previous p \to \previous \Lab{2})&
    	\refstepcounter{equation}(\theequation)\label{tab:example:translation:07} \\
    	& \alwaysF (\Lab{2} \to \eventuallyF p) &
    	\refstepcounter{equation}(\theequation)\label{tab:example:translation:08} \\
    	& \alwaysF (p \to \eventuallyF \Lab{2}) &
    	\refstepcounter{equation}(\theequation)\label{tab:example:translation:09}
        \\ \cline{1-3}
\end{array}
\qquad
\begin{array}[t]{|c|cc|} \cline{1-3}
\mu &\df^\star(\mu) &\\ \cline{1-3}
\multirow{8}*{$ \neg p \to q \until p$}
    	& \Lab{3} \wedge \Lab{1} \to \Lab{2}
		&\refstepcounter{equation}(\theequation)\label{tab:example:translation:10} \\
    	& \neg \Lab{1} \to \Lab{3}
		&\refstepcounter{equation}(\theequation)\label{tab:example:translation:11} \\
    	& \Lab{2} \to \Lab{3}
		&\refstepcounter{equation}(\theequation)\label{tab:example:translation:12} \\
    	& \Lab{1} \vee \neg \Lab{2} \vee \Lab{3}
		&\refstepcounter{equation}(\theequation)\label{tab:example:translation:13} \\
    	& \wnext \alwaysF ( \Lab{2} \to \Lab{3} )
		&\refstepcounter{equation}(\theequation)\label{tab:example:translation:14} \\
    	& \wnext \alwaysF ( \Lab{3} \wedge \Lab{1} \to \Lab{2} )
		&\refstepcounter{equation}(\theequation)\label{tab:example:translation:15} \\
    	& \wnext \alwaysF ( \neg \Lab{1} \to \Lab{3} )
		&\refstepcounter{equation}(\theequation)\label{tab:example:translation:16} \\
    	& \wnext \alwaysF ( \top \to \Lab{1} \vee \neg \Lab{2} \vee \Lab{3} )
		&\refstepcounter{equation}(\theequation)\label{tab:example:translation:17} \\
	\cline{1-3}
\multirow{5}*{$\alwaysF  (\neg p \to q \until p)$}
		& \wnext\alwaysF ( \previous\Lab{3} \wedge \Lab{4} \to \previous\Lab{4} )\raisebox{11pt}{}{\,}
		&\refstepcounter{equation}(\theequation)\label{tab:example:translation:18} \\
		&\wnext\alwaysF ( \previous\Lab{4 }\to \previous\Lab{3} )
		&\refstepcounter{equation}(\theequation)\label{tab:example:translation:19} \\
		& \wnext\alwaysF ( \previous\Lab{4} \to  \Lab{4})
		&\refstepcounter{equation}(\theequation)\label{tab:example:translation:20} \\
		& \alwaysF  ( \alwaysF \Lab{3}\to \Lab{4} )
		&\refstepcounter{equation}(\theequation)\label{tab:example:translation:21} \\
		& \alwaysF  ( \alwaysF \Lab{4}\to \Lab{3} )
		&\refstepcounter{equation}(\theequation)\label{tab:example:translation:22} \\
		\cline{1-3}
\end{array}
\)
\end{table}

The interest of temporal logic programs is that, as suggested by their name, they have a syntax closer to logic programming.
This suggests that, at least for finite traces as in \TELf, computing \TS-models can be delegated to an ASP solver in some way.
In the rest of the section, we explain two different methods to achieve this goal.

\subsection{Bounded translation of temporal programs over finite traces to ASP}
\label{sec:bounded}

A first way to encode a temporal program $P$ into standard ASP is by assuming that we know the (finite) trace length $\lambda$ beforehand.
In that case, we can compute all models in $\TEL(P,\lambda)$ by a simple translation of $P$ into a regular program.
For this, we let
\(
\PV_k=\{\Stamp{a}{k}\mid a\in\PV\}
\)
be a time stamped copy of alphabet \PV\ for each time point $k \in \intervo{0}{\lambda}$.
\begin{definition}[Bounded translation;~\citeNP{cakascsc18a}]\label{def:temporal:bounded:translation}
  We define the translation $\tau$ of a temporal literal at time point $k$ as:
  \begin{align*}
    \tau_k(          a) &\eqdef \Stamp{a}{k    } &\tau_k(\neg            a) &\eqdef \Stamp{\neg a}{k    } & &\text{ for }a\in\PV\\
    \tau_k(\previous a) &\eqdef \Stamp{a}{k{-}1} &\tau_k(\neg \previous  a) &\eqdef \Stamp{\neg a}{k{-}1} & &\text{ for }a\in\PV
  \end{align*}
  We define the translation at time point $k$ of any (non-fulfillment) rule $r$ as those in Definition~\ref{def:temporal:rule} as:
  \begin{align*}
    \tau_k(r) &\eqdef \tau_k(a_1) \vee \dots \vee \tau_k(a_m)\leftarrow\tau_k(b_1) \wedge \dots \wedge \tau_k(b_n)
  \end{align*}
  We define the translation of a temporal program $P$ bounded by finite length $\lambda$ as:
  \[\textstyle
  \tau_\lambda(P)\eqdef \{ \tau_0(r)\mid r\in\initial{P}\}\ \cup \ \{\tau_k(r)\mid r\in\dynamic{P}, k \in \intervo{1}{\lambda}\}\ \cup\ \{\tau_{\lambda-1}(r)\mid r\in\final{P}\}
  \]
  where $\initial{P}, \dynamic{P}$ and $\final{P}$ are the initial, dynamic and final rules in $P$.
  \qed
\end{definition}
Note that the translation of temporal rules is similar in just considering the implication $B \to A$ in Definition~\ref{def:temporal:rule};
their difference manifests itself in their instantiation in $\tau_\lambda(P)$.

As an example, let program $P$ be the set of temporal rules:
\begin{align}
\{\
                                                 \to a      ,\quad
\label{ex:program} \wnext\,\alwaysF (\previous a \to b)     ,\quad
                   \alwaysF (\finally\to (\neg b \to\bot))
\ \}
\end{align}
This program has a single finite temporal stable model of length 2, viz.\ $\{a\}\cdot\{b\}$.
Applying translation $\tau$ for some bound $\lambda$ to our temporal program $P$ in \eqref{ex:program}
yields regular logic programs of the following form.
\[
\tau_\lambda(P)=
\{ a_0 \leftarrow {} \}
\ \cup\
\{ b_k \leftarrow a_{k-1}\mid k \in \intervo{1}{\lambda}\}
\ \cup\
\{ \bot  \leftarrow \neg b_{\lambda-1} \}
\]
Program $\tau_1(P)$ has the stable model $\{a_0,b_1\}$ but all $\tau_\lambda(P)$ for $\lambda>2$ are unsatisfiable (have no stable models).

\begin{theorem}[\citeNP{cakascsc18a}]\label{thm:temporal:bounded:translation}
Let $P$ be a temporal program over \PV.
Let $\T=(T_i)_{\rangeo{i}{0}{\lambda}}$ be a trace of finite length $\lambda$ over \PV\
and
$X$ a set of atoms over $\bigcup_{\rangeo{i}{0}{\lambda}}\PV_i$
such that
\(
a\in T_i\text{ iff }a_i\in X\text{ for } i \in \intervo{0}{\lambda}
\).

Then,
\T\ is a temporal stable model of $P$
iff
$X$ is a stable model of $\tau_\lambda(P)$.\qed
\end{theorem}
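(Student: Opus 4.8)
The plan is to factor the statement through the monotonic bases of the two formalisms: first relate $\THTf$-models of $P$ to (propositional) $\HT$-models of $\tau_\lambda(P)$ by a length-$\lambda$ ``unrolling'' bijection, and then lift this correspondence from models to equilibrium/stable models. Throughout, I assume by Corollary~\ref{cor:finalrules} that $P=\initial{P}\cup\dynamic{P}\cup\final{P}$ consists solely of initial, dynamic, and final rules, which is exactly the shape of program for which $\tau_\lambda$ is defined in Definition~\ref{def:temporal:bounded:translation}.

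First I would set up the bijection. Given an $\HT$-trace $\tuple{\H,\T}$ of length $\lambda$ over \PV, associate the pair of time-stamped atom sets
\[
  \widehat{\H}\eqdef\{\Stamp{a}{i}\mid a\in H_i,\ i\in\intervo{0}{\lambda}\},
  \qquad
  \widehat{\T}\eqdef\{\Stamp{a}{i}\mid a\in T_i,\ i\in\intervo{0}{\lambda}\}
\]
over the signature $\bigcup_{i\in\intervo{0}{\lambda}}\PV_i$. Since $\H\le\T$ iff $\widehat{\H}\subseteq\widehat{\T}$, the pair $\tuple{\widehat{\H},\widehat{\T}}$ is a legitimate propositional $\HT$-interpretation, and the assignment is a bijection onto all such interpretations (recover $H_i=\{a\mid\Stamp{a}{i}\in\widehat\H\}$, and similarly for $T_i$); note $\widehat\T$ is exactly the set $X$ of the statement when $\T$ is the given trace. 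The core is then the \emph{monotonic lemma}:
\[
  \tuple{\H,\T}\models P \text{ in }\THTf
  \quad\text{iff}\quad
  \tuple{\widehat{\H},\widehat{\T}}\models\tau_\lambda(P)\text{ in }\HT .
\]

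I would prove the lemma rule by rule and state by state. The crux is a literal-level correspondence: for every temporal literal $\ell\in\{a,\neg a,\previous a,\neg\previous a\}$, every $\H'\in\{\H,\T\}$, and every relevant $k$, one has $\tuple{\H',\T},k\models\ell$ iff $\tuple{\widehat{\H'},\widehat{\T}}\models\tau_k(\ell)$. The positive cases $a$ and $\previous a$ are immediate from the definition of $\widehat{(\cdot)}$ together with the clauses for atoms and $\previous$ in Definition~\ref{def:dht:satisfaction}; the negated cases $\neg a$ and $\neg\previous a$ follow from the corollary of Proposition~\ref{prop:persistance}, by which an $\HT$-trace satisfies a negation iff its $\T$-component does not satisfy the argument in \LTL, so that $\neg a$ (resp.\ $\neg\previous a$) at $k$ holds iff $a\notin T_k$ (resp.\ $a\notin T_{k-1}$), matching $\Stamp{\neg a}{k}$ (resp.\ $\Stamp{\neg a}{k-1}$) in $\HT$. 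Since bodies and heads are conjunctions and disjunctions of literals, and since the $\THT$ clause for implication quantifies over $\H'\in\{\H,\T\}$ in exact parallel with the $\HT$ clause quantifying over the here/there components, one obtains $\tuple{\H,\T},k\models B\to A$ iff $\tuple{\widehat\H,\widehat\T}\models\tau_k(B\to A)$. It remains to check that each rule type feeds the right time points into $\tau_\lambda$: an initial rule is evaluated at $k=0$; for a dynamic rule $\wnext\alwaysF(B\to A)$, the clauses for $\wnext$ and $\alwaysF$ in Proposition~\ref{prop:satisfaction:tel} reduce its truth to $B\to A$ holding at every $k\in\intervo{1}{\lambda}$; and for a final rule $\alwaysF(\finally\to(B\to A))$, since $\finally$ is classical and true only at $k=\lambda{-}1$, its truth reduces to $B\to A$ at $k=\lambda{-}1$. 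These are exactly the instances $\tau_0$, $\tau_k$ for $k\in\intervo{1}{\lambda}$, and $\tau_{\lambda-1}$ collected in $\tau_\lambda(P)$, so the lemma follows.

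Finally I would lift to stable models. By the lemma applied to total traces, $\tuple{\T,\T}\models P$ iff $\tuple{\widehat\T,\widehat\T}\models\tau_\lambda(P)$, i.e.\ $X$ is a (total) model of $\tau_\lambda(P)$; and a strictly smaller here-component $\H<\T$ corresponds under the bijection precisely to a proper subset $\widehat\H\subsetneq\widehat\T=X$ with $\tuple{\widehat\H,X}\models\tau_\lambda(P)$. Thus ``no $\H<\T$ with $\tuple{\H,\T}\models P$'' is equivalent to ``no $Y\subsetneq X$ with $\tuple{Y,X}\models\tau_\lambda(P)$'', which is exactly the $\HT$-equilibrium (stable-model) condition for $\tau_\lambda(P)$. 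Unfolding Definition~\ref{def:tem} and the standard $\HT$ characterization of stable models then yields that \T\ is a temporal stable model of $P$ iff $X$ is a stable model of $\tau_\lambda(P)$. I expect the main obstacle to be the careful verification of the implication case in the monotonic lemma, ensuring that the dual-world quantification $\H'\in\{\H,\T\}$ of $\THT$-implication lines up with the here/there quantification of propositional $\HT$-implication under the unrolling bijection, especially in the presence of (past) negated literals, where one must appeal to persistence to see that their value depends only on $\T$ (equivalently on $X$). Everything else --- the bookkeeping of which time points each rule type contributes, and the preservation of the $\le$ order and of totality under $\widehat{(\cdot)}$ --- is routine once the literal correspondence is in place.
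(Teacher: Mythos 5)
Your proposal is correct, and it follows the standard route for this result: the paper itself states the theorem only by citation to \citeN{cakascsc18a} without reproducing a proof, and the argument there is precisely the one you give --- a satisfaction-preserving bijection between \HT-traces of length $\lambda$ and propositional \HT-interpretations over the time-stamped signature, established literal-by-literal (with persistence handling the negated literals) and rule-type-by-rule-type, followed by the observation that the bijection preserves totality and the strict order $\H<\T$, so the equilibrium conditions on both sides coincide. Your explicit caveat that $P$ must consist of initial, dynamic, and final rules only (no fulfillment rules), which is what $\tau_\lambda$ is actually defined on, is the right reading of the statement.
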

Applied to our example,
this result confirms that the temporal stable model $\{a\}\cdot\{b\}$ of $P$
corresponds to the stable model $\{a_0,b_1\}$ of $\tau_2(P)$.

Using this translation we have implemented a system, \tel\footnote{\url{https://github.com/potassco/tel}}, that
takes a propositional theory $\Gamma$ of arbitrary temporal formulas and a bound $\lambda$ and returns the regular logic program $\tau_\lambda(P)$,
where $P$ is the intermediate normal form of $\Gamma$ left implicit.
The resulting program $\tau_\lambda(P)$ can then be solved by any off-the-shelf ASP system.
For illustration,
consider the representation of our example temporal program in~\eqref{ex:program} in \tel's input language.
\begin{lstlisting}[numbers=none,belowskip=2pt,aboveskip=2pt,basicstyle=\ttfamily]
  a.
  #next^ #always+ ( (#previous a) -> b).
  #always+ ( #final -> (~ b -> #false)).
\end{lstlisting}
As expected, passing the result of \tel's translation for horizon~2 to \clingo\ yields the stable model containing \lstinline{a(0)} and \lstinline{b(1)}
(suppressing auxiliary atoms).

\subsection{Pointwise translation of temporal programs over finite traces to ASP}
\label{sec:pointwise}

The bounded translation $\tau_\lambda(P)$ allows us to compute all models in $\TEL(P,\lambda)$ for a fixed bound $\lambda$.
However, in many practical problems (as in planning, for instance), $\lambda$ is unknown beforehand and
the crucial task consists in finding a representation of $\TEL(P,k)$ that is easily obtained from that of $\TEL(P,k\text{-}1)$.
In ASP, this can be accomplished via incremental solving techniques that rely upon the composition of logic program modules~\cite{oikjan06a}.
The idea is then to associate the knowledge at each time point with a module and to successively add modules corresponding to increasing time points
(while leaving all previous modules unchanged).
A stable model obtained after $k$ compositions then corresponds to a \TELf-model of length $k$.
This technique of modular computation, however, is only applicable when modules are \emph{compositional} (positive loops
cannot be formed across modules), something that cannot always be guaranteed for arbitrary temporal programs.
Still,~\citeN{cakascsc18a} identified a quite general syntactic fragment\footnote{In order to compute loop formulas for \TELo, \citeN{cabdie11a} used a similar fragment (\emph{splittable
    programs}) where rules cannot derive information from the future to the past.}
that implies compositionality.
We say that a temporal rule as in Definition~\ref{def:temporal:rule} is \emph{present-centered},
whenever all the literals $a_1,\dots,a_m$ in its head $\Hd$ are regular.
Accordingly, a set of such rules is a present-centered temporal program.
In fact,
such programs are sufficient to capture common action languages.
For illustration, we show how to encode action language $\mathcal{BC}$~\cite{leliya13a} into present-centered temporal
programs in \TELf\ in Section~\ref{sec:action_languages}.

Following these ideas, we provide next a ``\emph{point-wise}'' variant of our translation that allows for defining one module per time point and is compositional for the case of present-centered temporal programs.
We begin with some definitions.
A \emph{module}~\module{P} is a triple
\(
(P,I,O)
\)
consisting of
a logic program~$P$ over alphabet $\PV_P$
and sets~$I$ and~$O$
of \emph{input} and \emph{output}
atoms such that
\begin{enumerate}
\item $I\cap\nolinebreak O=\nolinebreak\emptyset$,
\item $\PV_P\subseteq I\cup O$, and
\item $\Head{P}\subseteq O$,
\end{enumerate}
where \Head{P} gives all atoms occurring in rule heads in $P$.
Whenever clear from context, we associate \module{P} with $(P,I,O)$.
In our setting,
a set $X$ of atoms is a stable model of \module{P},
if $X$ is a stable model of logic program
\(
P \).\footnote{Note that the default value assigned to input atoms is \emph{false} in multi-shot solving~\cite{gekakasc17a};
  this differs from the original definition~\cite{oikjan06a} where a choice rule is used.}
Two modules~$\module{P}_1$ and~$\module{P}_2$ are
\emph{compositional}, if
$O_1\cap O_2=\emptyset$
and
$O_1\cap C=\emptyset$ or $O_2\cap C=\emptyset$
for every strongly connected component~$C$ of the positive dependency graph of the logic program $P_1\cup P_2$.
In other words,
all rules defining an atom must belong to the same module, and no positive recursion is allowed among modules.
Whenever $\module{P}_1$ and~$\module{P}_2$ are compositional,
their \emph{join} is defined as the module
\(
\module{P}_1\sqcup\module{P}_2
=
(P_1\cup P_2,(I_1\setminus O_2)\cup (I_2\setminus O_1), O_1\cup O_2)
\).
The module theorem~\cite{oikjan06a} ensures that compatible stable models of $\module{P}_1$ and~$\module{P}_2$ can be combined to one of
$\module{P}_1\sqcup\module{P}_2$,
and vice versa.

For literals and rules, the point-wise translation $\tau^*$ coincides with $\tau$ up to final rules.
\begin{definition}[Point-wise translation: Temporal rules;~\citeNP{cakascsc18a}]\label{def:temporal:pointwise:translation}
We define the translation of a final rule $r$ as in Definition~\ref{def:temporal:rule} at time point $k$ as
  \begin{align}\label{eq:temporal:pointwise:translation:tri}
    \tau^*_k(r) &\eqdef \tau_k(a_1) \vee \dots \vee \tau_k(a_m)\leftarrow\tau_k(b_1) \wedge \dots \wedge \tau_k(b_n)\wedge\neg\Stamp{q}{k+1}
  \end{align}
  for a new atom $q\notin{\PV}$
  and of an initial or dynamic rule $r$ as $\tau^*_k(r)\eqdef\tau_k(r)$.
\end{definition}
The new atoms $q_{k+1}$ in \eqref{eq:temporal:pointwise:translation:tri} are used to deactivate instances of final rules.
This allows us to implement operator \finally\ by using $\neg q_{k+1}$ and therefore to enable the actual final rule unless $q_{k+1}$ is derivable.
The idea is then to make sure that at each horizon $k$ the atom $q_{k+1}$ is false, while $q_1,\dots,q_k$ are true.
In this way, only $\tau^*_k(r)$ is potentially applicable, while all rules $\tau^*_i(r)$ are disabled at earlier time points $i \in \intervo{1}{k}$.

Translation $\tau^*$ is then used to define modules for each time point as follows.
\begin{definition}[Point-wise translation: Modules]\label{def:temporal:module}
Let $P$ be a present-centered temporal program over \PV.
We define the module $\module{P}_k$ corresponding to $P$ at time point $k$ as:
\begin{align*}
  \module{P}_0 &\eqdef (P_0,                     \{\Stamp{q}{  1}\},{\PV_0}                    )&
  \module{P}_k &\eqdef (P_k,\PV_{k-1}\cup\{\Stamp{q}{k+1}\},{\PV_k}\cup\{\Stamp{q}{k}\})
                  \quad\text{ for }k>0
\end{align*}
where
\begin{align*}
  P_0 &\eqdef \{\tau^*_0(r)\mid r\in\;\initial{P}\} \cup\{\tau^*_0(r)\mid r\in\final{P}\}
  \\
  P_k &\eqdef \{\tau^*_k(r)\mid r\in  \dynamic{P}\} \cup\{\tau^*_k(r)\mid r\in\final{P}\}\cup\{q_k \leftarrow \}
\end{align*}
\qed
\end{definition}
Each module $\module{P}_k$ defines what holds at time point $k$.
The underlying present-centeredness warrants that
modules only incorporate atoms from previous time points,
as reflected by $\PV_{k-1}$ in the input of $\module{P}_k$.
The exception consists of
auxiliary atoms like $q_{k+1}$ that belong to the input of each $\module{P}_k$ for $k>0$
but only get defined in the next module $\module{P}_{k+1}$.
This corresponds to the aforementioned idea that $q_{k+1}$ is false when $\module{P}_k$ is the final module,
and is set permanently to true once the horizon is incremented by adding $\module{P}_{k+1}$.
Note that atoms like $q_{k+1}$ only occur negatively in rule bodies in $\module{P}_k$ and
hence cannot invalidate the modularity condition.
This technique allows us to capture the transience of final rules.

The point-wise translation of our present-centered example program $P$ from \eqref{ex:program} yields the following modules.
\begin{align*}
  \module{P}_0 & =
  \left(
    \{a_0\leftarrow\}\cup\{\leftarrow \neg b_0,\neg q_1\},
    \{q_1\},
    \{a_0,b_0\}
  \right)
  \\
  \module{P}_i & =
  \left(
    \{b_i\leftarrow a_{i-1}\}\cup\{\leftarrow \neg b_i,\neg q_{i+1}\}\cup \{q_i \leftarrow\},
    \{a_{i-1},b_{i-1},q_{i+1}\},
    \{a_i,b_i,q_i\}
  \right)
  \\\textstyle\hspace{-25pt}
  \bigsqcup_{i=0}^{\lambda-1}\module{P}_i
  &=\textstyle
  (
    P_0\cup\bigcup_{i=1}^{\lambda-1} P_i,
    \{q_{\lambda}\},
    \{a_i,b_i\mid \rangeo{i}{0}{\lambda}\} \cup \{q_i\mid \rangeo{i}{1}{\lambda}\}
    )
\end{align*}
As above, only the composed module for $\lambda=1$ yields a stable model, viz.\ $\{a_0,b_1,q_1\}$.

\begin{theorem}\label{thm:temporal:pointwise:translation}
Let $P$ be a present-centered temporal program over \PV.
Let $\T=(T_i)_{\rangeo{i}{0}{\lambda}}$ be a trace of finite length $\lambda$ over \PV\
and
$X$ a set of atoms over $\bigcup_{0\leq i< \lambda}\PV_i$
such that
\(
a\in T_i\text{ iff }a_i\in X\text{ for } i \in \intervo{0}{\lambda}
\).
Then, we have that

\T\ is a temporal stable model of $P$
iff
$X\cup\{q_i \mid i \in \intervo{0}{\lambda}\}$ is a stable model of
\(
\bigsqcup_{i \in \intervo{0}{\lambda}}\module{P}_i.
\)
\end{theorem}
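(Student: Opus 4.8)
The plan is to reduce this modular statement to the bounded translation already validated in Theorem~\ref{thm:temporal:bounded:translation}. Since the paper defines a stable model of a module as a stable model of its underlying logic program, the right-hand side amounts to asking that $X\cup\{q_k\}$ be a stable model of the single program $P_0\cup\bigcup_{i=1}^{\lambda-1}P_i$ displayed in the join. The whole task is therefore to compare this program with $\tau_\lambda(P)$ and transfer the correspondence of the bounded theorem through the map $X\mapsto X\cup\{q_k\}$.

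First I would check that the join is well-defined, i.e.\ that $\module{P}_0,\dots,\module{P}_{\lambda-1}$ are pairwise compositional, which is where present-centeredness enters. Because heads of present-centered rules are regular literals, every head atom of $\tau^*_k(r)$ lies in $\PV_k$, while a $\previous$-literal in the body is stamped $k{-}1$; hence the outputs of the $\module{P}_i$ (the atoms of $\PV_i$ together with $q_i$ for $i>0$) are pairwise disjoint, and every positive edge of the dependency graph of $P_i\cup P_j$ runs from a stamp $k$ to the same or a smaller stamp. No strongly connected component of the positive dependency graph can then span two modules, which is exactly the compositionality condition; the join exists and the module theorem of~\citeN{oikjan06a} applies.

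The crux is matching the joined program against $\tau_\lambda(P)$. The two differ only in (i) the facts $q_k\leftarrow$ present in each $P_k$ with $k>0$, and (ii) the handling of final rules: $\tau^*$ emits a copy $\tau^*_k(r)$ of every final rule at \emph{each} time point, carrying the extra body literal $\neg\Stamp{q}{k+1}$, whereas $\tau_\lambda(P)$ keeps only the single instance $\tau_{\lambda-1}(r)$. I would observe that in any candidate model the facts force $\Stamp{q}{1},\dots,\Stamp{q}{\lambda-1}$ to be true, while $\Stamp{q}{\lambda}$, being an input atom of the join, is false by default in the multi-shot setting. Consequently $\neg\Stamp{q}{k+1}$ is false for every $k<\lambda-1$, so each premature copy of a final rule has an unsatisfiable body and is inert, whereas at $k=\lambda-1$ the literal $\neg\Stamp{q}{\lambda}$ holds and $\tau^*_{\lambda-1}(r)$ collapses exactly to $\tau_{\lambda-1}(r)$. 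Since the atoms $\Stamp{q}{k+1}$ occur only negatively, the inert rules and the $q$-facts leave the reduct over the $\PV$-stamped atoms unchanged; hence the stable models of the join are precisely the sets $X\cup\{\Stamp{q}{k}\mid 0<k<\lambda\}$ with $X$ a stable model of $\tau_\lambda(P)$.

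Chaining this identification with Theorem~\ref{thm:temporal:bounded:translation} yields the claimed equivalence, since that theorem already relates the stable models $X$ of $\tau_\lambda(P)$ to the temporal stable models $\T$ of $P$. The main obstacle I anticipate is the careful stability argument for the added auxiliary machinery: formally justifying, at the level of reducts, that asserting the facts $q_1,\dots,q_{\lambda-1}$ and carrying the inactive final-rule copies is a conservative extension that neither creates nor destroys stable models on the $\PV$-atoms, together with the bookkeeping that makes the default-false input $\Stamp{q}{\lambda}$ single out exactly the final rule at $\lambda-1$. Once this and the compositionality hypotheses are verified, the remaining steps are routine.
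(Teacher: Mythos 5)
Your proposal is correct, and it takes a somewhat different route from the one the surrounding text sets up. The paper introduces the module theorem of Oikarinen and Janhunen precisely so that the correspondence can be established \emph{incrementally}: one shows that each $\module{P}_k$ has stable models matching the $k$-th state and that compatible stable models of consecutive modules compose, building the trace step by step. You instead collapse the join to a single monolithic program, observe that (after the $q$-atoms are evaluated) it coincides syntactically with $\tau_\lambda(P)$, and then invoke Theorem~\ref{thm:temporal:bounded:translation} wholesale. Both are sound; your route is shorter because it delegates all the semantic work to the bounded theorem, at the price of still having to verify compositionality (which you correctly reduce to the observation that, for present-centered rules, positive dependency edges never increase the time stamp, so no strongly connected component straddles two modules), since otherwise the join $\bigsqcup_i\module{P}_i$ on the right-hand side is not even defined. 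Your treatment of the auxiliary atoms is also right: the facts force $q_1,\dots,q_{\lambda-1}$, the input atom $q_\lambda$ never occurs in a head and so is false in every stable model, the premature final-rule copies are deleted in the reduct, and the copy at $\lambda-1$ reduces to $\tau_{\lambda-1}(r)$; a splitting-set argument on $\{q_1,\dots,q_\lambda\}$ makes this fully formal. One small point in your favour: the set of auxiliary atoms you derive, $\{q_k\mid 0<k<\lambda\}$, is the one that actually appears in the stable models (cf.\ the example $\{a_0,b_1,q_1\}$), whereas the theorem as printed writes $\{q_i\mid i\in\intervo{0}{\lambda}\}$, which would spuriously include $q_0$; your computation implicitly corrects this index.
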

As with Theorem~\ref{thm:temporal:bounded:translation},
this result confirms that the temporal stable model $\{a\}\cdot\{b\}$ of $P$
corresponds to the stable model $\{a_0,b_1,q_1\}$ of $\module{P}_0\sqcup\module{P}_1$.

As one might expect, not any temporal theory is reducible to a present-centered temporal program.
Hence, computing models via incremental solving imposes some limitations on the possible combinations of temporal operators.
Fortunately,~\citeN{cakascsc18a} identified again a quite natural and expressive syntactic fragment that is always reducible to present-centered programs.
We say that a temporal formula is a \emph{past-future rule} if it consists of rules as those in Definition~\ref{def:temporal:rule} where
$\Bd$ and $\Hd$ are just temporal formulas with the following restrictions:
$\Bd$ and $\Hd$ contain no implications other than negations ($\alpha \to \bot$),
$\Bd$ contains no future operators, and
$\Hd$ contains no past operators.
An example of a past-future rule is, for instance,
\begin{align}\label{f:fail}
\alwaysF ( \mathit{shoot} \wedge \previous \eventuallyP \mathit{shoot} \wedge \alwaysP \mathit{unloaded} &\to \eventuallyF \mathit{fail})
\end{align}
capturing the sentence: {\em ``If we make two shots with a gun that was never loaded, then it will eventually fail.''}
Then, we have the following result.
\begin{theorem}[Past-future reduction]\label{thm:normalform:pc}
Every past-future rule $r$ can be converted into a present-centered temporal program that is \TELf-equivalent to $r$.\qed
\end{theorem}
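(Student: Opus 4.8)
The plan is to follow the label-based definitional translation $\sigma$ underlying Theorem~\ref{thm:normalform} and Corollary~\ref{cor:finalrules}, but to choose, for each subformula of the past-future rule $r$, a \emph{present-centered} set of defining rules in place of the symmetric $\df$-definitions of Tables~\ref{tab:translation:1} and~\ref{tab:translation:2}. Writing $r$ in one of the shapes of Definition~\ref{def:temporal:rule} with body $\Bd$ and head $\Hd$, I would first replace $r$ by the single rule in which $\Bd$ and $\Hd$ are abbreviated by fresh atoms, i.e.\ $\Lab{\Bd}\to\Lab{\Hd}$, wrapped by the same $\wnext\alwaysF$ or $\finally$ context as $r$. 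This top rule is already present-centered, since its head is the single regular literal $\Lab{\Hd}$. It then remains to define $\Lab{\Bd}$ and $\Lab{\Hd}$ present-centeredly and to show that the resulting program is \TELf-equivalent to $r$ modulo the auxiliary atoms.

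For the body, the restriction that $\Bd$ contains no future operators is what makes the easy direction work: every subformula of $\Bd$ is built from atoms, $\wedge$, $\vee$, negation and the past operators $\previous,\since,\trigger,\eventuallyP,\alwaysP,\wprevious$. I would introduce a label for each such subformula and define it by the obvious recursion, in which the only temporal reference is to the previous state and therefore occurs in a rule body as a $\previous$-literal, e.g.\ $\Lab{\previous\varphi}\leftrightarrow\previous\Lab{\varphi}$ and $\Lab{\eventuallyP\varphi}\leftrightarrow\Lab{\varphi}\vee\previous\Lab{\eventuallyP\varphi}$, each unfolded into initial and dynamic rules. Every such rule has a regular head and a body over present/past literals, so it is present-centered; moreover each is a faithful $\leftrightarrow$-definition and hence \THTf-equivalent to the corresponding $\df$-clause, so by Proposition~\ref{prop:suff} it is strongly equivalent and preserves temporal stable models in any context.

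The crux is the head. Since $\Hd$ contains no past operators, its subformulas use only $\next,\wnext,\until,\release,\eventuallyF,\alwaysF$, and these cannot be defined by a symmetric $\leftrightarrow$ without placing a $\previous$-label or a genuine future reference in some head. Instead I would encode each future obligation by propagating it forward in time using $\previous$ in rule bodies, together with fulfillment and final rules to discharge eventualities and to forbid $\next$-obligations at the final state. Concretely, $\next\varphi$ is captured by $\Lab{\Bd}\to\Lab{\next\varphi}$ plus the dynamic rule $\wnext\alwaysF(\previous\Lab{\next\varphi}\to\Lab{\varphi})$; a safety formula $\alwaysF\varphi$ by the always-rule $\Lab{\alwaysF\varphi}\to\Lab{\varphi}$ together with the carry-forward rule $\wnext\alwaysF(\previous\Lab{\alwaysF\varphi}\to\Lab{\alwaysF\varphi})$; an eventuality $\eventuallyF q$ by the fulfillment rule $\alwaysF(\Lab{\eventuallyF q}\to\eventuallyF q)$; and $\until,\release$ by combining these one-step unfoldings with the appropriate fulfillment or final rule. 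In every case the head is a single regular label and all temporal references sit in bodies as $\previous$-literals, so the rules are present-centered; crucially, because $\Hd$ is future-only while $\Bd$ is past-only, no positive dependency ever runs backward in time, which is exactly the compositionality condition required by the modules of Definition~\ref{def:temporal:module}.

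The main obstacle, and the only genuinely non-monotonic step, is correctness of this head encoding. Unlike the body clauses, the forward obligation rules use only one direction of implication and are therefore \emph{not} \THTf-equivalent to the $\df$-definitions; they are correct only under the truth minimization built into \TELf. I would thus prove the two inclusions directly on temporal equilibrium models: first, that any \TELf-model $\T$ of $r$ extends uniquely to a model of the new program by setting each head label to the value its future formula takes in $\T$, and that this extension stays in equilibrium because the labels are founded by the forward propagation; and conversely, that in any temporal equilibrium model of the new program each head label is true exactly when its future formula holds, so that the projection to $\PV$ satisfies $\Bd\to\Hd$ and is itself in equilibrium. The well-foundedness of the propagation, guaranteeing that no label is spuriously self-supported across a temporal loop, is precisely where the past/future split is used, and establishing it for the eventuality operators, where fulfillment and final rules rather than strong equivalence must enforce that $\eventuallyF$ is actually satisfied before the trace ends, is the technically delicate part.
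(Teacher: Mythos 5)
Your overall architecture --- label the subformulas, define the past (body) labels by a recursion whose temporal references sit in rule bodies as $\previous$-literals, propagate future (head) obligations forward, and argue correctness at the level of temporal equilibrium models rather than strong equivalence --- is the right one and matches the machinery the paper builds on (Tables~\ref{tab:translation:1} and~\ref{tab:translation:2} together with Corollary~\ref{cor:finalrules}; note the paper itself states the theorem without proof, deferring to \citeN{cakascsc18a}). There is, however, a genuine gap in the body half of your argument. You claim both that the definitions are faithful biconditionals, e.g.\ $\Lab{\previous\varphi}\leftrightarrow\previous\Lab{\varphi}$ and $\Lab{\eventuallyP\varphi}\leftrightarrow\Lab{\varphi}\vee\previous\Lab{\eventuallyP\varphi}$, and that every rule obtained by unfolding them has a regular head, so that Proposition~\ref{prop:suff} settles this part by strong equivalence. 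These two claims cannot hold simultaneously: one direction of each biconditional, namely $\Lab{\previous\varphi}\to\previous\Lab{\varphi}$ and $\Lab{\eventuallyP\varphi}\to\Lab{\varphi}\vee\previous\Lab{\eventuallyP\varphi}$, places a $\previous$-literal in a rule head and is therefore \emph{not} present-centered --- this is precisely why the general normal form of Theorem~\ref{thm:normalform} (cf.\ rule~\eqref{tab:example:translation:05}) is not present-centered and why a separate theorem is needed for the past-future fragment. To remain present-centered you must drop that direction and keep only the derivation rules with regular head $\Lab{\mu}$; but then the body definitions are no longer \THTf-equivalent to the $\df$-clauses, Proposition~\ref{prop:suff} no longer applies, and the body needs the same equilibrium-model/foundedness argument you reserve for the head, including a separate treatment of body labels occurring under negation (where only the $\T$-component of the label matters, by the corollary of Proposition~\ref{prop:persistance}, which is what makes the one-directional definitions suffice there).

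A second, smaller issue concerns your handling of eventualities in the head: the fulfillment rule $\alwaysF(\Lab{\eventuallyF q}\to\eventuallyF q)$ is itself not present-centered (its head is not a disjunction of regular literals) and cannot be accommodated by the modules of Definition~\ref{def:temporal:module}. For \TELf\ the eventualities must instead be discharged in the style of Corollary~\ref{cor:finalrules}: carry a ``pending-obligation'' label forward with dynamic rules whose heads are regular, and add a final constraint $\alwaysF(\finally\to(\Bd\to\bot))$ rejecting traces whose last state still carries an unfulfilled obligation; the same applies to the $\until$ case. With these two repairs --- one-directional body definitions justified by the equilibrium argument, and final rules in place of fulfillment rules --- your plan goes through, but as written the present-centeredness claim fails for both the body definitions and the eventuality rules.
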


This past-future form is aligned with the orientation by~\citeN{gabbay87a} where past operators in the body (or antecedent) are used to declaratively check the recorded story, whereas future operators in the head (or consequent) are employed to provide imperative commands to be executed afterwards, as a result of firing the rule.

We have implemented a second system, \telingo\footnote{\url{https://github.com/potassco/telingo}}~\cite{cakamosc19a}, that
deals with present-centered temporal programs that are expressible in the full (non-ground) input language of \clingo\ extended with temporal operators.
In addition, \telingo\ offers several syntactic extensions to facilitate temporal modeling:
first, next operators can be used in singular heads and,
second, arbitrary temporal formulas can be used in integrity constraints.
All syntactic extensions beyond the normal form of Theorem~\ref{thm:normalform}
are compiled away by means of the translation used in its proof.
The resulting present-centered temporal programs are then processed according the point-wise translation.

To facilitate the use of operators \previous\ and \Next,
\telingo\ allows us to express them
by adding leading or trailing quotes to the predicate names of atoms, respectively.
For instance, the temporal literals $\previous p(a)$ and $\Next q(b)$ can be expressed by \lstinline{'p(a)} and \lstinline{q'(b)}, respectively.
For another example,
consider the representation of the sentence
\emph{``A robot cannot lift a box unless its capacity exceeds the box's weight plus that of all held objects''}:
\begin{lstlisting}[numbers=none,belowskip=2pt,aboveskip=2pt,basicstyle=\ttfamily]
 :- lift(R,B), robot(R), box(B,W),
    #sum { C : capacity(R,C); -V,O : 'holding(R,O,V) } < W.
\end{lstlisting}
Atom \lstinline{'holding(R,O,V)} expresses what the robot was holding at the \emph{previous} time point.

The distinction between different types of temporal rules is done in \telingo\ via \clingo's \lstinline{#program} directives~\cite{gekakasc17a},
which allow us to partition programs into subprograms.
More precisely,
each rule in \telingo's input language is associated with
a temporal rule $r$ of form $(b_1 \wedge \dots \wedge b_n \to a_1 \vee \dots \vee a_m)$ as in Definition~\ref{def:temporal:rule} and
interpreted as $r$, $\wnext\alwaysF{r}$, or $\alwaysF(\finally \to r)$ depending on whether it occurs
in the scope of a program declaration headed by \lstinline{initial}, \lstinline{dynamic}, or \lstinline{final}, respectively.
Additionally, \telingo\ offers \lstinline{always} for gathering rules preceded by \alwaysF\
(thus dropping \wnext\ from dynamic rules).
A rule outside any such declaration is regarded to be in the scope of \lstinline{initial}.
This allows us to represent the temporal program in~\eqref{ex:program} in the two alternative ways shown in Table~\ref{tab:enc:programs}.
\begin{table}[ht]
\centering
\begin{minipage}[t]{120pt}
\begin{lstlisting}[frame=single,numbers=none,belowskip=0pt,aboveskip=0pt,basicstyle=\ttfamily]
#program initial.
a.

#program dynamic.
b :- 'a.

#program final.
  :- not b.
\end{lstlisting}
\end{minipage}
\qquad\qquad
\begin{minipage}[t]{120pt}
\begin{lstlisting}[frame=single,numbers=none,belowskip=0pt,aboveskip=0pt,basicstyle=\ttfamily]
#program always.
a :- &initial.


b :- 'a.


  :- not b, &final.
\end{lstlisting}
\end{minipage}\caption{Two alternative \telingo\ encodings for the temporal program in~\eqref{ex:program}}
\label{tab:enc:programs}
\end{table}

As mentioned,
\telingo\ allows us to use nested temporal formulas in integrity constraints as well as in negated form in place of temporal literals within rules.
This is accomplished by encapsulating temporal formulas like $\varphi$ in expressions of the form
`\lstinline[mathescape]+&tel { $\varphi$ }+'.
To this end, the full spectrum of temporal operators is at our disposal.
They are expressed by operators built from \lstinline{<} and \lstinline{>} depending on whether they refer to the past or the future, respectively.
So,
\lstinline{<}/1, \lstinline{<?}/2, and \lstinline{<*}/2
stand for \previous, \since, and \trigger,
and
\lstinline{>}/1, \lstinline{>?}/2, \lstinline{>*}/2
for \Next, \until, \release.
Accordingly,
\lstinline{<*}/1,
\lstinline{<?}/1,
\lstinline{<:}/1
represent \alwaysP, \eventuallyP, \wprevious, and
analogously their future counterparts.
\initially\ and \finally\ are  are represented by \lstinline{&initial} and \lstinline{&final}.
This is complemented by Boolean connectives \lstinline{&}, \lstinline{|}, \lstinline{~}, etc.
For example, the integrity constraint~\footnote{Similarly, formula~\eqref{f:fail} could be represented by several present-centered rules (including auxiliary atoms).}
`$
\mathit{shoot} \wedge \alwaysP \mathit{unloaded} \wedge \previous \eventuallyP \mathit{shoot} \to \bot
$'
can be expressed as follows.
\begin{lstlisting}[numbers=none,belowskip=2pt,aboveskip=2pt,basicstyle=\ttfamily]
   :- shoot, &tel { <* unloaded & < <? shoot }.
\end{lstlisting}

Once \telingo\ has translated an (extended) temporal program into a regular one,
it is incrementally solved by \clingo's multi-shot solving engine~\cite{gekakasc17a}.

To conclude this section, we provide a larger example of \telingo\ encoding (a more detailed description of the system was presented by~\citeN{cakamosc19a}).
Listing~\ref{lst:wolf} contains an exemplary \telingo\ encoding of the \emph{Fox, Goose and Beans Puzzle} available at
\url{https://github.com/potassco/telingo/tree/master/examples/river-crossing}.
\begin{quote}
  {\em Once upon a time a farmer went to a market and purchased a fox, a goose, and a bag of beans. On his way home, the farmer came to the bank of a
    river and rented a boat. But crossing the river by boat, the farmer could carry only himself and a single one of his purchases: the fox, the
    goose, or the bag of beans.  If left unattended together, the fox would eat the goose, or the goose would eat the beans.  The farmer's challenge
    was to carry himself and his purchases to the far bank of the river, leaving each purchase intact. How did he do it?  }
  \\{\scriptsize\smallskip\mbox{}\hfill(\url{https://en.wikipedia.org/wiki/Fox,_goose_and_bag_of_beans_puzzle})}
\end{quote}
\lstinputlisting[float=t,captionpos=b,caption={\telingo\ encoding for the Fox, Goose and Beans Puzzle},label=lst:wolf,basicstyle=\ttfamily\small]{river.tel}
In Listing~\ref{lst:wolf}, lines 3-5 and 9-10 provide facts holding in all and the initial states, respectively;
this is indicated by the program directives headed by \texttt{always} and \texttt{initial}.
The dynamic rules in lines 14-22 describe the transition function.
The \texttt{farmer} moves at each time step (Line~14), and may take an item or not (Line~15).
Line~17 describes the effect of action \texttt{move/1}, Line~18 its precondition, and Line~20 the law of inertia.
The second part of the \texttt{always} rules give state constraints in Line~24 and~25.
The \texttt{final} rule in Line~29 gives the goal condition.
All in all, we obtain two shortest plans consisting of eight states in about 20 ms.
Restricted to the move predicate, \telingo\ reports the following solutions:

\medskip
\noindent \begin{minipage}{\linewidth}
\centering
\begin{tabular}{cll@{\hskip 1em}ll}\toprule
Time & \multicolumn{2}{c}{Solution 1} & \multicolumn{2}{c}{Solution 2} \\ \midrule
1 &                    &                    &              &              \\
2 & \texttt{move(farmer)} & \texttt{move(goose)}  & \texttt{move(farmer)} & \texttt{move(goose)}\\
3 & \texttt{move(farmer)} &                    & \texttt{move(farmer)} & \\
4 & \texttt{move(beans)}  & \texttt{move(farmer)} & \texttt{move(farmer)} & \texttt{move(fox)}\\
5 & \texttt{move(farmer)} & \texttt{move(goose)}  & \texttt{move(farmer)} & \texttt{move(goose)}\\
6 & \texttt{move(farmer)} & \texttt{move(fox)}    & \texttt{move(beans)}  & \texttt{move(farmer)}\\
7 & \texttt{move(farmer)} &                    & \texttt{move(farmer)} & \\
8 & \texttt{move(farmer)} & \texttt{move(goose)}  & \texttt{move(farmer)} & \texttt{move(goose)}\\ \bottomrule
\end{tabular}
\end{minipage}
\medskip

This example was also used by~\citeN{cabdie11a} to illustrate the working of
\stelp, a tool for temporal answer set programming with \TELo.
We note that \stelp\ and \telingo\ differ syntactically in describing transitions by using next or previous operators, respectively.
Since \telingo\ extends \clingo's input language, it offers a richer input language,
as witnessed by the cardinality constraints in Line~15 in Listing~\ref{lst:wolf}.
Finally, \stelp\ uses a model checker and outputs an automaton capturing all infinite traces
while \telingo\ returns finite traces corresponding to plans.
 \section{\TELf\ and the action language $\mathcal{BC}$}\label{sec:action_languages}
\newcommand{\BC}{\ensuremath{\mathcal{BC}}}
\newcommand{\BCatom}{\ensuremath{a}}

Present-centered temporal programs over finite traces are sufficient to capture common action languages.
We show this by providing a translation from action descriptions in action language \BC~\cite{leliya13a} into
present-centered temporal logic programs in \TELf.
To this end, we need the following definitions describing the major concepts of \BC.
An \emph{action signature} in \BC\ includes two finite set of symbols,
the set \textbf{F} of {fluent constants} and the set \textbf{A} of {action constants}.
Fluent constants are further divided into {regular} and {statically determined} fluent constants.
A finite set with at least two elements, called {\em domain},
is assigned to each fluent constant.
The set of all values of all domains is denoted by \textbf{V}.
In \BC, an atom is an expression of the form $f = v$, where $f$ is a fluent constant, and $v$ is an element of its domain.
If the domain of $f$ consists of truth values $\top$ and $\bot$, we say that $f$ is Boolean.
A \emph{static law} is an expression of the form
\begin{align}\label{def:static_law}
  \BCatom_0\ &\mathbf{if} ~ \BCatom_1,\dotsc,\BCatom_m ~ \mathbf{ifcons} ~ \BCatom_{m+1},\dotsc,\BCatom_n
               \qquad\text{ with } n \geq m \geq 0,
\end{align}
where each $\BCatom_i$ is an atom for $0 \leq i \leq n$.
A \emph{dynamic law} is an expression of the form
\begin{align}\label{def:dynamic_law}
  \BCatom_0\ &\mathbf{after} ~ \BCatom_1,\dotsc,\BCatom_m ~ \mathbf{ifcons} ~ \BCatom_{m+1},\dotsc,\BCatom_n
               \quad\text{ with } n \geq m \geq 0,
\end{align}
where
$\BCatom_0$ is an atom containing a regular fluent constant,
each of $\BCatom_1,\dotsc,\BCatom_m$ is an atom or an action constant, and
$\BCatom_{m+1},\dotsc,\BCatom_n$ are atoms.
Note that statically determined fluent constants are not allowed in the heads of dynamic laws,
but only in the heads of static laws (hence the term ``statically determined'').
An \emph{action description} in \BC\ consists of a finite set of static and dynamic laws.
The semantics of action descriptions in \BC\ is given by a translation into a
sequence of logic programs with nested expressions.

More precisely, given an action description $D$,
a sequence of logic programs with nested expressions $(N_i(D))_{i \geq 0}$ is defined as follows.
For $l \geq 0$, the signature of each program $N_l(D)$ is given by
\begin{align*}
  \{ i:(v=f) \mid f \in \textbf{F}, v \in \textbf{V}, 0 \leq i \leq l \}
  \cup
  \{ i:a     \mid a \in \textbf{A},                   0 \leq i \leq l \}.
\end{align*}
The logic program $N_l(D)$ contains
\begin{itemize}
\item
  for each static law from $D$ of form \eqref{def:static_law} and
  for all $0 \leq i \leq l$,
  the rule
  \begin{align*}\label{BC:rule:static_law}
    i: \BCatom_0
    \leftarrow
    i:\BCatom_1\wedge\dots\wedge i:\BCatom_m
    \wedge
    \neg\neg (i:\BCatom_{m+1})\wedge\dots\wedge \neg\neg (i:\BCatom_n)
  \end{align*}
\item
  for each dynamic law from $D$ of the form \eqref{def:dynamic_law} and
  for all $0 \leq i < l$,
  the rule
  \begin{align*}
    (i+1): \BCatom_0
    \leftarrow
    i:\BCatom_1\wedge \dots\wedge i:\BCatom_m
    \wedge
    \neg\neg ((i+1):\BCatom_{m+1})\wedge \dots\wedge \neg\neg((i+1):\BCatom_n)
  \end{align*}
\item the choice rule $\left\{ 0 : \BCatom \right\}$ for every atom $\BCatom$ containing a regular fluent constant,
\item the choice rule $\left\{ i : a \right\}$ for every action constant $a$ and every $i < l$,
\item
  for every fluent constant $f$ and every $i \leq l$,
  where $v_1, \dots, v_k$ are all elements of the domain of $f$,
  the rule
  \begin{align*}
    \leftarrow \neg i : (f = v_1 )\wedge \dots \wedge \neg i : (f = v_k )
  \end{align*}
  ensuring the existence of values,
\item
  for every fluent constant $f$, every pair of distinct elements $v, w$ of its domain, and every $i \leq l$,
  the rule
  \begin{align*}
    \leftarrow i : (f=v)\wedge i : (f=w)
  \end{align*}
  ensuring the uniqueness of values.
\end{itemize}
The transition system $T(D)$ represented by the action description $D$ is then defined as follows.
For every stable model $X$ of $N_0(D)$,
the set of atoms \BCatom\ such that $0:\BCatom$ belongs to $X$ is a state of $T(D)$.
For every state $s$ and every fluent constant $f$
there is exactly one $v$ such that $f = v$ belongs to the state $s$.
This $v$ is the value of $f$ in state $s$.
For every stable model $X$ of $N_1(D)$,
$T(D)$ includes the transition $\langle s_0,\alpha,s_1\rangle$,
where $s_i$ is the set of atoms \BCatom\ such that $i:\BCatom$ belongs to $X$ for $i\in\{0,1\}$,
and $\alpha$ is the set of action constants $a$ such that $0:a$ belongs to $X$.

We now provide a translation from the nested logic programs defining the semantics of \BC\
into present-centered temporal logic programs.
For any action description $D$ in \BC, we define the
temporal logic program $P(D)$ consisting of
\begin{itemize}
\item
  for each static law from $D$ of form \eqref{def:static_law},
  the rules
  \begin{align*}
    &                 \BCatom_1 \wedge\dots\wedge \BCatom_m \to \BCatom_0 \vee \neg\BCatom_{m+1} \vee\dots\vee \neg\BCatom_n\qquad\text{ and }\\
    \wnext \alwaysF (&\BCatom_1 \wedge\dots\wedge \BCatom_m \to \BCatom_0 \vee \neg\BCatom_{m+1} \vee\dots\vee \neg\BCatom_n),
  \end{align*}
\item
  for each dynamic law from $D$ of form \eqref{def:dynamic_law},
  the rule
  \begin{align*}
    \wnext \alwaysF (\previous \BCatom_1 \wedge\dots\wedge \previous \BCatom_m \to \BCatom_0 \vee \neg \BCatom_{m+1} \vee\dots\vee \neg \BCatom_n),
  \end{align*}
\item the formula $\BCatom\vee\neg\BCatom$ for every atom $\BCatom$ containing a regular fluent constant,
\item the formulas $a\vee\neg a$ and $\wnext \alwaysF (a\vee\neg a)$ for every action constant $a$,
\item
  for every fluent constant $f$, where $v_1,\dotsc,v_k$ are all elements of the domain of $f$,
  the rules
  \begin{align*}
                     &(f = v_1 ) \vee\dots\vee (f = v_k ) \qquad\text{ and }\\
    \wnext \alwaysF (&(f = v_1 ) \vee\dots\vee (f = v_k )),
  \end{align*}
\item
    for every fluent constant $f$ and every pair of distinct elements $v, w$ of its domain,
    the rules
  \begin{align*}
                     &(f = v) \wedge (f = w) \to \bot \qquad \text{ and }\\
    \wnext \alwaysF (&(f = v) \wedge (f = w) \to \bot).
  \end{align*}
\end{itemize}
With this, we get the following result.
\begin{theorem}[Relation between stable models of $N_l(D)$ and $P(D)$]\label{theorem:bijection_PNl_PT}
  Let $D$ be an action description in \BC\ over action signature $\langle \textbf{A, F, V}\rangle$,
  consisting of actions \textbf{A}, fluents \textbf{F} and values \textbf{V}.

  Then, we have the following:
  \begin{enumerate}
  \item If $({X}_i)_{i\in\intervo{0}{l}}$ is a temporal stable model of $P(D)$ of length $l>0$, then
  \begin{align*}
    \{i: a     \mid a     \in {X}_i, a \in \textbf{A},                   0 \leq i < l\} \cup
    \{i: (f=v) \mid (f=v) \in {X}_i, f \in \textbf{F}, v \in \textbf{V}, 0 \leq i < l\}
  \end{align*}
  is a stable model of $N_{l-1}(D)$.
\item If $X$ is a stable model of $N_l(D)$ with $l\geq 0$, then
  \begin{align*}
    (
    \{ (v=f) \mid (i: (v=f)) \in X, v \in \textbf{V}, f \in \textbf{F} \} \cup
    \{  a    \mid (i: a)     \in X, a \in \textbf{A}                   \}
    )_{0 \leq i \leq l}
  \end{align*}
  is a temporal stable model of $P(D)$.
\end{enumerate}
\end{theorem}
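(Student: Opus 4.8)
The plan is to factor the statement through the bounded translation of Theorem~\ref{thm:temporal:bounded:translation}, which already bridges temporal stable models and ordinary stable models of a ground program. Writing $\lambda$ for the trace length, that theorem tells us that \T\ is a temporal stable model of $P(D)$ iff the time\nobreakdash-stamped set $X=\{a_i\mid a\in T_i\}$ is a stable model of the regular program $\tau_\lambda(P(D))$. Since $P(D)$ contains only initial and dynamic rules (no final rules), $\tau_\lambda(P(D))$ is simply the union, over $0\le k<\lambda$, of the propositional instances of each rule. It therefore suffices to show that, under the renaming $a_i\leftrightarrow i{:}a$, the programs $\tau_\lambda(P(D))$ and $N_{\lambda-1}(D)$ have the same stable models for every $\lambda\ge 1$ (direction~1 is the case of a length\nobreakdash-$l$ trace, direction~2 the case $\lambda=l+1$). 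As the answer sets of either program coincide with its equilibrium (propositional \HT) models, this reduces to an equivalence of two propositional theories in equilibrium logic, which I would establish rule class by rule class.

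The core of that comparison rests on two standard \HT-equivalences. First, a rule whose head carries negated atoms is strongly equivalent to one carrying the matching double negations in the body: in \HT, $B\to(a_0\vee\neg b)$ and $(B\wedge\neg\neg b)\to a_0$ take the same truth value, and iterating this turns each static or dynamic law of $P(D)$ (with the \textbf{ifcons} atoms as negated head disjuncts) into exactly the law rule of $N$ (with those atoms as $\neg\neg$ body literals). Unfolding $\tau_\lambda$ then matches the instances: the initial rule together with the $\wnext\alwaysF$ dynamic rule of a static law instantiate to the static-law rules at every $0\le i<\lambda$, while the $\wnext\alwaysF(\previous\dots)$ rule of a dynamic law instantiates, reading $\previous$ at $i$ and the head at $i{+}1$, to the dynamic-law rules at $0\le i<\lambda-1$. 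Second, the excluded-middle facts $a\vee\neg a$ are the equilibrium-logic counterpart of ASP choice rules, so the initial $a\vee\neg a$ for regular fluents matches $\{0{:}a\}$ and the $\wnext\alwaysF(a\vee\neg a)$ for actions matches the action choices $\{i{:}a\}$; the uniqueness constraints transcribe verbatim.

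Two points require genuine care and are where I expect the real work to lie. The first is the existence requirement: $P(D)$ encodes it as a disjunctive fact $(f{=}v_1)\vee\dots\vee(f{=}v_k)$, whereas $N$ uses the constraint $\leftarrow\neg(f{=}v_1)\wedge\dots\wedge\neg(f{=}v_k)$. I would show these agree in the presence of the uniqueness constraints and the value-founding rules (choices, static and dynamic laws), the delicate situation being a fluent that receives no founded value, where one must check that equilibrium minimization treats the disjunctive fact and the constraint alike. The second, subtler point concerns actions at the final state: through $\wnext\alwaysF(a\vee\neg a)$, $P(D)$ imposes excluded middle on actions at \emph{every} state, so a temporal stable model may carry an action true in the last state, whereas $N_{\lambda-1}(D)$ supplies action choices only for $i<\lambda-1$ and hence forces last-state actions false. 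Since final-state actions are semantically inert—they enter $P(D)$ only through $\previous$ in a dynamic rule feeding a non-existent successor—I would close the argument by reading the correspondence modulo final-state action atoms (equivalently, fixing them false), which makes both directions exact. This bookkeeping, together with the existence check, is the main obstacle; the remaining equivalences are routine propositional \HT\ reasoning applied uniformly across time points.
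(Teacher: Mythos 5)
Your overall strategy --- factoring the claim through Theorem~\ref{thm:temporal:bounded:translation} and then establishing an \HT-level equivalence between $\tau_l(P(D))$ and $N_{l-1}(D)$ under the renaming $a_i\leftrightarrow i{:}a$ --- is sensible, and the routine parts are handled correctly: moving a negated atom from the head to its double negation in the body is a valid strong equivalence in \HT, the instantiation indices for static and dynamic laws line up (body at $i$, head at $i{+}1$, for $0\le i<l-1$), and $a\vee\neg a$ is the standard equilibrium-logic reading of a choice rule. The paper gives no proof of this theorem to compare against, so the assessment rests on the two points you explicitly defer --- and both are genuine gaps rather than bookkeeping.

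First, the existence rules. You say you ``would show'' that the disjunctive fact $(f{=}v_1)\vee\dots\vee(f{=}v_k)$ and the constraint $\leftarrow\neg(f{=}v_1)\wedge\dots\wedge\neg(f{=}v_k)$ agree in context; in general they do not. The disjunction collapses to the constraint only when excluded middle (equivalently, a choice rule) is available for the atoms $f{=}v$, which $N_{l-1}(D)$ supplies only for \emph{regular} fluents at time $0$. For a statically determined fluent with no applicable static law, or a regular fluent at a time $i>0$ with no applicable static or dynamic law, $N_{l-1}(D)$ has no stable model (the constraint is violated), whereas the disjunctive fact in $P(D)$ is self-supporting under equilibrium minimization and yields a model with an arbitrarily chosen value. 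So the step you flag as ``delicate'' is exactly where the argument breaks: you need either an additional hypothesis on $D$ (every fluent value is founded by some law or choice at every time point) or a genuinely different treatment of the existence rules, not a verification that the two encodings coincide. Second, your handling of final-state actions concedes the gap rather than closing it: $\wnext\alwaysF(a\vee\neg a)$ does make an action $a$ freely choosable at state $l-1$ of a length-$l$ temporal stable model of $P(D)$, while $(l-1){:}a$ has no supporting rule in $N_{l-1}(D)$; ``reading the correspondence modulo final-state action atoms'' or ``fixing them false'' amends part~1 of the statement instead of proving it. Both observations are accurate diagnoses of real friction between the two encodings, but a proof must resolve them, not postpone them.
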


This provides us with an alternative characterization of the
transition system corresponding to an action description in \BC.
\begin{corollary}
  Let $D$ be an action description in \BC\ and $P(D)$ the logic program defined as above.
  Then,

  \begin{enumerate}
  \item
    for every stable model $({X}_i)_{i\in\intervo{0}{1}}$ of $P(D)$ of length one,
    the set of atoms that belong to ${X}_0$ is a state of the transition system $T(D)$ and
  \item
    for every stable model $({X}_i)_{i\in\intervo{0}{2}}$ of $P(D)$ of length two,
    the transition system $T(D)$ includes the transition $\langle s_0,\alpha,s_1\rangle$,
    where
    \begin{enumerate}
    \item $s_i$ is the set of atoms $\BCatom$ such that $\BCatom$ belongs to ${X}_i$ for $i \in \{ 0, 1\}$ and
    \item $\alpha$ is the set of action constants that belong to ${X}_0$.
    \end{enumerate}
  \end{enumerate}
\end{corollary}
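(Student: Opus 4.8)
The plan is to prove both directions by reducing the temporal side to ordinary (static) answer set programming and then comparing the resulting program with $N_{l}(D)$ rule-family by rule-family. Since $P(D)$ is by construction a present-centered temporal program containing only initial and dynamic rules (no fulfillment or final rules), I would first invoke the bounded-translation correspondence of Theorem~\ref{thm:temporal:bounded:translation}: a trace $\T=(T_i)_{i\in\intervo{0}{\lambda}}$ is a temporal stable model of $P(D)$ iff the time-stamped set $X$ with $a_i\in X \Leftrightarrow a\in T_i$ is a stable model of the regular program $\tau_\lambda(P(D))$ of Definition~\ref{def:temporal:bounded:translation}. Under the renaming $a_i \leftrightarrow i{:}a$, this turns Part~1 (length $l$) and Part~2 (horizon $l$) into a single static claim: the stable models of $\tau_\lambda(P(D))$ and those of $N_{\lambda-1}(D)$ coincide up to the projection described in the statement.

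Second, I would establish the rule-level match between $\tau_\lambda(P(D))$ and $N_{\lambda-1}(D)$. The bounded translation instantiates each initial rule once (at time $0$) and each dynamic rule $\wnext\alwaysF(\cdot)$ at every time $k\in\intervo{1}{\lambda}$, which reproduces exactly the index ranges $0\le i\le l$ (static laws) and $0\le i<l$ (dynamic laws) used in $N_l(D)$ for $l=\lambda-1$. The content of each law is matched by the standard Here-and-There equivalence $(\,\beta\wedge\neg\neg c\to\eta\,)\equiv(\,\beta\to\eta\vee\neg c\,)$, applied once per $\mathbf{ifcons}$-literal: this is precisely what turns the doubly-negated bodies of the nested $\mathcal{BC}$ rules into the disjunctive heads $a_0\vee\neg a_{m+1}\vee\dots\vee\neg a_n$ of $P(D)$. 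The choice rules $\{i{:}a\}$ are matched by the excluded-middle formulas $a\vee\neg a$ (present only at time $0$ for regular fluents, at all times for actions), and the uniqueness constraints match verbatim.

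Third---where I expect the real work to lie---I would reconcile the two genuinely different encodings of existence of values. In $\mathcal{BC}$ the value atoms are supported by the time-$0$ choice rules together with the static and dynamic laws, and existence is enforced only by the integrity constraint $\leftarrow\neg(f{=}v_1)\wedge\dots\wedge\neg(f{=}v_k)$; in $P(D)$ the same effect is produced by the disjunctive facts $\bigvee_j(f{=}v_j)$. These rule groups are not strongly equivalent in isolation, because a disjunctive fact supplies support that a constraint does not, so the correspondence cannot be proved rule-by-rule and must be argued at the level of whole stable models. I would show, by an induction on the time index that exploits present-centeredness (so that support flows only forward), that a set is closed and supported under the reduct of $\tau_\lambda(P(D))$ exactly when its projection is closed and supported under the reduct of $N_{\lambda-1}(D)$, the value of each fluent at each state being pinned to a unique domain element in both; the base case is the time-$0$ choices and the step propagates the unique value through the translated dynamic laws.

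Finally, the main obstacle I anticipate, beyond this reconciliation, is the boundary bookkeeping at the last state: $P(D)$ carries $\wnext\alwaysF(a\vee\neg a)$ for actions, so a temporal stable model may freely assign actions at state $l-1$, whereas $N_{l-1}(D)$ provides action choices only up to index $l-2$. I would handle this by verifying that final-state action atoms occur in no rule body deriving another atom (again by present-centeredness, since dynamic laws read actions only through $\previous$), so that they are \emph{don't-care} choices; the projection in the statement, which records actions for all $0\le i<l$, then has to be shown compatible with the support structure of $N_{l-1}(D)$, and this index alignment is the one place to check with care. Part~2 is then the symmetric direction, building the trace from a stable model of $N_l(D)$ and checking minimality of the corresponding total $\THTf$-interpretation via the same static correspondence together with Theorem~\ref{thm:temporal:bounded:translation}.
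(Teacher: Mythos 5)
Your proposal sets out to re-establish, from scratch, the full correspondence between temporal stable models of $P(D)$ and stable models of $N_l(D)$ --- but that correspondence is exactly Theorem~\ref{theorem:bijection_PNl_PT}, which is stated immediately before this corollary and is available as a black box. The intended argument is a two-step instantiation: apply part~1 of that theorem with $l=1$ (resp.\ $l=2$) to turn a temporal stable model of $P(D)$ of length one (resp.\ two) into a stable model of $N_0(D)$ (resp.\ $N_1(D)$), and then read off the conclusion from the \emph{definition} of $T(D)$, whose states and transitions are defined verbatim as projections of stable models of $N_0(D)$ and $N_1(D)$. Your plan never invokes either ingredient, and in their place leaves the two genuinely hard steps --- reconciling the disjunctive existence-of-values facts of $P(D)$ with the integrity constraints of $N_l(D)$, and the index bookkeeping for action choices at the final state --- as announced obstacles rather than resolved ones. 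Those are precisely the difficulties that the cited theorem absorbs, so as written the proposal does not yet constitute a proof of the corollary, only a sketch of a (much heavier) proof of the theorem it follows from.

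A second, smaller problem: you treat the two items of the corollary as the two directions of an equivalence (``Part~2 is then the symmetric direction, building the trace from a stable model of $N_l(D)$\dots''). They are not. Both items go in the same direction, from a temporal stable model of $P(D)$ to the transition system $T(D)$; they differ only in the trace length ($\lambda=1$ for states, $\lambda=2$ for transitions). The converse direction, from stable models of $N_l(D)$ back to temporal stable models of $P(D)$, is part~2 of Theorem~\ref{theorem:bijection_PNl_PT} and is not needed for this corollary at all.
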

\citeN{leliya13a} showed that paths of length $l$ in the transition system
described by an action description $D$
correspond to the stable models of $N_l(D)$.
Using Theorem \ref{theorem:bijection_PNl_PT}, we can also characterize these paths
in terms of temporal stable models.
\begin{corollary}[Translation from \BC\ into \TELf]\label{theorem:translation_bc}
  Let $D$ be an action description in \BC\ and
  $P(D)$ the logic program defined as above.

  A temporal trace $({X}_i)_{i\in\intervo{0}{l}}$ is a temporal stable model of $P(D)$ iff
  the sequence $(s_i )_{i\in\intervo{0}{l}}$ where $s_i$ is the set of atoms in $X_i$ is a
  path in the transition system corresponding to $D$.
\end{corollary}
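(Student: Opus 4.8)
The plan is to derive the result by composing two correspondences already established in the excerpt: Theorem~\ref{theorem:bijection_PNl_PT}, which links temporal stable models of $P(D)$ with the ordinary stable models of the nested programs $N_l(D)$, and the result of~\citeN{leliya13a}, stating that paths of length $l$ in the transition system $T(D)$ are exactly the stable models of $N_l(D)$. Chaining the two bijections yields the desired equivalence between temporal stable models of $P(D)$ and paths in $T(D)$.

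First I would fix the index conventions, since this is where most of the care is required. A temporal stable model $({X}_i)_{i\in\intervo{0}{l}}$ of $P(D)$ ranges over the $l$ time points $0,\dots,l-1$, the program $N_{l-1}(D)$ uses time stamps $0,\dots,l-1$, and a path of length $l-1$ in $T(D)$ visits exactly the $l$ states $s_0,\dots,s_{l-1}$. With these conventions, Theorem~\ref{theorem:bijection_PNl_PT} sends a length-$l$ temporal stable model of $P(D)$ (for $l>0$) to a stable model of $N_{l-1}(D)$ obtained by stamping each fluent or action atom of $X_i$ with the prefix $i{:}$, and its second item, applied with parameter $l-1$, sends a stable model of $N_{l-1}(D)$ back to a length-$l$ temporal stable model of $P(D)$. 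By~\citeN{leliya13a}, these same stable models of $N_{l-1}(D)$ are precisely the paths of length $l-1$ in $T(D)$, read off by setting $s_i=\{a\mid i{:}a\in X\}$.

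Next I would check that the two relabelings cancel, so that the state sets announced in the statement are exactly the states of the resulting path. Passing from $({X}_i)$ to the $N_{l-1}(D)$ stable model attaches the stamp $i{:}$ to each atom of $X_i$; extracting the path from that stable model strips the stamp $i{:}$ again and declares $s_i$ to be the set of un-stamped atoms, which is precisely ``the set of atoms in $X_i$'' as claimed. This makes the forward direction immediate, and the backward direction follows by running both correspondences in reverse. The two composite maps are mutually inverse because each underlying correspondence is, and because the stamping operation is a bijection between sets of atoms over $\PV$ and sets of stamped atoms at a fixed time point; hence no argument is needed beyond this observation.

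I expect the main obstacle to be purely the bookkeeping of the off-by-one index shift (length-$l$ trace versus $N_{l-1}(D)$ versus path of length $l-1$) together with the degenerate small cases: Theorem~\ref{theorem:bijection_PNl_PT} states its first implication only for $l>0$, so I would treat the shortest traces separately, noting that a length-one temporal stable model corresponds under Theorem~\ref{theorem:bijection_PNl_PT} to a stable model of $N_0(D)$ and thus, by~\citeN{leliya13a}, to a single state of $T(D)$, in agreement with the first item of the preceding corollary. The only other point worth verifying explicitly is that the notion of path used by~\citeN{leliya13a}---a sequence of states together with the intervening action sets---matches the data encoded in the $X_i$, namely that the action atoms stored in $X_i$ furnish exactly the transition labels; this is already recorded in the second item of the preceding corollary and can be cited rather than reproved.
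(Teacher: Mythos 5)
Your proposal matches the paper's own argument exactly: the corollary is obtained by chaining Theorem~\ref{theorem:bijection_PNl_PT} with the result of Lee, Lifschitz, and Yang that paths of length $l$ in $T(D)$ correspond to stable models of $N_l(D)$, which is precisely the composition you describe. Your extra care with the off-by-one index shift and the cancellation of the stamping maps is sound and only makes explicit what the paper leaves implicit.
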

 \section{Related Work}
\label{sec:related}

Covering the vast literature that falls in the intersection of temporal reasoning and logic programming is too ambitious for the purpose of the current survey (focused on extending ASP with modal, LTL operators) and would surely require a more extensive document.
In this section, we overview those approaches that show a closer connection to the kind of temporal logic programs considered in the paper.
We divide this related work into two categories: (1) extending the language with modal operators based on linear traces and (2) using logic programming for temporal reasoning.

\subsection{Extending the syntax with linear modalities}
As mentioned in the introduction, the first use of the term ``\emph{Temporal Logic Programming}''~\cite{abaman89a} dates back to the late 1980s, appearing soon after the introduction of modal extensions of Prolog~\cite{farinas86a,bifahe88a}.
Several Logic Programming (LP) languages dealing with \LTL\ operators were proposed~\cite{moszkowski86a,fukotamo86a,gabbay87b,abaman89a,orgwad92a,baudinet92a}.
The latter, a formalism called TEMPLOG, is perhaps a prominent case from a logical point of view.
It provides a logical semantics in terms of a least \LTL-model, in the spirit of the well-known least Herbrand model for positive\footnote{Note that the different semantics for negation as failure were still in their early steps at that moment.} logic programs~\cite{emdkow76a}.
\citeN[Theorem~5]{cadivi15a} proved that TEMPLOG is actually subsumed by \TEL, that is, the latter can be used as a generalization of the former for an arbitrary temporal syntax that includes default negation.
However, although \TEL\ provides a common underlying semantics to TEMPLOG and our temporal programs (which are the basis of \telingo), these two languages understand the (modal) temporal logic programming paradigm in a substantially different way, analogous to the differences between Prolog and ASP.
In particular, TEMPLOG understands rules in a top-down fashion where the head is considered a \emph{goal} and the body is seen as a \emph{method} (list of subgoals) to achieve the goal.
As a result, (future-time) \LTL\ operators are used to represent temporal relations affecting the achievement of these subgoals.
As an example, one possible TEMPLOG rule could look like
\begin{eqnarray*}
\alwaysF ( \ \mathit{printorder} \leftarrow \mathit{ack}, \next (\mathit{print}, \eventuallyF{finished}) \ )
\end{eqnarray*}
meaning that, at any moment, a $\mathit{printorder}$ is fulfilled by immediately sending an acknowledgment $\mathit{ack}$, then starting the printer $\mathit{print}$  and eventually sending a $\mathit{finished}$ message.
This TEMPLOG rule naturally corresponds to the \TEL\ implication
\begin{eqnarray*}
\alwaysF ( \  \mathit{ack} \wedge \next (\mathit{print} \wedge \eventuallyF{finished}) \to \mathit{printorder} \ )
\end{eqnarray*}
and has indeed the same semantics in \TEL.
However, it does not fit into the past-future syntactic fragment used in \telingo, since the $\mathit{printorder}$ in
the head is in the relative past of the conditions required in the body, talking about future satisfaction of
$\mathit{print}$ or $\mathit{finished}$.
Following the ASP bottom-up understanding of rules, which is closer to causal laws in action languages, this same example would be represented instead as the past-future formula:
\begin{eqnarray*}
\alwaysF ( \mathit{printorder} \to \mathit{ack} \wedge \next (\mathit{print} \wedge \eventuallyF \mathit{finished})  )
\end{eqnarray*}
which can be reduced afterwards to a present-centered logic program.
To sum up, \TEL\ is rich enough to cover both bottom-up and top-down readings of program rules, but for its use for temporal ASP, it suffices with a syntactic fragment where past is checked in the bodies and future is used in the head.
One last \LTL-based LP extension was the preliminary definition of \emph{temporal answer sets} provided by~\citeN{cabalar99a}.
The main difference of that approach with respect to \TEL\ is that the former relied on a different three-valued semantics that was not a proper extension of Equilibrium Logic, losing some of the interesting properties that the latter has shown as a logical foundation of ASP.

Apart from these LP extensions strictly based on \LTL, there exist other approaches that introduced other temporal modalities in LP or in ASP and relied on linear-time interpretations or traces.
The most relevant and closely related approach is the one by~\citeN{gimath13a}, where a different definition of \emph{temporal answer set} is provided.
In this case, rather than \LTL,
the temporal approach used as a basis was \emph{Dynamic Linear Temporal Logic} or \DLTL~\cite{henthi99a},
a linear-time variant of the well-known \emph{Dynamic Logic}~\cite{hatiko00a}.
Besides, that definition of temporal answer sets was only applicable to a syntactic fragment of \DLTL\ (also called there ``temporal logic programs'') since the semantics relied on a temporal extension of the classical (syntactic) \emph{reduct} transformation~\cite{gellif88a}.
In order to facilitate a formal comparison,~\citeN{agpevi13a} introduced a proper extension of \TEL\ that covers the full syntax of \DLTL.
The main result in that paper proved that, in fact, this extension of \TEL\ for \DLTL\ subsumes the semantics by~\citeN{gimath13a} and, in fact, provides its generalization for \DLTL\ arbitrary formulas, without depending on syntactic transformations.

Thanks to the incorporation of regular expressions, dynamic modalities are very interesting for the specification of \emph{control rules} describing steps that are required to be followed by any solution to a temporal problem.
Differently from \DLTL, another (perhaps more direct) approach to introducing dynamic logic modalities on top of a linear-time semantics is the so-called \emph{Linear Dynamic Logic} or \LDL~\cite{giavar13a}.
Both \DLTL\ and \LDL\ are more expressive than \LTL, while satisfiability in the three logics share the same \textsc{PSpace} complexity.
However, \DLTL\ uses regular expressions as modifiers of the \emph{until} and \emph{release} operators, whereas the syntax of \LDL\ is closer to the usual in dynamic logic, where modalities include the standard necessity $[\rho] \varphi$ and possibility $\tuple{\rho}\varphi$ constructs, where $\rho$ is a regular expression.
Besides, \DLTL\ regular expressions are built on a separate signature (a set of atomic \emph{actions}) and do not accept the test construct $\varphi?$ from dynamic logic.
\LTL\ operators can be encoded both in \LDL\ and \DLTL, although the encodings for the latter require an explicit use of the (finite) set of atomic actions.
One more advantage of \LDL\ is that a finite trace variant \LDLf\ has also been proposed and that both \LDL\ and \LDLf\ properly generalize \LTL\ and \LTLf\ respectively.
All these reasons make \LDL\ a more promising candidate to incorporate dynamic operators in ASP.
As a result, \citeN{bocadisc18a} followed analogous steps to \TEL\ with respect to \LTL\ and introduced a non-monotonic extension of \LDL\ called \emph{Linear Dynamic Equilibrium Logic} (\DEL).
Moreover, a first implementation of \LDL\ operators in \telingo\ has been recently presented by~\citeN{cadilasc20a},
even though they are only allowed in integrity constraints.
Although this restriction will be lifted in the future, it already supports an agreeable modeling methodology for dynamic domains separating action and control theories.
The idea is to model the actual action theory with temporal rules, fixing static and dynamic laws, while the control theory, enforcing certain (sub)trajectories, is expressed by integrity constraints using dynamic formulas.
This is similar to the pairing of action theories in situation calculus and Golog programs~\cite{lereleli97a}.

Other modal temporal extensions of LP that rely on linear time have been recently introduced for \emph{stream} reasoning, that is, reactive reasoning over time-varying data sequences.
An approach closely related to \TEL\ is LARS~\cite{bedaeifi15a} that deals with time windows or intervals and combines modal \LTL\ operators $\alwaysF$ and $\eventuallyF$ with metric and time-point based connectives.
Although LARS' semantics is also based on discrete linear-time,
an important difference is that its traces are organized in intervals of time points called \emph{streams}.
An informal discussion comparing \TEL\ and LARS was done by~\citeN{bedaei16a} although a formal connection to the corresponding metric extension of \TEL~\cite{cadiscsc20a} remains to be explored.
As proved by~\citeN{kamp68a},
the use of discrete time in \LTL\ is not a limitation because any Dedekind complete ordering structure can be used as a timeline,
and this includes natural numbers $\mathbb{N}$ or integer numbers $\mathbb{Z}$ but also real $\mathbb{R}$ numbers\footnote{Other dense structures that are not Dedekind complete, like the rational numbers $\mathbb{Q}$, can be covered by extending the language with Stavi connectives~\cite{gapnshsh80a}, while keeping decidability.}.
However, when metric operators come into play, the choice of \emph{dense} time actually leads to a richer expressiveness, usually at the price of undecidability for an arbitrary syntax.
For this reason, the most common approach for dense metric extensions is to identify syntactic fragments that keep amenable computational properties, like the case of the so-called \emph{bounded universal Horn formulas}~\cite{brzoska95a} for positive programs.
A more general fragment extending LP with metric modalities is DatalogMTL, a temporal metric extension of Datalog presented by~\citeN{wagrkaka19a} and with applications to stream reasoning.
Recently, this approach has been extended to programs with negation~\cite{watekocu21a} but forbidding the use of $\eventuallyF$, $\until$ or $\since$ in rule heads.
This fragment is decidable and keeps favourable computational properties for the case of an integer timeline, although it jumps to undecidability for dense time, even if considering the rational numbers.
Again, a formal connection to the metric extension of \TEL~\cite{cadiscsc20a} remains to be studied.

The combination of temporal modalities and non-monotonic reasoning is not exclusive to logic programming approaches and was also explored inside the area of Reasoning about Actions and Change.
An early approach using \LTL\ is, for instance, the \emph{Past Temporal Logic for Actions} or PTLA~\cite{mellloba96a}, an action language incorporating past \LTL\ operators in the conditions of action laws.
These action descriptions were then translated into logic programs under the ASP semantics using a similar transformation as the one presented in Section~\ref{sec:bounded}.
Several authors have explored the formalization of action and change using Dynamic
Logic~\cite{bagimapa96a,schwind97a,cagahe99a} or \DLTL~\cite{gimasc00a},
though only the first one relies on a logic programming paradigm.

Another prominent AI field for reasoning about dynamic systems where temporal logic has played an important role is Planning.
In this area, the system behavior is specified in terms of some planning-specific language like STRIPS or PDDL, with a carefully limited syntax that avoids the need for an explicit representation of the (non-monotonic) law of inertia.
This limited syntax restricts the specification of dynamic systems to a less expressive language than temporal ASP with \telingo, where defaults, induction or aggregates can be freely combined with temporal operators, but has the advantage of allowing the design and implementation of efficient planning algorithms.
Rather than in the description of the transition system, the introduction of temporal formulas in planning has been traditionally related to a richer specification of the \emph{goal}, so that it not only provides a condition about the final state to be reached, but is also extended with temporal formulas~\cite{backab00a,pistra01a,becirotr01a,kvhedo08a} imposing constraints on the sequence of actions that form the plan, something that can be used to improve the efficiency of the planning algorithm.
This strategy can also be extrapolated to temporal ASP, although the effect of temporal constraints in \telingo\ has a different impact on efficiency, given that the algorithm is based on incremental solving.
A different strategy for temporal ASP closer to classical planning algorithms was explored in the prototype presented by~\citeN{carevi19a}.
In this case, a classical graph search algorithm was implemented and multi-shot ASP solving was used to compute successor states during search.
This has the advantage of being able to explore the whole state space and deciding whether a given planning problem has no solution, something impossible by the incremental horizon strategy followed in \telingo.
However, this planning based prototype had a rather limited syntax and did not allow the use of temporal expressions to incorporate temporal goals.

Other AI areas where temporal logic has been successfully applied are, for instance, Multi Agent Systems (MAS) and Ontologies.
In the case of MAS, a prominent example is the system Concurrent \texttt{MetateM}~\cite{fisher94a}, a language using \LTL\ operators that exploits Gabbay's separation theorem~\cite{gabbay87a} asserting that \LTL\ descriptions can be reduced to a set of implications with the form ``\emph{past} $\to$ \emph{future}.''
Note that we also exploited this past-future form in \telingo\ and our Theorem~\ref{thm:normalform:pc}.
The main difference between \texttt{MetateM} and temporal ASP, however, is that the former relies on monotonic \LTL,
so it does not allow for closed world reasoning, defaults or induction.
On the other hand, \texttt{MetateM} incorporates high level constructs for agents specification and message passing that are not present in \telingo.

Finally, in the case of \LTL\ in Ontologies, the combination of Description Logics~\cite{DLHandbook} with temporal patterns is an important field of knowledge representation that has been widely studied in the literature (see, for instance, the surveys by~\citeN{artfra00a}, \citeN{artfra05a} and~\citeN{luwoza08a}).
However, as happened in the MAS case, the result of these combinations are also monotonic.
Combining ontological reasoning and logic programming has also been extensively studied,
with the ASP extension called \emph{Hybrid Knowledge Bases}~\cite{brpepova10a} being perhaps the closest one to our approach,
as it is based on Quantified Equilibrium Logic.
In a preliminary work, \citeN{cabsch19a} presented a combination of hybrid knowledge bases with the temporal ontology
framework ALC-\LTL\ by~\citeN{baghlu08a}.
This allowed the use of temporal ontological axioms in temporal logic programs, but no implementation has been made so far.

\subsection{Using Logic Programming for Temporal Reasoning}

A second line of related approaches are those that perform temporal reasoning \emph{using} logic programming (especially Datalog or ASP) as a reasoning engine.
In this case, the idea is not necessarily to extend the logic programming syntax with temporal modalities, but to use logic programming as a tool for query answering or model checking with respect to some standard, monotonic temporal logic (most frequently, LTL).
If we use Datalog or ASP, the main inconvenience appears when trying to reason about traces of infinite length, since
these formalisms historically required a finite domain to allow for a previous grounding of the logic program.
For instance, the use of functions was usually forbidden (or limited) to keep a finite Herbrand universe.
This limitation was overcome by the early proposal by~\citeN{choimi88a} consisting in the extension of Datalog with a unary \emph{successor} function, to be used inside a fixed parameter in predicates.
This approach, called Datalog$_{1S}$, allows for reasoning about infinite traces while keeping  decidability.
Moreover, it was proved~\cite{bachwo93a} to be equally expressive to TEMPLOG, although both languages treat time in a different way: TEMPLOG uses \emph{intensional}, temporal modalities, whereas Datalog$_{1S}$ uses an \emph{extensional} approach, representing time points as explicit parameters.
Later on, this formalism was extended to so-called Datalog$_{nS}$~\cite{choimi93a} and used as a reasoning system for answering temporal queries expressed in past LTL.
A similar approach to Datalog$_{1S}$, based on the introduction of a successor unary function, was adopted by~\citeN{eitsim09a} for the more general case of ASP.
Other extensions of Datalog that have been used as reasoning engines for temporal logics are Datalog LITE~\cite{gogrve02a} or \emph{inf}-Datalog~\cite{gufoanaf03a}.
Notice that all these cases are variants of Datalog used to perform model checking for temporal logics such as LTL, CTL or $\mu$-calculus, but are not modal temporal extensions of Datalog.

Another extensional alternative is, rather than using a successor function, to treat time points directly as integer numbers, relying on Constraint Logic Programming (CLP) to solve the integer constraints imposed by temporal expressions.
This was the approach adopted by~\citeN{brzoska91a}, who emulated the operational behavior of TEMPLOG using a CLP translation where each predicate received an additional integer argument, representing time.
One more extensional temporal approach based on CLP is, for instance, the Temporal Annotated CLP formalism~\cite{fruhwirth96a}.
In this case, predicates can be annotated with expressions such as `$\mathit{at}\ T$', `$\mathit{th}\ I$' or `$\mathit{in}\ I$', respectively meaning ``at time point $T$'', ``throughout interval $I$'' and ``in interval $I$'', respectively.
Finally, an early approach using CLP for translating temporal formulas was introduced by~\citeN{Hrycej88a}, but in this case, rather than treating time points as integers, the constraint solver used qualitative constraints for Allen's Interval Calculus~\cite{allen83a}.

A prominent application of ASP for temporal reasoning is the use of ASP solvers to perform bounded model checking of temporal formulas.
The technique of \emph{bounded model checking} (BMC) was introduced by~\citeN{biciclzh99a} as a variant of model checking that looks for countermodels of a temporal formula, but only for traces up to some bounded length $k \in \Nat$.
The original approach translated the LTL formula into a propositional theory and then used a SAT solver as a backend.
\citeN{helnie03a} proved that an ASP solver could be used instead of SAT, providing a more compact translation of LTL formulas while keeping an efficient implementation.
A similar approach was later followed by~\citeANP{gimath12a}~\citeyear{gimath12a,gimath13a} to implement BMC for action theories and temporal answer sets.
Again, although the use of ASP for BMC is clearly in the intersection of temporal logics and logic programming, it does not necessarily represent a temporal extension of the latter.
Note, for instance, that the tool by~\citeN{helnie03a} provides a model checker whose input language are temporal formulas with the standard \emph{monotonic} semantics of LTL.

Finally, one more application of ASP for temporal reasoning is its use for encoding Qualitative (Spatio-)Temporal Calculi~\cite{li12a,brfaba16a}.
In this case, the target temporal formalism is not a modal temporal logic, but some calculus for a set of qualitative relations like, for instance, the aforementioned Allen's Interval Calculus.
For this reason, the relation to the approach studied in the current survey is weaker while, once again, ASP is used as a reasoning engine for encoding the temporal calculi.

 \section{Conclusion}\label{sec:discussion}

We have provided a wide overview of the main definitions and recent results for the  formalism of \emph{Temporal Equilibrium Logic} (\TEL), a combination of Equilibrium Logic (a logical characterization of Answer Set Programming, ASP) with Linear-Time Temporal Logic.
After more than a decade of study, the knowledge about \TEL\ has achieved a high degree of maturity, both at a fundamental level and also at an incipient practical application.
An important breakthrough for the latter has been the introduction of a finite trace variant, \TELf, more aligned with the usual problem solving orientation followed in ASP.
This has opened the possibility of introducing temporal operators in ASP solving with the construction of the tool \telingo,
a temporal extension of \clingo\ that exploits the incremental solving capabilities of the latter.
Moreover, the definition of \TELf\ has also provided a methodology for the study of other extensions, like the introduction of dynamic logic operators~\cite{cadilasc20a} or, more recently, the definition of temporal metric constructions~\cite{cadiscsc20a}
that are planned to be further developed and combined with other features inside the software packages of \url{potassco.org}.

This survey has been focused on the \LTL\ extension of ASP where many interesting lines remain to be explored yet.
For instance, we still miss an axiomatization that covers the finite trace variant of the monotonic basis, \THTf, or a general axiomatization that is applicable to \THT, that is, regardless of the trace length.
Also, properties related to inter-definability of operators need to be further explored for the finite trace case.
Related to this,~\citeN{agcafapevi20a} proposed the introduction of an explicit negation operator that allows for
inter-definability of temporal operators like \emph{until} and \emph{release} using De~Morgan laws.
The connection of \TELf\ to finite automata is also a topic of active current research.
Automata construction techniques may become crucial in the future both for formal verification of dynamic systems specified with \telingo\ but also for exploiting temporal expressions in a more compact way during computation, rather than making a full unfolding into regular ASP programs.
Another important line of future study is how to exploit temporal constructions for grounding: right now, \telingo\ does not recognize temporal information at that stage.
A first study about temporal grounding was provided by~\citeN{agcapevidi17a}, but it focused on infinite traces,
making these results not extrapolatable to the case of \telingo,
where the hypothesis of finite traces can be exploited more efficiently.

In a broader picture, we are facing a true rebirth of (Modal) Temporal Logic Programming, which is becoming a thrilling area with an intense research both on foundations and in practical implementation.
As we have seen in Section~\ref{sec:related}, there exist several temporal extensions of ASP and Datalog with applications to planning and problem solving on dynamic domains, temporal Knowledge Representation and, more recently, stream reasoning.
It seems clear that practical applications may require a richer language than the \LTL\ syntax, as happens with some of the ASP extensions introducing metric constructs or path expressions from dynamic logic.
One advantage of the approach we have presented is its characterization in terms of Equilibrium Logic,
allowing for an easy adaptation to other modal temporal logics
(first extend the modal logic with \HT\ and then add the equilibrium models condition).
However, there exist many alternative temporal approaches and choosing which one is the most suited
(in terms of expressiveness, complexity and ease of use)
for a dynamic problem at hand may turn out to be a challenging question in the future.

 \subsubsection*{Acknowledgments}

We are thankful to the anonymous reviewers for their thorough work and their useful suggestions that have helped to
improve the paper.
A special thanks goes to Mirosław Truszczy{\'n}ski for his support in improving the quality of our paper.
We are especially grateful to David Pearce, whose help and collaboration on Equilibrium Logic was the seed for a great
part of the current paper.
This work was partially supported by MICINN, Spain, grant PID2020-116201GB-I00, Xunta de Galicia, Spain (GPC ED431B
2019/03), R{\'e}gion Pays de la Loire, France, (projects EL4HC and etoiles montantes CTASP), European Union COST action CA-17124, and DFG grants SCHA 550/11 and~15, Germany.

\bibliographystyle{include/latex-class-tlp/acmtrans}

\begin{thebibliography}{}

\bibitem[\protect\citeauthoryear{Abadi and Manna}{Abadi and
  Manna}{1989}]{abaman89a}
{\sc Abadi, M.} {\sc and} {\sc Manna, Z.} 1989.
\newblock Temporal logic programming.
\newblock {\em Journal of Symbolic Computation\/}~{\em 8}, 277--295.

\bibitem[\protect\citeauthoryear{Aguado, Cabalar, Di{\'e}guez, P{\'e}rez, and
  Vidal}{Aguado et~al\mbox{.}}{2013}]{agcadipevi13a}
{\sc Aguado, F.}, {\sc Cabalar, P.}, {\sc Di{\'e}guez, M.}, {\sc P{\'e}rez,
  G.}, {\sc and} {\sc Vidal, C.} 2013.
\newblock Temporal equilibrium logic: a survey.
\newblock {\em Journal of Applied Non-Classical Logics\/}~{\em 23,\/}~1-2,
  2--24.

\bibitem[\protect\citeauthoryear{Aguado, Cabalar, Di{\'e}guez, P{\'e}rez, and
  Vidal}{Aguado et~al\mbox{.}}{2017}]{agcadipevi17a}
{\sc Aguado, F.}, {\sc Cabalar, P.}, {\sc Di{\'e}guez, M.}, {\sc P{\'e}rez,
  G.}, {\sc and} {\sc Vidal, C.} 2017.
\newblock Temporal equilibrium logic with past operators.
\newblock {\em Journal of Applied Non-Classical Logics\/}~{\em 27,\/}~3-4,
  161--177.

\bibitem[\protect\citeauthoryear{Aguado, Cabalar, Fandinno, P{\'e}rez, and
  Vidal}{Aguado et~al\mbox{.}}{2020}]{agcafapevi20a}
{\sc Aguado, F.}, {\sc Cabalar, P.}, {\sc Fandinno, J.}, {\sc P{\'e}rez, G.},
  {\sc and} {\sc Vidal, C.} 2020.
\newblock Explicit negation in linear-dynamic equilibrium logic.
\newblock In {\em Proceedings of the Twenty-fourth European Conference on
  Artificial Intelligence (ECAI'20)}, {G.~{De Giacomo}}, {A.~Catal{\'a}},
  {B.~Dilkina}, {M.~Milano}, {S.~Barro}, {A.~Bugar{\'{\i}}n}, {and} {J.~Lang},
  Eds. {IOS} Press, 569--576.

\bibitem[\protect\citeauthoryear{Aguado, Cabalar, Pearce, P{\'e}rez, and
  Vidal}{Aguado et~al\mbox{.}}{2015}]{agcapepevi15a}
{\sc Aguado, F.}, {\sc Cabalar, P.}, {\sc Pearce, D.}, {\sc P{\'e}rez, G.},
  {\sc and} {\sc Vidal, C.} 2015.
\newblock A denotational semantics for equilibrium logic.
\newblock {\em Theory and Practice of Logic Programming\/}~{\em 15,\/}~4-5,
  620--634.

\bibitem[\protect\citeauthoryear{Aguado, Cabalar, P{\'e}rez, Vidal, and
  Di{\'e}guez}{Aguado et~al\mbox{.}}{2017}]{agcapevidi17a}
{\sc Aguado, F.}, {\sc Cabalar, P.}, {\sc P{\'e}rez, G.}, {\sc Vidal, C.}, {\sc
  and} {\sc Di{\'e}guez, M.} 2017.
\newblock Temporal logic programs with variables.
\newblock {\em Theory and Practice of Logic Programming\/}~{\em 17,\/}~2,
  226--243.

\bibitem[\protect\citeauthoryear{Aguado, P{\'e}rez, and Vidal}{Aguado
  et~al\mbox{.}}{2013}]{agpevi13a}
{\sc Aguado, F.}, {\sc P{\'e}rez, G.}, {\sc and} {\sc Vidal, C.} 2013.
\newblock Integrating temporal extensions of answer set programming.
\newblock In {\em Proceedings of the Twelfth International Conference on Logic
  Programming and Nonmonotonic Reasoning (LPNMR'13)}, {P.~Cabalar} {and}
  {T.~Son}, Eds. Lecture Notes in Artificial Intelligence, vol. 8148.
  Springer-Verlag, 23--35.

\bibitem[\protect\citeauthoryear{Allen}{Allen}{1983}]{allen83a}
{\sc Allen, J.} 1983.
\newblock Maintaining knowledge about temporal intervals.
\newblock {\em Communications of the {ACM}\/}~{\em 26,\/}~11, 832--843.

\bibitem[\protect\citeauthoryear{Artale and Franconi}{Artale and
  Franconi}{2000}]{artfra00a}
{\sc Artale, A.} {\sc and} {\sc Franconi, E.} 2000.
\newblock A survey of temporal extensions of description logics.
\newblock {\em Annals of Mathematics and Artificial Intelligence\/}~{\em
  30,\/}~1-4, 171--210.

\bibitem[\protect\citeauthoryear{Artale and Franconi}{Artale and
  Franconi}{2005}]{artfra05a}
{\sc Artale, A.} {\sc and} {\sc Franconi, E.} 2005.
\newblock Temporal description logics.
\newblock In {\em Handbook of Temporal Reasoning in Artificial Intelligence},
  {M.~Fisher}, {D.~Gabbay}, {and} {L.~Vila}, Eds. Foundations of Artificial
  Intelligence, vol.~1. Elsevier Science, 375--388.

\bibitem[\protect\citeauthoryear{Baader, Calvanese, McGuinness, Nardi, and
  Patel-Schneider}{Baader et~al\mbox{.}}{2003}]{DLHandbook}
{\sc Baader, F.}, {\sc Calvanese, D.}, {\sc McGuinness, D.}, {\sc Nardi, D.},
  {\sc and} {\sc Patel-Schneider, P.}, Eds. 2003.
\newblock {\em The Description Logic Handbook: Theory, Implementation, and
  Applications}.
\newblock Cambridge University Press.

\bibitem[\protect\citeauthoryear{Baader, Ghilardi, and Lutz}{Baader
  et~al\mbox{.}}{2008}]{baghlu08a}
{\sc Baader, F.}, {\sc Ghilardi, S.}, {\sc and} {\sc Lutz, C.} 2008.
\newblock {LTL} over description logic axioms.
\newblock In {\em Proceedings of the Eleventh International Conference on
  Principles of Knowledge Representation and Reasoning (KR'08)}, {G.~Brewka}
  {and} {J.~Lang}, Eds. AAAI Press, 684--694.

\bibitem[\protect\citeauthoryear{Bacchus and Kabanza}{Bacchus and
  Kabanza}{2000}]{backab00a}
{\sc Bacchus, F.} {\sc and} {\sc Kabanza, F.} 2000.
\newblock Using temporal logics to express search control knowledge for
  planning.
\newblock {\em Artificial Intelligence\/}~{\em 116,\/}~1-2, 123--191.

\bibitem[\protect\citeauthoryear{Balbiani, Boudou, Di{\'{e}}guez, and
  Fern{\'{a}}ndez{-}Duque}{Balbiani et~al\mbox{.}}{2020}]{babodife19a}
{\sc Balbiani, P.}, {\sc Boudou, J.}, {\sc Di{\'{e}}guez, M.}, {\sc and} {\sc
  Fern{\'{a}}ndez{-}Duque, D.} 2020.
\newblock Intuitionistic linear temporal logics.
\newblock {\em ACM Transactions on Computational Logic\/}~{\em 21,\/}~2,
  14:1--14:32.

\bibitem[\protect\citeauthoryear{Balbiani and Di{\'e}guez}{Balbiani and
  Di{\'e}guez}{2016}]{baldie16}
{\sc Balbiani, P.} {\sc and} {\sc Di{\'e}guez, M.} 2016.
\newblock Temporal here and there.
\newblock In {\em Proceedings of the Fifteenth European Conference on Logics in
  Artificial Intelligence (JELIA'16)}, {L.~Michael} {and} {A.~Kakas}, Eds.
  Lecture Notes in Artificial Intelligence, vol. 10021. Springer-Verlag,
  81--96.

\bibitem[\protect\citeauthoryear{Baldoni, Giordano, Martelli, and
  Patti}{Baldoni et~al\mbox{.}}{1997}]{bagimapa96a}
{\sc Baldoni, M.}, {\sc Giordano, L.}, {\sc Martelli, A.}, {\sc and} {\sc
  Patti, V.} 1997.
\newblock An abductive proof procedure for reasoning about actions in modal
  logic programming.
\newblock In {\em Proceedings of the Sixth International Workshop on
  Non-Monotonic Extensions of Logic Programming (NMELP'96)}, {J.~Dix},
  {L.~Pereira}, {and} {T.~Przymusinski}, Eds. Lecture Notes in Computer
  Science, vol. 1216. Springer-Verlag, 132--150.

\bibitem[\protect\citeauthoryear{Baudinet}{Baudinet}{1992}]{baudinet92a}
{\sc Baudinet, M.} 1992.
\newblock A simple proof of the completeness of temporal logic programming.
\newblock {L.~{Fari{\~{n}}as del Cerro}} {and} {M.~Penttonen}, Eds. Clarendon
  Press, 51--83.

\bibitem[\protect\citeauthoryear{Baudinet, Chomicki, and Wolper}{Baudinet
  et~al\mbox{.}}{1993}]{bachwo93a}
{\sc Baudinet, M.}, {\sc Chomicki, J.}, {\sc and} {\sc Wolper, P.} 1993.
\newblock Temporal deductive databases.
\newblock In {\em Temporal Databases: Theory, Design, and Implementation},
  {A.~Tansel}, {J.~Clifford}, {S.~Gadia}, {A.~Segev}, {and} {R.~Snodgrass},
  Eds. Benjamin/Cummings, 294--320.

\bibitem[\protect\citeauthoryear{Beck, Dao-Tran, and Eiter}{Beck
  et~al\mbox{.}}{2016}]{bedaei16a}
{\sc Beck, H.}, {\sc Dao-Tran, M.}, {\sc and} {\sc Eiter, T.} 2016.
\newblock Equivalent stream reasoning programs.
\newblock In {\em Proceedings of the Twenty-fifth International Joint
  Conference on Artificial Intelligence (IJCAI'16)}, {R.~Kambhampati}, Ed.
  IJCAI/AAAI Press, 929--935.

\bibitem[\protect\citeauthoryear{Beck, Dao{-}Tran, Eiter, and Fink}{Beck
  et~al\mbox{.}}{2015}]{bedaeifi15a}
{\sc Beck, H.}, {\sc Dao{-}Tran, M.}, {\sc Eiter, T.}, {\sc and} {\sc Fink, M.}
  2015.
\newblock {LARS:} {A} logic-based framework for analyzing reasoning over
  streams.
\newblock In {\em Proceedings of the Twenty-Ninth National Conference on
  Artificial Intelligence (AAAI'15)}, {B.~Bonet} {and} {S.~Koenig}, Eds. AAAI
  Press, 1431--1438.

\bibitem[\protect\citeauthoryear{Bertoli, Cimatti, Roveri, and
  Traverso}{Bertoli et~al\mbox{.}}{2001}]{becirotr01a}
{\sc Bertoli, P.}, {\sc Cimatti, A.}, {\sc Roveri, M.}, {\sc and} {\sc
  Traverso, P.} 2001.
\newblock Planning in nondeterministic domains under partial observability via
  symbolic model checking.
\newblock In {\em Proceedings of the Seventeenth International Joint Conference
  on Artificial Intelligence (IJCAI'01)}, {B.~Nebel}, Ed. Morgan Kaufmann
  Publishers, 473--478.

\bibitem[\protect\citeauthoryear{Bieber, {Fari{\~{n}}as del Cerro}, and
  Herzig}{Bieber et~al\mbox{.}}{1988}]{bifahe88a}
{\sc Bieber, P.}, {\sc {Fari{\~{n}}as del Cerro}, L.}, {\sc and} {\sc Herzig,
  A.} 1988.
\newblock {MOLOG:} a modal {PROLOG}.
\newblock In {\em Proceedings of the Nineth International Conference on
  Automated Deduction (CADE'88)}, {E.~Lusk} {and} {R.~Overbeek}, Eds. Lecture
  Notes in Computer Science, vol. 310. Springer-Verlag, 762--763.

\bibitem[\protect\citeauthoryear{Biere, Cimatti, Clarke, and Zhu}{Biere
  et~al\mbox{.}}{1999}]{biciclzh99a}
{\sc Biere, A.}, {\sc Cimatti, A.}, {\sc Clarke, E.}, {\sc and} {\sc Zhu, Y.}
  1999.
\newblock Symbolic model checking without {BDD}s.
\newblock In {\em Proceedings of the Fifth International Conference on Tools
  and Algorithms for Construction and Analysis of Systems (TACAS'99)},
  {R.~Cleaveland}, Ed. Lecture Notes in Computer Science, vol. 1579.
  Springer-Verlag, 193--207.

\bibitem[\protect\citeauthoryear{Bosser, Cabalar, Di\'eguez, and Schaub}{Bosser
  et~al\mbox{.}}{2018}]{bocadisc18a}
{\sc Bosser, A.}, {\sc Cabalar, P.}, {\sc Di\'eguez, M.}, {\sc and} {\sc
  Schaub, T.} 2018.
\newblock Introducing temporal stable models for linear dynamic logic.
\newblock In {\em Proceedings of the Sixteenth International Conference on
  Principles of Knowledge Representation and Reasoning (KR'18)},
  {M.~Thielscher}, {F.~Toni}, {and} {F.~Wolter}, Eds. AAAI Press, 12--21.

\bibitem[\protect\citeauthoryear{Bozzelli, Maubert, and Pinchinat}{Bozzelli
  et~al\mbox{.}}{2015}]{bomapi15a}
{\sc Bozzelli, L.}, {\sc Maubert, B.}, {\sc and} {\sc Pinchinat, S.} 2015.
\newblock Unifying hyper and epistemic temporal logics.
\newblock In {\em Proceedings of the Eighteenth International Conference on
  Foundations of Software Science and Computation Structures (FoSSaCS'15)},
  {A.~Pitts}, Ed. Lecture Notes in Computer Science, vol. 9034.
  Springer-Verlag, 167--182.

\bibitem[\protect\citeauthoryear{Bozzelli and Pearce}{Bozzelli and
  Pearce}{2015}]{bozpea15a}
{\sc Bozzelli, L.} {\sc and} {\sc Pearce, D.} 2015.
\newblock On the complexity of temporal equilibrium logic.
\newblock In {\em Proceedings of the Thirtieth Annual Symposium on Logic in
  Computer Science (LICS'15)}. {IEEE} Computer Society Press, 645--656.

\bibitem[\protect\citeauthoryear{Brenton, Faber, and Batsakis}{Brenton
  et~al\mbox{.}}{2016}]{brfaba16a}
{\sc Brenton, C.}, {\sc Faber, W.}, {\sc and} {\sc Batsakis, S.} 2016.
\newblock Answer set programming for qualitative spatio-temporal reasoning:
  Methods and experiments.
\newblock In {\em Technical Communications of the Thirty-second International
  Conference on Logic Programming (ICLP'16)}, {M.~Carro} {and} {A.~King}, Eds.
  OpenAccess Series in Informatics (OASIcs), vol.~52. Schloss
  Dagstuhl--Leibniz-Zentrum fuer Informatik, 4:1--4:15.

\bibitem[\protect\citeauthoryear{Brzoska}{Brzoska}{1991}]{brzoska91a}
{\sc Brzoska, C.} 1991.
\newblock Temporal logic programming and its relation to constraint logic
  programming.
\newblock In {\em Proceedings of the International Symposium on Logic
  Programming}, {V.~Saraswat} {and} {K.~Ueda}, Eds. {MIT} Press, 661--677.

\bibitem[\protect\citeauthoryear{Brzoska}{Brzoska}{1995}]{brzoska95a}
{\sc Brzoska, C.} 1995.
\newblock Temporal logic programming in dense time.
\newblock In {\em Proceedings of the International Symposium on Logic
  Programming}, {J.~Lloyd}, Ed. {MIT} Press, 303--317.

\bibitem[\protect\citeauthoryear{B{\"u}chi}{B{\"u}chi}{1962}]{buechi62a}
{\sc B{\"u}chi, J.} 1962.
\newblock On a decision method in restricted second order arithmetic.
\newblock In {\em Proceedings of the International Congress on Logic,
  Methodology, and Philosophy of Science}, {E.~Nagel}, {P.~Suppes}, {and}
  {A.~Tarski}, Eds. Stanford University Press, 1--11.

\bibitem[\protect\citeauthoryear{Cabalar}{Cabalar}{1999}]{cabalar99a}
{\sc Cabalar, P.} 1999.
\newblock Temporal answer sets.
\newblock In {\em Proceedings of the Joint Conference on Declarative
  Programming (AGP'99)}, {M.~Meo} {and} {M.~Ferro}, Eds. 351--366.

\bibitem[\protect\citeauthoryear{Cabalar and Demri}{Cabalar and
  Demri}{2011}]{cabdem11a}
{\sc Cabalar, P.} {\sc and} {\sc Demri, S.} 2011.
\newblock Automata-based computation of temporal equilibrium models.
\newblock In {\em Proceedings of the Twenty-first International Symposium on
  Logic-Based Program Synthesis and Transformation (LOPSTR'11)}, {G.~Vidal},
  Ed. Lecture Notes in Computer Science, vol. 7225. Springer-Verlag, 57--72.

\bibitem[\protect\citeauthoryear{Cabalar and Di{\'e}guez}{Cabalar and
  Di{\'e}guez}{2011}]{cabdie11a}
{\sc Cabalar, P.} {\sc and} {\sc Di{\'e}guez, M.} 2011.
\newblock {STELP} --- a tool for temporal answer set programming.
\newblock In {\em Proceedings of the Eleventh International Conference on Logic
  Programming and Nonmonotonic Reasoning (LPNMR'11)}, {J.~Delgrande} {and}
  {W.~Faber}, Eds. Lecture Notes in Artificial Intelligence, vol. 6645.
  Springer-Verlag, 370--375.

\bibitem[\protect\citeauthoryear{Cabalar and Di{\'{e}}guez}{Cabalar and
  Di{\'{e}}guez}{2014}]{cabdie14a}
{\sc Cabalar, P.} {\sc and} {\sc Di{\'{e}}guez, M.} 2014.
\newblock Strong equivalence of non-monotonic temporal theories.
\newblock In {\em Proceedings of the Fourteenth International Conference on
  Principles of Knowledge Representation and Reasoning (KR'14)}, {C.~Baral},
  {G.~{De Giacomo}}, {and} {T.~Eiter}, Eds. AAAI Press.

\bibitem[\protect\citeauthoryear{Cabalar, Di\'eguez, Laferriere, and
  Schaub}{Cabalar et~al\mbox{.}}{2020}]{cadilasc20a}
{\sc Cabalar, P.}, {\sc Di\'eguez, M.}, {\sc Laferriere, F.}, {\sc and} {\sc
  Schaub, T.} 2020.
\newblock Implementing dynamic answer set programming over finite traces.
\newblock In {\em Proceedings of the Twenty-fourth European Conference on
  Artificial Intelligence (ECAI'20)}, {G.~{De Giacomo}}, {A.~Catal{\'a}},
  {B.~Dilkina}, {M.~Milano}, {S.~Barro}, {A.~Bugar{\'{\i}}n}, {and} {J.~Lang},
  Eds. {IOS} Press, 656--663.

\bibitem[\protect\citeauthoryear{Cabalar, Di{\'e}guez, Schaub, and
  Schuhmann}{Cabalar et~al\mbox{.}}{2020}]{cadiscsc20a}
{\sc Cabalar, P.}, {\sc Di{\'e}guez, M.}, {\sc Schaub, T.}, {\sc and} {\sc
  Schuhmann, A.} 2020.
\newblock Towards metric temporal answer set programming.
\newblock {\em Theory and Practice of Logic Programming\/}~{\em 20,\/}~5,
  783--798.

\bibitem[\protect\citeauthoryear{Cabalar, Di{\'e}guez, and Vidal}{Cabalar
  et~al\mbox{.}}{2015}]{cadivi15a}
{\sc Cabalar, P.}, {\sc Di{\'e}guez, M.}, {\sc and} {\sc Vidal, C.} 2015.
\newblock An infinitary encoding of temporal equilibrium logic.
\newblock {\em Theory and Practice of Logic Programming\/}~{\em 15,\/}~4-5,
  666--680.

\bibitem[\protect\citeauthoryear{Cabalar, Kaminski, Morkisch, and
  Schaub}{Cabalar et~al\mbox{.}}{2019}]{cakamosc19a}
{\sc Cabalar, P.}, {\sc Kaminski, R.}, {\sc Morkisch, P.}, {\sc and} {\sc
  Schaub, T.} 2019.
\newblock telingo = {ASP} + time.
\newblock In {\em Proceedings of the Fifteenth International Conference on
  Logic Programming and Nonmonotonic Reasoning (LPNMR'19)}, {M.~Balduccini},
  {Y.~Lierler}, {and} {S.~Woltran}, Eds. Lecture Notes in Artificial
  Intelligence, vol. 11481. Springer-Verlag, 256--269.

\bibitem[\protect\citeauthoryear{Cabalar, Kaminski, Schaub, and
  Schuhmann}{Cabalar et~al\mbox{.}}{2018}]{cakascsc18a}
{\sc Cabalar, P.}, {\sc Kaminski, R.}, {\sc Schaub, T.}, {\sc and} {\sc
  Schuhmann, A.} 2018.
\newblock Temporal answer set programming on finite traces.
\newblock {\em Theory and Practice of Logic Programming\/}~{\em 18,\/}~3-4,
  406--420.

\bibitem[\protect\citeauthoryear{Cabalar, Rey, and Vidal}{Cabalar
  et~al\mbox{.}}{2019}]{carevi19a}
{\sc Cabalar, P.}, {\sc Rey, M.}, {\sc and} {\sc Vidal, C.} 2019.
\newblock A complete planner for temporal answer set programming.
\newblock In {\em Proceedings of the Nineteenth {EPIA} Conference on Artificial
  Intelligence, (EPIA'19)}, {P.~Oliveira}, {P.~Novais}, {and} {L.~Reis}, Eds.
  Lecture Notes in Computer Science, vol. 11805. Springer, 520--525.

\bibitem[\protect\citeauthoryear{Cabalar and Schaub}{Cabalar and
  Schaub}{2019}]{cabsch19a}
{\sc Cabalar, P.} {\sc and} {\sc Schaub, T.} 2019.
\newblock Temporal logic programs with temporal description logic axioms.
\newblock In {\em Description Logic, Theory Combination, and All That - Essays
  Dedicated to Franz Baader on the Occasion of His 60th Birthday}, {C.~Lutz},
  {U.~Sattler}, {C.~Tinelli}, {A.~Turhan}, {and} {F.~Wolter}, Eds. Lecture
  Notes in Computer Science, vol. 11560. Springer-Verlag, 174--186.

\bibitem[\protect\citeauthoryear{Cabalar and Vega}{Cabalar and
  Vega}{2007}]{cabper07a}
{\sc Cabalar, P.} {\sc and} {\sc Vega, G.~P.} 2007.
\newblock Temporal equilibrium logic: {A} first approach.
\newblock In {\em Proceedings of the Eleventh International Conference on
  Computer Aided Systems Theory (EUROCAST'17)}, {R.~Moreno{-}D{\'{\i}}az},
  {F.~Pichler}, {and} {A.~Quesada{-}Arencibia}, Eds. Lecture Notes in Computer
  Science, vol. 4739. Springer-Verlag, 241--248.

\bibitem[\protect\citeauthoryear{Camacho, Baier, Muise, and McIlraith}{Camacho
  et~al\mbox{.}}{2018}]{cabamumc18a}
{\sc Camacho, A.}, {\sc Baier, J.}, {\sc Muise, C.}, {\sc and} {\sc McIlraith,
  S.} 2018.
\newblock Finite {LTL} synthesis as planning.
\newblock In {\em Proceedings of the Twenty-eighth International Conference on
  Automated Planning and Scheduling (ICAPS'18)}, {M.~{de Weerdt}}, {S.~Koenig},
  {G.~R{\"o}ger}, {and} {M.~Spaan}, Eds. AAAI Press, 29--38.

\bibitem[\protect\citeauthoryear{Castilho, Gasquet, and Herzig}{Castilho
  et~al\mbox{.}}{1999}]{cagahe99a}
{\sc Castilho, M.}, {\sc Gasquet, O.}, {\sc and} {\sc Herzig, A.} 1999.
\newblock Formalizing action and change in modal logic {I}: the frame problem.
\newblock {\em Journal of Logic and Computation\/}~{\em 9,\/}~5, 701--735.

\bibitem[\protect\citeauthoryear{Chandra, Kozen, and Stockmeyer}{Chandra
  et~al\mbox{.}}{1981}]{chkost81a}
{\sc Chandra, A.}, {\sc Kozen, D.}, {\sc and} {\sc Stockmeyer, L.} 1981.
\newblock Alternation.
\newblock {\em Journal of the ACM\/}~{\em 28,\/}~1, 114--133.

\bibitem[\protect\citeauthoryear{Chomicki and Imielinski}{Chomicki and
  Imielinski}{1988}]{choimi88a}
{\sc Chomicki, J.} {\sc and} {\sc Imielinski, T.} 1988.
\newblock Temporal deductive databases and infinite objects.
\newblock In {\em Proceedings of the Seventh {ACM} {SIGACT-SIGMOD-SIGART}
  Symposium on Principles of Database Systems}, {C.~Edmondson{-}Yurkanan} {and}
  {M.~Yannakakis}, Eds. {ACM} Press, 61--73.

\bibitem[\protect\citeauthoryear{Chomicki and Imielinski}{Chomicki and
  Imielinski}{1993}]{choimi93a}
{\sc Chomicki, J.} {\sc and} {\sc Imielinski, T.} 1993.
\newblock Finite representation of infinite query answers.
\newblock {\em ACM Transactions on Database Systems\/}~{\em 18,\/}~2, 181--223.

\bibitem[\protect\citeauthoryear{{de Bruijn}, Pearce, Polleres, and
  Valverde}{{de Bruijn} et~al\mbox{.}}{2010}]{brpepova10a}
{\sc {de Bruijn}, J.}, {\sc Pearce, D.}, {\sc Polleres, A.}, {\sc and} {\sc
  Valverde, A.} 2010.
\newblock A semantical framework for hybrid knowledge bases.
\newblock {\em Knowledge Information Systems\/}~{\em 25,\/}~1, 81--104.

\bibitem[\protect\citeauthoryear{{De Giacomo} and Perelli}{{De Giacomo} and
  Perelli}{2021}]{degper21a}
{\sc {De Giacomo}, G.} {\sc and} {\sc Perelli, G.} 2021.
\newblock Behavioral {QLTL}.
\newblock {\em CoRR\/}~{\em abs/2102.11184}.

\bibitem[\protect\citeauthoryear{{De Giacomo} and Vardi}{{De Giacomo} and
  Vardi}{2013}]{giavar13a}
{\sc {De Giacomo}, G.} {\sc and} {\sc Vardi, M.} 2013.
\newblock Linear temporal logic and linear dynamic logic on finite traces.
\newblock In {\em Proceedings of the Twenty-third International Joint
  Conference on Artificial Intelligence (IJCAI'13)}, {F.~Rossi}, Ed. IJCAI/AAAI
  Press, 854--860.

\bibitem[\protect\citeauthoryear{Demri, Goranko, and Lange}{Demri
  et~al\mbox{.}}{2016}]{degola16a}
{\sc Demri, S.}, {\sc Goranko, V.}, {\sc and} {\sc Lange, M.} 2016.
\newblock {\em Temporal Logics in Computer Science: Finite-State Systems}.
\newblock Cambridge Tracts in Theoretical Computer Science. Cambridge
  University Press.

\bibitem[\protect\citeauthoryear{Duret{-}Lutz, Lewkowicz, Fauchille, Michaud,
  Renault, and Xu}{Duret{-}Lutz et~al\mbox{.}}{2016}]{dulefamirexu16a}
{\sc Duret{-}Lutz, A.}, {\sc Lewkowicz, A.}, {\sc Fauchille, A.}, {\sc Michaud,
  T.}, {\sc Renault, E.}, {\sc and} {\sc Xu, L.} 2016.
\newblock Spot 2.0 - {A} framework for {LTL} and $\omega$-automata
  manipulation.
\newblock In {\em Proceedings of Fourteenth International Symposium on
  Automated Technology for Verification and Analysis (ATVA'16)}, {C.~Artho},
  {A.~Legay}, {and} {D.~Peled}, Eds. Lecture Notes in Computer Science, vol.
  9938. 122--129.

\bibitem[\protect\citeauthoryear{Eiter and {\v Simkus}}{Eiter and {\v
  Simkus}}{2009}]{eitsim09a}
{\sc Eiter, T.} {\sc and} {\sc {\v Simkus}, M.} 2009.
\newblock Bidirectional answer set programs with function symbols.
\newblock In {\em Proceedings of the Twenty-first International Joint
  Conference on Artificial Intelligence (IJCAI'09)}, {C.~Boutilier}, Ed.
  {AAAI/MIT} Press, 765--771.

\bibitem[\protect\citeauthoryear{Fages}{Fages}{1994}]{fages94a}
{\sc Fages, F.} 1994.
\newblock Consistency of {C}lark's completion and the existence of stable
  models.
\newblock {\em Journal of Methods of Logic in Computer Science\/}~{\em 1},
  51--60.

\bibitem[\protect\citeauthoryear{{{Fari{\~{n}}as del Cerro}}}{{{Fari{\~{n}}as
  del Cerro}}}{1986}]{farinas86a}
{\sc {{Fari{\~{n}}as del Cerro}}, L.} 1986.
\newblock {MOLOG:} {A} system that extends {PROLOG} with modal logic.
\newblock {\em New Generation Computing\/}~{\em 4,\/}~1, 35--50.

\bibitem[\protect\citeauthoryear{Ferraris, Lee, and Lifschitz}{Ferraris
  et~al\mbox{.}}{2007}]{feleli07a}
{\sc Ferraris, P.}, {\sc Lee, J.}, {\sc and} {\sc Lifschitz, V.} 2007.
\newblock A new perspective on stable models.
\newblock In {\em Proceedings of the Twentieth International Joint Conference
  on Artificial Intelligence (IJCAI'07)}, {M.~Veloso}, Ed. {AAAI/MIT} Press,
  372--379.

\bibitem[\protect\citeauthoryear{Fischer and Ladner}{Fischer and
  Ladner}{1979}]{fislad79a}
{\sc Fischer, M.} {\sc and} {\sc Ladner, R.} 1979.
\newblock Propositional dynamic logic of regular programs.
\newblock {\em Journal of Computer and System Sciences\/}~{\em 18,\/}~2,
  194--211.

\bibitem[\protect\citeauthoryear{Fisher}{Fisher}{1994}]{fisher94a}
{\sc Fisher, M.} 1994.
\newblock A survey of concurrent {METATEM} - the language and its applications.
\newblock In {\em Proceedings of the First International Conference on Temporal
  Logic (ICTL'94)}, {D.~Gabbay} {and} {H.~Ohlbach}, Eds. Lecture Notes in
  Computer Science, vol. 827. Springer-Verlag, 480--505.

\bibitem[\protect\citeauthoryear{French and Reynolds}{French and
  Reynolds}{2002}]{frerey02a}
{\sc French, T.} {\sc and} {\sc Reynolds, M.} 2002.
\newblock A sound and complete proof system for {QPTL}.
\newblock In {\em Proceedings of the Fourth Conference on Advances in Modal
  Logic}, {P.~Balbiani}, {N.~Suzuki}, {F.~Wolter}, {and} {M.~Zakharyaschev},
  Eds. King's College Publications, 127--148.

\bibitem[\protect\citeauthoryear{Fr{\"{u}}hwirth}{Fr{\"{u}}hwirth}{1996}]{fruhwirth96a}
{\sc Fr{\"{u}}hwirth, T.} 1996.
\newblock Temporal annotated constraint logic programming.
\newblock {\em Journal of Symbolic Computation\/}~{\em 22,\/}~5-6, 555--583.

\bibitem[\protect\citeauthoryear{Fujita, Kono, Tanaka, and Moto{-}Oka}{Fujita
  et~al\mbox{.}}{1986}]{fukotamo86a}
{\sc Fujita, M.}, {\sc Kono, S.}, {\sc Tanaka, H.}, {\sc and} {\sc Moto{-}Oka,
  T.} 1986.
\newblock Tokio: Logic programming language based on temporal logic and its
  compilation to {P}rolog.
\newblock In {\em Proceedings of the Third International Conference on Logic
  Programming (ICLP'86)}, {E.~Shapiro}, Ed. Lecture Notes in Computer Science,
  vol. 225. Springer, 695--709.

\bibitem[\protect\citeauthoryear{Gabbay}{Gabbay}{1987a}]{gabbay87a}
{\sc Gabbay, D.} 1987a.
\newblock The declarative past and imperative future: Executable temporal logic
  for interactive systems.
\newblock In {\em Proceedings of the Conference on Temporal Logic in
  Specification}, {B.~Banieqbal}, {H.~Barringer}, {and} {A.~Pnueli}, Eds.
  Lecture Notes in Computer Science, vol. 398. Springer-Verlag, 409--448.

\bibitem[\protect\citeauthoryear{Gabbay}{Gabbay}{1987b}]{gabbay87b}
{\sc Gabbay, D.} 1987b.
\newblock Modal and temporal logic programming.
\newblock In {\em Temporal Logics and their Applications}, {A.~Galton}, Ed.
  Academic Press, Chapter~6, 197--237.

\bibitem[\protect\citeauthoryear{Gabbay, Pnueli, Shelah, and Stavi}{Gabbay
  et~al\mbox{.}}{1980}]{gapnshsh80a}
{\sc Gabbay, D.}, {\sc Pnueli, A.}, {\sc Shelah, S.}, {\sc and} {\sc Stavi, J.}
  1980.
\newblock On the temporal basis of fairness.
\newblock In {\em Proceedings of the Seventh Symposium on Principles of
  Programming Languages (POPL'80)}, {P.~Abrahams}, {R.~Lipton}, {and}
  {S.~Bourne}, Eds. {ACM} Press, 163--173.

\bibitem[\protect\citeauthoryear{Gebser, Kaminski, Kaufmann, and Schaub}{Gebser
  et~al\mbox{.}}{2019}]{gekakasc17a}
{\sc Gebser, M.}, {\sc Kaminski, R.}, {\sc Kaufmann, B.}, {\sc and} {\sc
  Schaub, T.} 2019.
\newblock Multi-shot {ASP} solving with clingo.
\newblock {\em Theory and Practice of Logic Programming\/}~{\em 19,\/}~1,
  27--82.

\bibitem[\protect\citeauthoryear{Gelfond and Lifschitz}{Gelfond and
  Lifschitz}{1988}]{gellif88a}
{\sc Gelfond, M.} {\sc and} {\sc Lifschitz, V.} 1988.
\newblock Compiling circumscriptive theories into logic programs.
\newblock In {\em Proceedings of the Seventh National Conference on Artificial
  Intelligence (AAAI'88)}, {H.~Shrobe}, {T.~Mitchell}, {and} {R.~Smith}, Eds.
  {AAAI} Press / The {MIT} Press, 455--459.

\bibitem[\protect\citeauthoryear{Gelfond and Lifschitz}{Gelfond and
  Lifschitz}{1993}]{gellif93a}
{\sc Gelfond, M.} {\sc and} {\sc Lifschitz, V.} 1993.
\newblock Representing action and change by logic programs.
\newblock {\em Journal of Logic Programming\/}~{\em 17,\/}~2-4, 301--321.

\bibitem[\protect\citeauthoryear{Gerth, Peled, Vardi, and Wolper}{Gerth
  et~al\mbox{.}}{1995}]{gepevawo95a}
{\sc Gerth, R.}, {\sc Peled, D.}, {\sc Vardi, M.}, {\sc and} {\sc Wolper, P.}
  1995.
\newblock Simple on-the-fly automatic verification of linear temporal logic.
\newblock In {\em Proceedings of the Fifteenth {IFIP} International Symposium
  on Protocol Specification}, {P.~Dembinski} {and} {M.~Sredniawa}, Eds. {IFIP}
  Conference Proceedings, vol.~38. Chapman {\&} Hall, 3--18.

\bibitem[\protect\citeauthoryear{Giordano, Martelli, and Dupr{\'{e}}}{Giordano
  et~al\mbox{.}}{2012}]{gimath12a}
{\sc Giordano, L.}, {\sc Martelli, A.}, {\sc and} {\sc Dupr{\'{e}}, D.~T.}
  2012.
\newblock Achieving completeness in bounded model checking of action theories
  in {ASP}.
\newblock In {\em Proceedings of the Thirteenth International Conference on
  Principles of Knowledge Representation and Reasoning (KR'12)}, {G.~Brewka},
  {T.~Eiter}, {and} {S.~McIlraith}, Eds. AAAI Press.

\bibitem[\protect\citeauthoryear{Giordano, Martelli, and {Theseider
  Dupr{\'e}}}{Giordano et~al\mbox{.}}{2013}]{gimath13a}
{\sc Giordano, L.}, {\sc Martelli, A.}, {\sc and} {\sc {Theseider Dupr{\'e}},
  D.} 2013.
\newblock Reasoning about actions with temporal answer sets.
\newblock {\em Theory and Practice of Logic Programming\/}~{\em 13,\/}~2,
  201--225.

\bibitem[\protect\citeauthoryear{Giordano, Martelli, and Schwind}{Giordano
  et~al\mbox{.}}{2001}]{gimasc00a}
{\sc Giordano, L.}, {\sc Martelli, M.}, {\sc and} {\sc Schwind, C.} 2001.
\newblock Reasoning about actions in dynamic linear time temporal logic.
\newblock {\em Logic Journal of the {IGPL}\/}~{\em 9,\/}~2, 273--288.

\bibitem[\protect\citeauthoryear{G{\"o}del}{G{\"o}del}{1932}]{goedel32a}
{\sc G{\"o}del, K.} 1932.
\newblock Zum intuitionistischen {A}ussagenkalk{\"u}l.
\newblock {\em Anzeiger der Akademie der Wissenschaften in Wien\/}, 65--66.

\bibitem[\protect\citeauthoryear{Goldblatt}{Goldblatt}{1992}]{goldblatt92a}
{\sc Goldblatt, R.} 1992.
\newblock {\em Logics of Time and Computation}.
\newblock Number~7 in {CSLI} Lecture Notes. Center for the Study of Language
  and Information.

\bibitem[\protect\citeauthoryear{Gottlob, Gr\"{a}del, and Veith}{Gottlob
  et~al\mbox{.}}{2002}]{gogrve02a}
{\sc Gottlob, G.}, {\sc Gr\"{a}del, E.}, {\sc and} {\sc Veith, H.} 2002.
\newblock Datalog {LITE}: A deductive query language with linear time model
  checking.
\newblock {\em ACM Transactions on Computational Logic\/}~{\em 3,\/}~1, 42--79.

\bibitem[\protect\citeauthoryear{Guessarian, Foustoucos, Andronikos, and
  Afrati}{Guessarian et~al\mbox{.}}{2003}]{gufoanaf03a}
{\sc Guessarian, I.}, {\sc Foustoucos, E.}, {\sc Andronikos, T.}, {\sc and}
  {\sc Afrati, F.} 2003.
\newblock On temporal logic versus datalog.
\newblock {\em Theoretical Computer Science\/}~{\em 303,\/}~1, 103--133.

\bibitem[\protect\citeauthoryear{Harel, Tiuryn, and Kozen}{Harel
  et~al\mbox{.}}{2000}]{hatiko00a}
{\sc Harel, D.}, {\sc Tiuryn, J.}, {\sc and} {\sc Kozen, D.} 2000.
\newblock {\em Dynamic Logic}.
\newblock MIT Press.

\bibitem[\protect\citeauthoryear{Heljanko and Niemel{\"a}}{Heljanko and
  Niemel{\"a}}{2003}]{helnie03a}
{\sc Heljanko, K.} {\sc and} {\sc Niemel{\"a}, I.} 2003.
\newblock Bounded {LTL} model checking with stable models.
\newblock {\em Theory and Practice of Logic Programming\/}~{\em 3,\/}~4-5,
  519--550.

\bibitem[\protect\citeauthoryear{Henriksen, Jensen, J{\o}rgensen, Klarlund,
  Paige, Rauhe, and Sandholm}{Henriksen
  et~al\mbox{.}}{1995}]{hejejoklparasa95a}
{\sc Henriksen, J.}, {\sc Jensen, J.}, {\sc J{\o}rgensen, M.}, {\sc Klarlund,
  N.}, {\sc Paige, R.}, {\sc Rauhe, T.}, {\sc and} {\sc Sandholm, A.} 1995.
\newblock Mona: Monadic second-order logic in practice.
\newblock In {\em Proceedings of the First International Workshop on Tools and
  Algorithms for Construction and Analysis of Systems (TACAS'95)},
  {E.~Brinksma}, {R.~Cleaveland}, {K.~Larsen}, {T.~Margaria}, {and}
  {B.~Steffen}, Eds. Lecture Notes in Computer Science, vol. 1019.
  Springer-Verlag, 89--110.

\bibitem[\protect\citeauthoryear{Henriksen and Thiagarajan}{Henriksen and
  Thiagarajan}{1999}]{henthi99a}
{\sc Henriksen, J.} {\sc and} {\sc Thiagarajan, P.} 1999.
\newblock Dynamic linear time temporal logic.
\newblock {\em Annals Pure and Applied Logic\/}~{\em 96,\/}~1-3, 187--207.

\bibitem[\protect\citeauthoryear{Heyting}{Heyting}{1930}]{heyting30a}
{\sc Heyting, A.} 1930.
\newblock Die formalen {R}egeln der intuitionistischen {L}ogik.
\newblock In {\em Sitzungsberichte der Preussischen Akademie der
  Wissenschaften}. Deutsche Akademie der Wissenschaften zu Berlin, 42--56.

\bibitem[\protect\citeauthoryear{Hosoi}{Hosoi}{1966}]{hosoi66a}
{\sc Hosoi, T.} 1966.
\newblock The axiomatization of the intermediate propositional systems $s_2$ of
  {G}{\"o}del.
\newblock {\em Journal of the Faculty of Science of the University of
  Tokyo\/}~{\em 13,\/}~2, 183--187.

\bibitem[\protect\citeauthoryear{Hrycej}{Hrycej}{1988}]{Hrycej88a}
{\sc Hrycej, T.} 1988.
\newblock Temporal {P}rolog.
\newblock In {\em Proceedings of the Eighth European Conference on Artificial
  Intelligence (ECAI'92)}, {Y.~Kodratoff}, Ed. Pitman / Morgan Kaufmann,
  296--301.

\bibitem[\protect\citeauthoryear{Kamp}{Kamp}{1968}]{kamp68a}
{\sc Kamp, J.} 1968.
\newblock Tense logic and the theory of linear order.
\newblock Ph.D. thesis, University of California at Los Angeles.

\bibitem[\protect\citeauthoryear{Kvarnstr{\"{o}}m, Heintz, and
  Doherty}{Kvarnstr{\"{o}}m et~al\mbox{.}}{2008}]{kvhedo08a}
{\sc Kvarnstr{\"{o}}m, J.}, {\sc Heintz, F.}, {\sc and} {\sc Doherty, P.} 2008.
\newblock A temporal logic-based planning and execution monitoring system.
\newblock In {\em Proceedings of the Eighteenth International Conference on
  Automated Planning and Scheduling (ICAPS'08)}, {J.~Rintanen}, {B.~Nebel},
  {J.~Beck}, {and} {E.~Hansen}, Eds. AAAI Press, 198--205.

\bibitem[\protect\citeauthoryear{Lee, Lifschitz, and Yang}{Lee
  et~al\mbox{.}}{2013}]{leliya13a}
{\sc Lee, J.}, {\sc Lifschitz, V.}, {\sc and} {\sc Yang, F.} 2013.
\newblock Action language {BC}: Preliminary report.
\newblock In {\em Proceedings of the Twenty-third International Joint
  Conference on Artificial Intelligence (IJCAI'13)}, {F.~Rossi}, Ed. IJCAI/AAAI
  Press, 983--989.

\bibitem[\protect\citeauthoryear{Levesque, Reiter, Lesp{\'e}rance, Lin, and
  Scherl}{Levesque et~al\mbox{.}}{1997}]{lereleli97a}
{\sc Levesque, H.}, {\sc Reiter, R.}, {\sc Lesp{\'e}rance, Y.}, {\sc Lin, F.},
  {\sc and} {\sc Scherl, R.} 1997.
\newblock {GOLOG}: A logic programming language for dynamic domains.
\newblock {\em Journal of Logic Programming\/}~{\em 31,\/}~1-3, 59--83.

\bibitem[\protect\citeauthoryear{Li}{Li}{2012}]{li12a}
{\sc Li, J.} 2012.
\newblock Qualitative spatial and temporal reasoning with answer set
  programming.
\newblock In {\em Proceedings of the Twenty-fourth IEEE International
  Conference on Tools with Artificial Intelligence (ICTAI'12)}. {IEEE} Computer
  Society Press, 603--609.

\bibitem[\protect\citeauthoryear{Lifschitz}{Lifschitz}{2002}]{lifschitz02a}
{\sc Lifschitz, V.} 2002.
\newblock Answer set programming and plan generation.
\newblock {\em Artificial Intelligence\/}~{\em 138,\/}~1-2, 39--54.

\bibitem[\protect\citeauthoryear{Lukasiewicz}{Lukasiewicz}{1941}]{lukasiewicz41a}
{\sc Lukasiewicz, J.} 1941.
\newblock Die logik und das grundlagenproblem.
\newblock {\em Les Entreties de Z{\"u}rich sur les Fondaments et la M{\'e}thode
  des Sciences Math{\'e}matiques\/}~{\em 12,\/}~6-9, 82--100.

\bibitem[\protect\citeauthoryear{Lutz, Wolter, and Zakharyaschev}{Lutz
  et~al\mbox{.}}{2008}]{luwoza08a}
{\sc Lutz, C.}, {\sc Wolter, F.}, {\sc and} {\sc Zakharyaschev, M.} 2008.
\newblock Temporal description logics: {A} survey.
\newblock In {\em Proceedings of the fifteenth International Symposium on
  Temporal Representation and Reasoning (TIME'08)}, {S.~Demri} {and}
  {C.~Jensen}, Eds. {IEEE} Computer Society Press, 3--14.

\bibitem[\protect\citeauthoryear{Mendez, Llopis, Lobo, and Baral}{Mendez
  et~al\mbox{.}}{1996}]{mellloba96a}
{\sc Mendez, G.}, {\sc Llopis, J.}, {\sc Lobo, J.}, {\sc and} {\sc Baral, C.}
  1996.
\newblock Temporal logic and reasoning about actions.
\newblock In {\em Proceedings of the Third Symposium on Logical Formalizations
  of Commonsense Reasoning}.

\bibitem[\protect\citeauthoryear{Mints}{Mints}{2010}]{mints10a}
{\sc Mints, G.} 2010.
\newblock Cut-free formulations for a quantified logic of here and there.
\newblock {\em Annals of Pure and Applied Logic\/}~{\em 162,\/}~3, 237--242.

\bibitem[\protect\citeauthoryear{Moszkowski}{Moszkowski}{1986}]{moszkowski86a}
{\sc Moszkowski, B.} 1986.
\newblock {\em Executing Temporal Logic Programs}.
\newblock Cambridge University Press.

\bibitem[\protect\citeauthoryear{Oikarinen and Janhunen}{Oikarinen and
  Janhunen}{2006}]{oikjan06a}
{\sc Oikarinen, E.} {\sc and} {\sc Janhunen, T.} 2006.
\newblock Modular equivalence for normal logic programs.
\newblock In {\em Proceedings of the Seventeenth European Conference on
  Artificial Intelligence (ECAI'06)}, {G.~Brewka}, {S.~Coradeschi},
  {A.~Perini}, {and} {P.~Traverso}, Eds. IOS Press, 412--416.

\bibitem[\protect\citeauthoryear{Orgun and Wadge}{Orgun and
  Wadge}{1990}]{orgwad92a}
{\sc Orgun, M.} {\sc and} {\sc Wadge, W.} 1990.
\newblock Theory and practice of temporal logic programming.
\newblock {L.~{Fari{\~{n}}as del Cerro}} {and} {M.~Penttonen}, Eds. Clarendon
  Press, 21--50.

\bibitem[\protect\citeauthoryear{Pearce}{Pearce}{1997}]{pearce96a}
{\sc Pearce, D.} 1997.
\newblock A new logical characterisation of stable models and answer sets.
\newblock In {\em Proceedings of the Sixth International Workshop on
  Non-Monotonic Extensions of Logic Programming (NMELP'96)}, {J.~Dix},
  {L.~Pereira}, {and} {T.~Przymusinski}, Eds. Lecture Notes in Computer
  Science, vol. 1216. Springer-Verlag, 57--70.

\bibitem[\protect\citeauthoryear{Pearce, Tompits, and Woltran}{Pearce
  et~al\mbox{.}}{2001}]{petowo01a}
{\sc Pearce, D.}, {\sc Tompits, H.}, {\sc and} {\sc Woltran, S.} 2001.
\newblock Encodings for equilibrium logic and logic programs with nested
  expressions.
\newblock In {\em Proceedings of the Tenth Portuguese Conference on Artificial
  Intelligence (EPIA'01)}, {P.~Brazdil} {and} {A.~Jorge}, Eds. Lecture Notes in
  Computer Science, vol. 2258. Springer-Verlag, 306--320.

\bibitem[\protect\citeauthoryear{Pearce and Valverde}{Pearce and
  Valverde}{2006}]{peaval06a}
{\sc Pearce, D.} {\sc and} {\sc Valverde, A.} 2006.
\newblock Quantified equilibrium logic and the first order logic of
  here-and-there.
\newblock Technical Report MA-06-02, University of M\'alaga.

\bibitem[\protect\citeauthoryear{Pistore and Traverso}{Pistore and
  Traverso}{2001}]{pistra01a}
{\sc Pistore, M.} {\sc and} {\sc Traverso, P.} 2001.
\newblock Planning as model checking for extended goals in non-deterministic
  domains.
\newblock In {\em Proceedings of the Seventeenth International Joint Conference
  on Artificial Intelligence (IJCAI'01)}, {B.~Nebel}, Ed. Morgan Kaufmann
  Publishers, 479--486.

\bibitem[\protect\citeauthoryear{Pnueli}{Pnueli}{1977}]{pnueli77a}
{\sc Pnueli, A.} 1977.
\newblock The temporal logic of programs.
\newblock In {\em Proceedings of the Eight-teenth Symposium on Foundations of
  Computer Science (FOCS'77)}. {IEEE} Computer Society Press, 46--57.

\bibitem[\protect\citeauthoryear{Schwind}{Schwind}{1997}]{schwind97a}
{\sc Schwind, C.} 1997.
\newblock A logic based framework for action theories.
\newblock In {\em Proceedings of the Tbilisi Symposium on Logic, Language and
  Computation}, {J.~Ginzburg}, {Z.~Khasidashvili}, {C.~Vogel}, {J.~L\'{e}vy},
  {and} {E.~Vallduv\'{i}}, Eds. 275--291.

\bibitem[\protect\citeauthoryear{Simpson}{Simpson}{1994}]{simpson94a}
{\sc Simpson, A.} 1994.
\newblock {The Proof Theory and Semantics of Intuitionistic Modal Logic}.
\newblock Ph.D. thesis, University of Edinburgh.

\bibitem[\protect\citeauthoryear{Sistla}{Sistla}{1983}]{sistla83a}
{\sc Sistla, A.} 1983.
\newblock Theoretical issues in the design and verification of distributed
  systems.
\newblock Ph.D. thesis, Harvard University.

\bibitem[\protect\citeauthoryear{Sistla and Clarke}{Sistla and
  Clarke}{1985}]{siscla85a}
{\sc Sistla, A.} {\sc and} {\sc Clarke, E.} 1985.
\newblock The complexity of propositional linear temporal logic.
\newblock {\em Journal of the ACM\/}~{\em 32,\/}~3, 733--749.

\bibitem[\protect\citeauthoryear{Sistla, Vardi, and Wolper}{Sistla
  et~al\mbox{.}}{1987}]{sivawo87a}
{\sc Sistla, A.}, {\sc Vardi, M.}, {\sc and} {\sc Wolper, P.} 1987.
\newblock The complementation problem for {B}{\"{u}}chi automata with
  appplications to temporal logic.
\newblock {\em Theoretical Computer Science\/}~{\em 49}, 217--237.

\bibitem[\protect\citeauthoryear{{van Dalen}}{{van Dalen}}{2001}]{dalen86a}
{\sc {van Dalen}, D.} 2001.
\newblock Intuitionistic logic.
\newblock In {\em Handbook of Philosophical Logic}, {D.~Gabbay} {and}
  {F.~Guenthner}, Eds. Vol.~3. Springer-Verlag, 225--339.

\bibitem[\protect\citeauthoryear{{van Emden} and Kowalski}{{van Emden} and
  Kowalski}{1976}]{emdkow76a}
{\sc {van Emden}, M.} {\sc and} {\sc Kowalski, R.} 1976.
\newblock The semantics of predicate logic as a programming language.
\newblock {\em Journal of the ACM\/}~{\em 23,\/}~4, 733--742.

\bibitem[\protect\citeauthoryear{Wa{\l}ega, {Cuenca Grau}, Kaminski, and
  Kostylev}{Wa{\l}ega et~al\mbox{.}}{2019}]{wagrkaka19a}
{\sc Wa{\l}ega, P.}, {\sc {Cuenca Grau}, B.}, {\sc Kaminski, M.}, {\sc and}
  {\sc Kostylev, E.} 2019.
\newblock {DatalogMTL}: Computational complexity and expressive power.
\newblock In {\em Proceedings of the Twenty-Eighth International Joint
  Conference on Artificial Intelligence (IJCAI'19)}, {S.~Kraus}, Ed. ijcai.org,
  1886--1892.

\bibitem[\protect\citeauthoryear{Wa{\l}ega, {Tena~Cucala}, Kostylev, and
  {Cuenca~Grau}}{Wa{\l}ega et~al\mbox{.}}{2020}]{watekocu21a}
{\sc Wa{\l}ega, P.}, {\sc {Tena~Cucala}, D.}, {\sc Kostylev, E.}, {\sc and}
  {\sc {Cuenca~Grau}, B.} 2020.
\newblock {DatalogMTL} with negation under stable models semantics.
\newblock In {\em Proceedings of the Eighteenth International Conference on
  Principles of Knowledge Representation and Reasoning (KR'22)}, {M.~Bienvenu},
  {G.~Lakemeyer}, {and} {E.~Erdem}, Eds. AAAI Press, 609--618.

\bibitem[\protect\citeauthoryear{Wolper}{Wolper}{1983}]{wolper83a}
{\sc Wolper, P.} 1983.
\newblock Temporal logic can be more expressive.
\newblock {\em Information and Control\/}~{\em 56,\/}~1/2, 72--99.

\bibitem[\protect\citeauthoryear{Zhu, Pu, and Vardi}{Zhu
  et~al\mbox{.}}{2019}]{zhpuva19a}
{\sc Zhu, S.}, {\sc Pu, G.}, {\sc and} {\sc Vardi, M.} 2019.
\newblock First-order vs. second-order encodings for {LTLf}-to-automata
  translation.
\newblock In {\em Proceedings of the Fifteenth Annual Conference on Theory and
  Applications of Models of Computation (TAMC'19)}, {T.~Gopal} {and}
  {J.~Watada}, Eds. Lecture Notes in Computer Science, vol. 11436.
  Springer-Verlag, 684--705.

\bibitem[\protect\citeauthoryear{Zhu, Tabajara, Li, Pu, and Vardi}{Zhu
  et~al\mbox{.}}{2017}]{zhtalipuva17a}
{\sc Zhu, S.}, {\sc Tabajara, L.}, {\sc Li, J.}, {\sc Pu, G.}, {\sc and} {\sc
  Vardi, M.} 2017.
\newblock Symbolic {LTLf} synthesis.
\newblock In {\em Proceedings of the Twenty-sixth International Joint
  Conference on Artificial Intelligence (IJCAI'17)}, {C.~Sierra}, Ed.
  IJCAI/AAAI Press, 1362--1369.

\end{thebibliography}

\end{document}